
\typeout{KR2021 Instructions for Authors}

\documentclass{article}
\pdfpagewidth=8.5in
\pdfpageheight=11in

\usepackage{kr}

\usepackage{times}
\usepackage{soul}
\usepackage{url}
\usepackage[hidelinks]{hyperref}
\usepackage[utf8]{inputenc}
\usepackage[small]{caption}
\usepackage{graphicx}
\usepackage{amssymb}
\usepackage{amsmath}
\usepackage{amsthm}
\usepackage{booktabs}
\usepackage{multirow}
\usepackage[hidelinks]{hyperref}
\urlstyle{same}
\usepackage[ruled,vlined,linesnumbered]{algorithm2e}
\SetNlSty{textrm}{\normalsize}{:} %
\urlstyle{same}

\usepackage{latexsym}

\usepackage{mathrsfs}
\usepackage{stmaryrd}
\usepackage[usenames,dvipsnames,svgnames,table]{xcolor}
\usepackage{tikz}
\usetikzlibrary{fadings}
\usetikzlibrary{shapes,decorations,positioning}
\usetikzlibrary{fadings,shapes,decorations,positioning,calc}
\usetikzlibrary{decorations.markings,decorations.pathreplacing}
\usetikzlibrary{shapes.geometric,automata,positioning}
\usetikzlibrary{shadows}\usetikzlibrary{calc}
\usetikzlibrary{decorations,decorations.pathmorphing,decorations.pathreplacing}
\usetikzlibrary{calc}

\usepackage{refcount}

\usepackage{caption}
\usepackage{subcaption}

\usepackage{xspace}
\usepackage{graphicx}
\usepackage{enumerate}
\usepackage{csquotes}
\usepackage{xparse}
\usepackage{multicol}
\newcommand{\citet}[1]{\citeauthor{#1} \citeyear{#1}}
\usepackage{marvosym}

\newtheorem{theorem}{Theorem}
\newtheorem{proposition}[theorem]{Proposition}
\newtheorem{lemma}[theorem]{Lemma}
\newtheorem{corollary}[theorem]{Corollary}
\newtheorem{definition}[theorem]{Definition}
\newtheorem{example}{Example}

\pagestyle{plain}
\makeatletter
\g@addto@macro\normalsize{%
		\setlength{\abovedisplayskip}{3.0pt}
		\setlength{\belowdisplayskip}{3.0pt}
	\addtolength\voffset{-0.45cm}
	\addtolength\textheight{0.45cm}
		\setlength{\textfloatsep}{6pt plus 1.0pt minus 2.0pt}
}
\makeatother
\makeatletter
\newcommand{\ifLatexThree}[2]{\@ifpackageloaded{xparse}{#1}{#2}}
\newcommand{\ifAMSmath}[2]{\@ifpackageloaded{amsmath}{#1}{#2}}
\newcommand{\ifMathSCR}[2]{\@ifpackageloaded{mathrsfs}{#1}{#2}}
\newcommand{\ifMathHyperREF}[2]{\@ifpackageloaded{hyperref}{#1}{#2}}
\makeatother

\ifLatexThree{%
	\NewDocumentCommand{\headword}{s o m}{\IfBooleanTF{#1}{#3}{\textbf{#3}}\IfNoValueTF{#2}{\index{#3}}{\index{#2}}}%
}{%
	\makeatletter
	\def\@headword#1{\textbf{#1}\index{#1}}%
	\def\@@headword#1{#1\index{#1}}%
	\def\headword#1{\@ifstar\@headword{#1}\@@headword{#1}}%
	\makeatother
}

\makeatletter
\@ifundefined{naturals}{}{}
\@ifundefined{ol}{\newcommand{\ol}[1]{\overline{#1}}}{}
\makeatother

\makeatletter
\newcommand{\textlabelmarker}[1]{%
	\protected@edef\@currentlabel{#1}%
	\phantomsection%
}
\newcommand{\textlabel}[2]{%
	\textlabelmarker{#1}%
	#1\label{#2}%
}
\makeatother

\newcommand{\ksSubSectionStar}[1]{\medskip\noindent\textbf{#1}}

\ifx\nmableit\undefined
\makeatletter
\ifx\centernot\undefined
\newcommand*{\centernot}{%
	\mathpalette\@centernot
}
\fi
\def\@centernot#1#2{%
	\mathrel{%
		\rlap{%
			\settowidth\dimen@{$\m@th#1{#2}$}%
			\kern.5\dimen@
			\settowidth\dimen@{$\m@th#1=$}%
			\kern-.5\dimen@
			$\m@th#1\not$%
		}%
		{#2}%
	}%
}
\makeatother
\DeclareRobustCommand\nmableitSymb{\mathrel|\mkern-.5mu\joinrel\sim} %
\newcommand{\nmableit}{\ensuremath{\mbox{$\,\nmableitSymb\,$}}} %
\fi

\makeatletter
\ifMathHyperREF{%
	\newcommand{\seqref}[1]{\hyperref[{#1}]{\textup{\tagform@split{\getrefnumber{#1}}}}}%
}{%
	\newcommand{\seqref}[1]{\textup{\tagform@split{\getrefnumber{#1}}}}%
}
\newcommand\tagform@split[1]{%
	\begingroup
	\m@th\normalfont(\ignorespaces #1\unskip\@@italiccorr)%
	\endgroup
}
\makeatother

\makeatletter
\newcommand{\leqnomode}{\tagsleft@true\let\veqno\@@leqno}
\newcommand{\reqnomode}{\tagsleft@false\let\veqno\@@eqno}
\newcommand{\pushright}[1]{\ifmeasuring@#1\else\omit\hfill$\displaystyle#1$\fi\ignorespaces}
\newcommand{\pushleft}[1]{\ifmeasuring@#1\else\omit$\displaystyle#1$\hfill\fi\ignorespaces}
\newcommand{\specialcell}[1]{\ifmeasuring@#1\else\omit$\displaystyle#1$\ignorespaces\fi}

\makeatother
\newcommand{\ksIF}{\text{if }}
\newcommand{\ksTHEN}{\text{, then }}

\newcommand{\ksAND}{\text{ and }}
\newcommand{\ksOR}{\text{ or }}

\newcommand{\ksIFF}{\text{ iff }}

\newcommand{\ksForAll}{\text{ for all }}
\newcommand{\ksOtherwise}{\text{otherwise}}

\newcommand{\modelsOf}[1]{\ensuremath{\llbracket #1\rrbracket}}
\newcommand{\negOf}[1]{{\ensuremath{\neg{#1}}}}

\DeclareMathOperator{\ksMod}{Mod}
\renewcommand{\modelsOf}[1]{\ensuremath{\ksMod(#1)}}

\DeclareMathOperator{\dom}{dom}
\DeclareMathOperator{\Cn}{Cn}

\DeclareMathOperator{\ksBel}{Bel}
\newcommand{\beliefsOf}[1]{\ensuremath{\ksBel(#1)}}

\newcommand{\setAllES}{\ensuremath{\mathcal{E}}}

\newcommand{\propLang}{\ensuremath{\mathcal{L}}}

\ifMathSCR{%
}{%
}

\usepackage{environ,xparse,xkeyval,ifthen}
\newif\ifpostulatepresent\postulatepresentfalse

\ExplSyntaxOn
\NewEnviron{postulate}[2]%
{%
	\tl_gclear:N \g_tmpa_tl%
	\tl_gput_right:Nn \g_tmpa_tl { \ifpostulatepresent\\[\smallskipamount]\fi\global\postulatepresenttrue }%
	\tl_gput_right:Nn \g_tmpa_tl { \noindent\ignorespaces\ifthenelse{\equal{#1}{}}{}{(\textlabel{#1}{pstl:#1})} & }%
	\tl_gput_right:No \g_tmpa_tl { \BODY }%
	\tl_gput_right:Nn \g_tmpa_tl { & \hspace{1.5\medskipamount} #2  }%
	\group_insert_after:N \g_tmpa_tl%
}
\ExplSyntaxOff

\renewcommand{\headword}[1]{\emph{#1}}
\DeclareDocumentCommand{\todo}{o g}{\IfNoValueTF{#1}{\begingroup\color{magenta}TODO: #2\endgroup}{\begingroup\color{magenta}#1 #2\endgroup}}

\newcommand{\change}{\ensuremath{\circ}}

\DeclareDocumentCommand{\revision}{}{\ensuremath{*}}
\DeclareDocumentCommand{\clRevision}{}{\ensuremath{\circledast}}

\newcommand{\nrRevision}{\divideontimes}
\newcommand{\dynamiclimited}{dynamic-limited}
\newcommand{\dylRevision}{\star}
\newcommand{\inherent}{inherent}
\newcommand{\immanent}{immanent}
\newcommand{\ihlimited}{inherence-limited}

\newcommand{\scope}[1]{\mathcal{S}_{#1}}
\newcommand{\Scope}[2]{\mathit{Scp}^{\!#1}\!(#2)}

\newcommand{\atomic}{latent}
\newcommand{\integral}{reasonable}
\newcommand{\preservable}{S1}
\newcommand{\inconlifted}{S2}

\newif\ifshowproofs\showproofsfalse
\pdfinfo{
	/TemplateVersion (KR.2021.0)
}

\title{On Limited Non-Prioritised Belief Revision Operators with Dynamic Scope}

\author{%
	Kai Sauerwald$^1$\and
	Gabriele Kern-Isberner$^2$\and
	Christoph Beierle$^{1}$\\
	\affiliations
	$^1$FernUniversität in Hagen, 58084 Hagen, Germany\\
	$^2$TU Dortmund University, 44227 Dortmund, Germany
}

\begin{document}

\maketitle              %
\begin{abstract}	
The research on non-prioritized revision studies revision operators which do not accept all new beliefs. In this paper, we contribute to this line of research by introducing the concept of \dynamiclimited\ revision, which are revisions expressible by a total preorder over a limited set of worlds.
For a belief change operator, we consider the scope, which
 consists of those beliefs which yield success of revision. 
  We show that for each set satisfying single sentence closure and disjunction completeness there exists a \dynamiclimited\ revision having the union of this set with the beliefs set as scope.
We investigate iteration postulates for belief and scope dynamics and characterise them for \dynamiclimited\ revision. As an application, we employ \dynamiclimited\ revision to studying belief revision in the context of so-called inherent beliefs, which are beliefs globally accepted by the agent. This leads to revision operators which we call inherence-limited. We present a representation theorem for inherence-limited revision, and we compare these operators and \dynamiclimited\ revision  with the closely related credible-limited revision operators. 
\end{abstract}

\section{Introduction}
\label{sec:introduction}
The AGM-approach is the most prominent approach to revision (Alchourr{\'{o}}n,  G{\"{a}}rdenfors and Makinson, \citeyear{KS_AlchourronGaerdenforsMakinson1985}).
This approach has been extended to the iterative case 
by Darwiche and Pearl \shortcite{KS_DarwichePearl1997}, who proposed to consider belief change over epistemic states and showed for iterated revision, that for revision in that framework every epistemic state has to be equipped with a total preorder over all possible worlds. 
At least since then, total preorders are one of most important representation formalisms for iterated change.
However, a potentially undesired behaviour of AGM revision is that it always accepts all new beliefs unquestioned.
As an alternative, the area of non-prioritized revision provides revision operators which do not accept all beliefs.
A lot of work has been done on non-prioritized change over epistemic states, e.g improvement operators \cite{KS_KoniecznyPinoPerez2008,KS_KoniecznyGrespanPinoPerez2010,KS_SchwindKonieczny2020}, credibility-limited revision \cite{KS_BoothFermeKoniecznyPerez2012,KS_BoothFermeKoniecznyPerez2014}, core-revision \cite{KS_Booth2002}, and decrement operators \cite{KS_SauerwaldBeierle2019}; and even more on non-prioritized change in other change frameworks, e.g. \cite{KS_Hansson1999a,KS_HanssonFermeCantwellFalappa2001,KS_GarapaFermeReis2018a}.

A successful model of non-priotized revision are credibility-limited revision operators, limiting the revision process to so-called credible beliefs.
In particular, in the approach by Booth et al. \cite{KS_BoothFermeKoniecznyPerez2012,KS_BoothFermeKoniecznyPerez2014} all 
beliefs of a belief set are considered as credible beliefs.
As a consequence, credibility-limited revision operators are characterizable by total preorders over the models of the belief set and further interpretations, but not necessarily over all possible worlds.

In this paper, we seize on the idea of credible-limited revision  in the framework of Darwiche and Pearl and propose a more radical version which maintains semantically a total preorder over an arbitrary subset of all possible worlds.
We denote these operators as \dynamiclimited\ revision operators, which will behave similarly to credibility-limited revision \cite{KS_BoothFermeKoniecznyPerez2012}: when new beliefs arrive, then we accept this belief when it shares models with the domain of the total preorder, otherwise we keep the prior belief sets. %

The motivation to investigate these operators is twofold. 
First, we think that not all beliefs in a belief set might be reasonable beliefs in a strong sense. 
This is due to various unreliable sources of beliefs, like sensors. 
Clearly, a revision operator should, if possible, yield a belief set that could be considered reasonable.
However, when it comes to (iterative) revision the initial beliefs might not only contain reasonable beliefs.
Belief revision should also deal with such situations.
Second, representing a total preorder over all interpretations of the underlying logic comes with high representational costs and is for larger signatures infeasible, as the number of interpretations grows exponentially with the size of the signature.
Thus, it is desirable to investigate approaches to reduce these representational costs.

We define \dynamiclimited\ revision operators semantically by having three components for every epistemic state \( \Psi \): a belief set \( \beliefsOf{\Psi} \), a set of worlds \( \scope{\Psi} \) and a total preorder \( \preceq_{\Psi} \) over \( \scope{\Psi} \).
Note that for  \dynamiclimited\ revision operators, \( \beliefsOf{\Psi} \) and \( \preceq_{\Psi} \) are not as strongly coupled as for credibility-limited revision operators.
Thus, changes by \dynamiclimited\ revision operators are three-dimensional. %
One dimension is the change on the set of beliefs, the  second is the change of order of elements in $ \preceq_{\Psi} $, and the third is the change of $ \scope{\Psi} $. 	
The set  \( \scope{\Psi} \) describes semantically what beliefs are get accepted for revision. 
Because \( \scope{\Psi} \) is a semantic component, we characterise this set syntactically. 
This will lead to new postulates and a representation theorem for \dynamiclimited\ revision operators.

To investigate what impact a change from  \( \scope{\Psi} \) to \(  \scope{\Psi\circ\alpha} \) has, we consider the interrelation between \( \scope{\Psi} \) and the syntactic concept of \emph{scope}.
A belief \( \alpha \) is in the scope, if \( \alpha \) is believed after a revision by \( \alpha \).
Thus, the scope does not conceptualise the beliefs which are credible in the sense of credibility-limited revision, it describes the beliefs that are acceptable for a revision.
To motivate the relevance of the scope consider the following example.
\begin{example}\label{example:karl}
	Imagine an agent who initially beliefs \( \alpha \lor \beta \), e.g. \( \alpha \) stands\footnote{Thanks to {Diana Howey}, who gave inspiration to this example.} for \enquote{Karl is a camel with one hump} and  \( \beta \) \enquote{Karl is a camel with two humps}. 
	Now the agent starts an iterative revision process to get clear about the state of \( \alpha\lor\beta \). Let assume our agent receives the information \( \beta \).
	In this initial stage of the iterative revision process, the agent might be open-minded, and a revision by \( \beta \) might be successful, i.e. the agent now believes \( \beta \).
	But, when receiving afterwards a belief \( \gamma \) which is contrary to \( \beta \), then it is plausible that the agent does not accept the belief \( \gamma \) (the belief \( \gamma \) is not acceptable for a revision, thus not part of the scope).
	For instance, \( \gamma \) could state \enquote{Karl has no hump}.
\end{example}

Changing the scope in an iterative setting has gained less explicit attention and the investigations in this paper are just a first step, where \dynamiclimited\ revision operators enable us to talk about that phenomenon. 
The only work we know is due to Booth et al. (\citeyear{KS_BoothFermeKoniecznyPerez2012}) for credibility-limited revisions.
We consider postulates for scope dynamics from their work and introduce additional postulates.

As an application of \dynamiclimited\ revision operators, we propose agents that are narrowed to a restricted set of basal beliefs when it comes to a revision, and moreover, only accept new beliefs that are rooted in these basal beliefs.
We formalize the basal beliefs as beliefs that get accepted with all its consequences regardless of the current epistemic state and call them  \inherent\  beliefs . 
The disjunction of \inherent\  beliefs yields what we call \immanent\ beliefs.
Upon these notions, we introduce \ihlimited\ revision operators, which are operators yielding either a set of \immanent\ beliefs or keep the old belief sets. 
We give a representation theorem for \ihlimited\ revision in the framework of epistemic states, by employing \dynamiclimited\ revision operators. We show that \ihlimited\ revision corresponds to maintain a total preorder over a subset of possible worlds, where the set of possible worlds is the same for all epistemic states.  
We investigate the role of \inherent\ and \immanent\  beliefs for AGM, \dynamiclimited\ and credibility-limited revision operators. 

In summary, the main contributions of this paper are:%
\footnote{We have proofs for all results, but due to space limitations, the proofs are presented in the supplementary material.}
\begin{itemize}
	\item Introduction of \dynamiclimited\ revision and \ihlimited\ revision.
	\item Introduction and investigation of postulates for the dynamics of scope and their characterisation for \dynamiclimited\ revision.
\end{itemize}
The paper is organised as follows.
In the next section we start with the background. %
In Section \ref{sec:background} we introduce the background from logic and iterated belief change.
Section \ref{sec:scope} defines the concept of scope and introduces serveral iteration postulates.
Dynamic-limited revision operators are introduced in Section \ref{sec:limited_assignments}.
In Section \ref{sec:s_charactisarion} we syntactically characterise the component \( \scope{\Psi} \) of \dynamiclimited\ revision operators.
Postulates and a full representation theorem for \dynamiclimited\ revision operators are given in Section \ref{sec:dyl_representation_theorem}. 
We investigate the iterative dynamics of \dynamiclimited\ revision operators in Section \ref{sec:dyl_iterative} by giving characterisation theorems for various iteration principles.
Section  \ref{sec:inherent_immanent_beliefs} introduces \ihlimited\ revision operators and we give a representation theorem.
The article closes with a conclusion in Section \ref{sec:discussion}.

\section{Formal Background}
\label{sec:background}

We start by presenting the logical background.

\ksSubSectionStar{Propositional Logic and Preorders}
Let $ \Sigma=\{a,b,c,\ldots\} $ be a propositional signature (non empty finite set of propositional variables) and $ \propLang $ a propositional language over $ \Sigma $. 
With lower Greek letters $ \alpha,\beta,\gamma,\ldots $ we denote formulas in $ \propLang $.
The set of propositional interpretations is denoted by $ \Omega $.
We write elements from $ \Omega $ as sequences of all propositional variables, where overlining denotes assignment to \emph{false}, e.g., in $ \ol{a}bc  $ the variable $ a $ is evaluated to false (and $ b,c $ to \emph{true}).
Propositional entailment is denoted by $ \models $, the set of models of $ \alpha $ with $ \modelsOf{\alpha} $, and $ \Cn(\alpha)=\{ \beta\mid \alpha\models \beta \} $ is the deductive closure of $ \alpha $.
For a set $ X $, we define $ \Cn(X)=\{ \beta \mid X\models\beta \} $, and the expansion of $ X $ by $ \alpha $ as $ X+\alpha = Cn(X\cup\{\alpha\}) $.
By $ \varphi_{\omega_1,\ldots,\omega_k} $ we denote a formula such that $ \modelsOf{\varphi_{\omega_1,\ldots,\omega_k}}=\{ \omega_1,\ldots,\omega_k \} $.
For two set of worlds $ \Omega',\Omega''\subseteq \Omega $ and  a total preorder $ \preceq \,\subseteq  \Omega'\times  \Omega' $ (total, reflexive and transitive relation) over $ \Omega' $, we denote with $ \dom(\preceq)=\Omega' $ the domain of $ \preceq $ and with
$ {\min(\Omega'',\preceq)}=\{ \omega\in \dom(\preceq) \mid  \omega\preceq \omega' \ksForAll \omega'\in\dom(\preceq)\cap\Omega'' \} $
the set of all $ \preceq $-minimal worlds of $ \Omega'' $ in $ \dom(\preceq) $.
For a total preorder $ \preceq $, we define $ x \prec y $ iff $ x \preceq y $ and $ y \not\preceq x $.
As convention, we use $ \leq $ for total preorders over $ \Omega $ and $ \preceq $ for total preorders over (potentially strict) subsets of $ \Omega $.

\ksSubSectionStar{Epistemic States}
Classical belief change theory \cite{KS_AlchourronGaerdenforsMakinson1985}
deals with belief sets as representation of belief states, i.e., deductively closed sets of
propositions. 
The area of iterated belief change abstracts from belief sets to \headword{epistemic states} \cite{KS_DarwichePearl1997}, 
 in which the agent maintains necessary information for all available belief change operators. 
With $ \setAllES $ we denote the set of all epistemic states over $ \propLang $.
In general, the Darwiche and Pearl framework has no further requirements on $ \setAllES $, except
that every epistemic state $ \Psi\in\setAllES $ is equipped with a set of plausible sentences $ \beliefsOf{\Psi}\subseteq \mathcal{L} $, which is assumed to be deductively closed.

In certain cases we will require more structure on $ \setAllES $. Therefore, we propose the follow principle for the set of epistemic states:
\begin{multline}
	\ksIF L\subseteq\propLang \ksAND L \text{ is consistent, then} \\ \text{there exists } \Psi\in\setAllES \text{ with } \Cn(L)=\beliefsOf{\Psi}	\tag{unbiased}\label{pstl:unbiased}
\end{multline}
where \eqref{pstl:unbiased} guarantees that for every consistent belief set there exists at least one corresponding epistemic state.
Another common assumption is consistency of every belief set; we express this by the following principle:
\begin{align}
	&\ksIF \Psi\in\setAllES \ksTHEN \beliefsOf{\Psi} \neq \Cn(\bot)	\tag{global consistency}\label{pstl:globalconsistency}
\end{align}

\noindent We write $ \Psi\models\alpha $ iff $ \alpha\in\beliefsOf{\Psi} $ and we define $ \modelsOf{\Psi}=\modelsOf{\beliefsOf{\Psi}}  $.
A \emph{belief change operator} over $ \setAllES $  (and $\mathcal{L}$) is a function $ \circ : \setAllES \times \propLang \to 
\setAllES $.

\ksSubSectionStar{Belief Revision in Epistemic States}
	Revision deals with the problem of incorporating new beliefs into an agent's belief set, 
	in a consistent way, whenever that is possible.
	The well-known approach to revision given by AGM \cite{KS_AlchourronGaerdenforsMakinson1985}
	has a counterpart in the framework of epistemic states.
Darwiche and Pearl \cite{KS_DarwichePearl1997} propose that an epistemic state $ \Psi $ should be equipped with a total preorder $ \leq_{\Psi} $ of the worlds,
where the compatibility with $ \beliefsOf{\Psi} $ is ensured by the so-called faithfulness. 
\begin{definition}[\citet{KS_DarwichePearl1997}]\label{def:faithful_assignment}
	A function $ \Psi\mapsto \leq_\Psi $ that maps each epistemic state to a total preorder on interpretations is said to be a faithful assignment if:
\begin{description}
		\item[\normalfont(\textlabel{FA1}{pstl:FA1})] \( \!\ksIF \omega_1 \in \modelsOf{\Psi} \ksAND \omega_2 \in \modelsOf{\Psi} \ksTHEN \omega_1 \simeq_\Psi \omega_2 \)
		\item[\normalfont(\textlabel{FA2}{pstl:FA2})] \(  \!\ksIF \omega_1 \in \modelsOf{\Psi} \ksAND \omega_2\notin\modelsOf{\Psi} \ksTHEN \omega_1 <_\Psi \omega_2 \)
\end{description}
\end{definition}
Intuitively, $ \leq_{\Psi} $ orders the worlds by plausibility, such that the minimal worlds 
are the most plausible worlds.
The connection to revision is given as follows.
\begin{proposition}[{\citet{KS_DarwichePearl1997}}]\label{prop:es_revision}
	A belief change operator $ * $ is an \emph{AGM revision operator for epistemic states} if there is a faithful assignment $ \Psi\mapsto \leq_\Psi $ such that:
	\begin{equation}\label{eq:repr_es_revision}
	\modelsOf{\Psi * \alpha} = \min(\modelsOf{\alpha},\leq_{\Psi})
	\end{equation}
\end{proposition}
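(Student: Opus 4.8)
The plan is to read this as the Katsuno--Mendelzon/Darwiche--Pearl representation theorem: the claim that a belief change operator $*$ admits the form $\modelsOf{\Psi * \alpha} = \min(\modelsOf{\alpha},\leq_\Psi)$ for some faithful assignment $\Psi\mapsto\leq_\Psi$ exactly when $*$ satisfies the AGM revision postulates in their epistemic-state formulation. Accordingly I would prove two directions. For \emph{soundness} I assume an operator given by the min-form and verify each postulate; for \emph{completeness} I assume the postulates and construct a faithful assignment witnessing the min-form.

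The soundness direction is the routine one. Given a faithful assignment and the min-equation, Success holds because $\min(\modelsOf{\alpha},\leq_\Psi)\subseteq\modelsOf{\alpha}$, so $\Psi*\alpha\models\alpha$. Consistency preservation holds because $\leq_\Psi$ is a total preorder over the \emph{finite} set $\Omega$, hence every nonempty $\modelsOf{\alpha}$ has a $\leq_\Psi$-minimal element and $\modelsOf{\Psi*\alpha}\neq\emptyset$. Faithfulness yields the vacuity-type postulate: when $\alpha$ is consistent with $\beliefsOf{\Psi}$, conditions (\ref{pstl:FA1}) and (\ref{pstl:FA2}) force the $\leq_\Psi$-minimal models of $\alpha$ to be exactly $\modelsOf{\Psi}\cap\modelsOf{\alpha}$, so revision collapses to expansion. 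The two conjunction (superexpansion/subexpansion) postulates follow from the standard choice-function behaviour of $\min$ over a fixed preorder, namely that minimising over $\modelsOf{\alpha}\cap\modelsOf{\beta}$ is compatible with first minimising over $\modelsOf{\alpha}$.

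For completeness I would recover the order from the operator by comparing worlds pairwise. Define $\omega\leq_\Psi\omega'$ iff $\omega\in\modelsOf{\Psi*\varphi_{\omega,\omega'}}$, using the paper's notation $\varphi_{\omega,\omega'}$ for a formula with $\modelsOf{\varphi_{\omega,\omega'}}=\{\omega,\omega'\}$. Totality is immediate: by Success and Inclusion $\modelsOf{\Psi*\varphi_{\omega,\omega'}}\subseteq\{\omega,\omega'\}$, and by Consistency it is nonempty, so at least one of $\omega\leq_\Psi\omega'$, $\omega'\leq_\Psi\omega$ holds; reflexivity comes from revising by $\varphi_{\omega}$. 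Faithfulness (\ref{pstl:FA1})--(\ref{pstl:FA2}) is read off by applying the vacuity postulate to the two-world formulas $\varphi_{\omega_1,\omega_2}$. It then remains to lift the pairwise definition to arbitrary $\alpha$, i.e.\ to show $\modelsOf{\Psi*\alpha}=\min(\modelsOf{\alpha},\leq_\Psi)$; this is the characteristic step and is carried out with the conjunction postulates, which relate revision by $\alpha$ to revision by its two-element subdisjunctions $\varphi_{\omega,\omega'}$.

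I expect transitivity of $\leq_\Psi$ to be the main obstacle, together with the lifting step it feeds into. Reflexivity, totality and faithfulness use only the basic postulates and are essentially bookkeeping, but transitivity genuinely needs the supplementary conjunction postulates: from $\omega_1\leq_\Psi\omega_2$ and $\omega_2\leq_\Psi\omega_3$ one reasons about $\Psi$ revised by the three-world formula $\varphi_{\omega_1,\omega_2,\omega_3}$ and uses superexpansion/subexpansion to exclude that $\omega_1$ is discarded, which would contradict $\omega_1\leq_\Psi\omega_3$. The same conjunction postulates are what make the pairwise order determine $\min$ on every $\modelsOf{\alpha}$, so establishing them cleanly is the crux; the remaining verifications are standard and can be delegated to the classical argument.
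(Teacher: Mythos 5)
There is nothing in the paper to compare your attempt against: Proposition~\ref{prop:es_revision} is quoted as background from Darwiche and Pearl (1997) and carries no proof, not even in the supplementary material. Moreover, within this paper it functions as a \emph{definition} rather than a theorem --- the statement is phrased only as an ``if'', the paper never lists AGM postulates for epistemic states against which the semantic form could be proved equivalent, and the sentence immediately following the proposition (``In this paper, we say that a belief change operator is an AGM revision operator if \ldots'') makes this definitional role explicit. You instead read the statement as the full representation theorem of the cited work, which is the natural reading of what Darwiche and Pearl actually prove, and your outline of that biconditional is the correct, standard one: soundness is routine verification of the postulates from the min-equation (using finiteness of $\Omega$ for consistency and \eqref{pstl:FA1}--\eqref{pstl:FA2} for vacuity), and completeness is the Katsuno--Mendelzon construction adapted to epistemic states, defining $\omega\leq_\Psi\omega'$ iff $\omega\in\modelsOf{\Psi*\varphi_{\omega,\omega'}}$, with transitivity and the lifting of the pairwise order to arbitrary $\modelsOf{\alpha}$ resting on the supplementary conjunction postulates. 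You have located the crux correctly, although as written those two steps are delegated rather than carried out.

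It is worth noting that your blind reconstruction matches the proof pattern the paper uses for the representation theorems it \emph{does} prove: in Theorem~\ref{thm:dylr_representationtheorem} and Theorem~\ref{thm:inherence_limited_operator} the order is defined by $\omega_1\preceq_\Psi\omega_2$ iff $\omega_1\in\modelsOf{\Psi\dylRevision(\varphi_{\omega_1}\lor\varphi_{\omega_2})}$, totality follows from the trichotomy postulate \eqref{pstl:DL8}, and transitivity is extracted by revising with three-world formulas $\varphi_{\omega_1,\omega_2,\omega_3}$ --- the disjunctive counterpart of your conjunction-postulate argument. So your approach is sound and consistent with the paper's own methodology; it simply supplies a proof for the one result the paper imports rather than establishes.
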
	

In this paper, we say that a belief change operator is an AGM revision operator if it is an AGM revision operator for epistemic states.

\ksSubSectionStar{Credibility-Limited Revision in Epistemic States}
\label{sec:cl_revision}
The central idea of credibility-limited revision \cite{KS_HanssonFermeCantwellFalappa2001} is to restrict the process of revision such that a revision is only performed when the agent receives a credible belief.
Credibility-limited revision operators for the framework of epistemic states were introduced by Booth, Ferm{\'{e}}, Konieczny and {Pino P{\'{e}}rez} \cite{KS_BoothFermeKoniecznyPerez2012} as operators obeying the following postulates.

\begin{definition}[\citet{KS_BoothFermeKoniecznyPerez2012}]
	A belief change operator $ \clRevision $ is a \emph{credibility-limited revision} operator if $ \clRevision $ satisfies:%
	\begin{description}
		\item[\normalfont(\textlabel{CL1}{pstl:LR1})] $ \alpha\in \beliefsOf{\Psi\clRevision\alpha} \ksOR \beliefsOf{\Psi\clRevision\alpha} = \beliefsOf{\Psi} $
		\item[\normalfont(\textlabel{CL2}{pstl:LR2})]\!\!\!\! if $ \beliefsOf{\!\Psi\!}{+}\alpha $ is consistent,
	\! then $ \beliefsOf{\!\Psi{\clRevision}\alpha\!} {=} \beliefsOf{\!\Psi\!}{+}\alpha $
		\item[\normalfont(\textlabel{CL3}{pstl:LR3})] $ \beliefsOf{\Psi\clRevision\alpha} $ is consistent
		\item[\normalfont(\textlabel{CL4}{pstl:LR4})] if $ \alpha\equiv\beta $, then $ \beliefsOf{\Psi\clRevision\alpha} = \beliefsOf{\Psi\clRevision\beta} $
		\item[\normalfont(\textlabel{CL5}{pstl:LR5})] if $ \alpha \! \in\! \beliefsOf{\Psi\clRevision\alpha} $ and $ \alpha\models\beta $, then $ \beta \in \beliefsOf{\Psi\clRevision\beta} $
		\item[\normalfont(\textlabel{CL6}{pstl:LR6})] $ \beliefsOf{\Psi\!\clRevision\!(\alpha\lor\beta)}\! =\!\begin{cases}
			\beliefsOf{\Psi\clRevision\alpha} \text{ or} \\
			\beliefsOf{\Psi\clRevision\beta} \text{ or} \\
			\beliefsOf{\Psi\clRevision\alpha} \cap \beliefsOf{\Psi\clRevision\beta}
		\end{cases}  $
	\end{description}
\end{definition}

\noindent The postulate \eqref{pstl:LR1} is known as relative success and denotes that either nothing happens or the belief change is successful in achieving the success condition of revision.
Through \eqref{pstl:LR2}, known as vacuity, new beliefs are just added when they are not in conflict with $ \beliefsOf{\Psi} $.
The postulate \eqref{pstl:LR3} ensures consistency, and by \eqref{pstl:LR4} the operator has to implement independence of syntax.
Postulate \eqref{pstl:LR5} guarantees that when the revision by a belief $ \alpha $ is successful, then it is also successful for every more general belief $ \beta $.
The trichotomy postulate \eqref{pstl:LR6} guarantees decomposability of revision of disjunctive beliefs.

Booth, Ferm{\'{e}}, Konieczny and {Pino P{\'{e}}rez} \cite{KS_BoothFermeKoniecznyPerez2012} provide the following variation of faithful assignment to capture the class of credibility-limited revision operators.

\begin{definition}[CLF-assignment \cite{KS_BoothFermeKoniecznyPerez2012}]\label{def:clf-assignment}
	A function $ \Psi \mapsto (\leq_\Psi,C_\Psi) $ is a credibility-limited faithful assignment (CLF-assignment) if $ C_\Psi $ is a set of interpretations such that $ \modelsOf{\Psi} \subseteq C_\Psi \subseteq \Omega $ and $ \leq_\Psi$ is a total preorder over $  C_\Psi $ such that $ \modelsOf{\Psi} = \min(C_\Psi,\preceq_{\Psi}) $.
\end{definition}
Booth, Ferm{\'{e}}, Konieczny and {Pino P{\'{e}}rez} showed that CLF-assignments are completely able to capture credible-limited revisions \cite[Thm. 2]{KS_BoothFermeKoniecznyPerez2012}.
\begin{proposition}[\citet{KS_BoothFermeKoniecznyPerez2012}]\label{prop:clr_booth_et_all}
	Assume \ref{pstl:globalconsistency} for \( \setAllES \).
	A belief change operator $ \clRevision $ is a credibility-limited revision operator if and only if there is a CLF-assignment $ \Psi\mapsto\leq_{\Psi} $ such that the following holds:
	\begin{equation}
		\tag{CLR}\label{eq:cl_revision}
		\modelsOf{\Psi \clRevision \alpha} = \begin{cases}
			\min(\modelsOf{\alpha},\leq_\Psi) & , \modelsOf{\alpha} \cap C_\Psi \neq \emptyset \\
			\modelsOf{\Psi} & , \ksOtherwise 
		\end{cases}
	\end{equation}
\end{proposition}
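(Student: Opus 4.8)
The statement is a representation theorem, so the plan is to prove the two directions separately: \emph{soundness} (every operator induced by a CLF-assignment via \eqref{eq:cl_revision} satisfies \ref{pstl:LR1}--\ref{pstl:LR6}) and \emph{completeness} (every operator satisfying \ref{pstl:LR1}--\ref{pstl:LR6} arises from some CLF-assignment). Throughout I would use \ref{pstl:globalconsistency} and the finiteness of $\Omega$ to guarantee that $\modelsOf{\Psi}\neq\emptyset$ and that $\min(\modelsOf{\alpha},\leq_\Psi)\neq\emptyset$ whenever $\modelsOf{\alpha}\cap C_\Psi\neq\emptyset$.

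For soundness I would fix a CLF-assignment $\Psi\mapsto(\leq_\Psi,C_\Psi)$ and check each postulate along the case split of \eqref{eq:cl_revision}. Relative success \ref{pstl:LR1} and syntax-independence \ref{pstl:LR4} are immediate, since $\min(\modelsOf{\alpha},\leq_\Psi)\subseteq\modelsOf{\alpha}$ and both branches depend only on $\modelsOf{\alpha}$. Vacuity \ref{pstl:LR2} uses the faithfulness condition $\modelsOf{\Psi}=\min(C_\Psi,\leq_\Psi)$ of the CLF-assignment: if $\beliefsOf{\Psi}+\alpha$ is consistent then $\modelsOf{\Psi}\cap\modelsOf{\alpha}\neq\emptyset$, and because every world of $\modelsOf{\Psi}$ sits at the $\leq_\Psi$-minimal level of $C_\Psi$, the $\leq_\Psi$-minimal $\alpha$-models coincide with $\modelsOf{\Psi}\cap\modelsOf{\alpha}$. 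Consistency \ref{pstl:LR3} follows from nonemptiness of $\min$ and \ref{pstl:globalconsistency}, and \ref{pstl:LR5} follows because the condition $\modelsOf{\alpha}\cap C_\Psi\neq\emptyset$ is monotone under $\alpha\models\beta$. Trichotomy \ref{pstl:LR6} is the only laborious case: I would expand $\modelsOf{\alpha\lor\beta}=\modelsOf{\alpha}\cup\modelsOf{\beta}$ and split on which disjunct carries the globally $\leq_\Psi$-minimal worlds inside $C_\Psi$, handling separately the subcases where one of $\modelsOf{\alpha}$, $\modelsOf{\beta}$ misses $C_\Psi$.

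For completeness I would construct the assignment from the operator. Define the credible worlds $C_\Psi:=\{\omega\in\Omega\mid \varphi_\omega\in\beliefsOf{\Psi\clRevision\varphi_\omega}\}$ and, for $\omega_1,\omega_2\in C_\Psi$, set $\omega_1\leq_\Psi\omega_2$ iff $\omega_1\in\modelsOf{\Psi\clRevision\varphi_{\omega_1,\omega_2}}$. The first lemma to establish is the \emph{credibility characterisation}: $\alpha\in\beliefsOf{\Psi\clRevision\alpha}$ iff $\modelsOf{\alpha}\cap C_\Psi\neq\emptyset$. The direction $\Leftarrow$ is \ref{pstl:LR5} applied to $\varphi_\omega\models\alpha$ for a credible $\omega\in\modelsOf{\alpha}$; the direction $\Rightarrow$ decomposes $\alpha$ into the disjunction of its model-formulas, uses a \ref{pstl:LR6}-induction to write $\beliefsOf{\Psi\clRevision\alpha}$ as an intersection of the $\beliefsOf{\Psi\clRevision\varphi_\omega}$, and then invokes \ref{pstl:LR2} to exhibit a credible model (every $\omega\in\modelsOf{\Psi}$ is credible by vacuity). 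This lemma simultaneously gives $\modelsOf{\Psi}\subseteq C_\Psi$ and, through \ref{pstl:LR1}, the ``otherwise'' branch of \eqref{eq:cl_revision}.

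It then remains to verify that $\leq_\Psi$ is a total preorder over $C_\Psi$, that $\modelsOf{\Psi}=\min(C_\Psi,\leq_\Psi)$, and that \eqref{eq:cl_revision} holds in the credible case. Reflexivity and totality are immediate, since credibility of $\varphi_\omega$ and $\varphi_{\omega_1,\omega_2}$ makes the corresponding revision outputs nonempty subsets of their models; faithfulness $\modelsOf{\Psi}=\min(C_\Psi,\leq_\Psi)$ comes from comparing worlds against $\modelsOf{\Psi}$ via \ref{pstl:LR2}; and the credible case of \eqref{eq:cl_revision} follows by a \ref{pstl:LR6}-driven reduction of $\modelsOf{\Psi\clRevision\alpha}$ to the pairwise comparisons defining $\leq_\Psi$. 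I expect the \textbf{main obstacle} to be transitivity of $\leq_\Psi$: given $\omega_1\leq_\Psi\omega_2$ and $\omega_2\leq_\Psi\omega_3$, one argues on $\varphi_{\omega_1,\omega_2,\omega_3}$, using \ref{pstl:LR6} to relate $\beliefsOf{\Psi\clRevision\varphi_{\omega_1,\omega_2,\omega_3}}$ to the three pairwise revisions and to exclude the configuration $\omega_3<_\Psi\omega_1$. This trichotomy bookkeeping, together with matching the shape of $\min$ to the successful revision outputs, is the delicate part and is exactly where \ref{pstl:LR6} is indispensable; the rest of the argument mirrors the classical Katsuno--Mendelzon-style representation proof adapted to the credible set $C_\Psi$.
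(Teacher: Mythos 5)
This proposition is not proved anywhere in the paper: it is imported as a known result (Theorem~2 of Booth, Ferm\'e, Konieczny and Pino P\'erez, 2012), so there is no in-paper proof to compare against. Your reconstruction is nevertheless correct in outline and is exactly the canonical argument, and it coincides step for step with the technique this paper \emph{does} spell out for its own analogous representation theorems (Theorem~\ref{thm:dylr_representationtheorem} for \dynamiclimited\ revision and Theorem~\ref{thm:inherence_limited_operator} for \ihlimited\ revision): the credible set is defined through success on world-formulas, the relation by \( \omega_1\leq_\Psi\omega_2 \) iff \( \omega_1\in\modelsOf{\Psi\clRevision\varphi_{\omega_1,\omega_2}} \), totality and transitivity are extracted from trichotomy \eqref{pstl:LR6} via three-world disjunctions, faithfulness \( \modelsOf{\Psi}=\min(C_\Psi,\leq_\Psi) \) from vacuity \eqref{pstl:LR2} together with \eqref{pstl:globalconsistency}, and the two branches of \eqref{eq:cl_revision} are recovered from your credibility-characterisation lemma plus relative success \eqref{pstl:LR1}; your soundness direction likewise rests on the same trichotomy-of-minima fact the paper isolates as Lemma~\ref{lem:sem_trichotonomy}. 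The only caveat is that your text is a sketch rather than a proof: the two genuinely laborious steps you correctly flag (transitivity of \( \leq_\Psi \), and the \eqref{pstl:LR6}-driven identification of \( \modelsOf{\Psi\clRevision\alpha} \) with \( \min(\modelsOf{\alpha},\leq_\Psi) \) in the credible case) are only indicated, but the route you indicate is precisely the one that works, so there is no gap in the approach.
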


\section{Scope and Iteration Principles}
\label{sec:scope}
In this section we present different iteration principles and introduce the notion of scope.

\ksSubSectionStar{Iterated Revision.}
When considering iterative revision, the most well-known work is due to Darwiche and Pearl  \cite{KS_DarwichePearl1997}.
Driven by the insight that iteration needs additional constraints, Darwiche and Pearl proposed the following postulates:
\begin{description}
	\item[\normalfont(\textlabel{DP1}{pstl:DP1})] \( \ksIF \beta\models\alpha \ksTHEN \beliefsOf{\Psi * \alpha * \beta} = \beliefsOf{\Psi * \beta} \) 
	\item[\normalfont(\textlabel{DP2}{pstl:DP2})] \( \ksIF \beta\models\negOf{\alpha} \ksTHEN \beliefsOf{\Psi * \alpha *\beta} = \beliefsOf{\Psi * \beta} \)
	\item[\normalfont(\textlabel{DP3}{pstl:DP3})] \( \ksIF \Psi*\beta \models \alpha \ksTHEN (\Psi * \alpha) *\beta \models \alpha \)
	\item[\normalfont(\textlabel{DP4}{pstl:DP4})] \( \ksIF \Psi*\beta \not\models \negOf{\alpha} \ksTHEN (\Psi * \alpha) *\beta \not\models \negOf{\alpha} \)
\end{description}
It is well-known that these operators can be characterised in the semantic framework of total preorders.
\begin{proposition}[{\cite{KS_DarwichePearl1997}}]\label{prop:it_es_revision}
	Let $ * $ be an AGM revision operator for epistemic states. Then $ * $ satisfies \eqref{pstl:DP1} to \eqref{pstl:DP4} if and only there exists a faithful assignment $ \Psi\mapsto\leq_{\Psi} $ such that \eqref{eq:repr_es_revision} and the following is satisfied:
	\begingroup
	\begin{description}
		\item[\normalfont(\textlabel{CR8}{pstl:RR8})] \( \ksIF \omega_1,\omega_2 {\in} \modelsOf{\alpha} \ksTHEN \omega_1 \!\leq_{\Psi}\! \omega_2 \Leftrightarrow \omega_1 \!\leq_{\Psi * \alpha}\! \omega_2 \) 
		\item[\normalfont(\textlabel{CR9}{pstl:RR9})] \( \ksIF \omega_1,\omega_2 {\in} \modelsOf{\negOf{\alpha}} \ksTHEN \omega_1 \!\leq_{\Psi}\! \omega_2 \Leftrightarrow \omega_1 \!\leq_{\Psi * \alpha}\! \omega_2 \)
		\item[\normalfont(\textlabel{CR10}{pstl:RR10})] \( \ksIF \omega_1 \!\in\! \modelsOf{\alpha} \ksAND \omega_2 \!\in\! \modelsOf{\negOf{\alpha}}  \), \\\null\hfill then \(    \omega_1 \!<_{\Psi}\! \omega_2 \! \Rightarrow \! \omega_1 \!<_{\Psi * \alpha}\! \omega_2 \)
		\item[\normalfont(\textlabel{CR11}{pstl:RR11})] \( \ksIF \omega_1  \!\in\! \modelsOf{\alpha} \ksAND \omega_2 \!\in\! \modelsOf{\negOf{\alpha}}  \), \\\null\hfill then \(    \omega_1 \!\leq_{\Psi}\! \omega_2 \! \Rightarrow \! \omega_1 \!\leq_{\Psi * \alpha}\! \omega_2 \)
	\end{description}
	\endgroup
\end{proposition}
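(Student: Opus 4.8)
The plan is to prove the biconditional by establishing a tight correspondence between each semantic condition and its syntactic counterpart, namely (DP1)$\leftrightarrow$(CR8), (DP2)$\leftrightarrow$(CR9), (DP3)$\leftrightarrow$(CR10), and (DP4)$\leftrightarrow$(CR11). Since $*$ is assumed to be an AGM revision operator, a faithful assignment satisfying \eqref{eq:repr_es_revision} already exists, so the content lies entirely in how $\leq_\Psi$ and $\leq_{\Psi * \alpha}$ relate. The crucial preliminary observation is that the preorder is recoverable from the operator on any pair of distinct worlds: by \eqref{eq:repr_es_revision} we have $\omega_1 \leq_\Psi \omega_2$ iff $\omega_1 \in \modelsOf{\Psi * \varphi_{\omega_1,\omega_2}}$, where $\varphi_{\omega_1,\omega_2}$ has exactly $\{\omega_1,\omega_2\}$ as its models. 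This two-world probe lets me translate statements about $\leq_\Psi$ into statements about revision and back.

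For the direction from the syntactic to the semantic conditions I would instantiate each postulate with $\beta = \varphi_{\omega_1,\omega_2}$. For (CR8), given $\omega_1,\omega_2 \in \modelsOf{\alpha}$ the probe entails $\alpha$, so (DP1) yields $\modelsOf{(\Psi * \alpha) * \varphi_{\omega_1,\omega_2}} = \modelsOf{\Psi * \varphi_{\omega_1,\omega_2}}$; reading off the minimum of $\{\omega_1,\omega_2\}$ on both sides gives the order equivalence. (CR9) is obtained symmetrically from (DP2) using $\omega_1,\omega_2 \in \modelsOf{\negOf{\alpha}}$. For (CR10), from $\omega_1 \in \modelsOf{\alpha}$, $\omega_2 \in \modelsOf{\negOf{\alpha}}$ and $\omega_1 <_\Psi \omega_2$ the probe satisfies $\Psi * \varphi_{\omega_1,\omega_2} \models \alpha$, so (DP3) forces $(\Psi * \alpha) * \varphi_{\omega_1,\omega_2} \models \alpha$, which as $\omega_2 \models \negOf{\alpha}$ means $\omega_1 <_{\Psi * \alpha} \omega_2$; (CR11) follows from (DP4) in the same way, replacing strict minimality by non-strict and $\models \alpha$ by $\not\models \negOf{\alpha}$.

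For the converse direction I would argue directly with the minima in \eqref{eq:repr_es_revision}. For (DP1), $\beta \models \alpha$ gives $\modelsOf{\beta} \subseteq \modelsOf{\alpha}$, and (CR8) says $\leq_\Psi$ and $\leq_{\Psi * \alpha}$ coincide on $\modelsOf{\alpha}$, so their minima over $\modelsOf{\beta}$ agree, yielding $\beliefsOf{(\Psi * \alpha) * \beta} = \beliefsOf{\Psi * \beta}$; (DP2) is the analogous computation restricted to $\modelsOf{\negOf{\alpha}}$ via (CR9). The two remaining implications are the delicate ones. For (CR10)$\Rightarrow$(DP3), assuming $\Psi * \beta \models \alpha$ I would take any $\omega_1 \in \min(\modelsOf{\beta},\leq_\Psi)$ (so $\omega_1 \models \alpha$) and suppose for contradiction that some $\omega_2 \in \min(\modelsOf{\beta},\leq_{\Psi * \alpha})$ satisfies $\negOf{\alpha}$; minimality of $\omega_1$ under $\leq_\Psi$ together with $\omega_2 \models \negOf{\alpha}$ forces $\omega_1 <_\Psi \omega_2$, whence (CR10) gives $\omega_1 <_{\Psi * \alpha} \omega_2$, contradicting minimality of $\omega_2$ under $\leq_{\Psi * \alpha}$. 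The argument for (CR11)$\Rightarrow$(DP4) is structurally identical but uses non-strict comparisons: a minimal $\alpha$-model under $\leq_\Psi$ is shown, via (CR11), to remain minimal up to $\simeq_{\Psi * \alpha}$ among the $\beta$-models, so an $\alpha$-model survives in $\min(\modelsOf{\beta},\leq_{\Psi * \alpha})$.

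I expect the main obstacle to be precisely these last two implications, where the case analysis on minimal worlds of mixed $\alpha$/$\negOf{\alpha}$ type must be carried out carefully, together with the degenerate cases (inconsistent $\beta$, or $\alpha$ a tautology or contradiction) in which some of the relevant world sets are empty and the postulates hold vacuously. Throughout I would rely on faithfulness only to guarantee that \eqref{eq:repr_es_revision} specifies genuine belief sets; the real content is isolated by the probe-formula reconstruction, which reduces every syntactic assertion to a purely order-theoretic comparison of $\leq_\Psi$ and $\leq_{\Psi * \alpha}$.
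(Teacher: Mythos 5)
Your proof is correct, but note that the paper itself contains no proof of this proposition: it is imported verbatim from Darwiche and Pearl (1997) as background (neither the main text nor the supplementary material proves it), so there is no in-paper argument to compare against. Your reconstruction is essentially the classical Darwiche--Pearl argument: the two-world probe $\varphi_{\omega_1,\omega_2}$, which recovers $\leq_\Psi$ from $*$ via \eqref{eq:repr_es_revision}, is exactly the device used in their proof to pass from the postulates to the order conditions, and your converse direction (agreement of minima for (CR8)/(CR9), and the contradiction/transitivity arguments showing an $\alpha$-minimal world survives for (CR10)/(CR11)) matches their computation with minimal models. The only points worth tightening if this were written out in full are the ones you already flag yourself: the vacuous cases (inconsistent $\beta$, so that all minima are empty) and, in the (CR11)$\Rightarrow$(DP4) step, the explicit appeal to transitivity to conclude that $\omega_1\leq_{\Psi*\alpha}\omega_2$ together with minimality of $\omega_2$ puts $\omega_1$ itself in $\min(\modelsOf{\beta},\leq_{\Psi*\alpha})$.
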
%

\ksSubSectionStar{Scope.}
We define the concept of a scope as those beliefs which are accepted after a revision.
\begin{definition}
	Let \( \circ \) be a belief change operator and \( \Psi \) an epistemic state. The \emph{scope} of \( \circ \) with respect to \( \Psi \) is the set:
	\begin{equation*}
		\Scope{\circ}{\Psi} = \{ \alpha \in\propLang \mid \alpha \in \beliefsOf{\Psi\circ\alpha} \}
	\end{equation*}
\end{definition}

Note that the concept is not novel, and has been introduced before in the context of credibility-limited revision \cite{KS_BoothFermeKoniecznyPerez2012} with the notion \( C_\circ(\Psi) \). 
In this context, the context of credibility-limited revision, it has been shown to describe the set of credible beliefs. 
However, for operators we investigate in this paper, the set \( \Scope{\circ}{\Psi} \) might contain elements that someone may not describe as credible beliefs. 
This gives rationale for choosing a different notation here. %

\ksSubSectionStar{AGM Revision and Scope.}
To get an impression what \( \Scope{\circ}{\Psi} \) express, remember that every AGM revision operator accepts every new belief without doubt. Consequently, the scope of an AGM revision contains all elements of the logical language.
\begin{proposition}\label{prop:agm_scope}
For every AGM revision operator \( * \) we have \( \Scope{\revision}{\Psi}  =\propLang  \) for every \( \Psi\in\setAllES \).
\end{proposition}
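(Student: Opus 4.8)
The plan is to unfold the definition of scope and reduce the claim to the success property of AGM revision, which follows immediately from the semantic characterisation in Proposition~\ref{prop:es_revision}. Since $\Scope{\revision}{\Psi} \subseteq \propLang$ holds trivially by the definition of the scope, only the inclusion $\propLang \subseteq \Scope{\revision}{\Psi}$ needs an argument: I must show that every $\alpha \in \propLang$ lies in the scope, i.e.\ $\alpha \in \beliefsOf{\Psi \revision \alpha}$, which by passing to models is equivalent to $\modelsOf{\Psi \revision \alpha} \subseteq \modelsOf{\alpha}$.

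First I would fix an arbitrary $\Psi \in \setAllES$ and $\alpha \in \propLang$ and apply Proposition~\ref{prop:es_revision} to obtain a faithful assignment $\Psi \mapsto \leq_\Psi$ with $\modelsOf{\Psi \revision \alpha} = \min(\modelsOf{\alpha}, \leq_\Psi)$. The key observation is that, by the definition of $\min(\cdot,\cdot)$, every world selected as $\leq_\Psi$-minimal from $\modelsOf{\alpha}$ is itself a model of $\alpha$, so $\min(\modelsOf{\alpha}, \leq_\Psi) \subseteq \modelsOf{\alpha}$ and hence $\modelsOf{\Psi \revision \alpha} \subseteq \modelsOf{\alpha}$. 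Taking deductive closure this gives $\beliefsOf{\Psi \revision \alpha} \models \alpha$, that is $\alpha \in \beliefsOf{\Psi \revision \alpha}$, so $\alpha \in \Scope{\revision}{\Psi}$. As $\alpha$ and $\Psi$ were arbitrary, this establishes $\propLang \subseteq \Scope{\revision}{\Psi}$ and, together with the trivial reverse inclusion, the claimed equality.

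I do not expect a genuine obstacle here, since the statement is essentially the AGM success postulate and does not even require the faithfulness conditions \eqref{pstl:FA1} and \eqref{pstl:FA2} — only the representation equation and the fact that minimal $\alpha$-worlds are $\alpha$-worlds. The single point deserving care is the degenerate case of an inconsistent $\alpha$, where $\modelsOf{\alpha} = \emptyset$: then $\min(\modelsOf{\alpha}, \leq_\Psi) = \emptyset$, so $\beliefsOf{\Psi \revision \alpha}$ has no models and therefore equals $\Cn(\bot)$, which contains every formula and in particular $\alpha$. Thus the inclusion $\modelsOf{\Psi \revision \alpha} \subseteq \modelsOf{\alpha}$, and with it membership of $\alpha$ in the scope, holds uniformly for all $\alpha$.
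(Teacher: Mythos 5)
Your proof is correct and follows exactly the paper's route: the paper's own proof is just the one-line remark that the claim is ``a direct consequence of Proposition~\ref{prop:es_revision}'', and your argument spells out precisely that consequence, namely that $\modelsOf{\Psi \revision \alpha} = \min(\modelsOf{\alpha},\leq_\Psi) \subseteq \modelsOf{\alpha}$ yields success, including the degenerate inconsistent-$\alpha$ case. Nothing is missing; you have simply made the paper's implicit reasoning explicit.
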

\ifshowproofs
\begin{proof}
	A direct consequence of Proposition \ref{prop:es_revision}.
\end{proof} \fi

\ksSubSectionStar{Credibility-Limited Revision: Scope and Iteration Principles.}
To describe the scope of belief change operators we will make use of conditions \cite{KS_HanssonFermeCantwellFalappa2001} for a set of formulas \( X \):
\begin{align*}
&	\ksIF \alpha\lor\beta \in X \ksTHEN \alpha\in X \ksOR \beta\in X	\tag{disjunction completeness} \label{pstl:disjunction_completeness}\\
&	\ksIF \alpha{\in} X \ksAND \alpha{\models}\beta \ksTHEN \beta{\in} X	\tag{single-sentence closure}\label{pstl:singlesentenceclosure} 
\end{align*}

When it comes to the class of credibility-limited revision operators, we see a much wider variety of different scopes.
For this class, the scope is given by \ref{pstl:singlesentenceclosure} and \ref{pstl:disjunction_completeness}.

\begin{proposition}\label{prop:clr_scope}
	Let  \( \Psi \) be an epistemic state. The following statements hold:
	\begin{enumerate}[(a)]
		\item If \( \clRevision \) is a credibility-limited revision operator, then \( \beliefsOf{\Psi} \subseteq \Scope{\clRevision}{\Psi} \), and \( \Scope{\clRevision}{\Psi} \) satisfies \ref{pstl:singlesentenceclosure} and \ref{pstl:disjunction_completeness}.
		\item For each \( X\subseteq\mathcal{L} \) with  \( \beliefsOf{\Psi} \subseteq X \), and \( X \) satisfies \ref{pstl:singlesentenceclosure} and \ref{pstl:disjunction_completeness}, there exists a credibility-limited revision operator \( \clRevision \) such that \( \Scope{\clRevision}{\Psi} = X \).
	\end{enumerate}
\end{proposition}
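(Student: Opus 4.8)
The plan is to route both parts through the semantic representation of credibility-limited revision in Proposition~\ref{prop:clr_booth_et_all}, rather than manipulating the postulates \ref{pstl:LR1}--\ref{pstl:LR6} directly. The engine of the whole argument is a bridge lemma: for any credibility-limited revision operator $\clRevision$ represented by a CLF-assignment $\Psi\mapsto(\leq_\Psi,C_\Psi)$,
\[
\Scope{\clRevision}{\Psi}=\{\alpha\in\propLang\mid\modelsOf{\alpha}\cap C_\Psi\neq\emptyset\}.
\]
To establish it I would unfold $\alpha\in\Scope{\clRevision}{\Psi}$ as $\modelsOf{\Psi\clRevision\alpha}\subseteq\modelsOf{\alpha}$ and apply \eqref{eq:cl_revision}: if $\modelsOf{\alpha}\cap C_\Psi\neq\emptyset$ then $\modelsOf{\Psi\clRevision\alpha}=\min(\modelsOf{\alpha},\leq_\Psi)\subseteq\modelsOf{\alpha}$, so $\alpha$ is in the scope; if $\modelsOf{\alpha}\cap C_\Psi=\emptyset$ then $\modelsOf{\Psi\clRevision\alpha}=\modelsOf{\Psi}$, and since $\emptyset\neq\modelsOf{\Psi}\subseteq C_\Psi$ is disjoint from $\modelsOf{\alpha}$ we get $\modelsOf{\Psi}\not\subseteq\modelsOf{\alpha}$, so $\alpha$ is not in the scope. (Global consistency is what guarantees $\modelsOf{\Psi}\neq\emptyset$ here.)

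Part (a) then reads off the three properties from this characterisation. For $\beliefsOf{\Psi}\subseteq\Scope{\clRevision}{\Psi}$: if $\alpha\in\beliefsOf{\Psi}$ then $\modelsOf{\Psi}\subseteq\modelsOf{\alpha}$, and as $\modelsOf{\Psi}\subseteq C_\Psi$ is non-empty, $\modelsOf{\alpha}\cap C_\Psi\neq\emptyset$. For \ref{pstl:singlesentenceclosure}, $\alpha\models\beta$ gives $\modelsOf{\alpha}\subseteq\modelsOf{\beta}$, so any witness of $\modelsOf{\alpha}\cap C_\Psi\neq\emptyset$ also witnesses $\modelsOf{\beta}\cap C_\Psi\neq\emptyset$. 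For \ref{pstl:disjunction_completeness}, $\modelsOf{\alpha\lor\beta}=\modelsOf{\alpha}\cup\modelsOf{\beta}$, so a witness in $\modelsOf{\alpha\lor\beta}\cap C_\Psi$ lies in $\modelsOf{\alpha}\cap C_\Psi$ or in $\modelsOf{\beta}\cap C_\Psi$.

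For part (b) I would invert the bridge lemma. Given $X$, set $S=\{\omega\in\Omega\mid\varphi_\omega\in X\}$, where $\varphi_\omega$ is the formula whose unique model is $\omega$ (these exist since $\Sigma$ is finite). The two closure properties say precisely that $X$ is recoverable from $S$: writing a consistent $\alpha$ as $\alpha\equiv\bigvee_{\omega\in\modelsOf{\alpha}}\varphi_\omega$ and applying \ref{pstl:disjunction_completeness} repeatedly shows that $\alpha\in X$ forces some $\varphi_\omega\in X$ with $\omega\in\modelsOf{\alpha}$, i.e.\ $\modelsOf{\alpha}\cap S\neq\emptyset$; conversely \ref{pstl:singlesentenceclosure} gives $\modelsOf{\alpha}\cap S\neq\emptyset\Rightarrow\alpha\in X$ (with $\bot\notin X$, since a consistency-respecting operator never has $\bot$ in its scope). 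Hence $X=\{\alpha\mid\modelsOf{\alpha}\cap S\neq\emptyset\}$. I would then put $C_\Psi=S$, pick a total preorder $\leq_\Psi$ on $C_\Psi$ whose set of minima is $\modelsOf{\Psi}$, and extend the assignment to every other state arbitrarily (e.g.\ $C_{\Psi'}=\Omega$ with any faithful preorder). By Proposition~\ref{prop:clr_booth_et_all} this produces a genuine credibility-limited revision operator, and the bridge lemma gives $\Scope{\clRevision}{\Psi}=\{\alpha\mid\modelsOf{\alpha}\cap C_\Psi\neq\emptyset\}=X$.

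The step I expect to be the main obstacle is exactly the faithfulness of this assignment at $\Psi$. Definition~\ref{def:clf-assignment} demands $\modelsOf{\Psi}\subseteq C_\Psi$ and $\modelsOf{\Psi}=\min(C_\Psi,\leq_\Psi)$, so before any preorder can be chosen I must verify $\modelsOf{\Psi}\subseteq S$. This is where $\beliefsOf{\Psi}\subseteq X$ has to be combined with \ref{pstl:disjunction_completeness}: a formula $\chi$ with $\modelsOf{\chi}=\modelsOf{\Psi}$ lies in $\beliefsOf{\Psi}\subseteq X$, and decomposing $\chi$ into singletons locates models of $\Psi$ inside $S$. Securing this for \emph{every} model of $\Psi$, rather than for a single witness, is the delicate point, and it is the place where the precise strength of the hypotheses on $X$ must be matched against the faithfulness requirement; once $\modelsOf{\Psi}\subseteq C_\Psi$ is in hand, choosing $\leq_\Psi$ with $\min(C_\Psi,\leq_\Psi)=\modelsOf{\Psi}$ is routine.
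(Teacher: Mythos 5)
Your bridge lemma and your part (a) are correct and follow exactly the paper's route: the paper's proof also passes through the CLF-assignment of Proposition~\ref{prop:clr_booth_et_all}, states $\Scope{\clRevision}{\Psi}=\{\alpha\mid\modelsOf{\alpha}\cap C_\Psi\neq\emptyset\}$, and then invokes its Lemma~\ref{lem:ssc_dc_M} (the correspondence between sets satisfying \ref{pstl:singlesentenceclosure}/\ref{pstl:disjunction_completeness} and sets of the form $\{\alpha\mid\modelsOf{\alpha}\cap M\neq\emptyset\}$), which is precisely the equivalence you re-derive inline via the singleton decomposition.

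For part (b), however, the step you flag as delicate is a genuine gap, and it cannot be closed. Your construction forces $C_\Psi=S$ where $S=\{\omega\mid\varphi_\omega\in X\}$ (test the scope on the formulas $\varphi_\omega$: $\varphi_\omega$ lies in the scope iff $\omega\in C_\Psi$, and in $X$ iff $\omega\in S$), and faithfulness of the CLF-assignment then demands $\modelsOf{\Psi}\subseteq S$. But $\beliefsOf{\Psi}\subseteq X$ together with \ref{pstl:singlesentenceclosure} and \ref{pstl:disjunction_completeness} only yields $\modelsOf{\Psi}\cap S\neq\emptyset$ — the single witness you suspected — not the inclusion. Concretely, take $\modelsOf{\Psi}=\{\omega_1,\omega_2\}$ with $\omega_1\neq\omega_2$ and $X=\{\alpha\mid\omega_1\in\modelsOf{\alpha}\}$. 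This $X$ satisfies both closure properties, and $\beliefsOf{\Psi}\subseteq X$ since every $\alpha\in\beliefsOf{\Psi}$ has $\modelsOf{\Psi}\subseteq\modelsOf{\alpha}\ni\omega_1$; yet $\varphi_{\omega_2}\notin X$, while the vacuity postulate \eqref{pstl:LR2} forces $\varphi_{\omega_2}\in\Scope{\clRevision}{\Psi}$ for \emph{every} credibility-limited revision operator, because $\beliefsOf{\Psi}+\varphi_{\omega_2}$ is consistent. So no credibility-limited revision operator has scope exactly this $X$: the gap is not a missing argument but a failure of the claim under these hypotheses. The statement becomes true (and your construction then works verbatim) under the stronger assumption that every formula consistent with $\beliefsOf{\Psi}$ belongs to $X$, equivalently $\modelsOf{\Psi}\subseteq S$. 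Note that the paper's own sketch — apply Lemma~\ref{lem:ssc_dc_M} to get $M$ and set $C_\Psi=M$ — silently skips the same faithfulness check, so what you have located is a defect in the proposition itself rather than a trick you missed.
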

\ifshowproofs
\begin{proof}[Proof (sketch)] Remember that every credibility-limited revision operator \( \clRevision \) is compatible with a CLF-assignment \( \Psi\mapsto (\preceq_{\Psi},C_\Psi) \).
	Clearly, by the semantic characterisation of the operator we have \( \Scope{\clRevision}{\Psi}=\{ \alpha \mid \modelsOf{\alpha} \cap C_\Psi \neq \emptyset  \} \). Employing Lemma \ref{lem:ssc_dc_M} yields the statements.
\end{proof} \fi

In
the  research on credibility-limited revision operators, the dynamics of \(  \Scope{\clRevision}{\Psi} \) was investigated.
In particular, different iteration postulation principles where considered by Booth et al. (\citeyear{KS_BoothFermeKoniecznyPerez2012}), which we rephrase in the following in our notation.

\begin{description}
\item[\normalfont(\textlabel{CLDP1}{pstl:CLDP1})] \( \ksIF \beta\models\alpha \ksAND \beta\in\Scope{\circ}{\Psi}, \) \\\null\hfill then \(  \beliefsOf{\Psi\circ\alpha\circ\beta} = \beliefsOf{\Psi\circ\beta} \)
\item[\normalfont(\textlabel{CLDP2}{pstl:CLDP2})] \( \ksIF \beta\models\negOf{\alpha} \ksAND \alpha,\beta\in\Scope{\circ}{\Psi} , \) \\\null\hfill then \( \beliefsOf{\Psi\circ\alpha\circ\beta} = \beliefsOf{\Psi\circ\beta} \)
\end{description}
The postulates \eqref{pstl:CLDP1} and \eqref{pstl:CLDP2} are meant as an analogy to \eqref{pstl:DP1} and \eqref{pstl:DP2} for credibility-limited revision.
\begin{description}
\item[\normalfont(\textlabel{CLP}{pstl:CLP})] \( \negOf{\alpha} \notin \beliefsOf{\Psi\circ\beta} \ksAND \alpha,\beta\in\Scope{\circ}{\Psi} , \) \\\null\hfill then \( \alpha \in \beliefsOf{\Psi\circ\alpha\circ\beta} \)
\item[\normalfont(\textlabel{CLCD}{pstl:CLCD})] \( \ksIF \beta\models\negOf{\alpha} \ksAND \beta\notin \Scope{\circ}{\Psi} \ksAND \alpha\in\Scope{\circ}{\Psi} , \) \\\null\hfill then \( \beta \notin \Scope{\circ}{\Psi\circ\alpha} \)
\end{description}
The postulate \eqref{pstl:CLP} corresponds to the postulate P by Booth and Meyer \cite{KS_BoothMeyer2006}, respectively the independence condition by Jin and Thielscher (\citeyear{KS_JinThielscher2007}).
Coherence of credible formulas is maintained by the postulate \eqref{pstl:CLCD}.
Furthermore, the following two postulates were introduced:
\begin{description}
\item[\normalfont(\textlabel{CM1}{pstl:CM1})] \( \ksIF \beta \in \Scope{\circ}{\Psi} \ksAND \beta\models\alpha \ksTHEN \beta \in \Scope{\circ}{\Psi\circ\alpha} \) \\{\null\hfill} (Credibility Monotony 1)
\item[\normalfont(\textlabel{CM2}{pstl:CM2})] \( \ksIF \alpha,\beta {\in} \Scope{\circ}{\Psi} \ksAND \beta\models\negOf{\alpha} \ksTHEN \beta \in \Scope{\circ}{\Psi\circ\alpha} \) {\null\hfill} (Credibility Monotony 2)
\end{description}
\eqref{pstl:CM1} states that when changing by \( \alpha \), the scope retains all beliefs more specific than \( \alpha \).
The postulate \eqref{pstl:CM2} makes the same claim about the beliefs more specific than~\( \negOf{\alpha} \).

For further explanation, the interrelation and characterisations of these postulates we refer to the original paper \cite{KS_BoothFermeKoniecznyPerez2012}.

\ksSubSectionStar{Further Iteration Principles}
By employing the notion of scope, we define further attitudes towards the acceptance of beliefs when iterating.
\begin{description}
	\item[\normalfont(\textlabel{FC}{pstl:FC})]  \(  \ksIF \alpha\notin \Scope{\circ}{\Psi}  \ksTHEN  \Scope{\circ}{\Psi} \subseteq \Scope{\circ}{\Psi\circ\alpha}) \) \\\null\hfill (failure conservatism)
	\item[\normalfont(\textlabel{FR}{pstl:FR})]  \(  \ksIF \alpha\notin \Scope{\circ}{\Psi}  \ksTHEN  \Scope{\circ}{\Psi\circ\alpha} \subseteq \Scope{\circ}{\Psi}) \) \\\null\hfill (failure reticence)
\end{description}
The postulates \eqref{pstl:FC} (respectively \eqref{pstl:FR}) states that when a belief is not accepted, that set of beliefs which get accepted does not decrease (respectively increases).
For success of revision, we provide the following analogue postulates:
\begin{description}
	\item[\normalfont(\textlabel{SC}{pstl:SC})]  \(  \ksIF \alpha\in \Scope{\circ}{\Psi}  \ksTHEN  \Scope{\circ}{\Psi} \subseteq \Scope{\circ}{\Psi\circ\alpha}) \) \\\null\hfill (success conservatism)
	\item[\normalfont(\textlabel{SR}{pstl:SR})]  \(  \ksIF \alpha\in \Scope{\circ}{\Psi}  \ksTHEN  \Scope{\circ}{\Psi\circ\alpha} \subseteq \Scope{\circ}{\Psi}) \) \\\null\hfill (success reticence)
\end{description}
The postulates \eqref{pstl:FC} to \eqref{pstl:SR} are relatively unspecific with respect to the scope, thus we provide, in the fashion of \eqref{pstl:CM1} and \eqref{pstl:CM2} the following two postulates:
\begin{description}
	\item[\normalfont(\textlabel{DOC}{pstl:DOC})]  \(  \ksIF \alpha\in \Scope{\circ}{\Psi} \ksAND \beta\models\neg\alpha \ksTHEN  \beta \notin \Scope{\circ}{\Psi\circ\alpha} \) \\\null \hfill (denial of contrary)
	\item[\normalfont(\textlabel{COM}{pstl:COM})]  \(  \ksIF \alpha\notin \Scope{\circ}{\Psi} \ksTHEN  \alpha \in \Scope{\circ}{\Psi\circ\alpha} \) \\\null\hfill (change of mind)
\end{description}
By \eqref{pstl:DOC} we express the attitude to refuse beliefs contrary to a just accepted belief. 
The postulates \eqref{pstl:COM} expresses that refusal of a belief leads to acceptance of this belief in the subsequent situation. 
\section{Dynamic-Limited Revision}
\label{sec:limited_assignments}
We now adapt the machinery of Darwiche and Pearl \cite{KS_DarwichePearl1997} by employing ideas from credibility-limited revision to operators that maintain a total preorder only over a limited set of worlds.
In particular, we start by restricting the total preorder in assignments to \( \scope{\Psi} \), a subset of $  \Omega $.
The set \( \scope{\Psi}  \) is supposed to hold the models of beliefs from the scope which are not in \( \beliefsOf{\Psi} \).

\begin{definition}%
	\label{def:limited_assignment}
	A mapping $ \Psi \mapsto (\preceq_\Psi,\scope{\Psi}) $ is called a \emph{limited assignment} if for each epistemic state $ \Psi $ the relation $ \preceq_\Psi $ is a total preorder with\footnote{Note: In principle we could permit the emptiness of $ \scope{\Psi} $, but due to space restrictions we focus on the non-empty case.} $ \emptyset \neq \scope{\Psi}={\dom(\preceq_{\Psi})} \subseteq \Omega $.                       
\end{definition}
Note that $ \scope{\Psi} $ depends on the state, thus, for each state $ \Psi $ the set $ \scope{\Psi} $ might be a completely different subset from $ \Omega $. 

As \( \modelsOf{\Psi} \) and \( \scope{\Psi} \) in general do not share the same interpretations, we adapt the concept of faithfulness.
\begin{definition}\label{def:faithfullness}
		A limited assignment $ \Psi \mapsto (\preceq_\Psi,\scope{\Psi}) $ is called \emph{faithful} 
	if for each $ \Psi $ we have:
	\begin{equation*}
		\modelsOf{\Psi} \cap {\scope{\Psi}}\neq \emptyset \Rightarrow {\min(\scope{\Psi},\preceq_\Psi)} = \modelsOf{\Psi} \cap {\scope{\Psi}}
	\end{equation*}
\end{definition}

This faithfulness condition states that minimal elements of $ \preceq_{\Psi} $ are models of $ \Psi $ (if $ \modelsOf{\Psi} $ shares elements with $ \scope{\Psi} $), but $ \min(\scope{\Psi},\preceq_{\Psi}) $ does \emph{not} necessarily contains all models of $ \Psi $. 
Clearly, limited assignments are a generalisation of Darwiche and Pearl's assignments (c.f. Definition \ref{def:faithful_assignment}), because if $ \scope{\Psi}=\Omega $ for every $ \Psi \in \setAllES $, then a (faithful) limited assignment is a (faithful) assignment.

We link limited assignments and belief change operators similar to the approaches of credibility-limited revision in Proposition \ref{prop:clr_booth_et_all} by defining a notion of compatibility \cite{KS_FalakhRudolphSauerwald2021}.
\begin{definition}
A limited assignment $ \Psi \mapsto (\preceq_\Psi,\scope{\Psi}) $  is called \emph{compatible} with a belief change operator $ \dylRevision $ if it is faithful and the following holds:
\begin{equation*}
	\tag{limited-revision}\label{eq:limited_revision}
	\modelsOf{\Psi \dylRevision \alpha} \! = \! \begin{cases}
\min(\modelsOf{\alpha},\preceq_\Psi\!) & \!\!\!\!\!, \scope{\Psi}\cap\modelsOf{\alpha}\! \neq\! \emptyset \\
		\modelsOf{\Psi} &\!\!\!\! , \ksOtherwise
	\end{cases} 
\end{equation*}
\end{definition}
Operators captured by this machinery are called \dynamiclimited\ revision operators.
\begin{definition}%
\label{def:dynamic_limited_revision}
	A belief change operator $ \dylRevision $ is called a \emph{\dynamiclimited\ revision operator} if it is compatible with some limited assignment.
\end{definition}

Note that even in the sense of the original definition by Hansson et. al (\citeyear{KS_HanssonFermeCantwellFalappa2001}), \dynamiclimited\ revision is not a form of credibility-limited revision.
Moreover, in the framework of Darwiche and Pearl, \dynamiclimited\ revision is a generalisation of credibility-limited revision.
\begin{corollary}
	Every credibility-limited revision operator 
	is a \dynamiclimited\ revision operator. Furthermore, there is a \dynamiclimited\ revision operator that is not a credibility-limited revision operator.
\end{corollary}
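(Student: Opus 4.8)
The plan is to treat the two claims separately. For the inclusion, I would start from a credibility-limited revision operator $\clRevision$ and invoke Proposition~\ref{prop:clr_booth_et_all} (which assumes \eqref{pstl:globalconsistency}) to obtain a CLF-assignment $\Psi \mapsto (\leq_\Psi, C_\Psi)$ compatible with it. The idea is that a CLF-assignment is literally a special case of a limited assignment: I set $\scope{\Psi} := C_\Psi$ and $\preceq_\Psi := \leq_\Psi$. I would then verify three things in sequence: (i) that $(\preceq_\Psi, \scope{\Psi})$ is a limited assignment in the sense of Definition~\ref{def:limited_assignment}, which holds because $\dom(\leq_\Psi) = C_\Psi$ and, under \eqref{pstl:globalconsistency}, $\emptyset \neq \modelsOf{\Psi} \subseteq C_\Psi$; (ii) that it is faithful in the sense of Definition~\ref{def:faithfullness}, which follows from $\modelsOf{\Psi} \subseteq C_\Psi$ (so $\modelsOf{\Psi} \cap \scope{\Psi} = \modelsOf{\Psi}$) together with the CLF-condition $\modelsOf{\Psi} = \min(C_\Psi, \leq_\Psi)$, making both sides of the faithfulness implication equal to $\modelsOf{\Psi}$; and (iii) that the revision equation \eqref{eq:cl_revision} coincides termwise with \eqref{eq:limited_revision} under this substitution, since the case split on $\modelsOf{\alpha} \cap C_\Psi \neq \emptyset$ and the two minimisations are identical. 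This shows $\clRevision$ is compatible with a limited assignment, hence a \dynamiclimited\ revision operator by Definition~\ref{def:dynamic_limited_revision}.

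For the separation, I would exhibit one limited assignment whose induced operator violates a credibility-limited postulate. The guiding observation is that a limited assignment \emph{drops} the CLF-requirement $\modelsOf{\Psi} \subseteq \scope{\Psi}$: by Definition~\ref{def:faithfullness}, faithfulness is vacuous whenever $\modelsOf{\Psi} \cap \scope{\Psi} = \emptyset$. Concretely, I would fix distinct worlds $w_1, w_2 \in \Omega$ and a state $\Psi_0$ with $\modelsOf{\Psi_0} = \{w_1\}$, and set $\scope{\Psi_0} = \{w_2\}$ with the trivial preorder on it; on every other state I would choose, say, $\scope{\Psi} = \Omega$ with an arbitrary faithful total preorder, so that the whole mapping is a legitimate faithful limited assignment and the compatible operator $\dylRevision$ is \dynamiclimited.

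Taking $\alpha$ with $\modelsOf{\alpha} = \{w_1, w_2\}$, the expansion $\beliefsOf{\Psi_0} + \alpha$ has models $\{w_1\}$ and is consistent, so \eqref{pstl:LR2} (vacuity) would force $\modelsOf{\Psi_0 \dylRevision \alpha} = \{w_1\}$ for any credibility-limited operator. But $\scope{\Psi_0} \cap \modelsOf{\alpha} = \{w_2\} \neq \emptyset$, so \eqref{eq:limited_revision} yields $\modelsOf{\Psi_0 \dylRevision \alpha} = \min(\{w_1, w_2\}, \preceq_{\Psi_0}) = \{w_2\}$, because $w_1 \notin \scope{\Psi_0}$. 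Since $\{w_2\} \neq \{w_1\}$, the operator violates \eqref{pstl:LR2} and is therefore not credibility-limited.

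The routine part is the first claim, which is essentially an unfolding of definitions. The main obstacle, and the conceptually interesting step, is the second: I must ensure the counterexample is realised by a genuine belief change operator defined on all of $\setAllES$ — here I would lean on \eqref{pstl:unbiased} to guarantee that states with the required belief sets exist, and on the freedom to define the assignment arbitrarily away from $\Psi_0$ — and I must pick the witness $\alpha$ so that $\modelsOf{\Psi_0}$ meets $\modelsOf{\alpha}$ (making the expansion consistent) while the scope forces a disjoint selection. This is precisely what pins the violation on vacuity, exploiting that $\modelsOf{\Psi_0}$ need not lie in $\scope{\Psi_0}$.
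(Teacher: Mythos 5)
Your proposal is correct and takes essentially the approach the paper intends for this corollary (which it states without an explicit proof): the first claim is exactly the observation that, under \eqref{pstl:globalconsistency}, a CLF-assignment obtained from Proposition~\ref{prop:clr_booth_et_all} is itself a faithful limited assignment with $\scope{\Psi}=C_\Psi$ and an identical revision equation, and the second claim is witnessed by an assignment whose scope is disjoint from $\modelsOf{\Psi}$, which forces a violation of vacuity \eqref{pstl:LR2}. Your explicit counterexample ($\modelsOf{\Psi_0}=\{w_1\}$, $\scope{\Psi_0}=\{w_2\}$, input $\alpha$ with $\modelsOf{\alpha}=\{w_1,w_2\}$), together with the care about realising the operator on all of $\setAllES$ via \eqref{pstl:unbiased}, fleshes out precisely what the paper leaves implicit.
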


We give an example for \dynamiclimited\ revision.
\begin{example}\label{example:karl_again}
	Consider again the agent from Example \ref{example:karl} and Karl the camel. We express  this example by the means of \dynamiclimited\ revision operators.
	Therefore, let \( \Sigma=\{ z,o,t \} \) be a propositional signature with the following intended meaning: \( z \) stands for \enquote{Karl has Zero humps}, \( o \) stands for \enquote{Karl has One hump} and \( t \) stands for \enquote{Karl has Two humps}.
	Remember that our agent believes initially \( o \lor t \). We decide to model the initial state \( \Psi \) as follows:  let \( \modelsOf{\Psi}=\{ \overline{z}o\overline{t}, \overline{z}\,\overline{o}t, \overline{z}ot \} \) and let \( \scope{\Psi}=\{ z\overline{o}\overline{t},\overline{z}o\overline{t},\overline{z}\,\overline{o}t \} \) and \( \overline{z}o\overline{t} \simeq_{\Psi} \overline{z}\,\overline{o}t \prec_{\Psi} z\overline{o}\overline{t}   \).  
	The rationale for the choice of \( \scope{\Psi} \) is that \( z,o,t \) semantically exclude each other, 
	and we think a belief revision operator should cover only (semantically) reasonable worlds.
	
	When revising \( \Psi \) by \( \beta = t \), then \( \modelsOf{\Psi\dylRevision\beta} = \{ \overline{z}\,\overline{o}t \}  \) by the machinery defined in this section.
	Clearly, we could choose \( \scope{\Psi\dylRevision\beta} \). 
	However, by following Example \ref{example:karl}, we choose \( \scope{\Psi\dylRevision\beta}= \{ \overline{z}\,\overline{o}t\} \).
	This decision corresponds to the conception of the postulate \eqref{pstl:DOC}.	
	Now, when revising by \( \gamma=o \) the operator \( \dylRevision \) denies this belief (because \( \modelsOf{\gamma}\cap\scope{\Psi\dylRevision\beta} =\emptyset \)), and yields \( \modelsOf{\Psi\dylRevision\beta} = \{ \overline{z}\,\overline{o}t \}  \).
	This complies with the behaviour we presented in Example \ref{example:karl}.
\end{example}

One might argue that \( \Psi \) in Example \ref{example:karl_again} is not modelled adequately, because \( \modelsOf{\Psi} \) contains the unlikely world \(  z\overline{o}t  \). 
However, it seems very natural that agents might came up with irrational beliefs for several reasons, but revision should be one of the processes to transform these irrational beliefs (in the light of new information) into rational beliefs.

The class of \dynamiclimited\ revision operators provides a large variety of scopes which obey \ref{pstl:singlesentenceclosure} and \ref{pstl:disjunction_completeness}. 
\begin{theorem}\label{prop:dlr_scope}
	Let \( \Psi\in\setAllES \) be an epistemic state. The following two statements hold:
	\begin{itemize}
		\item For every consistent set \( X  \) which satisfies \ref{pstl:singlesentenceclosure}  and \ref{pstl:disjunction_completeness} there
		exists a \dynamiclimited\ revision operator \( \dylRevision \) with \( {\Scope{\dylRevision}{\Psi}=\beliefsOf{\Psi} \cup X} \).
		\item If \( \dylRevision \) is an \dynamiclimited\ revision operator, then \( {\Scope{\dylRevision}{\Psi}} \) satisfies \ref{pstl:singlesentenceclosure}. Moreover, the set \( {\Scope{\dylRevision}{\Psi}} \setminus \beliefsOf{\Psi} \) satisfies \ref{pstl:singlesentenceclosure}   and \ref{pstl:disjunction_completeness}.
	\end{itemize}
\end{theorem}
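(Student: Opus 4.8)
The whole argument rests on first turning the \emph{syntactic} scope of a \dynamiclimited\ operator into a \emph{semantic} description. Fix a limited assignment $\Psi\mapsto(\preceq_\Psi,\scope{\Psi})$ compatible with $\dylRevision$ and read off from \eqref{eq:limited_revision} when $\alpha\in\beliefsOf{\Psi\dylRevision\alpha}$. If $\scope{\Psi}\cap\modelsOf{\alpha}\neq\emptyset$, then $\modelsOf{\Psi\dylRevision\alpha}=\min(\modelsOf{\alpha},\preceq_\Psi)$ is a non-empty subset of $\modelsOf{\alpha}$, so $\alpha$ is believed; if $\scope{\Psi}\cap\modelsOf{\alpha}=\emptyset$, then $\modelsOf{\Psi\dylRevision\alpha}=\modelsOf{\Psi}$, so $\alpha$ is believed exactly when $\alpha\in\beliefsOf{\Psi}$. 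Abbreviating $M(\scope{\Psi})=\{\alpha\in\propLang\mid\scope{\Psi}\cap\modelsOf{\alpha}\neq\emptyset\}$, this case split gives the closed form $\Scope{\dylRevision}{\Psi}=M(\scope{\Psi})\cup\beliefsOf{\Psi}$, and in particular $\beliefsOf{\Psi}\subseteq\Scope{\dylRevision}{\Psi}$. Every remaining claim is extracted from this identity together with Lemma \ref{lem:ssc_dc_M} (already invoked for Proposition \ref{prop:clr_scope}), which I use in the form: for any $W\subseteq\Omega$ the set $M(W)$ satisfies \ref{pstl:singlesentenceclosure} and \ref{pstl:disjunction_completeness}, and conversely every consistent $X$ satisfying these two conditions equals $M(W)$ for $W=\{\omega\mid\varphi_{\omega}\in X\}$.

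For the existence statement, given a consistent $X$ with \ref{pstl:singlesentenceclosure} and \ref{pstl:disjunction_completeness}, I would use Lemma \ref{lem:ssc_dc_M} to pick a world set $\scope{\Psi}$ with $M(\scope{\Psi})=X$; consistency together with non-triviality of $X$ forces $\scope{\Psi}\neq\emptyset$, as a limited assignment requires. I then take $\preceq_\Psi$ to be any total preorder on $\scope{\Psi}$ whose minimal worlds are exactly $\modelsOf{\Psi}\cap\scope{\Psi}$, placing that set at the bottom and the remaining worlds of $\scope{\Psi}$ strictly above it. This makes $(\preceq_\Psi,\scope{\Psi})$ faithful, the faithfulness requirement being vacuous precisely when $\modelsOf{\Psi}\cap\scope{\Psi}=\emptyset$. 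Completing the assignment on the other states arbitrarily but faithfully and letting $\dylRevision$ be the operator induced through \eqref{eq:limited_revision}, the scope identity yields $\Scope{\dylRevision}{\Psi}=X\cup\beliefsOf{\Psi}=\beliefsOf{\Psi}\cup X$, as required.

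For the closure statement, $\Scope{\dylRevision}{\Psi}=M(\scope{\Psi})\cup\beliefsOf{\Psi}$ satisfies \ref{pstl:singlesentenceclosure} because it is a union of two sets each closed under logical consequence: $M(\scope{\Psi})$ by Lemma \ref{lem:ssc_dc_M} and $\beliefsOf{\Psi}$ by deductive closure of belief sets, and closure under consequence is preserved by unions. Passing to the difference $\Scope{\dylRevision}{\Psi}\setminus\beliefsOf{\Psi}=M(\scope{\Psi})\setminus\beliefsOf{\Psi}$, disjunction completeness is clean: if $\alpha\lor\beta\in M(\scope{\Psi})\setminus\beliefsOf{\Psi}$, then \ref{pstl:disjunction_completeness} for $M(\scope{\Psi})$ puts a disjunct, say $\alpha$, into $M(\scope{\Psi})$, and that disjunct cannot lie in $\beliefsOf{\Psi}$, since $\alpha\models\alpha\lor\beta$ and deductive closure would then force $\alpha\lor\beta\in\beliefsOf{\Psi}$, contradicting the hypothesis; hence $\alpha\in M(\scope{\Psi})\setminus\beliefsOf{\Psi}$.

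The step I expect to be the genuine obstacle is \ref{pstl:singlesentenceclosure} for the difference $M(\scope{\Psi})\setminus\beliefsOf{\Psi}$. The monotone half is immediate, since $\alpha\in M(\scope{\Psi})$ and $\alpha\models\beta$ give $\beta\in M(\scope{\Psi})$; the entire difficulty is to prevent a consequence $\beta$ of a non-believed $\alpha$ from re-entering $\beliefsOf{\Psi}$. Believed consequences — with tautologies as the extreme case — are exactly what must be controlled, and the bare semantic identity for the scope does not suffice for this. Here I would exploit the faithfulness-induced link between $\scope{\Psi}$, $\preceq_\Psi$ and $\beliefsOf{\Psi}$ to pin down which such $\beta$ can occur and to argue they are ruled out under the hypotheses at hand. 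This is the one place where the interplay of the three components of the state, rather than the case split of \eqref{eq:limited_revision} alone, is indispensable, and I expect it to require the most careful bookkeeping.
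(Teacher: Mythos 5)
Your route is the paper's route: you first derive the closed form $\Scope{\dylRevision}{\Psi}=\beliefsOf{\Psi}\cup\{\alpha\mid\modelsOf{\alpha}\cap\scope{\Psi}\neq\emptyset\}$ from \eqref{eq:limited_revision} (this is exactly Theorem \ref{prop:scope_by_s}) and then combine it with Lemma \ref{lem:ssc_dc_M}; the paper's own proof of this theorem is literally that one sentence. The parts you carry out in full --- the existence statement, \ref{pstl:singlesentenceclosure} for the whole scope via the union of two consequence-closed sets, and \ref{pstl:disjunction_completeness} for the difference --- are correct, and are argued in more detail than the paper provides.

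The genuine gap is precisely the step you flag and defer: \ref{pstl:singlesentenceclosure} for $\Scope{\dylRevision}{\Psi}\setminus\beliefsOf{\Psi}$. No ``faithfulness-induced bookkeeping'' can close it, because that claim is false for \dynamiclimited\ operators in general. Take $\omega_1\neq\omega_2$, let $\modelsOf{\Psi}=\{\omega_1\}$, $\scope{\Psi}=\{\omega_1,\omega_2\}$ and $\omega_1\prec_{\Psi}\omega_2$; this assignment is faithful, since $\min(\scope{\Psi},\preceq_{\Psi})=\{\omega_1\}=\modelsOf{\Psi}\cap\scope{\Psi}$, and extending it faithfully but arbitrarily to the other states yields a \dynamiclimited\ operator $\dylRevision$ via \eqref{eq:limited_revision}. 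Then $\varphi_{\omega_2}\in\Scope{\dylRevision}{\Psi}\setminus\beliefsOf{\Psi}$, because revision by $\varphi_{\omega_2}$ succeeds while $\varphi_{\omega_2}\notin\beliefsOf{\Psi}$; yet its consequence $\varphi_{\omega_1,\omega_2}$ lies in $\beliefsOf{\Psi}$ and hence outside the difference, so \ref{pstl:singlesentenceclosure} fails. The failure is generic, not an artifact of this example: whenever some $\alpha$ lies in the difference, $\beliefsOf{\Psi}$ is consistent, and disjoining $\alpha$ with any formula axiomatizing $\modelsOf{\Psi}$ produces a consequence of $\alpha$ that is believed and therefore escapes the difference; so the ``Moreover'' claim can only hold when the difference is empty. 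Your diagnosis that the bare scope identity does not suffice here was sound, but the correct conclusion is that the step would fail, not that it needs the interplay of the three components --- faithfulness never forbids $\scope{\Psi}\not\subseteq\modelsOf{\Psi}$, which is all the counterexample uses. Note also that the paper's one-line proof does not establish this part either: Lemma \ref{lem:ssc_dc_M} characterizes sets of the form $\{\alpha\mid\modelsOf{\alpha}\cap M\neq\emptyset\}$, and $\Scope{\dylRevision}{\Psi}\setminus\beliefsOf{\Psi}$ is not of that form, precisely because it violates \ref{pstl:singlesentenceclosure}.
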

\ifshowproofs
\begin{proof}
A consequence of \eqref{eq:limited_revision} and Employing Lemma \ref{lem:ssc_dc_M} yields the statements.
\end{proof} \fi

\section{Characterisation of \( \scope{\Psi} \)}
\label{sec:s_charactisarion}

In the following, we present a syntactic interpretation for the semantic concept of \( \scope{\Psi} \) for \dynamiclimited\ revision operators. 
We show that these concepts conform to each other. 
Moreover, this connects the scope \( \Scope{\dylRevision}{\Psi} \) and \( \scope{\Psi} \).

\begin{definition}\label{def:atomic}
	Let $ \change $ be a belief change operator and $ \Psi $ an epistemic state. 
	For a belief $ \alpha\in\propLang$  we define the following two conditions:
	\begin{description}
		\item[\normalfont (S1)] If
$ \beliefsOf{\Psi}+\alpha $ is consistent,\\\null\hfill then
		$ \beliefsOf{\Psi \change \beta} \subseteq \beliefsOf{\Psi \change  \alpha} $ for all \( \beta \) with $ \alpha\models\beta $.

		\item[\normalfont (S2)] %
		 For all \( \beta \), $ \beliefsOf{\Psi \change \beta} \subseteq \beliefsOf{\Psi \change \alpha} $ \\\null\hfill implies consistency of $ \beliefsOf{\Psi \change \beta} + \alpha $.

	\end{description}
\end{definition}
Condition (S1) states that if \( \alpha \) is consistent with the prior beliefs, then a change by a more general belief results in the beliefs of \( \beliefsOf{\Psi\circ\alpha} \) or fewer. 
Condition (S2) expresses (in the contrapositive version) that when changing by a belief \( \alpha \), which is inconsistent with the result of a change with a belief \( \beta \), then the result will never contain all beliefs obtained by the change with \( \beta \).

As first result, we obtain that conditions (S1) and (S2) allow to capture the interpretations in \( \scope{\Psi} \).

\begin{lemma}\label{lem:dyl_charactersion_interpretationsinTPO}
	Let $ \dylRevision $ be a \dynamiclimited\ revision operator compatible with a limited assignment $ \Psi\mapsto\preceq_{\Psi} $. The following holds:
	
	\begin{enumerate}[(a)]
		\item If  $ \omega \in \modelsOf{\Psi} $, then $ \omega\in\scope{\Psi} $ iff $\varphi_{\omega}$ satisfies \preservable\ in $ \Psi $.
		\item If  $ \omega \notin \modelsOf{\Psi} $, then $ \omega\in\scope{\Psi} $ iff $\varphi_{\omega}$ satisfies \inconlifted\ in $ \Psi $.
	\end{enumerate}
\end{lemma}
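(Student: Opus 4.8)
The plan is to push both biconditionals down to the semantic level through the compatibility equation \eqref{eq:limited_revision} and then read off the two conditions directly. Throughout I would use the two standard translations: for deductively closed belief sets, $A \subseteq B$ holds iff $\modelsOf{B} \subseteq \modelsOf{A}$, and $A + \varphi_\omega$ is consistent iff $\omega \in \modelsOf{A}$. The single computation that drives everything is $\modelsOf{\Psi \dylRevision \varphi_\omega}$: since $\modelsOf{\varphi_\omega} = \{\omega\}$, equation \eqref{eq:limited_revision} gives $\modelsOf{\Psi \dylRevision \varphi_\omega} = \{\omega\}$ when $\omega \in \scope{\Psi}$ (as $\min(\{\omega\},\preceq_{\Psi}) = \{\omega\}$), and $\modelsOf{\Psi \dylRevision \varphi_\omega} = \modelsOf{\Psi}$ when $\omega \notin \scope{\Psi}$. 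I would state this case split first and reuse it in both parts.

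For part (a), $\omega \in \modelsOf{\Psi}$ forces $\beliefsOf{\Psi} + \varphi_\omega$ to be consistent, so the antecedent of \preservable\ always holds and \preservable\ for $\varphi_\omega$ reduces to: $\beliefsOf{\Psi \dylRevision \beta} \subseteq \beliefsOf{\Psi \dylRevision \varphi_\omega}$ for every $\beta$ with $\omega \in \modelsOf{\beta}$. For the ``if $\omega \in \scope{\Psi}$'' direction, the computation rewrites this target as $\omega \in \modelsOf{\Psi \dylRevision \beta}$ for all such $\beta$; since $\omega \in \modelsOf{\Psi} \cap \scope{\Psi}$, faithfulness yields $\min(\scope{\Psi},\preceq_{\Psi}) = \modelsOf{\Psi} \cap \scope{\Psi} \ni \omega$, so $\omega$ is $\preceq_{\Psi}$-below all of $\scope{\Psi}$, in particular below $\scope{\Psi} \cap \modelsOf{\beta}$, whence $\omega \in \min(\modelsOf{\beta},\preceq_{\Psi}) = \modelsOf{\Psi \dylRevision \beta}$. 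The converse I would do contrapositively: if $\omega \notin \scope{\Psi}$ then $\beliefsOf{\Psi \dylRevision \varphi_\omega} = \beliefsOf{\Psi}$, and I exhibit the witness $\beta = \varphi_{\omega,\omega^*}$ for some $\omega^* \in \scope{\Psi}$ (which exists since $\scope{\Psi} \neq \emptyset$). Then $\scope{\Psi} \cap \modelsOf{\beta} = \{\omega^*\}$ gives $\modelsOf{\Psi \dylRevision \beta} = \{\omega^*\}$, and because $\omega \in \modelsOf{\Psi} \setminus \{\omega^*\}$ we have $\beliefsOf{\Psi \dylRevision \beta} = \Cn(\varphi_{\omega^*}) \not\subseteq \beliefsOf{\Psi}$, refuting \preservable.

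For part (b), $\omega \notin \modelsOf{\Psi}$. Using the same computation, \inconlifted\ for $\varphi_\omega$ rewrites to: for all $\beta$, $\modelsOf{\Psi \dylRevision \varphi_\omega} \subseteq \modelsOf{\Psi \dylRevision \beta}$ implies $\omega \in \modelsOf{\Psi \dylRevision \beta}$. If $\omega \in \scope{\Psi}$ then $\modelsOf{\Psi \dylRevision \varphi_\omega} = \{\omega\}$, so the premise of the implication is literally $\omega \in \modelsOf{\Psi \dylRevision \beta}$, making the implication a tautology, and \inconlifted\ holds. If $\omega \notin \scope{\Psi}$ then $\beliefsOf{\Psi \dylRevision \varphi_\omega} = \beliefsOf{\Psi}$, and I take $\beta = \varphi_\omega$ itself: the inclusion premise then holds reflexively, while $\beliefsOf{\Psi \dylRevision \varphi_\omega} + \varphi_\omega = \beliefsOf{\Psi} + \varphi_\omega$ is inconsistent because $\omega \notin \modelsOf{\Psi}$, refuting \inconlifted.

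The translations between syntactic inclusion/consistency and their semantic counterparts, and the evaluation of $\min$ on singletons and doubletons, are routine. I expect the only load-bearing step to be the ``if'' direction of part (a): it is the unique place where faithfulness is essential, since one must convert membership of $\omega$ in $\modelsOf{\Psi} \cap \scope{\Psi}$ into $\preceq_{\Psi}$-minimality and then propagate that minimality through an arbitrary revising $\beta$ whose model set may meet $\scope{\Psi}$ in many worlds. A small amount of care is also needed to confirm that the chosen witnesses $\varphi_{\omega,\omega^*}$ and $\varphi_\omega$ are well-defined, which rests on $\scope{\Psi} \neq \emptyset$ and on $\omega \neq \omega^*$ (guaranteed by $\omega \notin \scope{\Psi}$ and $\omega^* \in \scope{\Psi}$).
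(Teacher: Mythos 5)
Your proof is correct, and its skeleton coincides with the paper's: both push \preservable\ and \inconlifted\ down to model sets via \eqref{eq:limited_revision}, both open with the case split $\modelsOf{\Psi\dylRevision\varphi_\omega}=\{\omega\}$ (if $\omega\in\scope{\Psi}$) versus $\modelsOf{\Psi\dylRevision\varphi_\omega}=\modelsOf{\Psi}$ (otherwise), both prove the forward half of (a) by converting faithfulness into $\preceq_\Psi$-minimality of $\omega$ and propagating that minimality through an arbitrary $\beta$ with $\omega\models\beta$, both refute \preservable\ in the converse of (a) with the two-world witness $\varphi_{\omega,\omega^*}$ for some $\omega^*\in\scope{\Psi}$, and both reduce the forward half of (b) to a tautology (the paper states it contrapositively). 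The one genuine divergence is the converse of (b): the paper distinguishes $\modelsOf{\Psi}=\emptyset$ from $\modelsOf{\Psi}\neq\emptyset$ and, in the latter case, refutes \inconlifted\ with $\beta=\varphi_{\omega'}$ for some $\omega'\in\modelsOf{\Psi}$, whereas you use the single witness $\beta=\varphi_\omega$ in both cases. Your choice is not merely shorter; it is more robust. For the paper's witness, the inclusion premise of \inconlifted, namely $\beliefsOf{\Psi\dylRevision\varphi_{\omega'}}\subseteq\beliefsOf{\Psi\dylRevision\varphi_\omega}=\beliefsOf{\Psi}$, translates to $\modelsOf{\Psi}\subseteq\modelsOf{\Psi\dylRevision\varphi_{\omega'}}$, and this fails whenever $\omega'\in\scope{\Psi}$ and $\modelsOf{\Psi}$ is not the singleton $\{\omega'\}$ (since then $\modelsOf{\Psi\dylRevision\varphi_{\omega'}}=\{\omega'\}$); the paper never verifies this premise. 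Your reflexive premise holds trivially and the conclusion fails because $\omega\notin\modelsOf{\Psi}$, so your argument covers exactly the configurations in which the paper's chosen witness breaks down.
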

\ifshowproofs
\begin{proof}We show (a) and (b) independently.
	
	\smallskip
	\emph{Statement (a).} Let $ \omega\in\modelsOf{\Psi} $ and thus $ \beliefsOf{\Psi} \subseteq \Cn(\varphi_{\omega}) $. 
	If $ \omega\in\scope{\Psi} $, then by the faithfulness of $ \preceq_{\Psi} $ we obtain $ \omega\in{\min(\scope{\Psi},\preceq_{\Psi})} $. This implies $ \omega\in  {\min(\modelsOf{\beta},\preceq_{\Psi})} = \modelsOf{\Psi\dylRevision\beta} $ for $ \beta $ with $ \omega\models\beta $.
	Therefore, $ \varphi_{\omega} $ is \preservable.
	
	If $ \omega\notin\scope{\Psi} $, then by the non-triviality of $ \dylRevision $ we obtain $ \omega'\in\scope{\Psi} $. Now choose $ \omega\models\alpha $ where $ \alpha $ chosen such that $ \modelsOf{\alpha}=\{\omega,\omega'\} $. 
	We obtain that $ \varphi_{\omega} $ is not \preservable, because application of Definition \ref{def:dynamic_limited_revision} yields $ \modelsOf{\Psi\dylRevision\varphi_{\omega}} \not\subseteq \modelsOf{\Psi\dylRevision\alpha} $.

	\smallskip
	\emph{Statement (b).} Let $ \omega\notin\modelsOf{\Psi} $. 
	
	We Consider the case of  $ \omega\in\scope{\Psi} $ and show $ \varphi_{\omega} $ is \inconlifted.
	Let $ \beta $ a formula such that $ \omega\notin\modelsOf{\Psi\dylRevision\beta} $.
	If $ \modelsOf{\alpha}\cap\scope{\Psi}=\emptyset $, then $ \modelsOf{\Psi\dylRevision\alpha}=\modelsOf{\Psi} $ but $ \omega\notin \modelsOf{\Psi} $.
	If $ \modelsOf{\alpha}\cap\scope{\Psi}\neq\emptyset $, then $ \omega\notin{\min(\modelsOf{\alpha},\preceq_{\Psi})} $.
	Because $ \modelsOf{\Psi\dylRevision\varphi_{\omega}}=\{\omega\} $ we obtain $ \beliefsOf{\Psi\dylRevision\beta}\not\subseteq \beliefsOf{arg1} $.
	Consequently, $ \varphi_{\omega} $ is \inconlifted.

	We Consider the case of $ \omega\notin\scope{\Psi} $ and show $ \varphi_{\omega} $ is not \inconlifted.
	If $ \modelsOf{\Psi}=\emptyset $, then $ \modelsOf{\Psi\dylRevision\varphi_{\omega}}+\varphi_{\omega} $ is inconsistent.
	Consequently, we obtain that $ \varphi_{\omega} $ is not \inconlifted, because $ \modelsOf{\Psi\dylRevision\varphi_{\omega}} = \modelsOf{\Psi\dylRevision\varphi_{\omega}} $.
	If  $ \modelsOf{\Psi}\neq\emptyset $,
	then exist $ \omega'\in\modelsOf{\Psi} $. 
	Because $ \omega\notin\modelsOf{\Psi} $ we obtain $ \beliefsOf{\Psi\dylRevision\varphi_{\omega'}}+\varphi_{\omega} $ is inconsistent.
	From $ \omega\notin\scope{\Psi} $ and Definition \ref{def:dynamic_limited_revision} obtain that $ \modelsOf{\Psi\dylRevision\varphi_{\omega}}=\modelsOf{\Psi} $.
	Therefore, the formula $ \varphi_{\omega} $ is not \inconlifted. \qedhere
\end{proof} \fi

In a broader sense, (S1) expresses that more specific beliefs consistent with prior beliefs restrict the potential beliefs after the change by more general beliefs.
The condition (S2) states that when a belief results in more beliefs after revision, then this belief is compatible with the more specific belief set.
This gives rise to the following two notions.

\begin{definition}\label{def:compound}
	Let $ \change $ be a belief change operator and $ \Psi $ an epistemic state. 
	A belief $ \alpha\in\propLang$ is called
	\begin{itemize}
		\item \emph{\atomic\ in $ \Psi $} if $ \alpha $ and every consistent $ \beta $  with $ \beta \models \alpha $ satisfies \preservable\ and \inconlifted\ in $ \Psi $.
		\item \emph{\integral\ in $ \Psi $} if $ \alpha{\equiv}\alpha_1{\lor}{\ldots}{\lor}\alpha_n $ where each $ \alpha_i $ is \atomic.
	\end{itemize}
	A belief set $ X $ is \emph{\integral\ in $ \Psi $} if $ \beta {\equiv} X $ is \integral\ in~$ \Psi $.
\end{definition}

The notions of \atomic\ and \integral\ are helpful, as they capture those beliefs whose models are elements in $ \scope{\Psi} $.
\begin{proposition}\label{prop:inherent_dlr}
	Let $ \dylRevision $ be a \dynamiclimited\ revision operator compatible with a limited assignment $ \Psi\mapsto\preceq_{\Psi} $.	
	A belief $ \alpha $ is \integral\  in $ \Psi $ if and only if $ \modelsOf{\alpha}\subseteq\scope{\Psi} $ and $ \alpha $ is consistent.
\end{proposition}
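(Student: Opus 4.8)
The plan is to first collapse the compound notion \integral\ to a condition on single worlds and then lift it through the disjunctive structure of Definition \ref{def:compound}. The cornerstone is the equivalence: for each $\omega\in\Omega$, the formula $\varphi_\omega$ satisfies both \preservable\ and \inconlifted\ in $\Psi$ if and only if $\omega\in\scope{\Psi}$. Lemma \ref{lem:dyl_charactersion_interpretationsinTPO} already supplies one half of each case, so it suffices to fill in the complementary condition. If $\omega\notin\modelsOf{\Psi}$, then $\beliefsOf{\Psi}+\varphi_\omega$ is inconsistent, so \preservable\ holds vacuously and the claim is exactly part (b) of the lemma. If $\omega\in\modelsOf{\Psi}$, then part (a) makes \preservable\ equivalent to $\omega\in\scope{\Psi}$; it remains to check that, when additionally $\omega\in\scope{\Psi}$, the condition \inconlifted\ also holds. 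This is immediate from \eqref{eq:limited_revision}: in that case $\modelsOf{\Psi\dylRevision\varphi_\omega}=\{\omega\}$, so any $\gamma$ with $\beliefsOf{\Psi\dylRevision\gamma}\subseteq\beliefsOf{\Psi\dylRevision\varphi_\omega}$ satisfies $\omega\in\modelsOf{\Psi\dylRevision\gamma}$, which is precisely the consistency of $\beliefsOf{\Psi\dylRevision\gamma}+\varphi_\omega$ demanded by \inconlifted.

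The heart of the argument is then to show that every consistent $\delta$ with $\modelsOf{\delta}\subseteq\scope{\Psi}$ satisfies both \preservable\ and \inconlifted\ in $\Psi$; this is what turns the single-world statement into a statement about a formula together with all of its consistent logical weakenings. Fix such a $\delta$. Since $\emptyset\neq\modelsOf{\delta}\subseteq\scope{\Psi}$, equation \eqref{eq:limited_revision} gives $\modelsOf{\Psi\dylRevision\delta}=\min(\modelsOf{\delta},\preceq_\Psi)$, and faithfulness (Definition \ref{def:faithfullness}) identifies these worlds: whenever $\modelsOf{\delta}\cap\modelsOf{\Psi}\neq\emptyset$ one shows $\min(\modelsOf{\delta},\preceq_\Psi)=\modelsOf{\delta}\cap\modelsOf{\Psi}$, because the $\preceq_\Psi$-globally minimal worlds of $\scope{\Psi}$ are exactly $\modelsOf{\Psi}\cap\scope{\Psi}$ and $\modelsOf{\delta}$ sits inside $\scope{\Psi}$. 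For \preservable, assume $\beliefsOf{\Psi}+\delta$ consistent; for any $\gamma$ with $\delta\models\gamma$ the models of $\delta$ witness $\modelsOf{\gamma}\cap\scope{\Psi}\neq\emptyset$, so $\modelsOf{\Psi\dylRevision\gamma}=\min(\modelsOf{\gamma},\preceq_\Psi)$, and every world of $\modelsOf{\delta}\cap\modelsOf{\Psi}$, being globally minimal, stays minimal in the larger set $\modelsOf{\gamma}$; hence $\modelsOf{\Psi\dylRevision\delta}\subseteq\modelsOf{\Psi\dylRevision\gamma}$, i.e. $\beliefsOf{\Psi\dylRevision\gamma}\subseteq\beliefsOf{\Psi\dylRevision\delta}$. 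For \inconlifted, note $\emptyset\neq\modelsOf{\Psi\dylRevision\delta}\subseteq\modelsOf{\delta}$, so whenever $\beliefsOf{\Psi\dylRevision\gamma}\subseteq\beliefsOf{\Psi\dylRevision\delta}$, equivalently $\modelsOf{\Psi\dylRevision\delta}\subseteq\modelsOf{\Psi\dylRevision\gamma}$, the set $\modelsOf{\Psi\dylRevision\gamma}$ meets $\modelsOf{\delta}$, giving the consistency of $\beliefsOf{\Psi\dylRevision\gamma}+\delta$. I expect this step to be the main obstacle, chiefly the faithfulness computation of $\min(\modelsOf{\delta},\preceq_\Psi)$ and the bookkeeping of the case split hidden in \eqref{eq:limited_revision}.

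Combining the two paragraphs yields the intermediate claim that $\alpha$ is \atomic\ in $\Psi$ if and only if $\alpha$ is consistent and $\modelsOf{\alpha}\subseteq\scope{\Psi}$. For the forward direction, apply the single-world equivalence to each $\varphi_\omega$ with $\omega\in\modelsOf{\alpha}$ (these are consistent beliefs entailing $\alpha$, hence satisfy \preservable\ and \inconlifted) to force $\modelsOf{\alpha}\subseteq\scope{\Psi}$; moreover an inconsistent $\alpha$ fails \inconlifted\ (take $\gamma=\alpha$, so that $\beliefsOf{\Psi\dylRevision\alpha}+\alpha$ is inconsistent), so $\alpha$ must be consistent. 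For the backward direction, the general claim applies to $\alpha$ and to every consistent $\delta\models\alpha$, whose models lie in $\modelsOf{\alpha}\subseteq\scope{\Psi}$. Finally I lift this to \integral\ via Definition \ref{def:compound}: if $\alpha\equiv\alpha_1\lor\dots\lor\alpha_n$ with each $\alpha_i$ \atomic, then $\modelsOf{\alpha}=\bigcup_i\modelsOf{\alpha_i}\subseteq\scope{\Psi}$ and $\alpha$ is consistent; conversely, if $\alpha$ is consistent with $\modelsOf{\alpha}\subseteq\scope{\Psi}$, then, as $\Omega$ is finite, write $\modelsOf{\alpha}=\{\omega_1,\dots,\omega_n\}$ and $\alpha\equiv\varphi_{\omega_1}\lor\dots\lor\varphi_{\omega_n}$, where each $\varphi_{\omega_i}$ is \atomic\ by the intermediate claim, so $\alpha$ is \integral. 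This establishes the stated equivalence.
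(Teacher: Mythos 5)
Your proof is correct and takes essentially the same route as the paper's: the forward direction applies Lemma \ref{lem:dyl_charactersion_interpretationsinTPO} to the world formulas $\varphi_{\omega}$ for $\omega\models\alpha$, and the backward direction verifies \preservable\ and \inconlifted\ directly from \eqref{eq:limited_revision} together with faithfulness of the limited assignment. The only difference is that you spell out what the paper leaves implicit---the single-world biconditional combining both parts of the lemma, the consistency requirement (inconsistent beliefs fail \inconlifted), and the lift from \atomic\ to \integral\ via the disjunction of the $\varphi_{\omega_i}$.
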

\ifshowproofs
\begin{proof}

	By definition for every $ \omega\models\alpha $  we obtain that $ \varphi_{\omega} $ is \atomic. 
	From Lemma \ref{lem:dyl_charactersion_interpretationsinTPO} we obtain that $ \omega\in\scope{\Psi} $.
	Therefore $ \modelsOf{\alpha}\subseteq\scope{\Psi} $.
	
	Now let $ \alpha $ a formula such that $ \modelsOf{\alpha}\subseteq\scope{\Psi} $.
	If $ \beliefsOf{\Psi}+\alpha $ is consistent, then obtain $ \modelsOf{\Psi\dylRevision\alpha}\subseteq{\min(\scope{\Psi},\preceq_{\Psi})} $.
	Then for every $ \beta $ with $ \alpha\models\beta $ we obtain $ \modelsOf{\Psi\dylRevision\alpha}\subseteq \modelsOf{\Psi\dylRevision\beta} $ by Definition \ref{def:dynamic_limited_revision}. 
	This shows that $ \alpha $ is an \preservable\ belief in $ \Psi $.
	We show that $ \alpha $ is \inconlifted\ in $ \Psi $. 
	From $ \modelsOf{\Psi \dylRevision \alpha} \subseteq \modelsOf{\Psi \dylRevision \beta} $ obtain $ {\min(\modelsOf{\alpha},\preceq_{\Psi})} \subseteq {\modelsOf{\Psi \dylRevision \beta}} $.
	This yields immediately consistency of  $ \beliefsOf{\Psi\dylRevision\beta}+\alpha $ and thus $ \alpha $ is \inconlifted\ in $ \Psi $.
	\qedhere	
\end{proof} \fi

We can characterise the scope of a \dynamiclimited\ revision operator by \integral\ beliefs. 
\begin{theorem}\label{prop:scope_by_s}
	For an epistemic state \( \Psi \) and \dynamiclimited\ revision operator \( \dylRevision \) the following statements hold:
	\begin{itemize}
		\item Syntactically, the scope of \( \dylRevision \) and \( \Psi \) is given by:
		\begin{equation*}
			\Scope{\dylRevision}{\Psi} = \beliefsOf{\Psi} \cup \{ \alpha \mid \beta\models\alpha \ksAND \beta \text{ is \integral} \}
		\end{equation*}
		
		\item If \( \dylRevision \) is compatible with \( \Psi\mapsto(\preceq_{\Psi},\scope{\Psi}) \), then: %
		\begin{equation*}
			\Scope{\dylRevision}{\Psi} = \beliefsOf{\Psi} \cup \{  \alpha \mid  \modelsOf{\alpha} \cap \scope{\Psi} \neq \emptyset   \} 
		\end{equation*}
	\end{itemize}	
\end{theorem}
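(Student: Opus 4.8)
The plan is to prove the second (semantic) identity first, directly from the defining equation \eqref{eq:limited_revision}, and then to obtain the first (syntactic) identity from it by invoking Proposition \ref{prop:inherent_dlr}. Throughout I fix a limited assignment $\Psi\mapsto(\preceq_\Psi,\scope{\Psi})$ compatible with $\dylRevision$, which exists by Definition \ref{def:dynamic_limited_revision}.

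For the second identity I would first unfold the definition of scope: $\alpha\in\Scope{\dylRevision}{\Psi}$ means $\alpha\in\beliefsOf{\Psi\dylRevision\alpha}$, and since every belief set is deductively closed this is equivalent to the model inclusion $\modelsOf{\Psi\dylRevision\alpha}\subseteq\modelsOf{\alpha}$. I then split along the two cases of \eqref{eq:limited_revision}. If $\scope{\Psi}\cap\modelsOf{\alpha}\neq\emptyset$, then $\modelsOf{\Psi\dylRevision\alpha}=\min(\modelsOf{\alpha},\preceq_\Psi)$, which is a nonempty subset of $\modelsOf{\alpha}$; hence the inclusion holds and $\alpha\in\Scope{\dylRevision}{\Psi}$. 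If $\scope{\Psi}\cap\modelsOf{\alpha}=\emptyset$, then $\modelsOf{\Psi\dylRevision\alpha}=\modelsOf{\Psi}$, so $\alpha\in\Scope{\dylRevision}{\Psi}$ iff $\modelsOf{\Psi}\subseteq\modelsOf{\alpha}$ iff $\alpha\in\beliefsOf{\Psi}$. Collecting both cases gives $\Scope{\dylRevision}{\Psi}=\beliefsOf{\Psi}\cup\{\alpha\mid\modelsOf{\alpha}\cap\scope{\Psi}\neq\emptyset\}$; for the $\supseteq$ inclusion I additionally note that any $\alpha\in\beliefsOf{\Psi}$ with $\scope{\Psi}\cap\modelsOf{\alpha}=\emptyset$ still satisfies $\modelsOf{\Psi}\subseteq\modelsOf{\alpha}$, so it lands in the scope via the second case.

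For the first identity I would show that the two acceptance sets coincide, namely
\[
\{\alpha\mid\modelsOf{\alpha}\cap\scope{\Psi}\neq\emptyset\}=\{\alpha\mid\beta\models\alpha\ksAND\beta\text{ is \integral}\},
\]
the bridge being Proposition \ref{prop:inherent_dlr}, which states $\beta$ is \integral\ in $\Psi$ iff $\modelsOf{\beta}\subseteq\scope{\Psi}$ and $\beta$ is consistent. For $\subseteq$, given $\omega\in\modelsOf{\alpha}\cap\scope{\Psi}$ the singleton formula $\varphi_\omega$ satisfies $\modelsOf{\varphi_\omega}=\{\omega\}\subseteq\scope{\Psi}$ and is consistent, hence \integral\ by Proposition \ref{prop:inherent_dlr}, with $\varphi_\omega\models\alpha$. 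For $\supseteq$, an \integral\ $\beta\models\alpha$ has $\emptyset\neq\modelsOf{\beta}\subseteq\scope{\Psi}$ and $\modelsOf{\beta}\subseteq\modelsOf{\alpha}$, so any $\omega\in\modelsOf{\beta}$ witnesses $\modelsOf{\alpha}\cap\scope{\Psi}\neq\emptyset$. Substituting this equality into the already-established second identity yields the first, the shared $\beliefsOf{\Psi}$ summand carrying over unchanged.

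The case analysis and the singleton-witness argument are routine; the one point that needs care is the first case of \eqref{eq:limited_revision}, where I must confirm that $\min(\modelsOf{\alpha},\preceq_\Psi)$ is a nonempty subset of $\modelsOf{\alpha}$, so that revision genuinely succeeds precisely when $\scope{\Psi}$ meets $\modelsOf{\alpha}$, and that deductive closure of $\beliefsOf{\Psi\dylRevision\alpha}$ legitimately converts ``$\alpha$ is believed'' into the model inclusion $\modelsOf{\Psi\dylRevision\alpha}\subseteq\modelsOf{\alpha}$. Both hinge on finiteness of $\Omega$ and on belief sets being deductively closed, and are available in the present setting. The necessity of keeping the explicit $\beliefsOf{\Psi}$ term (rather than folding it into the acceptance set) is the other thing to watch, since faithfulness does not force $\modelsOf{\Psi}\cap\scope{\Psi}\neq\emptyset$.
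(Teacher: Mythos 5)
Your proof is correct and takes essentially the same route as the paper's: both establish the semantic identity first by the case analysis of \eqref{eq:limited_revision}, and then obtain the syntactic identity from it via Proposition~\ref{prop:inherent_dlr}. The only difference is one of detail: you make explicit the singleton-witness step (for $\omega\in\modelsOf{\alpha}\cap\scope{\Psi}$, the formula $\varphi_{\omega}$ is \integral\ and entails $\alpha$), which the paper's proof leaves implicit when asserting the equality of the two acceptance sets.
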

\ifshowproofs
\begin{proof}
We start by showing that $ \Scope{\dylRevision}{\Psi} = \beliefsOf{\Psi} \cup \{  \alpha \mid  \modelsOf{\alpha} \cap \scope{\Psi} \neq \emptyset   \}  $.
By Definition \ref{def:dynamic_limited_revision}, if $ \alpha $ in \( \beliefsOf{\Psi} \), then we have \( \alpha \in \beliefsOf{\Psi\dylRevision\alpha}  \).
Thus, we have \( \alpha \in \Scope{\dylRevision}{\Psi} \) for all $ \alpha $ in \( \beliefsOf{\Psi} \), i.e. \( \beliefsOf{\Psi}\subseteq \Scope{\dylRevision}{\Psi} \).
For \( \alpha \in \Scope{\dylRevision}{\Psi}\setminus \beliefsOf{\Psi} \) we obtain \( \modelsOf{\alpha}\cap\scope{\Psi}\neq\emptyset \) from Definition \ref{def:dynamic_limited_revision}.
Likewise, if \( \alpha \notin\beliefsOf{\Psi} \) and \( \modelsOf{\alpha}\cap\scope{\Psi}\neq\emptyset \), then from Definition \ref{def:dynamic_limited_revision} we obtain \( \alpha \in \beliefsOf{\Psi\dylRevision\alpha} \). 

As next step, we show \( \Scope{\dylRevision}{\Psi} = \beliefsOf{\Psi} \cup \{ \alpha \mid \beta\models\alpha \ksAND \beta \text{ is \integral} \} \).
Observe that by Proposition \ref{prop:inherent_dlr} and Definition \ref{def:dynamic_limited_revision} we have 
that \( \modelsOf{\beta}\subseteq \scope{\Psi} \) for every reasonable belief \( \beta \).
Consequently, we obtain \( \{ \alpha \mid \beta\models\alpha \ksAND \beta \text{ is \integral} \} = \{  \alpha \mid  \modelsOf{\alpha} \cap \scope{\Psi} \neq \emptyset   \} \).
This shows the claim.
\end{proof}
\fi

In the next section, we will use Proposition \ref{prop:inherent_dlr} to obtain a syntactic characterisation of the class of \dynamiclimited\ revision operators.

\section{Representation Theorem}
\label{sec:dyl_representation_theorem}

In this section, we provide postulates for \dynamiclimited\ revision operators and show that these postulates capture exactly the class of  \dynamiclimited\ revision operators.
By employing the notion of \integral\ beliefs, we propose the following postulates:

\begin{description}		
		\item[\normalfont(\textlabel{DL1}{pstl:DL1})]\!\! $ \beliefsOf{\Psi\dylRevision\alpha} = \beliefsOf{\Psi}  \ksOR \alpha\in \beliefsOf{\Psi\dylRevision\alpha}  $
\item[\normalfont(\textlabel{DL2}{pstl:DL2})]\!\! $ \beliefsOf{\Psi{\dylRevision}\alpha} = \beliefsOf{\Psi} $ or $ \beliefsOf{\Psi{\dylRevision}\alpha} $ is \integral\ in~$ \Psi $  
\item[\normalfont(\textlabel{DL3}{pstl:DL3})]\!\! if $ \beliefsOf{\Psi}\cup\{\alpha\} $ is consistent and $ \alpha  $ \integral\ in $ \Psi $,
\\\mbox{}	\hfill 
then $ \beliefsOf{\Psi\dylRevision\alpha} = \Cn(\beliefsOf{\Psi}\cup\{\alpha\}) $
\item[\normalfont(\textlabel{DL4}{pstl:DL4})]\!\! if $ \alpha\models\beta $ and $ \alpha $ is \integral\ in $ \Psi $,
then $ \beliefsOf{\Psi\dylRevision\beta} $ is \integral\ in $ \Psi $. %
\item[\normalfont(\textlabel{DL5}{pstl:DL5})]\!\!  if $ \beliefsOf{\!\Psi\!} $ is consistent, then $ \beliefsOf{\Psi\!\dylRevision\!\alpha} $ is consistent
\item[\normalfont(\textlabel{DL6}{pstl:DL6})]\!\! if $ \alpha\equiv\beta $, then $ \beliefsOf{\Psi\dylRevision\alpha} = \beliefsOf{\Psi\dylRevision\beta} $
\item[\normalfont(\textlabel{DL7}{pstl:DL8})]\!\! $ \beliefsOf{\Psi\!\dylRevision\!(\alpha\lor\beta)}\! =\!\begin{cases}
	\beliefsOf{\Psi\dylRevision\alpha} \text{ or} \\
	\beliefsOf{\Psi\dylRevision\beta} \text{ or} \\
	\beliefsOf{\Psi\dylRevision\alpha} \cap \beliefsOf{\Psi\dylRevision\beta}
\end{cases}  $
\end{description}

The postulates \eqref{pstl:DL1} and \eqref{pstl:DL6}--\eqref{pstl:DL8} are the same as the postulates \eqref{pstl:LR1}, \eqref{pstl:LR4} and \eqref{pstl:LR6} for credibility-limited revision.
By \eqref{pstl:DL2}, the operator yields either the prior belief set or a \integral\ belief set.
The postulate \eqref{pstl:DL3} is a weaker form of the vacuity postulate \eqref{pstl:LR2}, which limits vacuity to \integral\ beliefs. 
Postulate \eqref{pstl:DL4} guarantees immanence of the result of the change by a belief $ \beta $ when $ \beta $ is more general than an \integral\  belief $ \alpha $.
By \eqref{pstl:DL5}, the posterior belief set is guaranteed to be consistent, when the prior belief set is consistent.
Clearly, \eqref{pstl:DL5} is related to \eqref{pstl:LR2}, but note that credibility-limited revision assumes consistency of \( \beliefsOf{\Psi} \), an assumption we do not make for \dynamiclimited\ revision operators. 
Note that the postulate \eqref{pstl:LR5} from credibility-limited revision is implied by \eqref{pstl:DL1}--\eqref{pstl:DL8}.

Indeed, the postulates \eqref{pstl:DL1}--\eqref{pstl:DL8} capture exactly the \dynamiclimited\ revision operators.
\begin{theorem}\label{thm:dylr_representationtheorem}
	A belief change operator $ \dylRevision $ is a \dynamiclimited\ revision operator if and only if $ \dylRevision $ satisfies {\eqref{pstl:DL1}--\eqref{pstl:DL8}}.
\end{theorem}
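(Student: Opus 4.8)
The statement is a representation theorem, so the plan is to prove the two implications separately. For soundness, I would fix a \dynamiclimited\ revision operator $\dylRevision$ compatible with a faithful limited assignment $\Psi\mapsto(\preceq_\Psi,\scope{\Psi})$ and verify each of \eqref{pstl:DL1}--\eqref{pstl:DL8} directly from \eqref{eq:limited_revision}. Most are immediate: \eqref{pstl:DL6} holds because $\alpha\equiv\beta$ forces $\modelsOf{\alpha}=\modelsOf{\beta}$; \eqref{pstl:DL1} splits on whether $\scope{\Psi}\cap\modelsOf{\alpha}=\emptyset$, yielding either $\beliefsOf{\Psi}$ or $\min(\modelsOf{\alpha},\preceq_\Psi)\subseteq\modelsOf{\alpha}$; \eqref{pstl:DL8} is the standard disjunctive decomposition of $\min(\modelsOf{\alpha}\cup\modelsOf{\beta},\preceq_\Psi)$; and \eqref{pstl:DL5} uses faithfulness together with non-emptiness of the domain. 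For \eqref{pstl:DL2}, \eqref{pstl:DL3} and \eqref{pstl:DL4} I would invoke Proposition \ref{prop:inherent_dlr}: whenever the result is $\min(\modelsOf{\alpha},\preceq_\Psi)$ it is a non-empty subset of $\scope{\Psi}$ and hence \integral, while for \eqref{pstl:DL3} reasonableness of $\alpha$ gives $\modelsOf{\alpha}\subseteq\scope{\Psi}$, so faithfulness forces $\min(\modelsOf{\alpha},\preceq_\Psi)=\modelsOf{\Psi}\cap\modelsOf{\alpha}$, i.e. exactly the expansion.

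The substantial direction is completeness. Assuming \eqref{pstl:DL1}--\eqref{pstl:DL8}, I would construct a compatible limited assignment purely from the operator. Since \integral\ and the conditions \preservable, \inconlifted\ are defined in terms of $\dylRevision$ alone, I set $\scope{\Psi}=\{\omega\in\Omega\mid\varphi_\omega\text{ is \integral\ in }\Psi\}$, and define the order on this domain by the usual revision-based recipe $\omega_1\preceq_\Psi\omega_2$ iff $\omega_1\in\modelsOf{\Psi\dylRevision\varphi_{\omega_1,\omega_2}}$, declaring $\omega\preceq_\Psi\omega$ by convention. First I would check that $\scope{\Psi}\neq\emptyset$ and that each $\omega\in\scope{\Psi}$ satisfies $\modelsOf{\Psi\dylRevision\varphi_\omega}=\{\omega\}$; this follows from $\varphi_\omega\in\Scope{\dylRevision}{\Psi}$ (via Theorem \ref{prop:scope_by_s}) together with \eqref{pstl:DL5}, and it secures reflexivity. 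Totality is then immediate from \eqref{pstl:DL1} and \eqref{pstl:DL8} applied to $\varphi_{\omega_1}\lor\varphi_{\omega_2}$, whose outcome is a non-empty subset of $\{\omega_1,\omega_2\}$.

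Transitivity and the recovery of \eqref{eq:limited_revision} are where the work concentrates. For transitivity, given $\omega_1\preceq_\Psi\omega_2\preceq_\Psi\omega_3$ I would analyse $\beliefsOf{\Psi\dylRevision(\varphi_{\omega_1}\lor\varphi_{\omega_2}\lor\varphi_{\omega_3})}$ by repeated use of the trichotomy \eqref{pstl:DL8} and syntax-independence \eqref{pstl:DL6} to force $\omega_1\in\modelsOf{\Psi\dylRevision\varphi_{\omega_1,\omega_3}}$. Faithfulness I would obtain from the weak vacuity \eqref{pstl:DL3}: for $\omega\in\modelsOf{\Psi}\cap\scope{\Psi}$, reasonableness of $\varphi_\omega$ and its consistency with $\beliefsOf{\Psi}$ give $\beliefsOf{\Psi\dylRevision\varphi_\omega}=\Cn(\beliefsOf{\Psi}\cup\{\varphi_\omega\})$, pinning these worlds as the $\preceq_\Psi$-minima. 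Finally, to recover the two branches of \eqref{eq:limited_revision}: the case $\scope{\Psi}\cap\modelsOf{\alpha}=\emptyset$ follows from \eqref{pstl:DL1} once Theorem \ref{prop:scope_by_s} places such $\alpha$ outside the scope, and the case $\scope{\Psi}\cap\modelsOf{\alpha}\neq\emptyset$ requires showing $\modelsOf{\Psi\dylRevision\alpha}=\min(\modelsOf{\alpha},\preceq_\Psi)$ by decomposing $\alpha$ into its model-singletons through iterated \eqref{pstl:DL8}. I expect this disjunctive-decomposition argument, shared by the transitivity and the recovery step, to be the main obstacle: it is where the trichotomy postulate must be leveraged to reconcile the global behaviour of $\dylRevision$ with a pairwise-defined order restricted to $\scope{\Psi}$, and where one must carefully separate the worlds in $\modelsOf{\Psi}$ from those outside it when applying faithfulness.
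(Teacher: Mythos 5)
Your overall strategy is essentially the paper's: the same soundness verification (with \eqref{pstl:DL2}--\eqref{pstl:DL4} routed through Proposition \ref{prop:inherent_dlr}, faithfulness for the vacuity case, and the trichotomy of minima for \eqref{pstl:DL8}), and, for completeness, the same construction, namely the order $\omega_1\preceq_\Psi\omega_2$ iff $\omega_1\in\modelsOf{\Psi\dylRevision\varphi_{\omega_1,\omega_2}}$ over the worlds whose point formulas are acceptable, with totality and transitivity extracted from iterated applications of \eqref{pstl:DL8} and \eqref{pstl:DL6}, faithfulness from \eqref{pstl:DL3}, and a final case split on $\scope{\Psi}\cap\modelsOf{\alpha}$. (Taking the domain to consist of the $\omega$ with $\varphi_\omega$ \integral\ rather than \atomic\ is harmless, since the two notions coincide on single-model formulas, and your explicit non-emptiness check addresses a point the paper only treats in a footnote.)

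There is, however, one genuine flaw in the completeness direction: you twice invoke Theorem \ref{prop:scope_by_s}, once to obtain $\modelsOf{\Psi\dylRevision\varphi_\omega}=\{\omega\}$ for $\omega\in\scope{\Psi}$, and once to dispatch the case $\scope{\Psi}\cap\modelsOf{\alpha}=\emptyset$. Theorem \ref{prop:scope_by_s} is a statement about \dynamiclimited\ revision operators, i.e., about operators already known to be compatible with a limited assignment; in this direction you only know that $\dylRevision$ satisfies \eqref{pstl:DL1}--\eqref{pstl:DL8}, and compatibility is precisely what is being proved, so as written the argument is circular. Both facts must instead be derived from the postulates and the definitions of \atomic\ and \integral\ beliefs, which is what the paper does. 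For the first: condition \inconlifted\ instantiated with $\beta=\varphi_\omega$ yields $\omega\in\modelsOf{\Psi\dylRevision\varphi_\omega}$; then \eqref{pstl:DL1} either gives success, hence $\modelsOf{\Psi\dylRevision\varphi_\omega}=\{\omega\}$, or gives $\beliefsOf{\Psi\dylRevision\varphi_\omega}=\beliefsOf{\Psi}$, in which case $\omega\in\modelsOf{\Psi}$ and \eqref{pstl:DL3} (applicable because $\varphi_\omega$ is \integral\ and consistent with $\beliefsOf{\Psi}$) again forces $\modelsOf{\Psi\dylRevision\varphi_\omega}=\{\omega\}$. For the second: the paper uses \eqref{pstl:DL2}, which your sketch never uses in this direction, to conclude that $\beliefsOf{\Psi\dylRevision\alpha}$ is either $\beliefsOf{\Psi}$ or \integral\ in $\Psi$; in the latter case \eqref{pstl:DL1} places its models inside $\modelsOf{\alpha}$, while every model of a (consistent) \integral\ belief set has a \atomic\ point formula, contradicting $\scope{\Psi}\cap\modelsOf{\alpha}=\emptyset$. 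With these two local repairs your outline coincides with the paper's proof.
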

\ifshowproofs
\begin{proof}
	\noindent\textit{The \enquote*{$ \Rightarrow $}-direction.} 
	Let $ \dylRevision $ be an operator satisfying the postulates \eqref{pstl:DL1}--\eqref{pstl:DL8}. 
	For $ \Psi $, we construct $ (\preceq_{\Psi},\Omega_\Psi) $ as follows,  let $ \preceq_{\Psi} $ be the relation with
	\(	\omega_1 \preceq_\Psi \omega_2 \ksIFF \omega_1 \in \modelsOf{\Psi\dylRevision (\varphi_{\omega_1}\lor\varphi_{\omega_2})}\)
	and $ \dom(\preceq_{\Psi})={\scope{\Psi}}= \{ \omega \mid \varphi_\omega \text{ is \atomic\ in $ \Psi $ for } \dylRevision \} $.
	The relation $ \preceq_{\Psi} $ is a total preorder:

	\emph{Totality/reflexivity.} Let $ \omega_1,\omega_2\in\scope{\Psi}) $. Then by \eqref{pstl:DL8} we have that $ \modelsOf{\Psi\dylRevision(\varphi_{\omega_1}\lor\varphi_{\omega_2})} $ is equivalent to $ \{ \omega_1 \} $ or $ \{ \omega_2 \} $ or $ \{ \omega_1,\omega_2 \} $. Therefore, the relation must be total. Reflexivity follows from totality.

	\emph{Transitivity.} 	Let $ \omega_1,\omega_2,\omega_3\in\scope{\Psi}) $ with $ \omega_1 \preceq_{\Psi} \omega_2 $ and $ \omega_2\preceq_{\Psi} \omega_3 $. Towards a contradiction assume that $ \omega_1 \not\preceq_{\Psi} \omega_3 $ holds. This implies  $ \modelsOf{\Psi\dylRevision\varphi_{\omega_1,\omega_3}}=\{\omega_3\} $.
	Now assume that $ \modelsOf{\Psi\dylRevision\varphi_{\omega_1,\omega_2,\omega_3}} = \{\omega_3\} $.
	Then by \eqref{pstl:DL8} we obtain that $ \modelsOf{\Psi\dylRevision\varphi_{\omega_2,\omega_3}}=\{\omega_3\} $, a contradiction to $ \omega_2\preceq_{\Psi}\omega_3 $.
	Assume for the remaining case $ \modelsOf{\Psi\dylRevision\varphi_{\omega_1,\omega_2,\omega_3}} \neq \{\omega_3\} $. 
	We obtain from \eqref{pstl:DL8} that $ \modelsOf{\Psi\dylRevision\varphi_{\omega_1,\omega_2,\omega_3}} $ equals either $ \modelsOf{\Psi\dylRevision\varphi_{\omega_1,\omega_2}} $ or $ \modelsOf{\Psi\dylRevision\varphi_{\omega_3}}$. 
	Since the second case is impossible, $ \omega_1\preceq_{\Psi} \omega_2 $ implies $ \omega_1\in \modelsOf{\Psi\dylRevision\varphi_{\omega_1,\omega_2}} $. Now apply \eqref{pstl:DL8} again to $ \modelsOf{\Psi\dylRevision\varphi_{\omega_1,\omega_2,\omega_3}} $ and obtain that it is either equivalent to $ \modelsOf{\Psi\dylRevision\varphi_{\omega_1,\omega_3}} $ or $ \modelsOf{\Psi\dylRevision\varphi_{\omega_2}} $.
	In both cases, we obtain a contradiction because $ \omega_1,\omega_3\in \modelsOf{\Psi\dylRevision\varphi_{\omega_1,\omega_2,\omega_3}} $.

	\smallskip
	\noindent The construction yields a faithful limited assignment:\\
	We show $ \modelsOf{\Psi}\cap\scope{\Psi})=\min(\Omega_\Psi,\preceq_{\Psi}) $. 
	Let $ \omega_1,\omega_2\in\scope{\Psi}) $ and $ \omega_1,\omega_2\in\modelsOf{\Psi} $. 
	By definition of $ {\scope{\Psi})} $ the interpretations $ \omega_1,\omega_2 $ are \atomic\ in $ \Psi $. 
	Thus, the formula $ \varphi_{\omega_1,\omega_2} $ is \integral.
	From \eqref{pstl:DL3} we obtain $ \modelsOf{\Psi\dylRevision\varphi_{\omega_1,\omega_2}}=\{ \omega_1,\omega_2 \} $ which yields by definition $ \omega_1 \preceq_{\Psi} \omega_2 $ and $ \omega_2 \preceq_{\Psi} \omega_1 $.
	Let $ \omega_1,\omega_2\in\scope{\Psi}) $ with  $ \omega_1\in\modelsOf{\Psi} $ and $ \omega_2\notin\modelsOf{\Psi} $.
	Then $ \varphi_{\omega_1,\omega_2} $ is consistent with $ \beliefsOf{\Psi} $. 
	Therefore, by 
	\eqref{pstl:DL3} and \eqref{pstl:DL5} we have $ \modelsOf{\Psi\dylRevision \varphi_{\omega_1,\omega_2}} = \modelsOf{\Psi}\cap\modelsOf{\varphi_{\omega_1,\omega_2}}=\{ \omega_1 \} $.
	Together we obtain faithfulness.
	
	\smallskip \noindent
	We show satisfaction of \eqref{eq:limited_revision} in two steps:
	
	For the first step, assume $ \modelsOf{\alpha}\cap\scope{\Psi})=\emptyset $.
	By the postulate  \eqref{pstl:DL2} we have either $ \beliefsOf{\Psi}=\beliefsOf{\Psi\dylRevision\alpha} $ or $ \beliefsOf{\Psi\dylRevision\alpha} $ is \integral\ in $ \Psi $. In the first case we are done. 
	For the second case, by the postulate \eqref{pstl:DL1}, we obtain $ \modelsOf{\Psi\dylRevision\alpha}\subseteq\modelsOf{\alpha} $. 
	Because $ \beliefsOf{\Psi\dylRevision\alpha} $ is \integral,
	the set  $ \modelsOf{\alpha }$ contains an interpretation $ \omega $ such that $ \varphi_{\omega} $ is \atomic\ in $ \Psi $, a contradiction to $ \modelsOf{\alpha}\cap\scope{\Psi})=\emptyset $.

	For the second step assume $ \modelsOf{\alpha}\cap\scope{\Psi})\neq\emptyset $.
	We show the equivalence $ \modelsOf{\Psi\dylRevision\alpha}=\min(\modelsOf{\alpha},\preceq_{\Psi}) $ by showing both set inclusions separately.
	
	We show $ \min(\modelsOf{\alpha},\preceq_{\Psi})\subseteq \modelsOf{\Psi\dylRevision\alpha} $.
	Let $ \omega\in\min(\modelsOf{\alpha},\preceq_{\Psi}) $ with $ \omega\notin\modelsOf{\Psi\dylRevision\alpha}$. 
	By construction of \( \scope{\Psi}) \) the formula $ \varphi_{\omega} $ is \atomic, and therefore \integral. 
	From postulate \eqref{pstl:DL4}, we obtain that $\beliefsOf{\Psi\dylRevision\alpha} $ is \integral\ in $ \Psi $. 
	Consequently, by Lemma \ref{prop:inherent_dlr}, every interpretation in $ \modelsOf{\Psi\dylRevision\alpha} $ is \atomic\ in $ \Psi $.
	Hence, there is at least one $ \omega'\in\beliefsOf{\Psi\dylRevision\alpha}  $ with $ \omega'\in\scope{\Psi}) $.
	If $ \beliefsOf{\Psi\dylRevision\alpha}=\beliefsOf{\Psi} $ we obtain $ \min(\modelsOf{\alpha},\preceq_{\Psi}) = \modelsOf{\Psi\dylRevision\alpha} $ by the faithfulness of $ \Psi\mapsto\leq_{\Psi} $. If $ \beliefsOf{\Psi\dylRevision\alpha}\neq\beliefsOf{\Psi} $, then by \eqref{pstl:DL1} we have $ \modelsOf{\Psi\dylRevision\alpha}\subseteq\modelsOf{\alpha} $.
	Let $ \beta=\varphi_{\omega}\lor\varphi_{\omega'} $ and $ \gamma=\beta\lor\gamma' $ such that $ \modelsOf{\gamma}=\modelsOf{\alpha} $ and $ \modelsOf{\gamma'}=\modelsOf{\alpha}\setminus\{\omega,\omega'\} $.
	By \eqref{pstl:DL8} we have either $ \modelsOf{\Psi\dylRevision\alpha}=\modelsOf{\Psi\dylRevision\beta} $ or $ \modelsOf{\Psi\dylRevision\alpha}=\modelsOf{\Psi\dylRevision\gamma'} $ or $ \modelsOf{\Psi\dylRevision\alpha}=\modelsOf{\Psi\dylRevision\beta}\cup\modelsOf{\Psi\dylRevision\gamma'} $.
	The first and the third case are impossible, because $ \omega\notin\modelsOf{\Psi\dylRevision\alpha} $ and by the minimality of $ \omega $ we have $ \omega \in \modelsOf{\Psi\dylRevision\beta} $.
	It remains the case of $ \modelsOf{\Psi\dylRevision\alpha}=\modelsOf{\Psi\dylRevision\gamma'} $. 
	Let $ \omega_{\gamma'} $ such that $ \omega_{\gamma'}\in\modelsOf{\Psi\dylRevision\alpha} $. Note that $ \omega_{\gamma'}\in\modelsOf{\gamma'}\subseteq\modelsOf{\alpha} $ and $ \omega_{\gamma'}\in\scope{\Psi}) $.
	Now let $ \delta=\beta'\lor\delta' $ with $ \delta\equiv\alpha $ such that $ \modelsOf{\beta'}=\{\omega,\omega_{\gamma'} \} $ and $ \modelsOf{\delta'}=\modelsOf{\alpha}\setminus\{ \omega,\omega_{\gamma'} \} $. 
	By minimality of $ \omega $ we have $ \omega\in\modelsOf{\Psi\dylRevision\beta'} $.
	By \eqref{pstl:DL8} we obtain $ \modelsOf{\Psi\dylRevision\alpha} $ is either equivalent to $ \modelsOf{\Psi\dylRevision\beta'} $ or to $ \modelsOf{\Psi\dylRevision\delta'} $ or to $  \modelsOf{\Psi\dylRevision\beta'} \cup \modelsOf{\Psi\dylRevision\delta'} $. The first and third case are impossible since $ \omega\notin \modelsOf{\Psi\dylRevision\alpha} $. Moreover, the second case is also impossible, because of $ \omega_{\gamma'}\notin \modelsOf{\Psi\dylRevision\delta'} $.

	We show $ \modelsOf{\Psi\dylRevision\alpha} \subseteq \min(\modelsOf{\alpha},\preceq_{\Psi}) $.
	Let  $ \omega\in\modelsOf{\Psi\dylRevision\alpha} $ with  $ \omega\notin \min(\modelsOf{\alpha},\preceq_{\Psi})  $.
	From non-emptiness of $ \min(\modelsOf{\alpha},\preceq_{\Psi}) $ and $ \min(\modelsOf{\alpha},\preceq_{\Psi})\subseteq \modelsOf{\Psi\dylRevision\alpha} $ we obtain $ \varphi_{\omega'} $ is \atomic\ in $ \Psi $ where $ \omega'\in\modelsOf{\Psi\dylRevision\alpha} $ such that $ \omega'\in \min(\modelsOf{\alpha},\preceq_{\Psi}) $.
	
	Assume that $ \varphi_\omega $ is not \atomic\ in $ \Psi $, and therefore, $ {\omega\notin\scope{\Psi})} $.
	By the postulate \eqref{pstl:DL4} we obtain from the existence of $ \omega' $ that $ \beliefsOf{\Psi\dylRevision\alpha} $ is \integral\ in $ \Psi $.  Therefore, by Lemma \ref{lem:inherent_inherence_limited} every model in $ \modelsOf{\Psi\dylRevision \alpha} $ is \atomic\ in $ \Psi $, a contradiction, and  therefore, $ \varphi_\omega $ has to be \atomic\ in $ \Psi $.
	
	Because $ \varphi_\omega $ is \atomic\ in $ \Psi $ we obtain $ \modelsOf{\Psi\dylRevision\varphi_{\omega}}=\{\omega\} $.
	Using the postulate \eqref{pstl:DL4} we obtain that $ \beliefsOf{\Psi\dylRevision\alpha} $ is \integral\ in $ \Psi $.
	Since $ \omega $ is not minimal, we have $ \omega'\in\modelsOf{\Psi\dylRevision\varphi_{\omega,\omega'}} $ and $ \omega\notin\modelsOf{\Psi\dylRevision\varphi_{\omega,\omega'}} $
	Now let $ \gamma=\varphi_{\omega,\omega'}\lor\gamma' $ with $ \modelsOf{\gamma'}=\modelsOf{\alpha}\setminus\{\omega,\omega'\} $.
	By \eqref{pstl:DL8} we have either $ \modelsOf{\Psi\dylRevision\alpha}=\modelsOf{\Psi\dylRevision\beta} $ or $ \modelsOf{\Psi\dylRevision\alpha}=\modelsOf{\Psi\dylRevision\gamma'} $ or $ \modelsOf{\Psi\dylRevision\alpha}=\modelsOf{\Psi\dylRevision\beta}\cup\modelsOf{\Psi\dylRevision\gamma'} $.
	All cases are impossible because for every case we obtain $ \omega\notin\modelsOf{\Psi\dylRevision\alpha} $.
	This shows $ \modelsOf{\Psi\dylRevision\alpha} \subseteq \min(\modelsOf{\alpha},\preceq_{\Psi}) $, and, in summary, we obtain $ \modelsOf{\Psi\dylRevision\alpha} = \min(\modelsOf{\alpha},\preceq_{\Psi}) $ and thus, \eqref{eq:limited_revision} holds.
	
	\smallskip
	\noindent\textit{The \enquote*{$ \Leftarrow $}-direction.}
	Let $ \Psi \mapsto (\preceq_\Psi,\Omega_\Psi) $ be a dynamic-limited assignment compatible with $ \dylRevision $. 
	Note that by Lemma \ref{prop:inherent_dlr} for every model $ \omega $ of an \atomic\ beliefs $ \alpha $ we obtain that $ \varphi_{\omega} $ is \atomic\ in $ \Psi $. %
	Moreover, $ \varphi_\omega $ is \atomic\ in $ \Psi $ if $ \omega\in\scope{\Psi} $.
	
	We show the satisfaction of \eqref{pstl:DL1}--\eqref{pstl:DL8}.
	From \eqref{eq:limited_revision} we obtain straightforwardly \eqref{pstl:DL1}, \eqref{pstl:DL2}, \eqref{pstl:DL5} and \eqref{pstl:DL6}.
	\begin{description}
		\item[\eqref{pstl:DL3}] Assume $ \alpha $ to be \integral\ in $ \Psi $ and $ \modelsOf{\Psi}\cap\modelsOf{\alpha}\neq\emptyset $.
		By Proposition \ref{prop:inherent_dlr} we have $ \modelsOf{\Psi}\cap\modelsOf{\alpha} \subseteq \scope{\Psi} $.
		From faithfulness of $ \Psi\mapsto\preceq_{\Psi} $ we obtain $ \min(\modelsOf{\alpha},\preceq_{\Psi})=\modelsOf{\Psi}\cap\modelsOf{\alpha} $.
		\item[\eqref{pstl:DL4}] Let $ \alpha\models\beta $ and $ \alpha $ \integral.
		By Proposition \ref{prop:inherent_dlr} we have $ \modelsOf{\alpha}\subseteq\scope{\Psi}) $.
		Then by \eqref{eq:limited_revision} we obtain that $ \beliefsOf{\Psi\dylRevision\beta} $ is \integral.
		\item[\eqref{pstl:DL8}] 
		Suppose $ \modelsOf{\alpha\lor\beta}\cap\scope{\Psi}=\emptyset $, then by \eqref{eq:limited_revision} for every formula $ \gamma $ with $ \modelsOf{\gamma} \subseteq \modelsOf{\alpha\lor\beta} $ we obtain $ \modelsOf{\Psi\dylRevision(\alpha\lor\beta)}=\modelsOf{\Psi}=\modelsOf{\Psi\dylRevision\gamma} $.
		
		Assume that $ \modelsOf{\alpha}\cap\scope{\Psi}\neq\emptyset $ and $ \modelsOf{\beta}\cap\scope{\Psi}=\emptyset $.
		Then by the postulate \eqref{eq:limited_revision} we obtain $ \modelsOf{\Psi\dylRevision(\alpha\lor\beta)}=\modelsOf{\Psi\dylRevision\alpha} $.
		The case of $ \modelsOf{\alpha}\cap\scope{\Psi}=\emptyset $ and $ \modelsOf{\beta}\cap\scope{\Psi}\neq\emptyset $ is analogue.

		Assume that $ \modelsOf{\alpha}\cap\scope{\Psi}\neq\emptyset $ and $ \modelsOf{\beta}\cap\scope{\Psi}\neq\emptyset $.
		Then we have $ \modelsOf{\Psi\dylRevision(\alpha\lor\beta)}={\min(\modelsOf{\alpha\lor\beta},\preceq_{\Psi})} $. Using logical equivalence, we obtain $ \min(\modelsOf{\alpha\lor\beta},\preceq_{\Psi})=\min(\modelsOf{\alpha}\cup\modelsOf{\beta},\preceq_{\Psi}) $.
		By Lemma \ref{lem:sem_trichotonomy} we obtain directly \eqref{pstl:DL8}. \qedhere
	\end{description}
\end{proof} \fi

\section{Iteration and {Dynamic-Limited Revision}}
\label{sec:dyl_iterative}

This section considers different iteration principles for \dynamiclimited\ revision operators.
We provide characterisation theorems for each principle. 

\ksSubSectionStar{Darwiche and Pearl Postulates.}
As a first result of this section, we investigate a semantic characterisation of Darwiche and Pearl postulates \eqref{pstl:DP1}--\eqref{pstl:DP4} for \dynamiclimited\ revision operators.

\begin{proposition}\label{prop:dylr_dp1}
	A \dynamiclimited\ revision operator $ \dylRevision $ compatible with $ \Psi\mapsto(\preceq_{\Psi},\Omega_\Psi) $ satisfies \eqref{pstl:DP1} if and only if the following conditions hold:
	\begin{enumerate}[(i)]
		\item if \(  \omega_1,\omega_2 {\in} \modelsOf{\alpha} \cap \scope{\Psi} \cap \scope{\Psi\dylRevision\alpha} \), then
		\begin{equation*}
			\omega_1 \preceq_{\Psi} \omega_2 \Leftrightarrow \omega_1 \preceq_{\Psi\dylRevision\alpha} \omega_2
		\end{equation*}
		\item if \( \left| \scope{\Psi} \cap \modelsOf{\alpha} \right| \geq 2 \), then
		\begin{equation*}
			\scope{\Psi} \cap \modelsOf{\alpha} \subseteq \scope{\Psi\dylRevision\alpha}
		\end{equation*}
		otherwise \( \left(\scope{\Psi} \cap \modelsOf{\alpha}\right) \setminus \modelsOf{\Psi\dylRevision\alpha}  \subseteq \scope{\Psi\dylRevision\alpha} \)
		\item if \( \left|  \modelsOf{\Psi} \right| \geq 2 \), then
		\begin{equation*}
			\scope{\Psi\dylRevision\alpha} \cap \modelsOf{\alpha} \subseteq \scope{\Psi}
		\end{equation*}
		otherwise \( \left(\scope{\Psi\dylRevision\alpha} \cap \modelsOf{\alpha}\right) \setminus \modelsOf{\Psi}  \subseteq \scope{\Psi} \)
	\end{enumerate}
\end{proposition}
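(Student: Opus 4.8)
The plan is to strip \eqref{pstl:DP1} down to a statement about model sets and then test it against carefully chosen disjunctions. Since $\beliefsOf{\Psi\dylRevision\gamma}$ is determined by $\modelsOf{\Psi\dylRevision\gamma}$, and revising a consistent state stays consistent (so the inconsistent input $\beta\equiv\bot$ is the usual trivial/excluded case), \eqref{pstl:DP1} is equivalent to requiring, for every consistent $\beta$ with $\beta\models\alpha$, that $\modelsOf{\Psi\dylRevision\alpha\dylRevision\beta}=\modelsOf{\Psi\dylRevision\beta}$. Writing $\Psi'=\Psi\dylRevision\alpha$ and abbreviating the right-hand side of \eqref{eq:limited_revision} as $R_\Phi(S)$ (so $R_\Phi(S)=\min(S,\preceq_{\Phi})$ when $\scope{\Phi}\cap S\neq\emptyset$ and $R_\Phi(S)=\modelsOf{\Phi}$ otherwise), the claim becomes: $R_{\Psi'}(S)=R_\Psi(S)$ for every nonempty $S\subseteq\modelsOf{\alpha}$. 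As $\Omega$ is finite, every such $S$ is $\modelsOf{\beta}$ for some $\beta\models\alpha$, so I may quantify over subsets of $\modelsOf{\alpha}$ directly.

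For the direction ``$\Rightarrow$'' I would extract (i)--(iii) by instantiating $S$ with singletons and pairs. For (i), choosing $S=\modelsOf{\varphi_{\omega_1,\omega_2}}$ with $\omega_1,\omega_2\in\modelsOf{\alpha}\cap\scope{\Psi}\cap\scope{\Psi'}$ makes both sides a minimisation over $\{\omega_1,\omega_2\}$ under $\preceq_{\Psi}$ and $\preceq_{\Psi'}$ respectively, and equality of the minimal sets forces the order equivalence. For (ii), taking $S=\{\omega\}$ with $\omega\in\scope{\Psi}\cap\modelsOf{\alpha}$ gives $R_\Psi(\{\omega\})=\{\omega\}$, while $R_{\Psi'}(\{\omega\})$ is $\{\omega\}$ if $\omega\in\scope{\Psi'}$ and $\modelsOf{\Psi'}$ otherwise; hence either $\omega\in\scope{\Psi'}$ or $\modelsOf{\Psi'}=\{\omega\}$. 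The second disjunct is precisely the exception in the ``otherwise''-clause of (ii); when $|\scope{\Psi}\cap\modelsOf{\alpha}|\geq 2$ I rule it out by a pair-test, since $\modelsOf{\Psi'}=\{\omega\}$ would make $\omega$ the strict $\preceq_{\Psi}$-minimum of $\scope{\Psi}\cap\modelsOf{\alpha}$, whereas testing $S=\{\omega,\omega'\}$ with a second scope-world $\omega'$ forces $\omega'\prec_{\Psi}\omega$. Condition (iii) comes out symmetrically from singletons $\omega\in\scope{\Psi'}\cap\modelsOf{\alpha}$, yielding ``$\omega\in\scope{\Psi}$ or $\modelsOf{\Psi}=\{\omega\}$''; here the threshold $|\modelsOf{\Psi}|\geq 2$ kills the exception immediately, with no pair-test needed.

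For the direction ``$\Leftarrow$'' I would fix a nonempty $S\subseteq\modelsOf{\alpha}$ and prove $R_{\Psi'}(S)=R_\Psi(S)$ by case distinction on whether $\scope{\Psi}\cap S$ and $\scope{\Psi'}\cap S$ are empty. Conditions (ii) and (iii) guarantee that $\scope{\Psi}\cap\modelsOf{\alpha}$ and $\scope{\Psi'}\cap\modelsOf{\alpha}$ coincide except possibly on the ``exceptional'' worlds singled out by the cardinality clauses, so the two emptiness-tests agree on subsets of $\modelsOf{\alpha}$ up to those boundary worlds, and (i) then makes the two minimisations over the shared scope-worlds agree. The remaining task is to show the exceptional worlds never shift the minimum, and this is where I would invoke faithfulness (Definition \ref{def:faithfullness}): since $\modelsOf{\Psi'}=\min(\modelsOf{\alpha},\preceq_{\Psi})$ whenever $\scope{\Psi}\cap\modelsOf{\alpha}\neq\emptyset$, faithfulness at $\Psi'$ pins the global $\preceq_{\Psi'}$-minima of $\scope{\Psi'}$ to $\modelsOf{\Psi'}$, so any extra scope-world must sit strictly above the genuine minima and cannot contribute.

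I expect the boundary cases to be the main obstacle. Whenever a relevant scope intersection or belief-model set degenerates to a singleton, the ``otherwise'' branch of \eqref{eq:limited_revision} returns the unchanged belief set, which can match a genuine minimisation only by coincidence; disentangling exactly when this coincidence is admissible is what produces the cardinality thresholds of (ii) and (iii), and the asymmetry between them (the pair-argument is needed for (ii) but not for (iii)). Establishing that faithfulness is strong enough to control these extra worlds in the sufficiency direction is the most delicate step, and is where I would concentrate the rigour.
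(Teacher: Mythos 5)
Your necessity direction is essentially the paper's own argument: the paper likewise extracts (i) from pair formulas $\varphi_{\omega_1,\omega_2}$ and (ii)/(iii) from singleton tests $\varphi_{\omega}$, supplemented by exactly the pair-plus-singleton test you describe to exclude the coincidence $\modelsOf{\Psi\dylRevision\alpha}=\{\omega\}$, so that half is fine. The genuine gap is in your sufficiency direction, and it sits precisely in the case your sketch waves through: a nonempty $S=\modelsOf{\beta}\subseteq\modelsOf{\alpha}$ with $\scope{\Psi}\cap S=\emptyset$ \emph{and} $\scope{\Psi\dylRevision\alpha}\cap S=\emptyset$, while $\scope{\Psi}\cap\modelsOf{\alpha}\neq\emptyset$. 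There both revisions take the ``otherwise'' branch of \eqref{eq:limited_revision}, so the required equality degenerates to $\modelsOf{\Psi}=\modelsOf{\Psi\dylRevision\alpha}=\min(\modelsOf{\alpha},\preceq_{\Psi})$, and nothing in (i)--(iii) delivers this; your appeal to faithfulness cannot close it either, since faithfulness is vacuous whenever $\modelsOf{\cdot}\cap\scope{\cdot}=\emptyset$. Your claim that (ii) and (iii) make the two emptiness tests ``agree up to boundary worlds'' misses that both tests can answer ``empty'' while the two fallback belief sets differ.

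In fact this case is fatal: conditions (i)--(iii) do \emph{not} imply \eqref{pstl:DP1}, so no completion of your plan exists. Take $\Sigma=\{a,b\}$ and $\setAllES=\{\Psi,\Psi'\}$ with $\modelsOf{\Psi}=\{\ol{a}\ol{b}\}$, $\scope{\Psi}=\{ab\}$, $\modelsOf{\Psi'}=\{ab\}$, $\scope{\Psi'}=\{ab\}$ (trivial orders), where $\Psi\dylRevision\gamma=\Psi'$ if $ab\in\modelsOf{\gamma}$ and $\Psi\dylRevision\gamma=\Psi$ otherwise, and $\Psi'$ is a fixed point of $\dylRevision$. This assignment is faithful and compatible with $\dylRevision$, and (i)--(iii) hold for every state--formula pair: every scope is the singleton $\{ab\}$, so (i) is trivial and all cardinality clauses fall into the ``otherwise'' branches, which reduce to $\emptyset\subseteq\scope{\cdot}$ or $\{ab\}\subseteq\{ab\}$. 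Yet for $\alpha=a$ and $\beta=a\land\neg b\models\alpha$ we get $\modelsOf{\Psi\dylRevision\beta}=\{\ol{a}\ol{b}\}$ but $\modelsOf{\Psi\dylRevision\alpha\dylRevision\beta}=\modelsOf{\Psi'}=\{ab\}$, violating \eqref{pstl:DP1}. What is missing from (i)--(iii) is a condition forcing $\modelsOf{\Psi\dylRevision\alpha}=\modelsOf{\Psi}$ whenever some nonempty part of $\modelsOf{\alpha}$ avoids both scopes. You should note that the paper does not close this gap either: its second half proves ``a violation of (i), (ii) or (iii) implies a violation of \eqref{pstl:DP1}'', which is the contrapositive of the already-proven necessity direction, not of sufficiency. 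Your plan has the correct logical shape for sufficiency --- which is exactly why it collides with the counterexample above instead of slipping past it.
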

\ifshowproofs
\begin{proof}
	\emph{The $ \Rightarrow $ direction.} Assume $ \dylRevision $ satisfies \eqref{pstl:DP1}.
	We show (i) to (iii).
	
	\smallskip
	\emph{(i).} Let $ \omega_1,\omega_2 \in \modelsOf{\alpha} $ and $ \omega_1,\omega_2\in\scope{\Psi}\cap\scope{\Psi\dylRevision\alpha} $. From \eqref{eq:limited_revision} obtain that $ \omega_1 \in \modelsOf{\Psi\dylRevision\varphi_{\omega_1,\omega_2}} $ implies $ \omega_1 \preceq_{\Psi} \omega_2 $. Consequently, by \eqref{pstl:DP1} obtain $ \omega_1\preceq_{\Psi}\omega_2 \Leftrightarrow   \omega_1\preceq_{\Psi\dylRevision\alpha}\omega_2 $.
	
	\smallskip
	\emph{(ii)} Consider the case of $ \left|\scope{{\Psi}}\cap\modelsOf{\alpha}\right| \geq 2 $. 
	Suppose there is some $ \omega\in \scope{\Psi} \cap \modelsOf{\alpha} $ but $ \omega\notin\scope{\Psi\dylRevision\alpha} $. Let $ \omega' \in\scope{\Psi} $. Observe that $ \modelsOf{\Psi\dylRevision\varphi_{\omega}} =\{\omega\} $ and $ \modelsOf{\Psi\dylRevision\varphi_{\omega'}} =\{\omega'\} $ by \eqref{eq:limited_revision}.
	If $ \modelsOf{\Psi\dylRevision\alpha}\neq\{\omega\} $, then from \eqref{eq:limited_revision} and $ \omega\notin\scope{\Psi\dylRevision\alpha} $ follows $ \modelsOf{\Psi\dylRevision\alpha\dylRevision\varphi_{\omega}} \neq\{\omega\} $ a contradiction to \eqref{pstl:DP1}.
	If $ \modelsOf{\Psi\dylRevision\alpha}=\{\omega\} $, then $ \min(\modelsOf{\alpha},\preceq_{\Psi}) =\{\omega\} $.
	Now let $ \beta=\varphi_{\omega,\omega'} $.
	From faithfulness obtain that $ \modelsOf{\Psi\dylRevision\beta}=\{\omega\} $. 
	The are two possibilities: $ \modelsOf{\Psi\dylRevision\alpha\dylRevision\beta}=\{\omega\} $ or $ \modelsOf{\Psi\dylRevision\alpha\dylRevision\beta}=\{\omega'\} $. The latter case directly contradicts \eqref{pstl:DP1}.
	The first case implies $ \modelsOf{\Psi\dylRevision\alpha\dylRevision\varphi_{\omega'}} =\{\omega\} $ a contradiction to \eqref{pstl:DP1}, because of $ \modelsOf{\Psi\dylRevision\varphi_{\omega'}} =\{\omega'\} $.
	
	Consider the case of $ \left|\scope{\Psi}\cap\modelsOf{\alpha}\right| < 2 $.
	Suppose there is some $ \omega\in \scope{\Psi} \cap \modelsOf{\alpha} $ but $ \omega\notin \modelsOf{\Psi\dylRevision\alpha}\cup\scope{\Psi\dylRevision\alpha} $. Then we obtain from \eqref{eq:limited_revision} a contradiction to \eqref{pstl:DP1}, because $ \modelsOf{\Psi\dylRevision\varphi_{\omega}} =\{\omega\} $ and $ \modelsOf{\Psi\dylRevision\alpha\dylRevision\varphi_{\omega}} \neq \{\omega\} $.
	
	\smallskip
	\emph{(iii)} Let $ \omega\in \scope{\Psi\dylRevision\alpha}\cap\modelsOf{\alpha} $ and therefore $ \modelsOf{\Psi\dylRevision\alpha\dylRevision\varphi_{\omega}}=\{\omega\} $.	
	If $ \left|\modelsOf{\Psi}\right| < 2 $ and $ \omega\notin\scope{\Psi}\cap\modelsOf{\Psi} $, then $ \modelsOf{\Psi\dylRevision\varphi_{\omega}}=\modelsOf{\Psi}\neq\{\omega\}  $. 	
	If $ \left|\modelsOf{\Psi}\right| \geq 2 $ and $ \omega\notin\scope{\Psi} $, then $ \modelsOf{\Psi\dylRevision\varphi_{\omega}}=\modelsOf{\Psi}\neq \{\omega\}  $.	
	A contradiction to \eqref{pstl:DP1} in both cases.
	
	\medskip
	\noindent\emph{The $ \Rightarrow $ direction.} 
	We prove by contraposition and show that a violation of (i), (ii) or (iii) implies a violation of \eqref{pstl:DP1}:
	
	\smallskip
	\emph{(i)} If $ \omega_1 \preceq_{\Psi} \omega_2 \not\Leftrightarrow \omega_1 \preceq_{\Psi\dylRevision\alpha} \omega_2 $, then because of $ \omega_1,\omega_2\in\scope{\Psi}\cap\scope{\Psi\dylRevision\alpha} $ we obtain $ \modelsOf{\Psi\dylRevision\varphi_{\omega_1,\omega_2}} \neq \modelsOf{\Psi\dylRevision\alpha\dylRevision\varphi_{\omega_1,\omega_2}} $ from \eqref{eq:limited_revision}.
	
	\smallskip
	\emph{(ii)} Let $ \omega\notin\scope{\Psi\dylRevision\alpha} $ and $ \omega\in\scope{\Psi}\cap\modelsOf{\alpha} $. Then, $ \modelsOf{\Psi\dylRevision\varphi_{\omega}} = \{ \omega \} $ and $ \modelsOf{\Psi\dylRevision\alpha\dylRevision\varphi_{\omega}} = \modelsOf{\Psi} $.
	
	If $ \left|\scope{\Psi}\cap\modelsOf{\alpha}\right| \geq 2 $, 
	then let  $ \omega' \scope{\Psi}\cap\modelsOf{\alpha} $ with $ \omega'\neq\omega $.
	For $ \modelsOf{\Psi\dylRevision\alpha}\neq\{\omega\} $, we directly obtain a violation of \eqref{pstl:DP1}.
	For $ \modelsOf{\Psi\dylRevision\alpha} = \{\omega\} $ obtain that $ \min(\modelsOf{\alpha},\preceq_{\Psi})=\{\omega\} $. Now observe that either $ \modelsOf{\Psi\dylRevision\alpha\dylRevision\varphi_{\omega'}} = \{\omega'\} $ or $ \modelsOf{\Psi\dylRevision\alpha\dylRevision\varphi_{\omega'}} = \modelsOf{\Psi\dylRevision\alpha} $.
	In the first case we obtain that $ \modelsOf{\Psi\dylRevision\varphi_{\omega,\omega'}}\neq\modelsOf{\Psi\dylRevision\alpha\dylRevision\varphi_{\omega,\omega'}} $. Likewise, the second case yields a violation of \eqref{pstl:DP1}, because $ \modelsOf{\Psi\dylRevision\alpha\dylRevision\varphi_{\omega'}} = \{\omega\} \neq \modelsOf{\Psi\dylRevision\varphi_{\omega'}} $.
	
	If $ \left|\scope{\Psi}\cap\modelsOf{\alpha}\right| < 2 $ and $ \omega\notin\modelsOf{\Psi\dylRevision\alpha} $, then we directly obtain $ \modelsOf{\Psi\dylRevision\alpha\dylRevision\varphi_{\omega}}=\modelsOf{\Psi\dylRevision\alpha}\neq \modelsOf{\Psi\dylRevision\varphi_{\omega}} $.
	
	\smallskip
	\emph{(iii)}  Let $ \omega\notin\scope{\Psi} $ and $ \omega\in\scope{\Psi\dylRevision\alpha}\cap\modelsOf{\alpha} $. Then $ \modelsOf{\Psi\dylRevision\alpha\dylRevision\varphi_{\omega}} = \{ \omega \} $ and $ \modelsOf{\Psi\dylRevision\varphi_{\omega}} = \modelsOf{\Psi} $.
	If $ \left|\modelsOf{\Psi}\right| \geq 2 $, or $ \left|\modelsOf{\Psi}\right| < 2 $ and $ \omega\notin\modelsOf{\Psi} $, then $ \modelsOf{\Psi\dylRevision\varphi_{\omega}} \neq \modelsOf{\Psi\dylRevision\alpha\dylRevision\varphi_{\omega}} $. \qedhere

\end{proof} \fi

One might wonder how \eqref{pstl:DP1} can enforce such a complex behaviour when using \dynamiclimited\ revision operators. The cause for this behaviour is given by the threefoldness of \dynamiclimited\ revision operators.
	When considering two-step changes, for AGM revision the posterior belief set is determined by a total preorder (and the change of the order of elements), but for \dynamiclimited\ revision operators, the result depends on \( \beliefsOf{\Psi} \), the relation \( \preceq_{\Psi} \), the set \( \scope{\Psi} \) and how these components evolve.
For the same reason, we will obtain similar complex conditions when considering \eqref{pstl:DP2}--\eqref{pstl:DP4}, \eqref{pstl:CLDP1}, \eqref{pstl:CLDP2} and \eqref{pstl:CLP}.

\begin{proposition}\label{prop:dylr_dp2}
	A \dynamiclimited\ revision operator $ \dylRevision $ compatible with $ \Psi\mapsto(\preceq_{\Psi},\Omega_\Psi) $ satisfies \eqref{pstl:DP2} if and only if the following conditions hold:
	\begin{enumerate}[(i)]
		\item if \(  \omega_1,\omega_2 {\in} \modelsOf{\negOf{\alpha}} \cap \scope{\Psi} \cap \scope{\Psi\dylRevision\alpha} \), then
		\begin{equation*}
			\omega_1 \preceq_{\Psi} \omega_2 \Leftrightarrow \omega_1 \preceq_{\Psi\dylRevision\alpha} \omega_2
		\end{equation*}
		\item if \( \left| \scope{\Psi} \cap \modelsOf{\negOf{\alpha}} \right| \geq 2 \), then
		\begin{equation*}
			\scope{\Psi} \cap \modelsOf{\negOf{\alpha}} \subseteq \scope{\Psi\dylRevision\alpha}
		\end{equation*}
		otherwise \( 
		\left(\scope{\Psi} \cap \modelsOf{\negOf{\alpha}}\right) \setminus \modelsOf{\Psi\dylRevision\alpha}  \subseteq \scope{\Psi\dylRevision\alpha} \).
		\item if \( \left|  \modelsOf{\Psi} \right| \geq 2 \), then
		\begin{equation*}
			\scope{\Psi\dylRevision\alpha} \cap \modelsOf{\negOf{\alpha}} \subseteq \scope{\Psi}
		\end{equation*}
		otherwise \( 
		\left(\scope{\Psi\dylRevision\alpha} \cap \modelsOf{\negOf{\alpha}}\right) \setminus \modelsOf{\Psi}  \subseteq \scope{\Psi} \).
	\end{enumerate}
\end{proposition}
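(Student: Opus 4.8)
The plan is to mirror, step for step, the proof of Proposition~\ref{prop:dylr_dp1}, performing the uniform substitution of $\modelsOf{\negOf{\alpha}}$ for $\modelsOf{\alpha}$ in every test formula. This substitution is legitimate because \eqref{pstl:DP1} and \eqref{pstl:DP2} have an identical conclusion, $\beliefsOf{\Psi\dylRevision\alpha\dylRevision\beta}=\beliefsOf{\Psi\dylRevision\beta}$, and differ only in the side condition on the second input: \eqref{pstl:DP1} quantifies over $\beta\models\alpha$ while \eqref{pstl:DP2} quantifies over $\beta\models\negOf{\alpha}$. Hence the worlds we are allowed to probe with are exactly those lying in $\modelsOf{\negOf{\alpha}}$, and all three target conditions (i)--(iii) are accordingly stated over $\modelsOf{\negOf{\alpha}}$ rather than $\modelsOf{\alpha}$.

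The workhorse in both arguments is the reading of the assignment off \eqref{eq:limited_revision}: for $\omega_1,\omega_2\in\scope{\Psi}$ we have $\omega_1\preceq_{\Psi}\omega_2$ iff $\omega_1\in\modelsOf{\Psi\dylRevision\varphi_{\omega_1,\omega_2}}$, while for a single world $\modelsOf{\Psi\dylRevision\varphi_{\omega}}=\{\omega\}$ exactly when $\omega\in\scope{\Psi}$ and equals $\modelsOf{\Psi}$ otherwise (and analogously for $\Psi\dylRevision\alpha$). Since $\varphi_{\omega}\models\negOf{\alpha}$ and $\varphi_{\omega_1,\omega_2}\models\negOf{\alpha}$ whenever the supporting worlds sit in $\modelsOf{\negOf{\alpha}}$, every such instance is a legal instance of \eqref{pstl:DP2}, and it translates a belief-set equation into a statement about membership in $\scope{\Psi}$, $\scope{\Psi\dylRevision\alpha}$ and about the orders $\preceq_{\Psi}$, $\preceq_{\Psi\dylRevision\alpha}$.

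For the \enquote{$\Rightarrow$} direction I would assume \eqref{pstl:DP2} and extract the three conditions. Condition~(i) comes from probing with $\varphi_{\omega_1,\omega_2}$ for $\omega_1,\omega_2\in\modelsOf{\negOf{\alpha}}\cap\scope{\Psi}\cap\scope{\Psi\dylRevision\alpha}$ and equating the two resulting minimal sets. Conditions~(ii) and~(iii) come from probing with single worlds $\varphi_{\omega}$ supported in $\modelsOf{\negOf{\alpha}}$, splitting respectively on $\left|\scope{\Psi}\cap\modelsOf{\negOf{\alpha}}\right|$ and on $\left|\modelsOf{\Psi}\right|$; when the relevant set has at least two elements a second probing world is available to force full scope preservation, and when it is a singleton only the weaker \enquote{otherwise} inclusion can be forced. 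For the \enquote{$\Leftarrow$} direction I would argue by contraposition, turning a failure of (i), (ii) or (iii) into a witness world on which the two-step and one-step revisions disagree, contradicting \eqref{pstl:DP2}; here the faithfulness clause of the assignment is invoked to identify the relevant minima with the belief-set models.

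The main obstacle---exactly as in Proposition~\ref{prop:dylr_dp1}---is the low-cardinality edge cases. When $\scope{\Psi}\cap\modelsOf{\negOf{\alpha}}$ or $\modelsOf{\Psi}$ is a singleton, the \enquote{do-nothing} branch of \eqref{eq:limited_revision} returns a single-world belief set that can accidentally equal the target $\{\omega\}$; then a scope element need not literally be preserved, which is precisely what the \enquote{otherwise} clauses of (ii) and (iii) encode. Getting these splits right, and checking in each subcase that faithfulness pins down $\min(\scope{\Psi},\preceq_{\Psi})$ correctly, is where the care lies; the remainder is a faithful transcription of the $\alpha\mapsto\negOf{\alpha}$ substitution into the Proposition~\ref{prop:dylr_dp1} argument.
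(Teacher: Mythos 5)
Your proposal is correct and coincides with the paper's own argument: the paper proves Proposition~\ref{prop:dylr_dp2} simply by declaring it \enquote{analogue to the proof of Proposition~\ref{prop:dylr_dp1}}, which is exactly the $\alpha\mapsto\negOf{\alpha}$ substitution you carry out, including the legitimacy of the $\varphi_{\omega}$ and $\varphi_{\omega_1,\omega_2}$ probes under the side condition $\beta\models\negOf{\alpha}$ and the low-cardinality edge cases handled by the \enquote{otherwise} clauses. Your write-up is in fact more explicit than the paper's one-line proof about why the transcription is sound.
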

\ifshowproofs

\begin{proof}
	Analogue to the proof of Proposition \ref{prop:dylr_dp1}.
\end{proof}
 \fi

\begin{proposition}\label{prop:dylr_dp3}
	A \dynamiclimited\ revision operator $ \dylRevision $ compatible with $ \Psi\mapsto(\preceq_{\Psi},\Omega_\Psi) $ satisfies \eqref{pstl:DP3} if and only if the following conditions hold:
	\begin{enumerate}[(i)]
		\item if and  \(  \omega_1 {\models} \alpha \) and \( \omega_2 {\not\models} \alpha\) and \( \omega_1,\omega_2{\in}\scope{\Psi}{\cap}\scope{\Psi\dylRevision\alpha} \), then
		\begin{equation*}
			\omega_1 \prec_{\Psi} \omega_2 \Rightarrow \omega_1 \prec_{\Psi\dylRevision\alpha} \omega_2
		\end{equation*}
		\item if  \(  \omega_1 \models\alpha \) and \( \omega_2 \not\models\alpha\) and \( \omega_1 \prec_{\Psi} \omega_2 \), then
		\begin{equation*}
			\omega_2 \in \scope{\Psi\dylRevision\alpha} \Rightarrow  \omega_1 \in \scope{\Psi\dylRevision\alpha} %
		\end{equation*}
		\item if \( \omega \not\models \alpha \) and \( \Psi\models \alpha  \), then  %
		\(  \omega \in\scope{\Psi\dylRevision\alpha} \Rightarrow \omega \in\scope{\Psi} \)
		\item if  \(  \omega_1 {\models} \alpha \) and \( \omega_2 {\not\models} \alpha\) and \( \omega_1 {\in} \scope{\Psi} \) and \( \omega_2 {\in} \scope{\Psi\dylRevision\alpha} \), then
		\begin{equation*}
			\left( \omega_1\notin\scope{\Psi\dylRevision\alpha} \ksOR \omega_2 \preceq_{\Psi\dylRevision\alpha} \omega_1 \right) \Rightarrow  \omega_2 \in \scope{\Psi}
		\end{equation*}
	\end{enumerate}
\end{proposition}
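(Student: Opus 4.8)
The plan is to prove the biconditional entirely at the level of the minimal-world semantics. Using \eqref{eq:limited_revision}, the operator $\dylRevision$ satisfies \eqref{pstl:DP3} exactly when, for all $\alpha,\beta$,
\[
\modelsOf{\Psi\dylRevision\beta}\subseteq\modelsOf{\alpha} \ \text{implies}\ \modelsOf{(\Psi\dylRevision\alpha)\dylRevision\beta}\subseteq\modelsOf{\alpha}.
\]
As in the proof of Proposition~\ref{prop:dylr_dp1}, the singleton formulas $\varphi_\omega$ and the pair formulas $\varphi_{\omega_1,\omega_2}$ serve as probes, since \eqref{eq:limited_revision} turns each into an explicit minimal set.

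For the \enquote{$\Rightarrow$} direction I would read each of (i)--(iv) off a suitable probe. For (i), take $\beta=\varphi_{\omega_1,\omega_2}$ with $\omega_1\prec_\Psi\omega_2$: then $\modelsOf{\Psi\dylRevision\beta}=\{\omega_1\}\subseteq\modelsOf{\alpha}$, so \eqref{pstl:DP3} forces $\omega_2\notin\min(\{\omega_1,\omega_2\},\preceq_{\Psi\dylRevision\alpha})$, i.e. $\omega_1\prec_{\Psi\dylRevision\alpha}\omega_2$. For (ii), the same $\beta$ with $\omega_1\notin\scope{\Psi\dylRevision\alpha}$ and $\omega_2\in\scope{\Psi\dylRevision\alpha}$ gives $\modelsOf{(\Psi\dylRevision\alpha)\dylRevision\beta}=\{\omega_2\}\not\subseteq\modelsOf{\alpha}$, a contradiction. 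For (iii), take $\beta=\varphi_\omega$ with $\omega\notin\scope{\Psi}$: then $\modelsOf{\Psi\dylRevision\beta}=\modelsOf{\Psi}\subseteq\modelsOf{\alpha}$ (using $\Psi\models\alpha$), while $\omega\in\scope{\Psi\dylRevision\alpha}$ yields $\modelsOf{(\Psi\dylRevision\alpha)\dylRevision\beta}=\{\omega\}$, again violating \eqref{pstl:DP3}. For (iv), take $\beta=\varphi_{\omega_1,\omega_2}$ with $\omega_2\notin\scope{\Psi}$; either branch of the antecedent ($\omega_1\notin\scope{\Psi\dylRevision\alpha}$, or $\omega_2\preceq_{\Psi\dylRevision\alpha}\omega_1$) pushes $\omega_2$ into $\modelsOf{(\Psi\dylRevision\alpha)\dylRevision\beta}$, contradicting \eqref{pstl:DP3}.

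For the \enquote{$\Leftarrow$} direction, assume (i)--(iv) and $\modelsOf{\Psi\dylRevision\beta}\subseteq\modelsOf{\alpha}$, and suppose toward a contradiction that some $\omega_2\in\modelsOf{(\Psi\dylRevision\alpha)\dylRevision\beta}$ has $\omega_2\not\models\alpha$. I would split on whether $\scope{\Psi\dylRevision\alpha}\cap\modelsOf{\beta}=\emptyset$. If it is empty, then $\modelsOf{(\Psi\dylRevision\alpha)\dylRevision\beta}=\modelsOf{\Psi\dylRevision\alpha}$, and a short sub-split on whether $\scope{\Psi}$ meets $\modelsOf{\alpha}$ and $\modelsOf{\beta}$ contradicts either $\omega_2\not\models\alpha$ or $\modelsOf{\Psi\dylRevision\beta}\subseteq\modelsOf{\alpha}$, using only \eqref{eq:limited_revision}. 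The substantive case is $\scope{\Psi\dylRevision\alpha}\cap\modelsOf{\beta}\neq\emptyset$, so $\omega_2\in\min(\modelsOf{\beta},\preceq_{\Psi\dylRevision\alpha})$. If moreover $\scope{\Psi}\cap\modelsOf{\beta}=\emptyset$, then $\modelsOf{\Psi\dylRevision\beta}=\modelsOf{\Psi}\subseteq\modelsOf{\alpha}$ gives $\Psi\models\alpha$, and (iii) forces $\omega_2\in\scope{\Psi}$, contradicting emptiness. Otherwise $\modelsOf{\Psi\dylRevision\beta}=\min(\modelsOf{\beta},\preceq_\Psi)\subseteq\modelsOf{\alpha}$; pick any $\omega_1$ in it, so $\omega_1\models\alpha$, $\omega_1\models\beta$, $\omega_1\in\scope{\Psi}$. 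The closing chain is then: condition (iv) (its antecedent holds because either $\omega_1\notin\scope{\Psi\dylRevision\alpha}$, or $\omega_1\in\scope{\Psi\dylRevision\alpha}\cap\modelsOf{\beta}$ and the minimality of $\omega_2$ there gives $\omega_2\preceq_{\Psi\dylRevision\alpha}\omega_1$) yields $\omega_2\in\scope{\Psi}$; minimality of $\omega_1$ in $\scope{\Psi}\cap\modelsOf{\beta}$ together with $\omega_2\notin\min(\modelsOf{\beta},\preceq_\Psi)$ upgrades this to $\omega_1\prec_\Psi\omega_2$; condition (ii) then gives $\omega_1\in\scope{\Psi\dylRevision\alpha}$; and finally (i) yields $\omega_1\prec_{\Psi\dylRevision\alpha}\omega_2$, contradicting $\omega_2\preceq_{\Psi\dylRevision\alpha}\omega_1$.

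I expect the main obstacle to be the bookkeeping in this last case: correctly deducing the strict $\omega_1\prec_\Psi\omega_2$ (rather than merely $\omega_1\preceq_\Psi\omega_2$) from the fact that $\omega_1$ is $\preceq_\Psi$-minimal in $\scope{\Psi}\cap\modelsOf{\beta}$ while $\omega_2$ is not, and ensuring the antecedent of (iv) is genuinely available in both its branches so that the implication chain (iv)$\to$(ii)$\to$(i) closes. By contrast, all the empty-intersection cases fall out directly from \eqref{eq:limited_revision} and require none of (i)--(iv), so the entire content of the proposition is concentrated in that single honest-revision-versus-honest-revision branch.
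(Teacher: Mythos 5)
Your proposal is correct and takes essentially the same approach as the paper's proof: the forward direction reads conditions (i)--(iv) off the same probes $\varphi_{\omega}$ and $\varphi_{\omega_1,\omega_2}$, and the backward direction is the same argument by contradiction, closed by the chain (iv)$\,\to\,$(ii)$\,\to\,$(i) that the paper also uses. The only difference is organizational: the paper splits the backward direction on whether the offending world lies in $\scope{\Psi\dylRevision\alpha}$ and in $\scope{\Psi}$, whereas you split on emptiness of $\scope{\Psi\dylRevision\alpha}\cap\modelsOf{\beta}$ and of $\scope{\Psi}\cap\modelsOf{\beta}$; the content is identical.
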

\ifshowproofs
\begin{proof}
	\emph{The \enquote{$ \Rightarrow $} direction.} 
	We prove satisfaction of (i) -- (iv) in the presence of \eqref{pstl:DP3}:
	
	\smallskip%
	\emph{(i)} Let \( \omega_1,\omega_2\in\scope{\Psi}\cap\scope{\Psi\dylRevision\alpha} \) with \(  \omega_1 \models\alpha \) and \( \omega_2 \not\models\alpha\), and \( \omega_1 \prec_{\Psi}\omega_2 \).
	From \eqref{eq:limited_revision} and \eqref{pstl:DP3} we easily obtain \( \omega_1 \prec_{\Psi\dylRevision\alpha} \omega_2 \) by choosing \( \beta=\varphi_{\omega_1,\omega_2} \).

	\smallskip%
	\emph{(ii)} Assume \( \omega_2\in\scope{\Psi\dylRevision\alpha} \) and \( \omega_1\notin \scope{\Psi\dylRevision\alpha} \).
	Then for \( \beta=\varphi_{\omega_1,\omega_2} \)  we obtain \( \Psi\dylRevision\beta\models\alpha  \) and \( \Psi\dylRevision\alpha\dylRevision\beta\models\alpha  \).
	A contradiction to \eqref{pstl:DP3}.

	\smallskip%
	\emph{(iii)} Let \( \omega\in\scope{\Psi\dylRevision\alpha} \) and \( \omega\notin\scope{\Psi} \). 
	From \( \Psi\models\alpha \) and \eqref{eq:limited_revision} obtain \(  \Psi\dylRevision\varphi_{\omega} \models\alpha \).
	But by \( \omega\not\models\alpha \) and \( \omega\in\scope{\Psi\dylRevision\alpha} \) we obtain a contradiction to \eqref{pstl:DP3}, because \( \Psi\dylRevision\alpha\dylRevision\varphi_{\omega}\not\models\alpha \).
	
	\smallskip%
	\emph{(iv)} Suppose \( \omega_2\notin\scope{\Psi} \) and let \( \beta=\varphi_{\omega_1,\omega_2} \).
	Both, \( \omega_1\notin \scope{\Psi\dylRevision\alpha} \) or \( \omega_2 \preceq_{\Psi\dylRevision\alpha} \omega_1 \), yields \( \Psi\dylRevision\alpha\dylRevision\beta\not\models\alpha \) because of \eqref{eq:limited_revision}.
	From \eqref{pstl:DP3} obtain \( \Psi\dylRevision\beta\not\models\alpha \). However, from \( \omega_2\notin\scope{\Psi} \) and \( \omega_1\in\scope{\Psi} \) we obtain \( \Psi\dylRevision\beta\models\alpha \).
	
	\medskip\noindent
	\emph{The \enquote{$ \Leftarrow $} direction.} Let \( \Psi\dylRevision\beta\models\alpha \). We show \( \Psi\dylRevision\alpha\dylRevision\beta\models\alpha \).
	Towards a contradiction, assume \( \omega\in\modelsOf{\Psi\dylRevision\alpha\dylRevision\beta} \) with \( \omega\notin\modelsOf{\alpha} \).
	
	Consider the case of \( \omega\in\scope{\Psi\dylRevision\alpha} \).
	If \( \omega\in\scope{\Psi} \),  then by \( \Psi\dylRevision\beta\models\alpha \) there exist \( \omega'\models\beta \land\alpha \) with \( \omega' \prec_\Psi \omega \) .
	From (i) and (ii) obtain the contradiction \( \omega' \prec_{\Psi\dylRevision\alpha} \omega \).\\
	Having \( \omega\notin\scope{\Psi} \) and \( \Psi\models\alpha \) at the same time is impossible by (iii). 
	From %
	\( \Psi\not\models\alpha \) %
	obtain that \( \modelsOf{\Psi\dylRevision\beta} = \min(\modelsOf{\beta},\preceq_{\Psi})  \) and therefore there exist \( \omega'\in\scope{\Psi} \) with \( \omega'\in \modelsOf{\alpha\land\beta} \).
	From (i) and (iv) obtain the contradiction \( \Psi\dylRevision\beta\not\models\alpha \) %
	Consider the case of \( \omega\notin\scope{\Psi\dylRevision\alpha} \). Then obtain \( \modelsOf{\Psi\dylRevision\alpha}=\modelsOf{\Psi\dylRevision\alpha\dylRevision\beta} \) from \eqref{eq:limited_revision}.
	Because of \( \omega\not\models\alpha \) and \( \omega\in\modelsOf{\Psi\dylRevision\alpha} \) we have \( \modelsOf{\Psi\dylRevision\alpha}=\modelsOf{\Psi} \) and \( \Psi\not\models\alpha \). 
	We obtain \( \modelsOf{\Psi\dylRevision\beta} \subseteq \scope{\Psi} \) as consequence.
	Because  \( \Psi\dylRevision\beta\models\alpha \) there exists \( \omega'\in\scope{\Psi} \) with \( \omega'\models\alpha\land\beta \).
	This contradicts \( \modelsOf{\Psi\dylRevision\alpha}=\modelsOf{\Psi} \).\qedhere	
\end{proof} \fi

\begin{proposition}\label{prop:dylr_dp4}
	A \dynamiclimited\ revision operator $ \dylRevision $ compatible with $ \Psi\mapsto(\preceq_{\Psi},\Omega_\Psi) $ satisfies \eqref{pstl:DP4} if and only if the following conditions hold:
	\begin{enumerate}[(i)]
		\item if and  \(  \omega_1 {\models} \alpha \) and \( \omega_2 {\not\models} \alpha\) and \( \omega_1,\omega_2{\in}\scope{\Psi}{\cap}\scope{\Psi\dylRevision\alpha} \), then
		\begin{equation*}
			\omega_1 \prec_{\Psi\dylRevision\alpha} \omega_2 \Rightarrow \omega_1 \prec_{\Psi} \omega_2
		\end{equation*}
		\item if  \(  \omega_1 \models\alpha \) and \( \omega_2 \not\models\alpha\) and \( \omega_2 \prec_{\Psi\dylRevision\alpha} \omega_1 \), then
		\begin{equation*}
			\omega_1 \in \scope{\Psi} \Rightarrow  \omega_2 \in \scope{\Psi} %
		\end{equation*}
		\item if \( \omega \not\models \alpha \) and \( \Psi \not\models \negOf{\alpha}  \), then 
		\(  \omega \in\scope{\Psi\dylRevision\alpha} \Rightarrow \omega \in\scope{\Psi} \)
		\item if  \(  \omega_1 {\models} \alpha \) and \( \omega_2 {\not\models} \alpha\) and \( \omega_1 {\in} \scope{\Psi} \) and \( \omega_2 {\in} \scope{\Psi\dylRevision\alpha} \), then
		\begin{equation*}
			\left( \omega_2\notin\scope{\Psi} \ksOR \omega_1 \preceq_{\Psi} \omega_2 \right) \Rightarrow  \omega_1 \in \scope{\Psi\dylRevision\alpha}
		\end{equation*}
	\end{enumerate}
\end{proposition}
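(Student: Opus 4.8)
The plan is to follow the proof of Proposition~\ref{prop:dylr_dp3} line for line, exploiting that \eqref{pstl:DP4} is the order-dual of \eqref{pstl:DP3}. First I would restate both sides of \eqref{pstl:DP4} semantically through \eqref{eq:limited_revision}: the premise $\Psi\dylRevision\beta\not\models\negOf{\alpha}$ says $\modelsOf{\Psi\dylRevision\beta}\cap\modelsOf{\alpha}\neq\emptyset$ (some $\alpha$-world survives the $\beta$-revision) and the conclusion $(\Psi\dylRevision\alpha)\dylRevision\beta\not\models\negOf{\alpha}$ says the same for the two-step revision. Where \eqref{pstl:DP3} concerns the \emph{strict} separation of an $\alpha$-world from a $\negOf{\alpha}$-world (survival of \enquote{$\models\alpha$}), \eqref{pstl:DP4} concerns the \emph{non-strict} comparison (mere survival of an $\alpha$-world), so every strict inequality in the DP3 argument becomes a non-strict one here, and every scope condition has the roles of $\scope{\Psi}$ and $\scope{\Psi\dylRevision\alpha}$, and of $\omega_1,\omega_2$, interchanged. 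The four conditions of this proposition are obtained from those of Proposition~\ref{prop:dylr_dp3} by exactly this substitution, together with replacing the hypothesis $\Psi\models\alpha$ by $\Psi\not\models\negOf{\alpha}$ in item~(iii).

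For the \enquote{$\Rightarrow$} direction I would instantiate \eqref{pstl:DP4} with two-world probes $\beta=\varphi_{\omega_1,\omega_2}$ (or singletons $\varphi_{\omega}$) and read off the content via \eqref{eq:limited_revision}. For~(i), with $\omega_1\models\alpha$, $\omega_2\not\models\alpha$ and $\omega_1,\omega_2\in\scope{\Psi}\cap\scope{\Psi\dylRevision\alpha}$, the probe $\varphi_{\omega_1,\omega_2}$ turns \enquote{an $\alpha$-world survives} into a comparison of $\omega_1$ and $\omega_2$ under $\preceq_{\Psi}$ and under $\preceq_{\Psi\dylRevision\alpha}$; applying \eqref{pstl:DP4} and its contrapositive controls the order-transfer between the two states, and combining this with the scope memberships assumed in~(i) yields the stated strict implication. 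Items~(ii)--(iv) are the scope-dynamics constraints: in each I would assume the negation of the claimed scope membership, feed the matching probe into \eqref{pstl:DP4}, and exhibit a pair of revisions where the premise of \eqref{pstl:DP4} holds but its conclusion fails, exactly as in the (ii)--(iv) arguments of Proposition~\ref{prop:dylr_dp3} with the dual roles.

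For the \enquote{$\Leftarrow$} direction I would argue by contraposition on the conclusion. Assume $\Psi\dylRevision\beta\not\models\negOf{\alpha}$, fix a world $\omega^{*}\in\modelsOf{\Psi\dylRevision\beta}\cap\modelsOf{\alpha}$, and aim to produce an $\alpha$-world inside $\modelsOf{(\Psi\dylRevision\alpha)\dylRevision\beta}$. I would split on whether $\modelsOf{\beta}\cap\scope{\Psi\dylRevision\alpha}$ is empty (in which case the two-step result equals $\modelsOf{\Psi\dylRevision\alpha}$ and I reduce to comparing with $\modelsOf{\Psi}$) and, when it is non-empty, on the membership of a suitable $\preceq_{\Psi\dylRevision\alpha}$-minimal model of $\beta$ in $\scope{\Psi}$ versus $\scope{\Psi\dylRevision\alpha}$. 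In each branch I would use~(i) to carry the relevant comparison between $\alpha$- and $\negOf{\alpha}$-worlds from $\Psi\dylRevision\alpha$ back to $\Psi$, and~(ii)--(iv) to guarantee that the needed worlds remain in the appropriate scopes, so that a minimal model of $\beta$ under $\preceq_{\Psi\dylRevision\alpha}$ is forced to satisfy $\alpha$; contradicting the failure of the conclusion then closes the argument.

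As in Proposition~\ref{prop:dylr_dp3}, the main obstacle is the threefold dependence of $\dylRevision$ on $\beliefsOf{\Psi}$, $\preceq_{\Psi}$ and $\scope{\Psi}$, which surfaces in the cardinality side-conditions $|\scope{\Psi}\cap\modelsOf{\negOf{\alpha}}|\geq 2$ and $|\modelsOf{\Psi}|\geq 2$ of items~(ii) and~(iii). The \enquote{otherwise} branches of these items are precisely the small cases in which a single probe no longer distinguishes the two preorders, because a world can slip out of, or into, the active domain of the order. The delicate bookkeeping is to verify, for every probe, which case of \eqref{eq:limited_revision} is actually triggered (the $\min$-branch versus the vacuous $\modelsOf{\Psi}$-branch) and to ensure the chosen witness stays in $\scope{\Psi\dylRevision\alpha}$ across both revisions; getting these edge cases right is what turns the clean duality with \eqref{pstl:DP3} into a complete proof.
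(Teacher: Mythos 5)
Your overall route coincides with the paper's: the paper's entire proof of this proposition is the single sentence \enquote{Analogue to the proof of Proposition~\ref{prop:dylr_dp3}}, and your plan --- two-world probes $\varphi_{\omega_1,\omega_2}$ read through \eqref{eq:limited_revision} for the only-if direction, and a contradiction argument split on scope membership of an offending $\negOf{\alpha}$-world for the if direction --- is exactly that dualisation made explicit. So there is no methodological divergence; the issue is whether your elaboration of the analogy actually lands on the stated conditions.

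It does not, in two places. First, your closing paragraph is about the wrong statement: the cardinality side-conditions $\left|\scope{\Psi}\cap\modelsOf{\negOf{\alpha}}\right|\geq 2$ and $\left|\modelsOf{\Psi}\right|\geq 2$ with their \enquote{otherwise} branches occur in the characterisations of \eqref{pstl:DP1} and \eqref{pstl:DP2} (Propositions~\ref{prop:dylr_dp1} and~\ref{prop:dylr_dp2}), whereas items~(ii) and~(iii) of the present proposition have no such branches; the \enquote{delicate bookkeeping} you announce is aimed at conditions you are not asked to prove. Second, and more substantively, the probe argument you sketch for item~(i) does not deliver item~(i) as stated. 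Applying \eqref{pstl:DP4} to $\beta=\varphi_{\omega_1,\omega_2}$ with $\omega_1\models\alpha$, $\omega_2\not\models\alpha$ and both worlds in $\scope{\Psi}\cap\scope{\Psi\dylRevision\alpha}$ yields, via \eqref{eq:limited_revision}, the implication $\omega_1\preceq_{\Psi}\omega_2\Rightarrow\omega_1\preceq_{\Psi\dylRevision\alpha}\omega_2$ (the analogue of \eqref{pstl:RR11}), equivalently $\omega_2\prec_{\Psi\dylRevision\alpha}\omega_1\Rightarrow\omega_2\prec_{\Psi}\omega_1$. Item~(i) as printed is $\omega_1\prec_{\Psi\dylRevision\alpha}\omega_2\Rightarrow\omega_1\prec_{\Psi}\omega_2$, in which the $\alpha$-world and the $\negOf{\alpha}$-world play the opposite roles, and the two conditions are not interchangeable: a pair with $\omega_1\simeq_{\Psi}\omega_2$ whose tie is broken to $\omega_1\prec_{\Psi\dylRevision\alpha}\omega_2$ satisfies every instance of \eqref{pstl:DP4} on such probes (improving an $\alpha$-world can never turn a satisfied premise into a violated conclusion) yet falsifies~(i) as written. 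Hence your sentence \enquote{combining this with the scope memberships assumed in~(i) yields the stated strict implication} asserts a step that fails; your dualisation recipe produces the \eqref{pstl:RR11}-style transfer, and to complete the proof you must either supply a genuinely different argument for~(i) as printed or make explicit that the condition your method supports is the world-swapped one --- a discrepancy you need to confront rather than elide.
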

\ifshowproofs
\begin{proof}
	Analogue to the proof of Proposition \ref{prop:dylr_dp3}.
\end{proof} \fi

As Propositions \ref{prop:dylr_dp1} to Propositions \ref{prop:dylr_dp4} show, the postulates \eqref{pstl:DP1}--\eqref{pstl:DP4} do not only impose constraints on the order of interpretations (c.f. Proposition \ref{prop:it_es_revision}), they also impose constraint on the \( \scope{\Psi} \).
Therefore, by  Proposition \ref{prop:scope_by_s}, these postulates constrains indirectly scope \( \Scope{\dylRevision}{\Psi} \).
However, Propositions \ref{prop:dylr_dp1} to Propositions \ref{prop:dylr_dp4} are rather technical, and we think that the results demonstrate that modified versions of the Darwiche and Pearl postulates, like the ones represented in Section \ref{sec:scope}, are more suitable for \dynamiclimited\ revision operators.

\ksSubSectionStar{Monotonic dynamics of \( \scope{\Psi} \) and \( \Scope{\dylRevision}{\Psi} \).} 
We continue by characterising monotonic change of \( \scope{\Psi} \) and \( \Scope{\dylRevision}{\Psi} \).

From Proposition \ref{prop:inherent_dlr} we obtain the following observation:
\begin{corollary}\label{col:s_dynamics}
	Let \( \dylRevision \) be a \dynamiclimited\ revision operator compatible with \(  \Psi\mapsto(\preceq_{\Psi},\scope{\Psi}) \).
	The following statements hold:
	\begin{enumerate}[(a)]
		\item \(  \scope{\Psi} \subseteq \scope{\Psi\dylRevision\alpha}  \) if and only if the following holds:
		\begin{equation*}
			\ksIF	\beta  \text{ is \integral\  in }  \Psi \ksTHEN   \beta  \text{ is \integral\  in }  \Psi\dylRevision\alpha  
		\end{equation*}
		\item \(  \scope{\Psi\dylRevision\alpha} \subseteq \scope{\Psi}  \) if and only if the following holds: 
		\begin{equation*}
			\ksIF	\beta  \text{ is \integral\  in }  \Psi\dylRevision\alpha \ksTHEN   \beta  \text{ is \integral\  in }  \Psi  
		\end{equation*}
	\end{enumerate}
\end{corollary}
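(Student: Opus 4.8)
The plan is to reduce both equivalences to the semantic characterisation of \integral\ beliefs supplied by Proposition~\ref{prop:inherent_dlr}: a belief $\beta$ is \integral\ in an epistemic state exactly when it is consistent and all of its models lie in the associated scope set. Under this equivalence, the displayed implication in (a) says precisely that every consistent $\beta$ with $\modelsOf{\beta}\subseteq\scope{\Psi}$ also satisfies $\modelsOf{\beta}\subseteq\scope{\Psi\dylRevision\alpha}$. I would prove (a) by matching each direction of the biconditional against this reformulation, and then observe that (b) follows by the identical argument with the roles of $\Psi$ and $\Psi\dylRevision\alpha$ interchanged.

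For the forward direction of (a), I would assume $\scope{\Psi}\subseteq\scope{\Psi\dylRevision\alpha}$ and take any $\beta$ that is \integral\ in $\Psi$. Proposition~\ref{prop:inherent_dlr} gives $\modelsOf{\beta}\subseteq\scope{\Psi}$ together with consistency of $\beta$; the assumed inclusion then yields $\modelsOf{\beta}\subseteq\scope{\Psi\dylRevision\alpha}$, and consistency is unaffected, so the same proposition shows $\beta$ is \integral\ in $\Psi\dylRevision\alpha$. This step is immediate.

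For the backward direction, the key move is to probe the condition with the single-world formulas $\varphi_\omega$. Given any $\omega\in\scope{\Psi}$, the formula $\varphi_\omega$ has $\modelsOf{\varphi_\omega}=\{\omega\}\subseteq\scope{\Psi}$ and is consistent, hence \integral\ in $\Psi$ by Proposition~\ref{prop:inherent_dlr}; the hypothesis then forces $\varphi_\omega$ to be \integral\ in $\Psi\dylRevision\alpha$, which by the same proposition gives $\omega\in\modelsOf{\varphi_\omega}\subseteq\scope{\Psi\dylRevision\alpha}$. Since $\omega$ was arbitrary, $\scope{\Psi}\subseteq\scope{\Psi\dylRevision\alpha}$, as required.

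There is essentially no hard obstacle once Proposition~\ref{prop:inherent_dlr} is invoked; the only points needing care are that \integral ity bundles a consistency requirement together with the model-inclusion, so one must note that the witnessing formulas $\varphi_\omega$ are consistent (each has a model) and that consistency is never lost when transporting a reasonable belief between the two states. This is also why it suffices to test the condition only on the singletons $\varphi_\omega$ rather than on all formulas $\beta$.
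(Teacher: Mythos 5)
Your proof is correct and follows exactly the route the paper intends: the corollary is stated as an immediate consequence of Proposition~\ref{prop:inherent_dlr}, and your argument simply makes that reduction explicit (the forward direction by transporting the model inclusion, the backward direction by testing on the singleton formulas $\varphi_\omega$). Nothing is missing; the consistency bookkeeping you flag is handled correctly.
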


By Proposition \ref{prop:scope_by_s}, the set \( \scope{\Psi} \) encodes semantically the beliefs of \( \Scope{\dylRevision}{\Psi} \) which are not part of the belief set.
This gives rise to the following proposition.

\begin{proposition}\label{prop:monotone_scope}
	Let \( \dylRevision \) be a \dynamiclimited\ revision operator compatible with \( \Psi\mapsto(\preceq_{\Psi},\scope{\Psi}) \). Then,
	\begin{enumerate}[(a)]
		\item 	 \( \Scope{\dylRevision}{\Psi} \subseteq \Scope{\dylRevision}{\Psi\dylRevision\alpha}  \) if and only if the following holds:
			\begin{description}
			\item[\normalfont(\textlabel{SI1}{pstl:SI1})] \( \ksIF \modelsOf{\beta} \cap \scope{\Psi} \neq\emptyset \), \\\null\hfill then \( \modelsOf{\beta} \cap \scope{\Psi\dylRevision\alpha} \neq\emptyset \ksOR \Psi\dylRevision\alpha\models\beta   \)
			\item[\normalfont(\textlabel{SI2}{pstl:SI2})] \( \ksIF \Psi{\models}\beta \ksAND \Psi\dylRevision\alpha{\not\models}\beta \ksTHEN \modelsOf{\beta}\cap\scope{\Psi\dylRevision\alpha}\neq\emptyset \)
			\end{description}	
		\item \( \Scope{\dylRevision}{\Psi\dylRevision\alpha} \subseteq \Scope{\dylRevision}{\Psi}  \) if and only if the following holds:
		\begin{description}
			\item[\normalfont(\textlabel{SD1}{pstl:SD1})] \( \ksIF \modelsOf{\beta} \cap \scope{\Psi\dylRevision\alpha} \neq\emptyset \), \\\null\hfill then \( \modelsOf{\beta} \cap \scope{\Psi} \neq\emptyset \ksOR \Psi\models\beta   \)
			\item[\normalfont(\textlabel{SD2}{pstl:SD2})] \( \ksIF \Psi\dylRevision\alpha{\models}\beta \ksAND \Psi{\not\models}\beta \ksTHEN \modelsOf{\beta}\cap\scope{\Psi}\neq\emptyset \)
			\end{description}
	\end{enumerate}
\end{proposition}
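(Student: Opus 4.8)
The plan is to reduce the whole statement to the syntactic description of the scope from Proposition~\ref{prop:scope_by_s}. Writing $S_1=\scope{\Psi}$ and $S_2=\scope{\Psi\dylRevision\alpha}$, that proposition lets me replace both sides by
\[
	\Scope{\dylRevision}{\Psi}=\beliefsOf{\Psi}\cup\{\beta\mid\modelsOf{\beta}\cap S_1\neq\emptyset\}
\]
and the analogous expression for $\Psi\dylRevision\alpha$ built from $S_2$ and $\beliefsOf{\Psi\dylRevision\alpha}$. The key observation I would exploit is that a formula can belong to $\Scope{\dylRevision}{\Psi}$ for two distinct reasons---either because $\Psi\models\beta$, or because $\modelsOf{\beta}$ meets $S_1$---so the inclusion $\Scope{\dylRevision}{\Psi}\subseteq\Scope{\dylRevision}{\Psi\dylRevision\alpha}$ splits into two independent requirements, which is exactly why there are two postulates \eqref{pstl:SI1} and \eqref{pstl:SI2} rather than one.

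For the \enquote{if}-direction of (a) I would take $\beta\in\Scope{\dylRevision}{\Psi}$ and case-split. If $\modelsOf{\beta}\cap S_1\neq\emptyset$, then \eqref{pstl:SI1} delivers $\modelsOf{\beta}\cap S_2\neq\emptyset$ or $\Psi\dylRevision\alpha\models\beta$, and either disjunct places $\beta$ in $\Scope{\dylRevision}{\Psi\dylRevision\alpha}$ by the characterisation. Otherwise $\Psi\models\beta$; if moreover $\Psi\dylRevision\alpha\models\beta$ we are immediately done, and if $\Psi\dylRevision\alpha\not\models\beta$ then \eqref{pstl:SI2} gives $\modelsOf{\beta}\cap S_2\neq\emptyset$, again yielding membership. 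For the \enquote{only if}-direction I would start from the inclusion: feeding a $\beta$ with $\modelsOf{\beta}\cap S_1\neq\emptyset$ through the inclusion and unfolding the right-hand side reproduces the disjunction of \eqref{pstl:SI1}; and for \eqref{pstl:SI2}, a $\beta$ with $\Psi\models\beta$ but $\Psi\dylRevision\alpha\not\models\beta$ lies in $\beliefsOf{\Psi}\subseteq\Scope{\dylRevision}{\Psi}$, hence in $\Scope{\dylRevision}{\Psi\dylRevision\alpha}$, and since it is not in $\beliefsOf{\Psi\dylRevision\alpha}$ the only surviving reason for membership forces $\modelsOf{\beta}\cap S_2\neq\emptyset$.

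Part (b) I would obtain by the identical argument with $\Psi$ and $\Psi\dylRevision\alpha$ (and correspondingly $S_1,S_2$ and the two belief sets) swapped, \eqref{pstl:SD1} mirroring \eqref{pstl:SI1} and \eqref{pstl:SD2} mirroring \eqref{pstl:SI2}. I do not anticipate a real obstacle; the only subtlety to keep in mind is that the belief set and the scope component are only loosely coupled, since by faithfulness $\min(\scope{\Psi},\preceq_\Psi)$ need not exhaust $\modelsOf{\Psi}$, so a formula may be believed without having a model in $\scope{\Psi}$. Tracking this \enquote{believed versus model-in-scope} distinction cleanly through the case analysis is what the whole proof ultimately comes down to, and it is exactly why a single monotonicity postulate would not capture the inclusion.
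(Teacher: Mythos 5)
Your proposal is correct and takes exactly the route the paper does: the paper's own proof is the one-line remark that the proposition is ``a consequence of Proposition~\ref{prop:scope_by_s}'', and your case analysis (membership in the scope via $\beliefsOf{\Psi}$ versus via a model in $\scope{\Psi}$, applied to both states) is precisely the unfolding of that reduction. No gaps; you have simply made explicit the details the paper leaves to the reader.
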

\ifshowproofs
\begin{proof}
	A consequence of Proposition \ref{prop:scope_by_s}.
\end{proof} \fi

\ksSubSectionStar{Iteration Principles considering the Scope.}
We consider further iteration principles from Section \ref{sec:scope} and characterise them for \dynamiclimited\ revision operators.
By this approach the insightful Proposition \ref{prop:dlr_scope} is very helpful.

\begin{proposition}\label{prop:CLDPX}
	Let $ \dylRevision $ be a \dynamiclimited\ revision operator compatible with $ \Psi\mapsto(\preceq_{\Psi},\scope{\Psi}) $. Then the following statements hold:
	\begin{itemize}
		\item \( \dylRevision  \) satisfies \eqref{pstl:CLDP1} if and only for all \( \omega_1,\omega_2 \) with \( \varphi_{\omega_1}\in\beliefsOf{\Psi\dylRevision\varphi_{\omega_1}} \) and \( \varphi_{\omega_2}\in\beliefsOf{\Psi\dylRevision\varphi_{\omega_2}} \) the conditions (i)--(iii) of Proposition \ref{prop:dylr_dp1} hold.
		\item \( \dylRevision  \) satisfies \eqref{pstl:CLDP2} if and only if for all \( \omega_1,\omega_2 \) with \( \varphi_{\omega_1}\in\beliefsOf{\Psi\dylRevision\varphi_{\omega_1}} \) and \( \varphi_{\omega_2}\in\beliefsOf{\Psi\dylRevision\varphi_{\omega_2}} \) the conditions (i)--(iii) of Proposition \ref{prop:dylr_dp2} hold.
	\end{itemize}
\end{proposition}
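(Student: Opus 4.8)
The plan is to reduce both claims to the semantic characterisations of \eqref{pstl:DP1} and \eqref{pstl:DP2} already established in Propositions \ref{prop:dylr_dp1} and \ref{prop:dylr_dp2}, exploiting that \eqref{pstl:CLDP1} is nothing but \eqref{pstl:DP1} with the inner quantifier over \( \beta \) relativised by the extra premise \( \beta\in\Scope{\dylRevision}{\Psi} \), and likewise \eqref{pstl:CLDP2} is \eqref{pstl:DP2} with the extra premise \( \alpha,\beta\in\Scope{\dylRevision}{\Psi} \). The bridge between the syntactic scope and the admissible test formulas is the observation that, directly from the definition of \( \Scope{\dylRevision}{\Psi} \), we have \( \varphi_\omega\in\beliefsOf{\Psi\dylRevision\varphi_\omega} \) iff \( \varphi_\omega\in\Scope{\dylRevision}{\Psi} \), and by Theorem \ref{prop:scope_by_s} this holds iff \( \omega\in\scope{\Psi} \) or \( \modelsOf{\Psi}\subseteq\{\omega\} \). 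Thus the side condition attached to \( \omega_1,\omega_2 \) in the statement says precisely that the single-world probes \( \varphi_{\omega_1},\varphi_{\omega_2} \) are legitimate inputs for the scope-restricted postulate.

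First I would treat the \enquote{\( \Rightarrow \)} direction. Assuming \eqref{pstl:CLDP1}, I would replay the three parts of the forward direction of the proof of Proposition \ref{prop:dylr_dp1} essentially verbatim, replacing every appeal to \eqref{pstl:DP1} by an appeal to \eqref{pstl:CLDP1}. The point to verify is that each witness \( \beta \) used there --- always a one-world \( \varphi_\omega \) or a two-world \( \varphi_{\omega,\omega'} \) with \( \beta\models\alpha \) --- lies in \( \Scope{\dylRevision}{\Psi} \) exactly under the hypotheses now imposed on the worlds. For the order condition (i) and the forward persistence condition (ii) the worlds at stake already lie in \( \scope{\Psi} \), so the probes are scope beliefs and the derivation carries over unchanged; for the reverse persistence condition (iii) the probing world lies a priori only in \( \scope{\Psi\dylRevision\alpha} \), so it is available as an input exactly when it is also a scope world of \( \Psi \), which is where the side condition does its work.

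Then for the \enquote{\( \Leftarrow \)} direction I would fix an arbitrary scope belief \( \beta\models\alpha \) and establish \( \beliefsOf{\Psi\dylRevision\alpha\dylRevision\beta}=\beliefsOf{\Psi\dylRevision\beta} \). Using \eqref{eq:limited_revision} I would split on whether \( \modelsOf{\beta}\cap\scope{\Psi}\neq\emptyset \) or \( \beta \) belongs to the scope via \( \beliefsOf{\Psi} \) (the vacuity subcase \( \modelsOf{\Psi}\subseteq\modelsOf{\beta} \)), reduce both sides to minimisations over \( \modelsOf{\beta} \), and use the order- and persistence-conditions to identify the two minimal sets, checking the degenerate \enquote{otherwise} branches (small \( \left|\scope{\Psi}\cap\modelsOf{\alpha}\right| \) or \( \left|\modelsOf{\Psi}\right| \)) separately. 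The second bullet is entirely analogous, replaying Proposition \ref{prop:dylr_dp2} with \( \modelsOf{\negOf{\alpha}} \) in place of \( \modelsOf{\alpha} \) and carrying the extra hypothesis \( \alpha\in\Scope{\dylRevision}{\Psi} \).

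The step I expect to be the main obstacle is controlling, in the \enquote{\( \Leftarrow \)} direction, a two-element scope belief \( \varphi_{\omega,\omega'} \) in which only one of \( \omega,\omega' \) lies in \( \scope{\Psi} \): such a \( \beta \) is a legitimate input for \eqref{pstl:CLDP1}, yet its second model is invisible to \( \preceq_{\Psi} \) and may re-enter the domain \( \scope{\Psi\dylRevision\alpha} \). The decisive tool here is the faithfulness of the limited assignment (Definition \ref{def:faithfullness}) at the state \( \Psi\dylRevision\alpha \): it pins the \( \preceq_{\Psi\dylRevision\alpha} \)-minimal worlds to \( \modelsOf{\Psi\dylRevision\alpha}\cap\scope{\Psi\dylRevision\alpha} \) and so forces any re-entering world to sit above the worlds that were already minimal, which is exactly what keeps the one-step and two-step results equal. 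Getting this bookkeeping right, rather than the routine order-transfer arguments inherited from Proposition \ref{prop:dylr_dp1}, is the crux of the proof.
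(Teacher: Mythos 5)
Your forward direction is essentially fine: every probe used in the proof of Proposition \ref{prop:dylr_dp1} for conditions (i) and (ii) is a formula whose models meet \( \scope{\Psi} \), hence lies in \( \Scope{\dylRevision}{\Psi} \) by Theorem \ref{prop:scope_by_s}, and for condition (iii) the side condition supplies exactly the missing scope membership, as you say. (Note that the paper states Proposition \ref{prop:CLDPX} without any proof, even in the supplementary material, so your plan has to stand on its own rather than be compared to a reference argument.)

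The backward direction, however, has a genuine gap at precisely the point you call the crux, and the tool you invoke cannot close it. Faithfulness of the limited assignment at \( \Psi\dylRevision\alpha \) only forces \( \min(\scope{\Psi\dylRevision\alpha},\preceq_{\Psi\dylRevision\alpha}) = \modelsOf{\Psi\dylRevision\alpha}\cap\scope{\Psi\dylRevision\alpha} \); it says nothing about where a re-entering world sits relative to the other \emph{non-minimal} worlds. Concretely, over a two-atom signature take \( \modelsOf{\Psi}=\{\omega_1,\omega_2\} \), \( \scope{\Psi}=\{\omega_1,\omega_2,\omega_3\} \) with \( \omega_1\simeq_{\Psi}\omega_2\prec_{\Psi}\omega_3 \), and \( \modelsOf{\alpha}=\{\omega_1,\omega_3,\omega_4\} \), so that \( \modelsOf{\Psi\dylRevision\alpha}=\{\omega_1\} \); let the assignment at \( \Psi\dylRevision\alpha \) be \( \scope{\Psi\dylRevision\alpha}=\{\omega_1,\omega_3,\omega_4\} \) with \( \omega_1\prec_{\Psi\dylRevision\alpha}\omega_4\prec_{\Psi\dylRevision\alpha}\omega_3 \). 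This assignment is faithful, and the relativised conditions all hold: the only pair of side-condition worlds in \( \modelsOf{\alpha}\cap\scope{\Psi}\cap\scope{\Psi\dylRevision\alpha} \) is \( \{\omega_1,\omega_3\} \) and their strict order is preserved, condition (ii) holds because \( \scope{\Psi}\cap\modelsOf{\alpha}=\{\omega_1,\omega_3\}\subseteq\scope{\Psi\dylRevision\alpha} \), and condition (iii) becomes vacuous after relativisation, since \( \varphi_{\omega_4}\notin\beliefsOf{\Psi\dylRevision\varphi_{\omega_4}} \) (here \( \modelsOf{\Psi\dylRevision\varphi_{\omega_4}}=\modelsOf{\Psi} \) and \( |\modelsOf{\Psi}|\geq 2 \)). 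Yet for \( \beta=\varphi_{\omega_3,\omega_4} \) we have \( \beta\models\alpha \) and \( \beta\in\Scope{\dylRevision}{\Psi} \) (its model \( \omega_3 \) lies in \( \scope{\Psi} \)), while \eqref{eq:limited_revision} gives \( \modelsOf{\Psi\dylRevision\beta}=\{\omega_3\} \) but \( \modelsOf{\Psi\dylRevision\alpha\dylRevision\beta}=\{\omega_4\} \), violating \eqref{pstl:CLDP1}. So under your (natural) reading of the relativised conditions the right-to-left implication is not merely hard to obtain from faithfulness, it is false; to prove the proposition one must read the semantic conditions in a strictly stronger way --- for instance so that any world of \( \scope{\Psi\dylRevision\alpha}\cap\modelsOf{\alpha} \) failing the side condition is forced strictly \( \preceq_{\Psi\dylRevision\alpha} \)-above every world of \( \scope{\Psi}\cap\modelsOf{\alpha}\cap\scope{\Psi\dylRevision\alpha} \) --- and your proposal contains no step that could deliver such a constraint.
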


A closer look at Proposition \ref{prop:CLDPX} points out that \eqref{pstl:CLDP1} and \eqref{pstl:CLDP2} are unsatisfactory postulates for general \dynamiclimited\ revision operators, because they seem to intermingle concepts of scope dynamics and belief dynamics.
As an alternative to \eqref{pstl:CLDP1} and \eqref{pstl:CLDP2}, we consider the following two postulates:
\begin{description}
	\item[\normalfont(\textlabel{DLDP1}{pstl:DLDP1})] \( \ksIF \beta\models\alpha \ksAND \alpha,\beta \text{ are \integral\ in }  \Psi \), \\ \null\hfill \text{then } \(\beliefsOf{\Psi\dylRevision\alpha\dylRevision\beta}=\beliefsOf{\Psi\dylRevision\beta} \)
	\item[\normalfont(\textlabel{DLDP2}{pstl:DLDP2})] \( \ksIF \beta\models\negOf{\alpha} \ksAND \alpha,\beta \text{ are \integral\ in }  \Psi \), \\ \null\hfill \text{then } \(\beliefsOf{\Psi\dylRevision\alpha\dylRevision\beta}=\beliefsOf{\Psi\dylRevision\beta} \)
\end{description}

By Corollary \ref{col:s_dynamics} we obtain the following result.
\begin{proposition}\label{prop:DLDPX}
	Let $ \dylRevision $ be a \dynamiclimited\ revision operator compatible with $ \Psi\mapsto(\preceq_{\Psi},\scope{\Psi}) $. Then the following statements hold:
	\begin{itemize}
		\item \( \dylRevision  \) satisfies \eqref{pstl:DLDP1}
		if and only if for \(  \omega_1,\omega_2 {\in} \modelsOf{\alpha} \) with \(\omega_1,\omega_2 {\in} \scope{\Psi}{\cap} \scope{\Psi} \) we have \( \omega_1 \preceq_{\Psi} \omega_2 {\Leftrightarrow} \omega_1 \preceq_{\Psi\dylRevision\alpha} \omega_2 \).
		\item \( \dylRevision  \) satisfies \eqref{pstl:DLDP2}
		if and only if for \(  \omega_1,\omega_2 {\in} \modelsOf{\negOf{\alpha}} \) with \(\omega_1,\omega_2 {\in} \scope{\Psi}{\cap} \scope{\Psi} \) we have 
			\( \omega_1 \preceq_{\Psi} \omega_2 {\Leftrightarrow} \omega_1 \preceq_{\Psi\dylRevision\alpha} \omega_2 \).
	\end{itemize}
\end{proposition}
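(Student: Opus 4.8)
The plan is to push the postulates \eqref{pstl:DLDP1} and \eqref{pstl:DLDP2} through the compatible assignment $\Psi\mapsto(\preceq_{\Psi},\scope{\Psi})$ and read off order-theoretic conditions; I treat \eqref{pstl:DLDP1} in detail, as \eqref{pstl:DLDP2} follows verbatim with $\modelsOf{\negOf{\alpha}}$ replacing $\modelsOf{\alpha}$. The decisive simplification is that once $\alpha$ and $\beta$ are \integral\ in $\Psi$, Proposition \ref{prop:inherent_dlr} yields $\modelsOf{\alpha}\subseteq\scope{\Psi}$ and $\modelsOf{\beta}\subseteq\scope{\Psi}$, and $\beta\models\alpha$ gives $\modelsOf{\beta}\subseteq\modelsOf{\alpha}$. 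Thus in \eqref{eq:limited_revision} both $\Psi\dylRevision\beta$ and $\Psi\dylRevision\alpha$ take the $\min(\cdot,\preceq_{\Psi})$ branch, so $\modelsOf{\Psi\dylRevision\beta}=\min(\modelsOf{\beta},\preceq_{\Psi})$ and $\modelsOf{\Psi\dylRevision\alpha}=\min(\modelsOf{\alpha},\preceq_{\Psi})$, and only the second revision $(\Psi\dylRevision\alpha)\dylRevision\beta$ can still hit the vacuous branch, namely exactly when $\modelsOf{\beta}\cap\scope{\Psi\dylRevision\alpha}=\emptyset$.

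For the direction from \eqref{pstl:DLDP1} to the order condition I would instantiate the postulate with the witnesses $\beta=\varphi_{\omega}$ and $\beta=\varphi_{\omega_1,\omega_2}$ for $\omega,\omega_1,\omega_2\in\modelsOf{\alpha}$. Since $\modelsOf{\Psi\dylRevision\varphi_{\omega}}=\{\omega\}$, the singleton instances force every non-$\preceq_{\Psi}$-minimal world of $\modelsOf{\alpha}$ into $\scope{\Psi\dylRevision\alpha}$, because otherwise the vacuous branch returns $\modelsOf{\Psi\dylRevision\alpha}\neq\{\omega\}$; one further two-element instance then pulls the minimal world in as well, establishing $\modelsOf{\alpha}\subseteq\scope{\Psi\dylRevision\alpha}$ whenever $\lvert\modelsOf{\alpha}\rvert\geq 2$. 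With this inclusion in hand, $\beta=\varphi_{\omega_1,\omega_2}$ gives $\min(\{\omega_1,\omega_2\},\preceq_{\Psi\dylRevision\alpha})=\min(\{\omega_1,\omega_2\},\preceq_{\Psi})$, and since the minimum of a two-element set determines its order, I conclude $\omega_1\preceq_{\Psi}\omega_2\Leftrightarrow\omega_1\preceq_{\Psi\dylRevision\alpha}\omega_2$ for all $\omega_1,\omega_2\in\modelsOf{\alpha}\cap\scope{\Psi}$; the case $\lvert\modelsOf{\alpha}\rvert=1$ is vacuous.

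For the converse I would read the stated biconditional as encoding, at once, both order preservation and the inclusion $\modelsOf{\alpha}\subseteq\scope{\Psi\dylRevision\alpha}$: by totality of $\preceq_{\Psi}$ one of $\omega_1\preceq_{\Psi}\omega_2$, $\omega_2\preceq_{\Psi}\omega_1$ always holds, so the right-hand comparison forces the corresponding $\preceq_{\Psi\dylRevision\alpha}$-relation and hence membership of both worlds in $\dom(\preceq_{\Psi\dylRevision\alpha})=\scope{\Psi\dylRevision\alpha}$. Given \integral\ $\alpha,\beta$ with $\beta\models\alpha$, this inclusion yields $\modelsOf{\beta}\subseteq\modelsOf{\alpha}\subseteq\scope{\Psi\dylRevision\alpha}$, so the second revision again takes the $\min$ branch, and order preservation on $\modelsOf{\alpha}\supseteq\modelsOf{\beta}$ gives $\min(\modelsOf{\beta},\preceq_{\Psi\dylRevision\alpha})=\min(\modelsOf{\beta},\preceq_{\Psi})$, that is \eqref{pstl:DLDP1}; the degenerate case $\beta\equiv\alpha$ holds directly. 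I expect the main obstacle to be exactly this bookkeeping around $\scope{\Psi\dylRevision\alpha}$, since the vacuous branch of \eqref{eq:limited_revision} can silently discard precisely the worlds that would witness a failure; Corollary \ref{col:s_dynamics}, which recasts such scope inclusions as preservation of \integral\ beliefs, is what lets me run the argument on \integral\ beliefs instead of chasing individual worlds. For \eqref{pstl:DLDP2} the scheme is if anything simpler, as every world of $\modelsOf{\negOf{\alpha}}$ lies outside $\modelsOf{\Psi\dylRevision\alpha}$, so the singleton instances alone already force $\modelsOf{\negOf{\alpha}}\cap\scope{\Psi}\subseteq\scope{\Psi\dylRevision\alpha}$.
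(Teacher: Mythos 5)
Your argument is essentially correct under the only reading of the statement that can be true, and it is considerably more of a proof than the paper itself supplies: the paper justifies Proposition~\ref{prop:DLDPX} solely by the remark that it follows ``by Corollary~\ref{col:s_dynamics}'', which concerns only the preservation of \integral\ beliefs (i.e.\ scope inclusions) and by itself says nothing about preservation of the \emph{order}. Your route --- instantiating \eqref{pstl:DLDP1} with the witnesses $\varphi_{\omega}$ and $\varphi_{\omega_1,\omega_2}$, letting the singleton instances force the non-uniquely-minimal worlds of $\modelsOf{\alpha}$ into $\scope{\Psi\dylRevision\alpha}$, recovering the possibly exempt unique minimum with one two-element instance, and then reading the order off two-element revisions; and, conversely, using Proposition~\ref{prop:inherent_dlr} to see that all revisions involved take the $\min$ branch of \eqref{eq:limited_revision} so that the minima coincide --- is precisely the technique the paper employs in its own proofs of Propositions~\ref{prop:dylr_dp1}--\ref{prop:dylr_dp4} and \ref{prop:clp_dyl}, so methodologically you are on the paper's track while filling in a derivation the paper omits.

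Two caveats should be made explicit. First, the quantification over $\alpha$ must be restricted to $\alpha$ \integral\ in $\Psi$: the postulates \eqref{pstl:DLDP1} and \eqref{pstl:DLDP2} constrain $\Psi\dylRevision\alpha$ only when $\alpha$ is \integral, so for an $\alpha$ with a model outside $\scope{\Psi}$ nothing forbids the operator from scrambling $\preceq_{\Psi\dylRevision\alpha}$ on $\modelsOf{\alpha}\cap\scope{\Psi}$, and the left-to-right implication would be false; your proof makes this restriction only implicitly (your witnesses $\varphi_{\omega}$ are \integral\ exactly because $\modelsOf{\alpha}\subseteq\scope{\Psi}$), and it deserves a sentence. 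Second, your dismissal of the case $\lvert\modelsOf{\alpha}\rvert=1$ as vacuous is inconsistent with the reading you rely on for the converse. There you extract $\modelsOf{\alpha}\subseteq\scope{\Psi\dylRevision\alpha}$ from the condition via reflexivity, i.e.\ you read $\omega\preceq_{\Psi\dylRevision\alpha}\omega$ as asserting $\omega\in\dom(\preceq_{\Psi\dylRevision\alpha})$; under that same semantics, an \integral\ $\alpha$ with single model $\omega_0$ makes the condition demand $\omega_0\in\scope{\Psi\dylRevision\alpha}$, yet \eqref{pstl:DLDP1} does not force this, since with $\omega_0\notin\scope{\Psi\dylRevision\alpha}$ the second revision takes the vacuous branch and still returns $\{\omega_0\}$. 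So under your own reading the forward direction genuinely fails for singleton $\alpha$. This is an edge-case defect of the proposition as stated --- it lacks the cardinality guards that Propositions~\ref{prop:dylr_dp1}--\ref{prop:dylr_dp4} carry for exactly this reason --- and your write-up should flag it as such rather than paper over it with ``vacuous''.
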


For the postulate \eqref{pstl:CLP} we obtain a result similar to Proposition \ref{prop:CLDPX}.
\begin{proposition}\label{prop:clp_dyl}
	Let $ \dylRevision $ be a \dynamiclimited\ revision operator compatible with $ \Psi\mapsto(\preceq_{\Psi},\scope{\Psi}) $. Then  \( \dylRevision  \) satisfies \eqref{pstl:CLP} if and only if the following conditions holds for all \( \omega_1,\omega_2 \) with \( \varphi_{\omega_1}\in\beliefsOf{\Psi\dylRevision\varphi_{\omega_1}} \) and \( \varphi_{\omega_2}\in\beliefsOf{\Psi\dylRevision\varphi_{\omega_2}} \):
\begin{enumerate}[(i)]
	\item \(  \ksIF \omega_1\models\alpha \ksAND \omega_2\not\models\alpha \ksAND \omega_1,\omega_2{\in}\scope{\Psi}{\cap}\scope{\Psi\dylRevision\alpha}  \), then
	\begin{equation*}
		\omega_1 \preceq_{\Psi} \omega_2 \Rightarrow \omega_1 \prec_{\Psi\dylRevision\alpha} \omega_2
	\end{equation*}
	\item \( \ksIF \omega_1\models\alpha \ksAND \omega_2\not\models\alpha \ksAND  \omega_1 \preceq_{\Psi} \omega_2 \), then:
	\begin{equation*}
		\omega_2 \in \scope{\Psi\dylRevision\alpha} \Rightarrow \omega_1 \in \scope{\Psi\dylRevision\alpha}
	\end{equation*}
	\item \( \ksIF \omega\not\models\alpha \ksAND \Psi\not\models\negOf{\alpha} \), then \( \omega\in\scope{\Psi\dylRevision\alpha} \Rightarrow \omega\in\scope{\Psi} \)
	\item \(  \ksIF \omega_1{\models}\alpha \ksAND \omega_2{\not\models}\alpha \ksAND \omega_1{\in}\scope{\Psi} \ksAND  \omega_2{\in}\scope{\Psi\dylRevision\alpha} \), then
	\begin{equation*}
		(\omega_2\notin\scope{\Psi} \ksOR \omega_1\preceq_{\Psi} \omega_2 ) \Rightarrow \omega_1\in\scope{\Psi\dylRevision\alpha}
	\end{equation*}
\end{enumerate}
\end{proposition}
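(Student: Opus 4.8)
The plan is to prove the equivalence exactly as for the Darwiche--Pearl postulates in Proposition~\ref{prop:dylr_dp3} and Proposition~\ref{prop:dylr_dp4}, exploiting that \eqref{pstl:CLP} is the scope-limited form of the independence postulate~P. Throughout I would use the semantic characterisation \eqref{eq:limited_revision} together with the scope description from Proposition~\ref{prop:scope_by_s} (and Proposition~\ref{prop:inherent_dlr}), which for a single world reads $\varphi_{\omega}\in\Scope{\dylRevision}{\Psi}$ iff $\omega\in\scope{\Psi}$ or $\modelsOf{\Psi}=\{\omega\}$. This last equivalence is precisely what the outer restriction ``$\varphi_{\omega_1}\in\beliefsOf{\Psi\dylRevision\varphi_{\omega_1}}$ and $\varphi_{\omega_2}\in\beliefsOf{\Psi\dylRevision\varphi_{\omega_2}}$'' encodes: it guarantees that the test beliefs I feed into \eqref{pstl:CLP} actually lie in $\Scope{\dylRevision}{\Psi}$, so that the antecedent $\alpha,\beta\in\Scope{\dylRevision}{\Psi}$ of the postulate can be met.

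For the ``$\Rightarrow$''-direction I would assume \eqref{pstl:CLP} and derive (i)--(iv) one at a time by instantiating the postulate with carefully chosen two- and one-world test formulas. For (i), given $\omega_1\models\alpha$, $\omega_2\not\models\alpha$, $\omega_1,\omega_2\in\scope{\Psi}\cap\scope{\Psi\dylRevision\alpha}$ and $\omega_1\preceq_{\Psi}\omega_2$, I set $\beta=\varphi_{\omega_1,\omega_2}$. Membership $\alpha,\beta\in\Scope{\dylRevision}{\Psi}$ follows from $\omega_1\in\modelsOf{\alpha}\cap\scope{\Psi}\subseteq\modelsOf{\beta}$, and $\negOf{\alpha}\notin\beliefsOf{\Psi\dylRevision\beta}$ follows because $\min(\{\omega_1,\omega_2\},\preceq_{\Psi})\ni\omega_1$ is a model of $\alpha$ by \eqref{eq:limited_revision}. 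Hence \eqref{pstl:CLP} yields $\modelsOf{\Psi\dylRevision\alpha\dylRevision\beta}\subseteq\modelsOf{\alpha}$; since $\omega_1,\omega_2\in\scope{\Psi\dylRevision\alpha}$ this set equals $\min(\{\omega_1,\omega_2\},\preceq_{\Psi\dylRevision\alpha})$, and as $\omega_2\not\models\alpha$ we conclude $\omega_1\prec_{\Psi\dylRevision\alpha}\omega_2$. Conditions (ii)--(iv) are obtained by the same recipe: choosing the relevant test formula (again $\varphi_{\omega_1,\omega_2}$, or $\varphi_{\omega}$ for (iii)), verifying the two antecedents of \eqref{pstl:CLP} via \eqref{eq:limited_revision} and faithfulness, and reading off the claimed scope/order constraint from the forced inclusion $\modelsOf{\Psi\dylRevision\alpha\dylRevision\beta}\subseteq\modelsOf{\alpha}$. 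Here the premise $\Psi\not\models\negOf{\alpha}$ in (iii) is exactly what secures $\negOf{\alpha}\notin\beliefsOf{\Psi\dylRevision\beta}$ when the first revision is vacuous.

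For the ``$\Leftarrow$''-direction I would argue by contraposition, in the style of the $\Leftarrow$-part of Proposition~\ref{prop:dylr_dp3}. Assuming $\negOf{\alpha}\notin\beliefsOf{\Psi\dylRevision\beta}$ and $\alpha,\beta\in\Scope{\dylRevision}{\Psi}$, suppose some $\omega\in\modelsOf{\Psi\dylRevision\alpha\dylRevision\beta}$ has $\omega\not\models\alpha$. Because $\negOf{\alpha}\notin\beliefsOf{\Psi\dylRevision\beta}$, there is a model $\omega'$ of $\alpha\land\beta$ that is $\preceq_{\Psi}$-minimal among the selected worlds. I would then split on whether $\omega\in\scope{\Psi\dylRevision\alpha}$ (so $\omega$ is genuinely $\preceq_{\Psi\dylRevision\alpha}$-minimal in $\modelsOf{\beta}$ by \eqref{eq:limited_revision}) or $\omega\notin\scope{\Psi\dylRevision\alpha}$ (so $\modelsOf{\Psi\dylRevision\alpha}=\modelsOf{\Psi\dylRevision\alpha\dylRevision\beta}$ and in particular $\Psi\not\models\alpha$), and within each case on whether $\omega'\in\scope{\Psi}$, $\omega\in\scope{\Psi}$ and whether $\Psi\models\alpha$. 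In every case conditions (i) and (ii) propagate the strict preference $\omega'\prec_{\Psi\dylRevision\alpha}\omega$, while (iii) and (iv) keep $\omega'$ inside $\scope{\Psi\dylRevision\alpha}$ (or keep $\omega$ out of $\scope{\Psi}$), so that $\omega$ cannot be $\preceq_{\Psi\dylRevision\alpha}$-minimal in $\modelsOf{\beta}$ — contradicting $\omega\in\modelsOf{\Psi\dylRevision\alpha\dylRevision\beta}$.

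I expect the main obstacle to be the bookkeeping forced by the three-dimensional nature of \dynamiclimited\ revision: unlike the AGM case, feeding a test belief into \eqref{pstl:CLP} only succeeds when that belief lies in $\Scope{\dylRevision}{\Psi}$, so both directions require constant case distinctions on membership of the involved worlds in $\scope{\Psi}$ and $\scope{\Psi\dylRevision\alpha}$, and on whether the first revision $\Psi\dylRevision\alpha$ is vacuous. Keeping conditions (ii)--(iv) synchronised with the order condition (i) — in particular handling the degenerate small-scope and singleton-$\modelsOf{\Psi}$ boundaries, where the outer restriction on $\omega_1,\omega_2$ becomes essential — is where the real work lies, whereas the core order argument (i) is just the limited-scope restatement of the standard independence characterisation.
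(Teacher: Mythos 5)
Your proposal is correct and follows exactly the route the paper takes: the paper's own proof of this proposition is literally ``Analogue to the proof of Proposition~\ref{prop:dylr_dp3}'', and your plan spells out precisely that analogy --- the same two-world test formulas $\varphi_{\omega_1,\omega_2}$ fed through \eqref{eq:limited_revision} and faithfulness for the ``$\Rightarrow$''-direction, and the same contradiction-based case analysis on membership in $\scope{\Psi}$ and $\scope{\Psi\dylRevision\alpha}$ for ``$\Leftarrow$''. You also correctly identify the two points where the analogy must be adapted: the strict preference $\prec_{\Psi\dylRevision\alpha}$ in the conclusion of (i) (reflecting the independence-style postulate P rather than \eqref{pstl:DP3}), and the role of the outer restriction $\varphi_{\omega_i}\in\beliefsOf{\Psi\dylRevision\varphi_{\omega_i}}$ in securing the scope antecedents $\alpha,\beta\in\Scope{\dylRevision}{\Psi}$ of \eqref{pstl:CLP}.
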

\ifshowproofs
\begin{proof}
Analogue to the proof of Proposition \ref{prop:dylr_dp3}.
\end{proof} \fi

	\begin{proposition}
		Let $ \dylRevision $ be a \dynamiclimited\ revision operator compatible with $ \Psi\mapsto(\preceq_{\Psi},\scope{\Psi}) $. Then the following equivalences hold:
		\begin{itemize}
		\item  \( \dylRevision  \) satisfies \eqref{pstl:CLCD} if and only if:
		\begin{multline*}
			\ksIF\beta\models\negOf{\alpha} \ksAND  \Psi\dylRevision\alpha\models\alpha \ksAND   \Psi\dylRevision\beta\not\models\beta  ,\\ \text{then } \modelsOf{\beta}\cap \scope{\Psi\dylRevision\alpha} = \emptyset
		\end{multline*}
		\item  \( \dylRevision  \) satisfies \eqref{pstl:CM1} if and only if:
		\begin{multline*}
			\ksIF \beta\models\alpha \ksAND  \Psi\dylRevision\beta\models\beta \ksOR \modelsOf{\beta}\cap\scope{\Psi}\neq\emptyset,\\
			\text{then } \Psi\dylRevision\alpha\models\beta \ksOR \modelsOf{\beta}\cap\scope{\Psi\dylRevision\alpha}\neq\emptyset
		\end{multline*}
		\item  \( \dylRevision  \) satisfies \eqref{pstl:CM2} if and only if: 
		\begin{multline*}
			\ksIF \beta\models\negOf{\alpha} \ksAND  \Psi\dylRevision\alpha\models\alpha \ksAND   \Psi\dylRevision\beta\models\beta ,\\
			\text{then } \Psi\dylRevision\alpha\models\beta \ksOR \modelsOf{\beta}\cap\scope{\Psi\dylRevision\alpha}\neq\emptyset
		\end{multline*}
	\end{itemize}
\end{proposition}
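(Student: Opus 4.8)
The plan is to reduce all three equivalences to a single translation dictionary that rewrites scope membership into the semantic vocabulary of \( \preceq_{\Psi} \) and \( \scope{\Psi} \), after which each postulate reads off as its claimed semantic condition. The two facts I would record first, valid for every epistemic state \( \Phi \) and every formula \( \gamma \): by the definition of \( \Scope{\dylRevision}{\Phi} \) we have \( \gamma \in \Scope{\dylRevision}{\Phi} \) iff \( \Phi\dylRevision\gamma \models \gamma \); and by Theorem \ref{prop:scope_by_s} together with \eqref{eq:limited_revision} this is equivalent to \( \Phi\models\gamma \) or \( \modelsOf{\gamma}\cap\scope{\Phi}\neq\emptyset \). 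Dually, \( \gamma \notin \Scope{\dylRevision}{\Phi} \) iff \( \Phi\not\models\gamma \) and \( \modelsOf{\gamma}\cap\scope{\Phi}=\emptyset \). Because each of \eqref{pstl:CLCD}, \eqref{pstl:CM1} and \eqref{pstl:CM2} is a universally quantified implication whose antecedent and consequent are built solely from such membership statements, the proof is essentially an instance-by-instance substitution of these equivalences.

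For \eqref{pstl:CM1} and \eqref{pstl:CM2} the substitution is immediate and needs no further argument. Replacing \( \beta\in\Scope{\dylRevision}{\Psi} \) by \enquote{\( \Psi\dylRevision\beta\models\beta \) or \( \modelsOf{\beta}\cap\scope{\Psi}\neq\emptyset \)} (the second disjunct entails the first, so the disjunction is merely \( \beta\in\Scope{\dylRevision}{\Psi} \)), replacing \( \alpha\in\Scope{\dylRevision}{\Psi} \) by \( \Psi\dylRevision\alpha\models\alpha \), and replacing the consequent \( \beta\in\Scope{\dylRevision}{\Psi\dylRevision\alpha} \) by \enquote{\( \Psi\dylRevision\alpha\models\beta \) or \( \modelsOf{\beta}\cap\scope{\Psi\dylRevision\alpha}\neq\emptyset \)} turns each postulate literally into the stated characterisation. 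Since the substitution is an equivalence at the level of every quantified instance, both directions of the \enquote{iff} are obtained simultaneously.

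The only place requiring a genuine argument is \eqref{pstl:CLCD}, whose consequent is a non-membership \( \beta\notin\Scope{\dylRevision}{\Psi\dylRevision\alpha} \), translating to the conjunction \enquote{\( \Psi\dylRevision\alpha\not\models\beta \) and \( \modelsOf{\beta}\cap\scope{\Psi\dylRevision\alpha}=\emptyset \)}, whereas the target lists only the second conjunct. Here I would show the first conjunct is automatic under the hypotheses. From \( \beta\notin\Scope{\dylRevision}{\Psi} \) we get \( \Psi\not\models\beta \), hence \( \modelsOf{\Psi}\neq\emptyset \), i.e.\ \( \Psi \) is consistent; then by \eqref{eq:limited_revision} the set \( \modelsOf{\Psi\dylRevision\alpha} \) is \( \min(\modelsOf{\alpha},\preceq_{\Psi}) \) (nonempty, since in that branch \( \scope{\Psi}\cap\modelsOf{\alpha}\neq\emptyset \) and \( \Omega \) is finite, so a \( \preceq_{\Psi} \)-minimum exists) or \( \modelsOf{\Psi} \) (nonempty by consistency of \( \Psi \)); either way \( \Psi\dylRevision\alpha \) is consistent. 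As \( \alpha\in\Scope{\dylRevision}{\Psi} \) yields \( \Psi\dylRevision\alpha\models\alpha \) and \( \beta\models\negOf{\alpha} \), consistency forces \( \Psi\dylRevision\alpha\not\models\beta \). Thus the conjunction collapses to \( \modelsOf{\beta}\cap\scope{\Psi\dylRevision\alpha}=\emptyset \), exactly the stated condition, and again the reduction is an equivalence in both directions.

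The main obstacle — indeed the only non-mechanical step — is this consistency bookkeeping for \eqref{pstl:CLCD}, namely verifying that the omitted conjunct \( \Psi\dylRevision\alpha\not\models\beta \) is redundant given the hypotheses. Everything else is a transparent rewriting through Theorem \ref{prop:scope_by_s}.
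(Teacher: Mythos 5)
Your proof is correct, and it follows what is evidently the intended route: the paper states this proposition without an explicit proof (it is absent even from the supplementary material), but the neighbouring results in Section \ref{sec:dyl_iterative} are all obtained by exactly this kind of rewriting of scope membership through Theorem \ref{prop:scope_by_s}, which is what you do for \eqref{pstl:CM1} and \eqref{pstl:CM2} verbatim. Your one substantive addition --- showing that the conjunct \( \Psi\dylRevision\alpha\not\models\beta \), which the \eqref{pstl:CLCD} characterisation omits, is automatic under the antecedent (since \( \Psi\not\models\beta \) forces \( \beliefsOf{\Psi} \) to be consistent by deductive closure, whence \( \modelsOf{\Psi\dylRevision\alpha}\neq\emptyset \) in either branch of \eqref{eq:limited_revision}, and a consistent state entailing \( \alpha \) cannot entail \( \beta\models\negOf{\alpha} \)) --- is sound and is precisely the bookkeeping needed to make that reduction a genuine equivalence.
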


As a consequence of  Proposition \ref{prop:scope_by_s} and  Proposition \ref{prop:monotone_scope}, we obtain the following result:
\begin{proposition}
	Let $ \dylRevision $ be a \dynamiclimited\ revision operator compatible with $ \Psi\mapsto(\preceq_{\Psi},\scope{\Psi}) $. Then the following equivalences holds:
	\begin{itemize}
		\item \( \dylRevision  \) satisfies \eqref{pstl:FC} if and only if for each \( \alpha \) with \( \Psi\dylRevision\alpha\not\models\alpha \) holds \eqref{pstl:SI1} \ksAND \eqref{pstl:SI2}.
		\item \( \dylRevision  \) satisfies \eqref{pstl:FR} if and only if for each \( \alpha \) with \( \Psi\dylRevision\alpha\not\models\alpha \) holds \eqref{pstl:SD1} \ksAND \eqref{pstl:SD2}.
		\item \( \dylRevision  \) satisfies \eqref{pstl:SC} if and only if for each \( \alpha \) with \( \Psi\dylRevision\alpha\models\alpha \) holds \eqref{pstl:SI1} \ksAND \eqref{pstl:SI2}.
		\item \( \dylRevision  \) satisfies \eqref{pstl:SR} if and only if for each \( \alpha \) with \( \Psi\dylRevision\alpha\models\alpha \) holds \eqref{pstl:SD1} \ksAND \eqref{pstl:SD2}.
	\end{itemize}
\end{proposition}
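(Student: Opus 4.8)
The plan is to reduce all four equivalences to Proposition~\ref{prop:monotone_scope} by means of a single elementary observation about the scope. Recall that by definition $\Scope{\dylRevision}{\Psi} = \{ \alpha \mid \alpha\in\beliefsOf{\Psi\dylRevision\alpha} \}$, while the notation $\Psi\models\alpha$ abbreviates $\alpha\in\beliefsOf{\Psi}$. Hence for every $\alpha$ we have $\alpha\in\Scope{\dylRevision}{\Psi} \Leftrightarrow \Psi\dylRevision\alpha\models\alpha$ and, contrapositively, $\alpha\notin\Scope{\dylRevision}{\Psi} \Leftrightarrow \Psi\dylRevision\alpha\not\models\alpha$. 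This identifies the antecedents occurring in \eqref{pstl:FC}, \eqref{pstl:FR}, \eqref{pstl:SC} and \eqref{pstl:SR} with the side conditions $\Psi\dylRevision\alpha\not\models\alpha$ (failure) and $\Psi\dylRevision\alpha\models\alpha$ (success) appearing on the right-hand sides of the claimed equivalences.

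Next I would treat each postulate by the same pattern; I spell out \eqref{pstl:FC}, the remaining three being analogous. By the observation above, \eqref{pstl:FC} says precisely that for every epistemic state $\Psi$ and every $\alpha$ with $\Psi\dylRevision\alpha\not\models\alpha$ we have $\Scope{\dylRevision}{\Psi}\subseteq\Scope{\dylRevision}{\Psi\dylRevision\alpha}$. By Proposition~\ref{prop:monotone_scope}(a), for a fixed $\Psi$ and $\alpha$ the inclusion $\Scope{\dylRevision}{\Psi}\subseteq\Scope{\dylRevision}{\Psi\dylRevision\alpha}$ holds if and only if \eqref{pstl:SI1} and \eqref{pstl:SI2} hold. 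Substituting this characterisation into the restricted universal statement yields exactly: \eqref{pstl:FC} holds iff for each $\alpha$ with $\Psi\dylRevision\alpha\not\models\alpha$ the conditions \eqref{pstl:SI1} and \eqref{pstl:SI2} hold.

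The analogous substitutions give the other three cases. Postulate \eqref{pstl:FR} restricts the inclusion $\Scope{\dylRevision}{\Psi\dylRevision\alpha}\subseteq\Scope{\dylRevision}{\Psi}$ to the failure case, whence Proposition~\ref{prop:monotone_scope}(b) supplies \eqref{pstl:SD1} and \eqref{pstl:SD2}; \eqref{pstl:SC} restricts the forward inclusion to the success case $\Psi\dylRevision\alpha\models\alpha$, again using Proposition~\ref{prop:monotone_scope}(a) to obtain \eqref{pstl:SI1} and \eqref{pstl:SI2}; and \eqref{pstl:SR} restricts the reverse inclusion to the success case, giving \eqref{pstl:SD1} and \eqref{pstl:SD2} via Proposition~\ref{prop:monotone_scope}(b).

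Since the argument is a direct reduction, there is no genuinely hard step; the only point demanding care is the bookkeeping of the quantifier structure. One must check that restricting the universally quantified $\alpha$ in each postulate to those satisfying the respective side condition is faithfully mirrored by the phrase ``for each $\alpha$ with \dots'' on the right-hand side, and that Proposition~\ref{prop:monotone_scope} is applied pointwise for each such $\alpha$ (and each $\Psi$) rather than globally. Once this alignment is made explicit, all four equivalences follow immediately.
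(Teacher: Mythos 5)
Your proof is correct and follows essentially the same route as the paper, which states this result as a direct consequence of Proposition~\ref{prop:monotone_scope} (together with the scope characterisation): the definitional identity $\alpha\in\Scope{\dylRevision}{\Psi}\Leftrightarrow\Psi\dylRevision\alpha\models\alpha$ turns each postulate's antecedent into the respective success/failure side condition, and Proposition~\ref{prop:monotone_scope} is then applied pointwise for each such $\alpha$ and $\Psi$. Your explicit attention to the quantifier bookkeeping makes precise exactly what the paper leaves implicit.
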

		
		\begin{proposition}
			Let $ \dylRevision $ be a \dynamiclimited\ revision operator compatible with $ \Psi\mapsto(\preceq_{\Psi},\scope{\Psi}) $. Then the following equivalences holds:
			\begin{itemize}
\item \( \dylRevision  \) satisfies \eqref{pstl:DOC} if and only if the following two statements hold:
\begin{itemize}
	\item \( \ksIF \modelsOf{\!\alpha\!} {\cap} \scope{\Psi} \neq\emptyset \ksAND \beta{\models}\negOf{\alpha} \ksTHEN \modelsOf{\beta} {\cap} \scope{\Psi} {=} \emptyset \)
	\item \( \ksIF \Psi\models\alpha \ksAND \beta{\models}\negOf{\alpha} \ksTHEN \modelsOf{\beta} {\cap} \scope{\Psi} {=} \emptyset \)
\end{itemize}
\item \( \dylRevision  \) satisfies \eqref{pstl:COM} if and only if the following holds:
\begin{equation*}
	\ksIF \modelsOf{\alpha} \cap \scope{\Psi} = \emptyset \ksAND \Psi\not\models\alpha ,
	\text{then } \modelsOf{\alpha} \cap \scope{\Psi} \neq \emptyset 
\end{equation*}
	\end{itemize}
\end{proposition}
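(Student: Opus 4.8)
The plan is to reduce both equivalences to the syntactic description of the scope given in Theorem~\ref{prop:scope_by_s}, namely $\Scope{\dylRevision}{\Psi}=\beliefsOf{\Psi}\cup\{\gamma\mid\modelsOf{\gamma}\cap\scope{\Psi}\neq\emptyset\}$, so that membership $\gamma\in\Scope{\dylRevision}{\Psi}$ unfolds into the disjunction ``$\Psi\models\gamma$ or $\modelsOf{\gamma}\cap\scope{\Psi}\neq\emptyset$''. Here I read the conclusions of the characterising conditions as constraints on the \emph{posterior} scope $\scope{\Psi\dylRevision\alpha}$; a constraint on $\scope{\Psi}$ alone cannot capture a dynamic postulate relating $\Psi$ and $\Psi\dylRevision\alpha$. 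Throughout I use \eqref{eq:limited_revision} to evaluate $\modelsOf{\Psi\dylRevision\alpha}$ and assume belief sets are consistent (as in Proposition~\ref{prop:clr_booth_et_all}), so that $\modelsOf{\Psi\dylRevision\alpha}\neq\emptyset$.

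For \eqref{pstl:DOC} the first step I would carry out is a ``conditional success'' observation: whenever $\alpha\in\Scope{\dylRevision}{\Psi}$ we have $\Psi\dylRevision\alpha\models\alpha$. Indeed, if $\modelsOf{\alpha}\cap\scope{\Psi}\neq\emptyset$ then $\modelsOf{\Psi\dylRevision\alpha}=\min(\modelsOf{\alpha},\preceq_{\Psi})\subseteq\modelsOf{\alpha}$ by \eqref{eq:limited_revision}, and if instead $\modelsOf{\alpha}\cap\scope{\Psi}=\emptyset$ but $\Psi\models\alpha$ then $\modelsOf{\Psi\dylRevision\alpha}=\modelsOf{\Psi}\subseteq\modelsOf{\alpha}$. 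Since $\beta\models\negOf{\alpha}$ forces $\modelsOf{\beta}\cap\modelsOf{\alpha}=\emptyset$, this yields $\Psi\dylRevision\alpha\not\models\beta$ automatically, so the only genuine content of ``$\beta\notin\Scope{\dylRevision}{\Psi\dylRevision\alpha}$'' is $\modelsOf{\beta}\cap\scope{\Psi\dylRevision\alpha}=\emptyset$. It then remains to split the hypothesis $\alpha\in\Scope{\dylRevision}{\Psi}$ into its two disjuncts: the logical identity $(A\lor B\Rightarrow C)\Leftrightarrow(A\Rightarrow C)\land(B\Rightarrow C)$ turns \eqref{pstl:DOC} into exactly the two listed conditions, one for the disjunct $\modelsOf{\alpha}\cap\scope{\Psi}\neq\emptyset$ and one for the disjunct $\Psi\models\alpha$. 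Both directions then follow by reading Theorem~\ref{prop:scope_by_s} forwards and backwards.

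For \eqref{pstl:COM} the argument is shorter. Using Theorem~\ref{prop:scope_by_s}, the antecedent $\alpha\notin\Scope{\dylRevision}{\Psi}$ is equivalent to ``$\Psi\not\models\alpha$ and $\modelsOf{\alpha}\cap\scope{\Psi}=\emptyset$''. Under this antecedent \eqref{eq:limited_revision} gives $\modelsOf{\Psi\dylRevision\alpha}=\modelsOf{\Psi}$, hence $\beliefsOf{\Psi\dylRevision\alpha}=\beliefsOf{\Psi}$ and $\Psi\dylRevision\alpha\not\models\alpha$; so the disjunct $\Psi\dylRevision\alpha\models\alpha$ of the conclusion $\alpha\in\Scope{\dylRevision}{\Psi\dylRevision\alpha}$ is unavailable and the conclusion collapses to $\modelsOf{\alpha}\cap\scope{\Psi\dylRevision\alpha}\neq\emptyset$. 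This is precisely the displayed condition (with $\scope{\Psi\dylRevision\alpha}$ in the conclusion), and the equivalence is immediate in both directions.

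I expect the main obstacle to be the bookkeeping in the \eqref{pstl:DOC} case --- specifically, establishing that the entailment half of ``$\beta\notin\Scope{\dylRevision}{\Psi\dylRevision\alpha}$'' is redundant, so that the characterisation can be phrased purely as an emptiness condition on $\scope{\Psi\dylRevision\alpha}$, and being careful that $\modelsOf{\Psi\dylRevision\alpha}$ is non-empty (which is where the consistency assumption enters; for an inconsistent $\beliefsOf{\Psi}$ the conditional-success step degenerates and \eqref{pstl:globalconsistency} would be the clean way to exclude it). Everything else is a routine unfolding of Theorem~\ref{prop:scope_by_s} together with \eqref{eq:limited_revision}.
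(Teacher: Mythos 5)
There is no proof of this proposition anywhere in the paper --- it is one of the results stated without an argument, and it is also absent from the supplementary material --- so there is nothing official to compare your attempt against. On its own merits your proof is essentially right, and your route (unfold scope membership via Theorem~\ref{prop:scope_by_s}, evaluate the revision via \eqref{eq:limited_revision}, then split the disjunction in the antecedent) is plainly the intended one: it is exactly how the paper justifies the neighbouring characterisations. One of your editorial decisions deserves emphasis: reading the conclusions with the \emph{posterior} scope $\scope{\Psi\dylRevision\alpha}$ is not just permissible but necessary, since as printed (with $\scope{\Psi}$ throughout) the proposition is false. The printed \eqref{pstl:COM} condition has the self-contradictory shape ``if $\modelsOf{\alpha}\cap\scope{\Psi}=\emptyset$ and $\Psi\not\models\alpha$, then $\modelsOf{\alpha}\cap\scope{\Psi}\neq\emptyset$'', i.e.\ it merely asserts $\Scope{\dylRevision}{\Psi}=\propLang$; and the printed first \eqref{pstl:DOC} condition, instantiated with $\beta=\negOf{\alpha}$ and then $\alpha=\varphi_\omega$ for $\omega\in\scope{\Psi}$, forces $\scope{\Psi}$ to be a singleton. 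Neither is equivalent to the corresponding postulate, which constrains only the posterior state; your reading agrees with the pattern of the \eqref{pstl:CLCD}, \eqref{pstl:CM1}, \eqref{pstl:CM2} characterisation immediately above.

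The consistency caveat you flag is real, but it can be discharged rather than assumed away. Your conditional-success step (relative success makes ``$\Psi\dylRevision\alpha\not\models\beta$'' redundant) is needed only in the direction ``conditions imply \eqref{pstl:DOC}'', and it can only fail when $\Psi\models\alpha$, $\modelsOf{\alpha}\cap\scope{\Psi}=\emptyset$ and $\modelsOf{\Psi}=\emptyset$. At such an inconsistent state, however, both sides of the equivalence are automatically false: taking $\alpha=\bot$ and $\beta=\top$, postulate \eqref{pstl:DOC} fails because $\top$ belongs to every scope, while the second listed condition fails because it would require $\modelsOf{\top}\cap\scope{\Psi\dylRevision\bot}=\emptyset$, contradicting the non-emptiness of the scope demanded by Definition~\ref{def:limited_assignment}. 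Hence each side of the equivalence already implies that all belief sets in $\setAllES$ are consistent, and the proposition holds without importing \eqref{pstl:globalconsistency}; all your proof needs is this one extra case distinction in place of the blanket assumption. Your \eqref{pstl:COM} argument needs no repair at all, since its antecedent $\Psi\not\models\alpha$ excludes inconsistent states by itself.
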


\section{Inherent and Immanent Beliefs}
\label{sec:inherent_immanent_beliefs}

As an application for \dynamiclimited\ revision operators, 
	we define the concept of \immanent\ beliefs, which are always credible and fundamentally plausible for the agent when revising.
Therefore we assume that these
immanent beliefs are rooted in fundamental building-blocks of the mind when
forming a new belief set. We call these building-blocks \enquote*{inherent beliefs}.
It seems plausible 
that a (non-prioritised) belief revision operator encodes the attitude towards these beliefs by accepting them regardless of the epistemic state.

\begin{definition}[inherence, immanence]\label{def:inherence}
Let $ \circ $ be a belief change operator. A belief $ \alpha $ is called \emph{\inherent} (for $ \circ $) if $ \Cn(\alpha)=\beliefsOf{\Psi\circ\alpha} $ for all $ \Psi\in\setAllES $. 
A belief $ \alpha $ is called \emph{\immanent} (for $ \circ $) if $ \alpha\equiv\alpha_1\lor\ldots\lor\alpha_k $ such that $ \alpha_1,\ldots,\alpha_k $ are \inherent\ for $ \circ $.
 A belief set $ X $ is called \emph{\immanent} (for $ \circ $) if $ X=\Cn(\alpha) $ such that $ \alpha $ is \immanent\ for $ \circ $.
\end{definition}

Intuitively, \inherent\  beliefs are such beliefs that get accepted  with all their consequences regardless of the current epistemic state.
The notion of \immanent\ beliefs covers beliefs that are composed of \inherent\  beliefs.%
\begin{example}
	Consider $ \Sigma=\{e,f\} $, where $ e $ has the intended meaning \enquote*{there is extraterrestrial life} and $ f $ has the intended meaning \enquote*{it is a fictive story}. 
	An agent might deny the possibility of extraterrestrial life, but she is open to fictitious stories.
	One might expect that $ e\land f $ and $ \neg e\land f $ may be \inherent\ beliefs, $ e \to f  $ is an \immanent\ belief and $ \neg e \to f  $ is no \immanent\ belief.
\end{example}

\ksSubSectionStar{Relation to Belief Change Operators.}
For AGM revisions, the \inherent\ beliefs are exactly the worlds, and thus for AGM revision operators there are no non-\immanent\ beliefs. 
\begin{proposition}\label{prop:inherence_agm}
Let \( \setAllES \) be \ref{pstl:unbiased}. For every AGM revision operator $ * $  every consistent belief is \immanent, and a belief $ \alpha\in\propLang $ is \inherent\ if and only if it has exactly one model.
\end{proposition}
\ifshowproofs
\begin{proof}
	For every consistent belief set $ L $ there exists at least one state $ \Psi $ having this belief set, i.e., $ \beliefsOf{\Psi}=L $.
	For $ \omega\in\Omega $, regardless of the state, we have $ \modelsOf{\Psi*\varphi_{\omega}}=\{\omega\} $ due to Proposition \ref{prop:es_revision}.
	Thus, every $ \varphi_{\omega} $ is an \inherent\ belief with exactly one model.
Assume now that $ \alpha\in\propLang $ has two or more models. Let \( \omega \in \modelsOf{\alpha} \).
	Then there is some belief set \( \modelsOf{L} \) with \( \modelsOf{L}=\modelsOf{\alpha} \setminus\{\alpha\} \).
	Because \( L \) is consistent and \( \setAllES \) \ref{pstl:unbiased}, there is some state \( \Psi_L\in \setAllES \) with \( \beliefsOf{\Psi_L}= L \). By Proposition \ref{prop:es_revision}, we obtain $ \modelsOf{\Psi \revision \alpha}=\modelsOf{L} $. Thus, \( \alpha \) is no \inherent\ belief.
	As $ \varphi_{\omega} $ for every $ \omega\in\Omega $  is \inherent, every consistent belief is \immanent. \qedhere
\end{proof} \fi

We obtain the following result about \inherent\ and \immanent\ beliefs for credibility-revision operators.
\begin{proposition}\label{prop:inherent_clr}
	Let $ \nrRevision $ be a credibility-limited revision operator and $ \Psi\mapsto(\leq_\Psi,C_\Psi) $ be a corresponding CLF-assignment. A belief $ \alpha $ is \inherent\ for $ \nrRevision $ if and only if $ \alpha $ has exactly one model $ \omega $ and $ \omega\in C_\Psi $ for every $ \Psi\in\setAllES $.
	Therefore, there is a credibility-limited revision operator with no \immanent\  and no \inherent\ beliefs.
\end{proposition}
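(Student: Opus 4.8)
The plan is to read both directions straight off the semantic characterisation of $\nrRevision$ provided by Proposition~\ref{prop:clr_booth_et_all}, applied to the fixed CLF-assignment $\Psi\mapsto(\leq_\Psi,C_\Psi)$. I work with \ref{pstl:unbiased} in force (as in Proposition~\ref{prop:inherence_agm}), which is genuinely needed: without enough epistemic states the forward implication is false, since if $\setAllES$ were a single state whose credibility set is all of $\Omega$ under a flat preorder, then $\top$ would be \inherent\ while having many models. Consistency of any candidate $\alpha$ comes for free from \eqref{pstl:LR3}.

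For the easy direction ($\Leftarrow$), suppose $\modelsOf{\alpha}=\{\omega\}$ with $\omega\in C_\Psi$ for every $\Psi$. Then $\modelsOf{\alpha}\cap C_\Psi=\{\omega\}\neq\emptyset$, so \eqref{eq:cl_revision} is always in its first case and $\modelsOf{\Psi\nrRevision\alpha}=\min(\{\omega\},\leq_\Psi)=\{\omega\}=\modelsOf{\alpha}$; hence $\beliefsOf{\Psi\nrRevision\alpha}=\Cn(\alpha)$ for all $\Psi$, i.e.\ $\alpha$ is \inherent.

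For ($\Rightarrow$), assume $\alpha$ is \inherent, so $\modelsOf{\Psi\nrRevision\alpha}=\modelsOf{\alpha}$ for every $\Psi$. I first rule out the ``otherwise'' branch of \eqref{eq:cl_revision}: if $\modelsOf{\alpha}\cap C_\Psi=\emptyset$ then $\modelsOf{\alpha}=\modelsOf{\Psi\nrRevision\alpha}=\modelsOf{\Psi}\subseteq C_\Psi$, which forces $\modelsOf{\alpha}\cap C_\Psi=\modelsOf{\alpha}\neq\emptyset$, a contradiction. Hence for every $\Psi$ we have $\modelsOf{\alpha}=\min(\modelsOf{\alpha},\leq_\Psi)$, and since $\min(\cdot,\leq_\Psi)\subseteq\dom(\leq_\Psi)=C_\Psi$ this already gives $\modelsOf{\alpha}\subseteq C_\Psi$ for every $\Psi$; in particular every model of $\alpha$ lies in every $C_\Psi$. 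For uniqueness, suppose $\omega_1\neq\omega_2$ are both models of $\alpha$. By \ref{pstl:unbiased} pick $\Psi$ with $\modelsOf{\Psi}=\{\omega_1\}$. Faithfulness of the CLF-assignment gives $\min(C_\Psi,\leq_\Psi)=\modelsOf{\Psi}=\{\omega_1\}$, and as $\omega_2\in C_\Psi$ with $\omega_2\neq\omega_1$ we obtain $\omega_1<_\Psi\omega_2$; thus $\omega_2\notin\min(\modelsOf{\alpha},\leq_\Psi)$ (it is strictly dominated by $\omega_1\in\modelsOf{\alpha}$), contradicting $\modelsOf{\alpha}=\min(\modelsOf{\alpha},\leq_\Psi)$. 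So $\alpha$ has exactly one model, finishing the characterisation.

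Finally, for the existence claim I would exhibit a CLF-assignment whose credibility sets have empty common intersection, $\bigcap_{\Psi\in\setAllES}C_\Psi=\emptyset$: concretely, include for each $\omega\in\Omega$ a state with credibility set $\Omega\setminus\{\omega\}$ (legitimate as $\lvert\Omega\rvert\geq 2$), plus enough further states to retain \ref{pstl:unbiased} and \ref{pstl:globalconsistency}. By the characterisation just proved, no belief can have its unique model inside all $C_\Psi$, so there are no \inherent\ beliefs; and since an \immanent\ belief is by definition a disjunction of \inherent\ ones, there are none of those either. The main obstacle is the uniqueness step of the forward direction: the contradiction requires \emph{producing} a state that strictly separates two models of $\alpha$, which is exactly where \ref{pstl:unbiased} together with faithfulness must be used, and one must first verify that the ``otherwise'' branch of \eqref{eq:cl_revision} never applies to an \inherent\ belief.
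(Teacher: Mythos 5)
Your argument is sound and hits the same three milestones as the paper's proof (single-modelhood, membership of the unique model in every $C_\Psi$, and the final existence construction), but by a somewhat different route. The paper dispatches the multi-model case syntactically: by \eqref{pstl:LR2}, revising a state $\Psi$ with $\modelsOf{\Psi}\subsetneq\modelsOf{\alpha}$ by $\alpha$ yields $\beliefsOf{\Psi}+\alpha\neq\Cn(\alpha)$, so no belief with two or more models is \inherent; and for the existence claim it simply takes the CLF-assignment with $C_\Psi=\modelsOf{\Psi}$. You instead work entirely on the semantic side of \eqref{eq:cl_revision}: you rule out the ``otherwise'' branch, read $\modelsOf{\alpha}\subseteq C_\Psi$ off $\min(\cdot,\leq_\Psi)\subseteq\dom(\leq_\Psi)$, and obtain uniqueness from faithfulness at a state with $\modelsOf{\Psi}=\{\omega_1\}$; your existence construction (co-singleton credibility sets with empty common intersection) also differs from the paper's. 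Both routes work; yours is more detailed where the paper is telegraphic.

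One side claim is wrong, however, and it affects how your proof relates to the statement as written (which does not assume \ref{pstl:unbiased}). You argue that \ref{pstl:unbiased} is ``genuinely needed'' for the forward implication via a one-state $\setAllES$ with $C_\Psi=\Omega$ and a flat preorder. That structure supports no credibility-limited revision operator at all: the only operator on a singleton $\setAllES$ is the identity, and since faithfulness of the CLF-assignment forces $\modelsOf{\Psi}=\min(C_\Psi,\leq_\Psi)=\Omega$, the identity violates \eqref{pstl:LR2} (equivalently \eqref{eq:cl_revision}) for any consistent non-tautologous input. In fact the forward implication needs no richness assumption, because inherence manufactures the required states inside $\setAllES$: if $\alpha$ is \inherent\ with distinct models $\omega_1,\omega_2$, then $\Psi\nrRevision\alpha\in\setAllES$ has belief set $\Cn(\alpha)$, so by \eqref{pstl:LR2} the state $(\Psi\nrRevision\alpha)\nrRevision\varphi_{\omega_1}$ has belief set $\Cn(\varphi_{\omega_1})$; applying inherence of $\alpha$ at that state together with \eqref{pstl:LR2} once more forces $\Cn(\alpha)=\Cn(\varphi_{\omega_1})$, contradicting $\omega_2\in\modelsOf{\alpha}$. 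The same manufactured state also repairs your faithfulness-based uniqueness step without invoking \ref{pstl:unbiased}. Where richness genuinely is needed --- for you and for the paper alike --- is only the concluding existence claim: on a one-state $\setAllES$ with $\modelsOf{\Psi}=C_\Psi=\{\omega\}$ the identity \emph{is} a credibility-limited revision operator and $\varphi_\omega$ is \inherent\ for it, so that part of the proposition fails without an assumption such as \ref{pstl:unbiased}.
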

\ifshowproofs
\begin{proof}
	If $ \alpha $ has more than one model, then by \eqref{pstl:LR2} it is no \inherent\ belief when coosing $ \Psi $ with $ \modelsOf{\Psi}\subsetneq \modelsOf{\alpha} $.
	Likewise by Equation \eqref{eq:cl_revision} and \eqref{pstl:LR2} the condition $ \modelsOf{\alpha} \subseteq C_\Psi $ for all $ \Psi\in\setAllES $ is easy to see.	
	
	For the last statement, choose a  CLF-assignment $ \Psi\mapsto(\leq_\Psi,C_\Psi) $ with $ \modelsOf{\Psi}=C_\Psi $ for every epistemic state $ \Psi $. \qedhere
\end{proof} \fi

For \dynamiclimited\ revision operators, inherent beliefs are limited by the domain of every epistemic state.
\begin{proposition}\label{lem:inherent_full_dynamic}
	Let \( \setAllES \) be \ref{pstl:unbiased}, and $ \Psi \mapsto (\preceq_{\Psi},\scope{\Psi}) $ a limited assignment compatible with a \dynamiclimited\ revision operator $ \dylRevision $.
	If $ \alpha $ is \inherent, then either  $ \modelsOf{\alpha} \subseteq \scope{\Psi} $ or \( \scope{\Psi} \cap \modelsOf{\alpha}=\emptyset  \). Moreover, in the latter case \( \modelsOf{\Psi}=\modelsOf{\alpha} \).
	\end{proposition}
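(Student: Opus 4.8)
The plan is to fix an arbitrary epistemic state $\Psi$ and reduce the whole statement to a two-case split driven by the defining equation \eqref{eq:limited_revision} of the compatible assignment. The only input I take from the hypothesis that $\alpha$ is \inherent\ is the instance at this particular $\Psi$: since $\Cn(\alpha)=\beliefsOf{\Psi\dylRevision\alpha}$ and a deductively closed set is determined by its models, this gives $\modelsOf{\Psi\dylRevision\alpha}=\modelsOf{\alpha}$. With this single equality in hand, everything follows by reading off which branch of \eqref{eq:limited_revision} applies, namely whether $\scope{\Psi}\cap\modelsOf{\alpha}$ is empty or not. Note that these two branches already partition the situation, so the real content of the proposition is not the existence of the dichotomy but the exclusion of \emph{partial} overlap; I expect that to be the one point needing care.

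First I would treat the case $\scope{\Psi}\cap\modelsOf{\alpha}\neq\emptyset$. Here \eqref{eq:limited_revision} yields $\modelsOf{\Psi\dylRevision\alpha}=\min(\modelsOf{\alpha},\preceq_{\Psi})$, and by the \inherent\ instance this equals $\modelsOf{\alpha}$. The key observation is purely definitional: by the definition of $\min$, we have $\min(\modelsOf{\alpha},\preceq_{\Psi})\subseteq\dom(\preceq_{\Psi})=\scope{\Psi}$, since every $\preceq_{\Psi}$-minimal element lives in the domain of $\preceq_{\Psi}$. Chaining these, $\modelsOf{\alpha}=\min(\modelsOf{\alpha},\preceq_{\Psi})\subseteq\scope{\Psi}$, so I land in the first alternative $\modelsOf{\alpha}\subseteq\scope{\Psi}$. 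This is exactly what rules out partial overlap: as soon as $\modelsOf{\alpha}$ meets $\scope{\Psi}$ at all, inherence forces it to be contained in $\scope{\Psi}$.

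Then I would treat the complementary case $\scope{\Psi}\cap\modelsOf{\alpha}=\emptyset$, which is the second alternative of the claimed dichotomy. Here the ``otherwise'' branch of \eqref{eq:limited_revision} gives $\modelsOf{\Psi\dylRevision\alpha}=\modelsOf{\Psi}$, and comparing again with the \inherent\ instance $\modelsOf{\Psi\dylRevision\alpha}=\modelsOf{\alpha}$ yields $\modelsOf{\Psi}=\modelsOf{\alpha}$, the ``moreover'' conclusion. Combining the two cases proves the proposition for every $\Psi$. Two small points I would verify rather than belabour: the inconsistent-$\alpha$ corner (where $\modelsOf{\alpha}=\emptyset$) falls harmlessly into the empty-intersection branch and is consistent with both alternatives; and the hypothesis that $\setAllES$ is \ref{pstl:unbiased}, while assumed, is not actually invoked by this fixed-$\Psi$ argument, which relies only on faithfulness through \eqref{eq:limited_revision} and the single inherence instance, so I would flag it as ambient context (essential, e.g., for Proposition~\ref{prop:inherence_agm}) rather than a working hypothesis here.
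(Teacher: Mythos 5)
Your proof is correct and takes essentially the same route as the paper's: both rest on the compatibility equation \eqref{eq:limited_revision} plus the single inherence instance $\modelsOf{\Psi\dylRevision\alpha}=\modelsOf{\alpha}$, with the containment $\min(\modelsOf{\alpha},\preceq_{\Psi})\subseteq\dom(\preceq_{\Psi})=\scope{\Psi}$ doing the work of excluding partial overlap, and the ``otherwise'' branch giving the moreover claim. The paper merely phrases the first case contrapositively (a model of $\alpha$ outside $\scope{\Psi}$ with non-empty intersection would force $\modelsOf{\Psi\dylRevision\alpha}\subsetneq\modelsOf{\alpha}$, contradicting inherence) where you argue directly; your observation that \ref{pstl:unbiased} is never actually invoked also matches the paper's proof.
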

\ifshowproofs
\begin{proof}
	Let $ \alpha $ be an \inherent\ belief of $ \circ $ and let $ \Psi\in\setAllES $ an arbitrary epistemic state, with $ \scope{\Psi}=\{\omega_1,\ldots,\omega_k\} $.
	Assume \( \omega \) such that \( \omega\models\alpha \), but \( \omega\notin \scope{\Psi} \). 
	If \( \scope{\Psi}\cap\modelsOf{\alpha} \neq \emptyset \), then by Definition \ref{def:dynamic_limited_revision} we obtain \( \modelsOf{\Psi\dylRevision\alpha} \subsetneq \modelsOf{\alpha} \). 
	This is a contradiction, because \( \alpha \) is \inherent. Consequently, we obtain \( \scope{\Psi}\cap\modelsOf{\alpha} = \emptyset \). Moreover, because \( \alpha \) is \inherent, we obtain \( \modelsOf{\Psi}=\modelsOf{\alpha} \).\qedhere

\end{proof} \fi
 
\ksSubSectionStar{\mbox{Inherence-Limited} Revision Operators}
\label{sec:legitimation}
We design and characterise a new class of belief change operators, called \ihlimited\ revision operators, which yield either an \immanent\ belief set or does not change the belief set.
We start by defining the  \ihlimited\ revision operators by postulates about their belief dynamics.

\begin{definition}%
	A belief change operator $ \nrRevision $ is an \emph{\ihlimited\ revision operator} if $ \nrRevision $ satisfies:
	\begin{description}		
		\item[\normalfont(\textlabel{IL1}{pstl:IL1})] $ \beliefsOf{\Psi\nrRevision\alpha} = \beliefsOf{\Psi}  \ksOR \alpha\in \beliefsOf{\Psi\nrRevision\alpha}  $
		\item[\normalfont(\textlabel{IL2}{pstl:IL2})] $ \beliefsOf{\Psi\nrRevision\alpha} = \beliefsOf{\Psi} $ or $ \beliefsOf{\Psi\nrRevision\alpha} $ is \immanent  
		\item[\normalfont(\textlabel{IL3}{pstl:IL3})] if $ \beliefsOf{\Psi}\cup\{\alpha\} $ is consistent and $ \alpha  $ \immanent,
		\\\mbox{}	\hfill 
		then $ \beliefsOf{\Psi\nrRevision\alpha}\cup\{\alpha\} = \Cn(\beliefsOf{\Psi}\cup\{\alpha\}) $
\item[\normalfont(\textlabel{IL4}{pstl:IL4})] if $ \alpha\models\beta $ and $ \alpha $ is \immanent,
then $ \beliefsOf{\Psi\nrRevision\beta} $ is \immanent.
\item[\normalfont(\textlabel{IL5}{pstl:IL5})] $ \beliefsOf{\Psi\!\nrRevision\!\alpha} $ is inconsistent only if $ \beliefsOf{\!\Psi\!} $ is inconsistent
\item[\normalfont(\textlabel{IL6}{pstl:IL6})] if $ \alpha\equiv\beta $, then $ \beliefsOf{\Psi\nrRevision\alpha} = \beliefsOf{\Psi\nrRevision\beta} $
\item[\normalfont(\textlabel{IL7}{pstl:IL8})] $ \beliefsOf{\Psi\!\nrRevision\!(\alpha\lor\beta)}\! =\!\begin{cases}
	\beliefsOf{\Psi\nrRevision\alpha} \text{ or} \\
	\beliefsOf{\Psi\nrRevision\beta} \text{ or} \\
	\beliefsOf{\Psi\nrRevision\alpha} \cap \beliefsOf{\Psi\nrRevision\beta}
\end{cases}  $
	\end{description}
\end{definition}

Clearly, the postulates are inspired by those for \dynamiclimited\ revision operators. 
Thus, for their explanation we refer to Section \ref{sec:dyl_representation_theorem}.

Every model of an \immanent\ belief of an \ihlimited\ revision operator is \inherent. %
\begin{lemma}\label{lem:inherent_inherence_limited}
	Let $ \nrRevision $ be an \ihlimited\ revision operator. 
	If $ \alpha $ is \immanent\ for $ \nrRevision $, then $ \varphi_\omega{\models}\alpha $ is \inherent\ for $ \nrRevision $.
\end{lemma}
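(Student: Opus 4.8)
The plan is to reduce the statement to a single structural fact: under the principle \eqref{pstl:unbiased} on $\setAllES$, every \inherent\ belief is satisfied by exactly one world. Once this is available the lemma is immediate. First I would unfold the definitions. Since $\alpha$ is \immanent\ for $\nrRevision$, I write $\alpha\equiv\alpha_1\lor\ldots\lor\alpha_k$ with each $\alpha_i$ \inherent, so that $\modelsOf{\alpha}=\modelsOf{\alpha_1}\cup\ldots\cup\modelsOf{\alpha_k}$. Any $\omega$ with $\varphi_\omega\models\alpha$ then lies in $\modelsOf{\alpha_j}$ for some $j$, so it suffices to prove that $\varphi_\omega$ is \inherent\ whenever $\omega\in\modelsOf{\beta}$ for some \inherent\ belief $\beta$ (taking $\beta=\alpha_j$).

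The heart of the argument is showing that an \inherent\ $\beta$ has a single model. Suppose towards a contradiction that $\omega,\omega'\in\modelsOf{\beta}$ with $\omega\neq\omega'$. Using \eqref{pstl:unbiased}, I pick $\Psi\in\setAllES$ with $\beliefsOf{\Psi}=\Cn(\varphi_\omega)$, i.e.\ $\modelsOf{\Psi}=\{\omega\}$. Because $\omega\models\beta$, the expansion $\beliefsOf{\Psi}+\beta$ is consistent, and $\beta$ is \immanent\ (being \inherent), so \eqref{pstl:IL3} gives $\beliefsOf{\Psi\nrRevision\beta}+\beta=\beliefsOf{\Psi}+\beta=\Cn(\varphi_\omega)$. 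On the other hand, inherence of $\beta$ yields $\beliefsOf{\Psi\nrRevision\beta}=\Cn(\beta)$, so $\beliefsOf{\Psi\nrRevision\beta}+\beta=\Cn(\beta)$. Comparing the two, $\Cn(\beta)=\Cn(\varphi_\omega)$, hence $\modelsOf{\beta}=\{\omega\}$, contradicting $\omega'\in\modelsOf{\beta}$. Thus $\modelsOf{\beta}$ is a singleton.

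Applying this to $\beta=\alpha_j$ forces $\modelsOf{\alpha_j}=\{\omega\}$, so $\alpha_j\equiv\varphi_\omega$. By \eqref{pstl:IL6} and the inherence of $\alpha_j$ we then get $\beliefsOf{\Psi'\nrRevision\varphi_\omega}=\beliefsOf{\Psi'\nrRevision\alpha_j}=\Cn(\alpha_j)=\Cn(\varphi_\omega)$ for every $\Psi'\in\setAllES$, which is exactly the statement that $\varphi_\omega$ is \inherent.

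The hard part is precisely the single-model step, and in particular its dependence on \eqref{pstl:unbiased}. I would stress that this assumption is genuinely necessary and not merely convenient: if $\setAllES$ is too sparse, a belief with several models can be \inherent\ (for instance when $\setAllES$ contains only states whose revisions collapse to the prior belief set), and then the conclusion fails. So the real content is recognising that \eqref{pstl:IL3} together with inherence pins down $\modelsOf{\beta}$ to a single world once $\setAllES$ is rich enough to contain the singleton-model states furnished by \eqref{pstl:unbiased}; the surrounding manipulations with $\Cn$ and $+$ are then routine.
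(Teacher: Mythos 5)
Your proof is correct, but it follows a genuinely different route from the paper's. The paper's argument works at the level of $\alpha$ itself: it fixes a state $\Psi$ with $\modelsOf{\Psi}=\{\omega\}$, applies \eqref{pstl:IL3} to get $\modelsOf{\Psi\nrRevision\alpha}=\{\omega\}$, then applies \eqref{pstl:IL4} (with $\alpha\models\alpha$) to conclude that $\Cn(\varphi_\omega)$ is an \immanent\ belief set, from which inherence of $\varphi_\omega$ follows. You instead decompose $\alpha$ into its \inherent\ disjuncts and isolate the structural fact that every consistent \inherent\ belief has exactly one model --- proved via \eqref{pstl:unbiased}, \eqref{pstl:IL3} and the definition of inherence --- after which $\alpha_j\equiv\varphi_\omega$ and \eqref{pstl:IL6} transfer inherence; you never use \eqref{pstl:IL4} at all. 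Note that the paper's closing inference (``an \immanent\ belief set with one model yields inherence of $\varphi_\omega$'') silently needs essentially your final step anyway, so your version is the more self-contained of the two. What your route buys is twofold: the single-model characterisation of \inherent\ beliefs is reusable and parallels the analogous results for AGM and credibility-limited revision (Propositions \ref{prop:inherence_agm} and \ref{prop:inherent_clr}); and you make explicit a hypothesis the paper leaves tacit --- the lemma as stated does not assume \eqref{pstl:unbiased}, yet the paper's own proof begins ``let $\Psi$ be such that $\modelsOf{\Psi}=\{\omega\}$'', which presupposes such a state exists. Your necessity remark is substantively right: on a sparse $\setAllES$ (e.g.\ a single state whose belief set has two models, with the constant operator returning that state) all of \eqref{pstl:IL1}--\eqref{pstl:IL8} hold, the prior belief is \immanent, yet no $\varphi_\omega$ is \inherent, so some richness assumption is indeed indispensable. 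What the paper's route buys in exchange is brevity, at the cost of leaning on \eqref{pstl:IL4} and on that unstated existence assumption.
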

\ifshowproofs
\begin{proof}
	Let $ \alpha $ and $ \varphi_\omega $ be as above. Let $ \Psi $ be such that $ \modelsOf{\Psi}=\{\omega\} $. By \eqref{pstl:IL3}, we have $ \modelsOf{\Psi\nrRevision\alpha}=\{\omega\} $. From \eqref{pstl:IL4} we obtain that $ \Cn(\varphi_{\omega})  $ is an \inherent\ belief set. This implies that $ \varphi_{\omega} $ is \inherent. \qedhere
\end{proof} \fi

From the postulates of \ihlimited\ revision operators we obtain the following result.
\begin{lemma}\label{col:scope_immanence}
	Let $ \nrRevision $ be an \ihlimited\ revision operator. 
	If \( \alpha \in \Scope{\nrRevision}{\Psi} \setminus \beliefsOf{\Psi} \), then \( \alpha \) is \immanent.
\end{lemma}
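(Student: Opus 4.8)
The plan is to split the claim into a statement about the \emph{result} of the revision and a subsequent lifting step to $\alpha$ itself. Concretely, I would first show that $\beliefsOf{\Psi\nrRevision\alpha}$ is an \immanent\ belief set, using only the relative-success postulate \eqref{pstl:IL1} and the weak-immanence postulate \eqref{pstl:IL2}; then I would try to promote this to the conclusion that $\alpha$, the input sitting inside that set, is itself \immanent, routing this through Lemma \ref{lem:inherent_inherence_limited} together with \eqref{pstl:IL4}. I expect the first half to be routine and the second half to carry the real difficulty.

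For the first half, recall that $\alpha\in\Scope{\nrRevision}{\Psi}$ unfolds by definition to $\alpha\in\beliefsOf{\Psi\nrRevision\alpha}$. Since we also assume $\alpha\notin\beliefsOf{\Psi}$, the first disjunct of \eqref{pstl:IL1}, namely $\beliefsOf{\Psi\nrRevision\alpha}=\beliefsOf{\Psi}$, is impossible: it would force $\alpha\in\beliefsOf{\Psi}$, contradicting the hypothesis. Hence $\beliefsOf{\Psi\nrRevision\alpha}\neq\beliefsOf{\Psi}$, and \eqref{pstl:IL2} then leaves only its second disjunct, so $\beliefsOf{\Psi\nrRevision\alpha}$ is \immanent. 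Writing $\beliefsOf{\Psi\nrRevision\alpha}=\Cn(\gamma)$ for some \immanent\ $\gamma$, the membership $\alpha\in\beliefsOf{\Psi\nrRevision\alpha}$ yields $\gamma\models\alpha$; moreover, since an \immanent\ set is consistent (being generated by a disjunction of \inherent, hence consistent, beliefs), $\alpha$ is consistent.

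For the second half, the target is to upgrade $\gamma\models\alpha$ into the statement that \emph{every} model of $\alpha$ is \inherent, which by Definition \ref{def:inherence} is precisely \immanence\ of $\alpha$. My plan is: for each $\omega\models\alpha$, attempt to produce an \immanent\ formula entailing $\varphi_\omega$, apply \eqref{pstl:IL4} to conclude that $\beliefsOf{\Psi'\nrRevision\varphi_\omega}$ is \immanent\ for every $\Psi'$, and then invoke Lemma \ref{lem:inherent_inherence_limited} to read off that $\varphi_\omega$ is \inherent. I would use the trichotomy postulate \eqref{pstl:IL8} to decompose $\alpha$ along its models so that the $\varphi_\omega$ can be treated one at a time, and the single-sentence-closure behaviour of the scope (available for \dynamiclimited\ operators via Theorem \ref{prop:dlr_scope}) to keep each $\varphi_\omega$ inside the scope during this decomposition.

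The hard part will be exactly this promotion from ``$\beliefsOf{\Psi\nrRevision\alpha}$ is \immanent'' to ``$\alpha$ is \immanent''. The obstruction is that a revision may \emph{discard} some models of $\alpha$: the immanence of $\beliefsOf{\Psi\nrRevision\alpha}$ only certifies that the surviving models (those in $\min$) are \inherent, and by itself says nothing about the discarded ones. I therefore anticipate needing the full global strength of \eqref{pstl:IL4} — which constrains revision by \emph{every} super-formula of an \immanent\ belief, across all epistemic states, not merely the one result at hand — and a careful interplay with \eqref{pstl:IL8} to pin down each model of $\alpha$ individually. Establishing that this machinery genuinely forces all models of $\alpha$ (and not only the minimal ones) into the set of \inherent\ worlds is the crux on which the argument stands or falls.
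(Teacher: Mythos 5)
Your first half is correct, and it is essentially all that the postulates can deliver: from \( \alpha\in\beliefsOf{\Psi\nrRevision\alpha} \) and \( \alpha\notin\beliefsOf{\Psi} \) one gets \( \beliefsOf{\Psi\nrRevision\alpha}\neq\beliefsOf{\Psi} \) (the equality would put \( \alpha \) into \( \beliefsOf{\Psi} \)), and then \eqref{pstl:IL2} forces \( \beliefsOf{\Psi\nrRevision\alpha} \) to be an \immanent\ set entailing \( \alpha \). The second half, however, is a plan rather than a proof, and the plan cannot be completed; both tools you propose point in the wrong direction. The scope is only closed \emph{upward} under entailment (\ref{pstl:singlesentenceclosure}): from \( \alpha \) in the scope you learn nothing about the strictly stronger formulas \( \varphi_{\omega}\models\alpha \), which is exactly what your model-by-model decomposition would need. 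Likewise \eqref{pstl:IL4} fires only when some \immanent\ formula entails the formula being revised by; to apply it to \( \varphi_{\omega} \) you would need \( \varphi_{\omega} \) (up to equivalence) to be \immanent\ already, which is circular. So neither \eqref{pstl:IL4} nor \eqref{pstl:IL8} ever reaches the discarded models of \( \alpha \) — the obstruction you yourself flagged as the crux.

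In fact the gap is unbridgeable, because the statement as literally written fails. Fix one world \( \omega_0 \) and let \( \nrRevision \) be given by \( \modelsOf{\Psi\nrRevision\alpha}=\{\omega_0\} \) if \( \omega_0\models\alpha \), and \( \modelsOf{\Psi\nrRevision\alpha}=\modelsOf{\Psi} \) otherwise; this is the \dynamiclimited\ operator compatible with the constant assignment \( \Psi\mapsto(\preceq_{\Psi},\{\omega_0\}) \) (trivial preorder), hence an \ihlimited\ revision operator by Theorem \ref{thm:inherence_limited_operator}, and \eqref{pstl:IL1}--\eqref{pstl:IL8} can also be checked directly. Assuming \ref{pstl:unbiased}, its only \inherent\ belief is \( \varphi_{\omega_0} \): revising by any \( \beta \) with \( \omega_0\not\models\beta \) returns the prior belief set, which varies over \( \setAllES \). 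Consequently the only \immanent\ beliefs are those equivalent to \( \varphi_{\omega_0} \). Now take \( \alpha \) with \( \modelsOf{\alpha}=\{\omega_0,\omega'\} \), \( \omega'\neq\omega_0 \), and \( \Psi \) with \( \modelsOf{\Psi}=\Omega \): then \( \beliefsOf{\Psi\nrRevision\alpha}=\Cn(\varphi_{\omega_0})\ni\alpha \) while \( \alpha\notin\beliefsOf{\Psi} \), so \( \alpha\in\Scope{\nrRevision}{\Psi}\setminus\beliefsOf{\Psi} \), yet \( \alpha \) is not \immanent. This is consistent with Theorem \ref{prop:scope_by_s}: membership in the scope only requires the overlap \( \modelsOf{\alpha}\cap\scope{\Psi}\neq\emptyset \), whereas being \immanent\ requires the containment \( \modelsOf{\alpha}\subseteq\scope{\Psi} \). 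The conclusion actually derivable from the hypothesis is the weaker one your first half establishes: \( \beliefsOf{\Psi\nrRevision\alpha} \) is \immanent, equivalently \( \alpha \) is \emph{entailed by} some \immanent\ belief.
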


Lemma \ref{lem:inherent_inherence_limited}, 
Lemma \ref{col:scope_immanence} and Definition \ref{def:inherence}
gives rise to the following representation theorem for \ihlimited\ revision operators.

\begin{theorem}\label{thm:inherence_limited_operator}
	A belief change operator $ \nrRevision $ is an \ihlimited\ revision operator
if and only if \( \nrRevision \) is a \dynamiclimited\ revision operator compatible with $ \Psi \mapsto {(\preceq_\Psi,\scope{\Psi})} $ such that there is \( \Omega'\subseteq \Omega \)  with \( \scope{\Psi} = \Omega' \) for every epistemic states $ \Psi $.
\end{theorem}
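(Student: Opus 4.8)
The plan is to prove both directions by reducing the global notions of \inherent\ and \immanent\ belief (which do not refer to any fixed state) to the state-local notions of \atomic\ and \integral\ belief used in the representation theorem for \dynamiclimited\ revision (Theorem~\ref{thm:dylr_representationtheorem}), and to pin the scope component down to the set $\Omega' := \{\omega \in \Omega \mid \varphi_\omega \text{ is \inherent\ for } \nrRevision\}$, which is state-independent by Definition~\ref{def:inherence}. The central bridging claim, which I would isolate first, is that under either hypothesis a belief $\alpha$ is \immanent\ if and only if $\modelsOf{\alpha} \subseteq \Omega'$ and $\alpha$ is consistent, and that, once the scope is fixed to $\Omega'$, this coincides with $\alpha$ being \integral\ in $\Psi$ for every $\Psi$ (Proposition~\ref{prop:inherent_dlr}). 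With this bridge, the postulate blocks \eqref{pstl:IL1}--\eqref{pstl:IL8} and \eqref{pstl:DL1}--\eqref{pstl:DL8} become the same statements, the only discrepancies being the reformulation of \eqref{pstl:IL3} (equivalent to \eqref{pstl:DL3} in the presence of success, since then $\beliefsOf{\Psi\nrRevision\alpha}\cup\{\alpha\}=\beliefsOf{\Psi\nrRevision\alpha}$) and of \eqref{pstl:IL5} (the contrapositive of \eqref{pstl:DL5}).

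For the ``$\Leftarrow$'' direction, assume $\nrRevision$ is a \dynamiclimited\ revision operator compatible with $\Psi \mapsto (\preceq_\Psi,\scope{\Psi})$ with $\scope{\Psi} = \Omega'$ for all $\Psi$. By Theorem~\ref{thm:dylr_representationtheorem} it satisfies \eqref{pstl:DL1}--\eqref{pstl:DL8}. I would first verify that $\varphi_\omega$ is \inherent\ exactly when $\omega \in \Omega'$: if $\omega \in \Omega'$ then $\modelsOf{\varphi_\omega} \cap \scope{\Psi} = \{\omega\} \neq \emptyset$ and $\omega \in \dom(\preceq_\Psi)$, so \eqref{eq:limited_revision} forces $\modelsOf{\Psi\nrRevision\varphi_\omega} = \{\omega\}$ for every $\Psi$; the converse produces, via \ref{pstl:unbiased}, a state whose model set differs from $\{\omega\}$ whenever $\omega \notin \Omega'$. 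Hence the \immanent\ beliefs are exactly those with $\modelsOf{\alpha} \subseteq \Omega'$ and consistent, which by Proposition~\ref{prop:inherent_dlr} and constancy of the scope are exactly the \integral\ beliefs in every state. Substituting this equivalence into \eqref{pstl:DL1}--\eqref{pstl:DL8} and performing the two reformulations yields \eqref{pstl:IL1}--\eqref{pstl:IL8}.

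For the ``$\Rightarrow$'' direction, assume $\nrRevision$ is \ihlimited. I would define $\preceq_\Psi$ on $\Omega'$ by $\omega_1 \preceq_\Psi \omega_2$ iff $\omega_1 \in \modelsOf{\Psi\nrRevision(\varphi_{\omega_1}\lor\varphi_{\omega_2})}$ and set $\scope{\Psi} = \Omega'$, mirroring the construction in the proof of Theorem~\ref{thm:dylr_representationtheorem}, but driving each step with \eqref{pstl:IL8}, \eqref{pstl:IL3} and the fact (Lemma~\ref{lem:inherent_inherence_limited}) that any disjunction of members of $\Omega'$ is \immanent. Totality, reflexivity and transitivity of $\preceq_\Psi$ follow from \eqref{pstl:IL8} as in that proof; faithfulness follows from \eqref{pstl:IL3} together with \eqref{pstl:IL5} applied to the \immanent\ formulas $\varphi_{\omega_1,\omega_2}$; and the case analysis establishing \eqref{eq:limited_revision} carries over verbatim, now using that \eqref{pstl:IL4} keeps the result \immanent\ and that, by Lemma~\ref{col:scope_immanence}, every accepted non-prior belief is \immanent, so its models lie in $\Omega'$. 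The scope so produced is the state-independent set $\Omega'$, as required; non-triviality $\Omega' \neq \emptyset$ is inherited from the non-emptiness built into limited assignments, the degenerate empty case being excluded exactly as in the footnote to Definition~\ref{def:limited_assignment}.

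The main obstacle I anticipate is the asymmetry between the global predicate \immanent\ and the state-local predicate \integral: the easy inclusion is that an \immanent\ belief is \integral\ in every state (its models are \inherent, hence in $\Omega' = \scope{\Psi}$), whereas showing that a single-model belief $\varphi_\omega$ which is \atomic\ in one state $\Psi$ is in fact \emph{globally} \inherent\ requires exploiting the constancy of the scope and faithfulness to transport the minimality of $\varphi_\omega$ from $\Psi$ to all states, and this is where the \ref{pstl:unbiased} assumption on $\setAllES$ enters. Getting this transport right, and checking that the scope extracted from the representation theorem is the state-independent $\Omega'$ rather than the a~priori state-dependent $\{\omega \mid \varphi_\omega \text{ is \atomic\ in } \Psi\}$, is the crux of the argument.
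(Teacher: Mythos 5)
Your proposal is correct, and its ``$\Rightarrow$'' direction coincides with the paper's proof: the paper constructs exactly the same $\preceq_\Psi$ (via $\omega_1 \preceq_\Psi \omega_2$ iff $\omega_1 \in \modelsOf{\Psi\nrRevision(\varphi_{\omega_1}\lor\varphi_{\omega_2})}$) with $\scope{\Psi} = \{\omega \mid \varphi_\omega \text{ is \inherent}\}$, and re-runs the case analysis of Theorem~\ref{thm:dylr_representationtheorem} driven by \eqref{pstl:IL3}, \eqref{pstl:IL5}, \eqref{pstl:IL8}, \eqref{pstl:IL4} and Lemma~\ref{lem:inherent_inherence_limited}, just as you describe. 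Where you genuinely diverge is the ``$\Leftarrow$'' direction: the paper verifies \eqref{pstl:IL1}--\eqref{pstl:IL8} directly from \eqref{eq:limited_revision} (essentially duplicating, with \immanent\ in place of \integral, the postulate-verification half of Theorem~\ref{thm:dylr_representationtheorem}, using Lemma~\ref{lem:inherent_full_dynamic} where that theorem uses Proposition~\ref{prop:inherent_dlr}), whereas you invoke Theorem~\ref{thm:dylr_representationtheorem} as a black box to obtain \eqref{pstl:DL1}--\eqref{pstl:DL8}, prove the bridge ``\immanent\ $\Leftrightarrow$ models contained in the constant scope $\Omega'$ and consistent $\Leftrightarrow$ \integral\ in every state,'' and then translate the two postulate blocks into one another (the \eqref{pstl:IL3}/\eqref{pstl:DL3} and \eqref{pstl:IL5}/\eqref{pstl:DL5} reformulations being harmless, since \eqref{pstl:DL3}'s conclusion already entails success). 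Your factorization buys modularity and makes explicit what the paper leaves implicit, namely that the whole theorem reduces to identifying the global predicate \inherent\ with membership in the state-independent scope; the paper's route buys self-containedness at the cost of repeating an argument nearly verbatim. Two shared caveats, which you handle at least as carefully as the paper: both proofs need \ref{pstl:unbiased} (the paper smuggles it in through Lemma~\ref{lem:inherent_full_dynamic} even though the theorem statement omits it, and you correctly flag it as the crux of transporting inherence across states), and both silently exclude the degenerate operator with no \inherent\ beliefs, for which $\Omega' = \emptyset$ clashes with the non-emptiness requirement of Definition~\ref{def:limited_assignment}.
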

\ifshowproofs
\begin{proof}
\noindent\textit{The \enquote*{$ \Rightarrow $}-direction.} 
Let $ \nrRevision $ be an operator satisfying the postulates \eqref{pstl:IL1}--\eqref{pstl:IL8}. 
For $ \Psi $, we construct $ (\preceq_{\Psi},\scope{\Psi}) $. In particular, let $ \preceq_{\Psi} $ be the relation with
\(	\omega_1 \preceq_\Psi \omega_2 \ksIFF \omega_1 \in \modelsOf{\Psi\nrRevision (\varphi_{\omega_1}\lor\varphi_{\omega_2})}\)
and $ \scope{\Psi}=\dom(\preceq_\Psi)= \{ \omega \mid \varphi_\omega \text{ is \inherent\ for } \nrRevision \} $.
By the definition of inherence, $ \scope{\Psi} $ is the same for each $ \Psi $, i.e. for every $ \Psi,\Gamma\in\setAllES $ we have $ \omega\in{\scope{\Psi}} $ if and only if $ \omega\in{\scope{\Psi}} $. 
The order $ \preceq_{\Psi} $ is a total preorder:

	\emph{Totality/reflexivity.} Let $ \omega_1,\omega_2\in\scope{\Psi} $. Then by \eqref{pstl:IL8} we have that $ \modelsOf{\Psi\nrRevision(\varphi_{\omega_1}\lor\varphi_{\omega_2})} $ is equivalent to $ \{ \omega_1 \} $ or $ \{ \omega_2 \} $ or $ \{ \omega_1,\omega_2 \} $. Therefore, the relation must be total. Reflexivity follows from totality.

\emph{Transitivity.} 	Let $ \omega_1,\omega_2,\omega_3\in\scope{\Psi} $ with $ \omega_1 \preceq_{\Psi} \omega_2 $ and $ \omega_2\preceq_{\Psi} \omega_3 $. Towards a contradiction assume that $ \omega_1 \not\preceq_{\Psi} \omega_3 $ holds. This implies  $ \modelsOf{\Psi\nrRevision\varphi_{\omega_1,\omega_3}}=\{\omega_3\} $.
	Now assume that $ \modelsOf{\Psi\nrRevision\varphi_{\omega_1,\omega_2,\omega_3}} = \{\omega_3\} $.
	Then by \eqref{pstl:IL8} we obtain that $ \modelsOf{\Psi\nrRevision\varphi_{\omega_2,\omega_3}}=\{\omega_3\} $, a contradiction to $ \omega_2\preceq_{\Psi}\omega_3 $.
	Assume for the remaining case $ \modelsOf{\Psi\nrRevision\varphi_{\omega_1,\omega_2,\omega_3}} \neq \{\omega_3\} $. 
	We obtain from \eqref{pstl:IL8} that $ \modelsOf{\Psi\nrRevision\varphi_{\omega_1,\omega_2,\omega_3}} $ equals either $ \modelsOf{\Psi\nrRevision\varphi_{\omega_1,\omega_2}} $ or $ \modelsOf{\Psi\nrRevision\varphi_{\omega_3}}$. 
	Since the second case is impossible, $ \omega_1\preceq_{\Psi} \omega_2 $ implies $ \omega_1\in \modelsOf{\Psi\nrRevision\varphi_{\omega_1,\omega_2}} $. Now apply \eqref{pstl:IL8} again to $ \modelsOf{\Psi\nrRevision\varphi_{\omega_1,\omega_2,\omega_3}} $ and obtain that it is either equivalent to $ \modelsOf{\Psi\nrRevision\varphi_{\omega_1,\omega_3}} $ or $ \modelsOf{\Psi\nrRevision\varphi_{\omega_2}} $.
	In both cases, we obtain a contradiction because $ \omega_1\in \modelsOf{\Psi\nrRevision\varphi_{\omega_1,\omega_2,\omega_3}} $.

The construction yields a faithful limited assignment:\\
We show $ \modelsOf{\Psi}\cap\scope{\Psi}=\min(\Omega,\preceq_{\Psi}) $. 
Let $ \omega_1,\omega_2\in\scope{\Psi} $ and $ \omega_1,\omega_2\in\modelsOf{\Psi} $. By definition of $ {\scope{\Psi}} $ the interpretations $ \omega_1,\omega_2 $ are \inherent. From \eqref{pstl:IL3} we obtain $ \modelsOf{\Psi\nrRevision(\varphi_{\omega_1}\lor\varphi_{\omega_2})}=\{ \omega_1,\omega_2 \} $ which yields by definition $ \omega_1 \preceq_{\Psi} \omega_2 $ and $ \omega_2 \preceq_{\Psi} \omega_1 $.
Let $ \omega_1,\omega_2\in\scope{\Psi} $ with  $ \omega_1\in\modelsOf{\Psi} $ and $ \omega_2\notin\modelsOf{\Psi} $.
Then $ \varphi_{\omega_1}\lor\varphi_{\omega_2} $ is consistent with $ \beliefsOf{\Psi} $. 
Therefore, by 
\eqref{pstl:IL3} and \eqref{pstl:IL5} we have $ \modelsOf{\Psi\nrRevision (\varphi_{\omega_1}\lor\varphi_{\omega_2})} = \modelsOf{\Psi}\cap\modelsOf{\varphi_{\omega_1}\lor\varphi_{\omega_2}}=\{ \omega_1 \} $.
Together we obtain faithfulness.

We show the satisfaction of \eqref{eq:limited_revision} in two case.

For the first case, assume $ \modelsOf{\alpha}\cap\scope{\Psi}=\emptyset $.
By the postulate  \eqref{pstl:IL2} we have either $ \beliefsOf{\Psi}=\beliefsOf{\Psi\nrRevision\alpha} $ or $ \beliefsOf{\Psi\nrRevision\alpha} $ is \immanent. In the first case we are done. 
For the second case, by the postulate \eqref{pstl:IL1}, we obtain $ \modelsOf{\Psi\nrRevision\alpha}\subseteq\modelsOf{\alpha} $. 
Thus, by Lemma \ref{lem:inherent_inherence_limited} the set  $ \modelsOf{\alpha }$ contains an interpretation $ \omega $ such that $ \varphi_{\omega} $ is \inherent, a contradiction to $ \modelsOf{\alpha}\cap\scope{\Psi}=\emptyset $.

For the second case assume $ \modelsOf{\alpha}\cap\scope{\Psi}\neq\emptyset $.
We show the equivalence $ \modelsOf{\Psi\nrRevision\alpha}=\min(\modelsOf{\alpha},\preceq_{\Psi}) $ by showing both set inclusions separately.

	We show $ \min(\modelsOf{\alpha},\preceq_{\Psi})\subseteq \modelsOf{\Psi\nrRevision\alpha} $.
Let $ \omega\in\min(\modelsOf{\alpha},\preceq_{\Psi}) $ with $ \omega\notin\modelsOf{\Psi\nrRevision\alpha}$. 
By construction of \( \scope{\Psi} \) and \eqref{pstl:IL3} we have $ \omega\in\modelsOf{\Psi\nrRevision\varphi_{\omega} }$. 
From Lemma \ref{lem:inherent_full_dynamic} and Lemma \ref{lem:inherent_inherence_limited} and postulate \eqref{pstl:IL4}, we obtain that $\beliefsOf{\Psi\nrRevision\alpha} $ is \immanent\ and every $ \omega\in\modelsOf{\Psi\nrRevision\alpha} $ is \inherent. 
Hence, there is at least one $ \omega'\in\beliefsOf{\Psi\nrRevision\alpha}  $ with $ \omega'\in\scope{\Psi} $.
If $ \beliefsOf{\Psi\nrRevision\alpha}=\beliefsOf{\Psi} $ we obtain by the faithfulness $ \min(\modelsOf{\alpha},\preceq_{\Psi}) = \modelsOf{\Psi\nrRevision\alpha} $. If $ \beliefsOf{\Psi\nrRevision\alpha}\neq\beliefsOf{\Psi} $, then by \eqref{pstl:IL1} we have $ \modelsOf{\Psi\nrRevision\alpha}\subseteq\modelsOf{\alpha} $.
Let $ \beta=\varphi_{\omega}\lor\varphi_{\omega'} $ and $ \gamma=\beta\lor\gamma' $ such that $ \modelsOf{\gamma}=\modelsOf{\alpha} $ and $ \modelsOf{\gamma'}=\modelsOf{\alpha}\setminus\{\omega,\omega'\} $.
By \eqref{pstl:IL8} we have either $ \modelsOf{\Psi\nrRevision\alpha}=\modelsOf{\Psi\nrRevision\beta} $ or $ \modelsOf{\Psi\nrRevision\alpha}=\modelsOf{\Psi\nrRevision\gamma'} $ or $ \modelsOf{\Psi\nrRevision\alpha}=\modelsOf{\Psi\nrRevision\beta}\cup\modelsOf{\Psi\nrRevision\gamma'} $.
The first and the third case are impossible, because $ \omega\notin\modelsOf{\Psi\nrRevision\alpha} $ and by the minimality of $ \omega $ we have $ \omega \in \modelsOf{\Psi\nrRevision\beta} $.
It remains the case of $ \modelsOf{\Psi\nrRevision\alpha}=\modelsOf{\Psi\nrRevision\gamma'} $. 
Let $ \omega_{\gamma'} $ such that $ \omega_{\gamma'}\in\modelsOf{\Psi\nrRevision\alpha} $. Note that $ \omega_{\gamma'}\in\modelsOf{\gamma'}\subseteq\modelsOf{\alpha} $ and $ \omega_{\gamma'}\in\scope{\Psi} $.
Now let $ \delta=\beta'\lor\delta' $ with $ \delta\equiv\alpha $ such that $ \modelsOf{\beta'}=\{\omega,\omega_{\gamma'} \} $ and $ \modelsOf{\delta'}=\modelsOf{\alpha}\setminus\{ \omega,\omega_{\gamma'} \} $. 
By minimality of $ \omega $ we have $ \omega\in\modelsOf{\Psi\nrRevision\beta'} $.
By \eqref{pstl:IL8} we obtain $ \modelsOf{\Psi\nrRevision\alpha} $ is either equivalent to $ \modelsOf{\Psi\nrRevision\beta'} $ or to $ \modelsOf{\Psi\nrRevision\delta'} $ or to $  \modelsOf{\Psi\nrRevision\beta'} \cup \modelsOf{\Psi\nrRevision\delta'} $. The first and third case are impossible since $ \omega\notin \modelsOf{\Psi\nrRevision\alpha} $. Moreover, the second case is also impossible, because of $ \omega_{\gamma'}\notin \modelsOf{\Psi\nrRevision\delta'} $.

	We show $ \modelsOf{\Psi\nrRevision\alpha} \subseteq \min(\modelsOf{\alpha},\preceq_{\Psi}) $.
	Let  $ \omega\in\modelsOf{\Psi\nrRevision\alpha} $ with  $ \omega\notin \min(\modelsOf{\alpha},\preceq_{\Psi})  $.
	From non-emptiness of $ \min(\modelsOf{\alpha},\preceq_{\Psi}) $ and $ \min(\modelsOf{\alpha},\preceq_{\Psi})\subseteq \modelsOf{\Psi\nrRevision\alpha} $ we obtain an \inherent\ belief $ \varphi_{\omega'} $, where $ \omega'\in\modelsOf{\Psi\nrRevision\alpha} $ such that $ \omega'\in \min(\modelsOf{\alpha},\preceq_{\Psi}) $.

Assume that $ \varphi_\omega $ is not \inherent, and therefore, $ {\omega\notin\scope{\Psi}} $.
By the postulate \eqref{pstl:IL4} we obtain from the existence of $ \omega' $ that $ \beliefsOf{\Psi\nrRevision\alpha} $ is \immanent.  Therefore, by Lemma \ref{lem:inherent_inherence_limited} every model in $ \modelsOf{\Psi\nrRevision \alpha} $ is \inherent, a contradiction, and  therefore, $ \varphi_\omega $ has to be \inherent.

From inherence of $ \varphi_\omega $ we obtain $ \modelsOf{\Psi\nrRevision\varphi_{\omega}}=\{\omega\} $.
Using the postulate \eqref{pstl:IL4} we obtain that $ \beliefsOf{\Psi\nrRevision\alpha} $ is \immanent.
Since $ \omega $ is not minimal, we have $ \omega'\in\modelsOf{\Psi\nrRevision(\varphi_{\omega,\omega'})} $ and $ \omega\notin\modelsOf{\Psi\nrRevision(\varphi_{\omega,\omega'})} $
Now let $ \gamma=\varphi_{\omega,\omega'}\lor\gamma' $ with $ \modelsOf{\gamma'}=\modelsOf{\alpha}\setminus\{\omega,\omega'\} $.
	By \eqref{pstl:IL8} we have either $ \modelsOf{\Psi\nrRevision\alpha}=\modelsOf{\Psi\nrRevision\beta} $ or $ \modelsOf{\Psi\nrRevision\alpha}=\modelsOf{\Psi\nrRevision\gamma'} $ or $ \modelsOf{\Psi\nrRevision\alpha}=\modelsOf{\Psi\nrRevision\beta}\cup\modelsOf{\Psi\nrRevision\gamma'} $.
	All cases are impossible because for every case we obtain $ \omega\notin\modelsOf{\Psi\nrRevision\alpha} $.
This shows $ \modelsOf{\Psi\nrRevision\alpha} \subseteq \min(\modelsOf{\alpha},\preceq_{\Psi}) $, and, in summary, we obtain $ \modelsOf{\Psi\nrRevision\alpha} = \min(\modelsOf{\alpha},\preceq_{\Psi}) $ and thus, \eqref{eq:limited_revision} holds.

\smallskip
\noindent\textit{The \enquote*{$ \Leftarrow $}-direction.}
Let $ \Psi \mapsto (\preceq_\Psi,\scope{\Psi}) $ be a limited assignment compatible with \( \nrRevision \) such that \( \scope{\Psi}=\scope{\Gamma} \) for all \( \Psi,\Gamma\in\setAllES \).
Furthermore, let $ \alpha $ be an \inherent\  belief of $ \nrRevision $. Then by Lemma \ref{lem:inherent_full_dynamic} every model $ \omega $ of $ \alpha $ is an element of $ \scope{\Psi} $. Moreover, every $ \varphi_\omega $ with $ \omega\in\scope{\Psi} $ is an \inherent\ belief.

We show the satisfaction of \eqref{pstl:IL1}--\eqref{pstl:IL8}.
From \eqref{eq:limited_revision} we obtain straightforwardly \eqref{pstl:IL1}, \eqref{pstl:IL2}, \eqref{pstl:IL5} and \eqref{pstl:IL6}.
\begin{description}
	\item[\eqref{pstl:IL3}] Assume $ \alpha $ to be \immanent\ and $ \modelsOf{\Psi}\cap\modelsOf{\alpha}\neq\emptyset $.
	By Lemma \ref{lem:inherent_full_dynamic} we have $ \modelsOf{\Psi}\cap\modelsOf{\alpha} \subseteq \scope{\Psi} $.
	Thus, $ \min(\modelsOf{\alpha},\preceq_{\Psi})\neq\emptyset $. From faithfulness we obtain $ \min(\modelsOf{\alpha},\preceq_{\Psi})=\modelsOf{\Psi}\cap\modelsOf{\alpha} $.
	\item[\eqref{pstl:IL4}] Let $ \alpha $ be immanent and thus by Lemma \ref{lem:inherent_full_dynamic} we have $ \modelsOf{\Psi}\cap\modelsOf{\alpha} \subseteq \scope{\Psi} $.
	This implies $ \alpha\models\beta $ and we obtain $ \mid(\modelsOf{\beta},\preceq_{\Psi})\neq\emptyset $. Then by \eqref{eq:limited_revision} we obtain $ \beta\in \beliefsOf{\Psi\nrRevision} $.
	\item[\eqref{pstl:IL8}] 
	Suppose $ \modelsOf{\alpha\lor\beta}\cap\scope{\Psi}=\emptyset $, then by \eqref{eq:limited_revision} for every formula $ \gamma $ with $ \modelsOf{\gamma} \subseteq \modelsOf{\alpha\lor\beta} $ we obtain $ \modelsOf{\Psi\nrRevision(\alpha\lor\beta)}=\modelsOf{\Psi}=\modelsOf{\Psi\nrRevision\gamma} $.
	
	Assume that $ \modelsOf{\alpha}\cap\scope{\Psi}\neq\emptyset $ and $ \modelsOf{\beta}\cap\scope{\Psi}=\emptyset $.
	Then by the postulate \eqref{eq:limited_revision} we obtain $ \modelsOf{\Psi\nrRevision(\alpha\lor\beta)}=\modelsOf{\Psi\nrRevision\alpha} $.
	The case of $ \modelsOf{\alpha}\cap\scope{\Psi}=\emptyset $ and $ \modelsOf{\beta}\cap\scope{\Psi}\neq\emptyset $ is analogue.

	Assume that $ \modelsOf{\alpha}\cap\scope{\Psi}\neq\emptyset $ and $ \modelsOf{\beta}\cap\scope{\Psi}\neq\emptyset $.
	Then we have $ \modelsOf{\Psi\nrRevision(\alpha\lor\beta)}={\min(\modelsOf{\alpha\lor\beta},\preceq_{\Psi})} $. Using logical equivalence, we obtain $ \min(\modelsOf{\alpha\lor\beta},\preceq_{\Psi})=\min(\modelsOf{\alpha}\cup\modelsOf{\beta},\preceq_{\Psi}) $.
	By Lemma \ref{lem:sem_trichotonomy} we obtain directly \eqref{pstl:IL8}. \qedhere
\end{description}
\end{proof} \fi

We give an example for \ihlimited\ revision operators.
\begin{example}
Consider $ \nrRevision $ and the epistemic states $ \Psi_\mathrm{il}^1 $ and $ \Psi_\mathrm{il}^2 $ which satisfy the conditions from  Figure \ref{fig:ihl_example}.
While the change from $ \Psi_\mathrm{il}^1 $ to $ \Psi_\mathrm{il}^1 \nrRevision (a\land b) $ suggests that $ a\land b $ is an inherent or \immanent\ belief, the change from $ \Psi_\mathrm{il}^2  $ to $ \Psi_\mathrm{il}^2\nrRevision (a\land b) $ proves this wrong, because $ a\land b\notin\beliefsOf{\Psi_\mathrm{il}^2 \nrRevision (a\land b)} $. 
This is because $ a\land b $ has no \inherent\  model.
Thus, when invoking $ \nrRevision $ by $ a\land b $, in both states $ \Psi_\mathrm{il}^1$ and $\Psi_\mathrm{il}^2  $ the belief set is kept.
However, the belief $ a\land \neg b $ could be \inherent.
Consider the change from $ \Psi_\mathrm{il}^1 $ to $ \Psi_\mathrm{il}^1\nrRevision a $. 
Here $ a $ and the belief base of $ \Psi_\mathrm{il}^1 $  are not inconsistent, i.e. $ \modelsOf{a}\cap \modelsOf{\Psi_\mathrm{il}^1}= \{ ab \}\neq \emptyset $. 
We obtain $ \modelsOf{\Psi_\mathrm{il}^1 \nrRevision a} = \{ a\ol{b} \} = \min(\modelsOf{a},\preceq_{\Psi_\mathrm{il}^1})  $, since $ a\land b $ is no \immanent\  belief and therefore \eqref{pstl:IL3} is not applicable.
In all these changes the set \( \scope{\Psi} \) remains the same, respectively the domains of the assigned total preorder do not alter.
\end{example}

\begin{figure}
			\centering
	\resizebox{\columnwidth}{!}{
		\begin{tabular}{r|ccc|cc}
			\toprule
			$ \Psi $ & $ \Psi_\mathrm{il}^1 $ & $ \Psi_\mathrm{il}^1\!\nrRevision\! a $ & $ \Psi_\mathrm{il}^1\!\nrRevision\! (a\!\land\! b) $ & $ \Psi_\mathrm{il}^2 $ & $ \Psi_\mathrm{il}^2\!\nrRevision\! (a\!\land\!b) $ \\ \midrule
			$ \modelsOf{\Psi} $ &    $  ab,\ol{a}b  $    &             $ a\ol{b} $             &               $  ab,\ol{a}b  $               &   $  \ol{a}\ol{b}  $   &               $ \ol{a}\ol{b} $               \\
			$ \scope{\Psi} $ & $  \ol{a}b,a\ol{b}  $  &         $ \ol{a}b,a\ol{b} $         &            $  \ol{a}b,a\ol{b}  $             & $   \ol{a}b,a\ol{b}  $ &             $  \ol{a}b,a\ol{b} $             \\ \midrule
			\multirow{2}{*}{\rotatebox[origin=c]{0}{$ \preceq_{\Psi} $}} &      $ a\ol{b} $       &             $ \ol{a}b $             &                 $ a\ol{b} $                  &      $ a\ol{b} $       &                $  a\ol{b}  $                 \\
			&       $\ol{a}b$        &             $ a\ol{b} $             &                 $ \ol{a}b $                  &     $  \ol{a}b  $      &                 $ \ol{a}b  $                 \\ \bottomrule
		\end{tabular}
	}
	\caption{An \ihlimited\ revision operator $ \nrRevision $. 
	}\label{fig:ihl_example}
\end{figure}

\begin{figure}\centering
\begin{tikzpicture}[thick]	
	\draw [rounded corners] (0,0) rectangle (6,3.7) ;
	\draw [rounded corners] (0.25,2.75) rectangle (5.75,1.1) ;
	\draw [rounded corners] (0.25,0.25) rectangle (5.75,1.9) ;
	
	\node () [anchor=north] at (3,3.5) {Dynamic-Limited Revision};
	\node () at (3,0.625) {Credibility-Limited Revision};
	\node () at (3,2.375) {Inherence-Limited Revision};
	\node () at (3,1.5) {AGM Revision};
\end{tikzpicture}
\caption{Interrelation between the operator classes.}\label{fig:op_interrelation}
\end{figure}
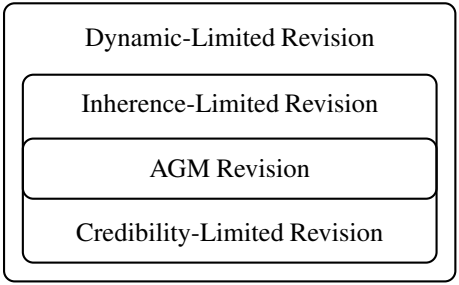

From Proposition \ref{prop:dlr_scope} and Theorem \ref{thm:inherence_limited_operator} we obtain the following result.
\begin{proposition}
	For every \ihlimited\ revision operator \( \nrRevision \) there exist a set \( X \subseteq \propLang \) which satisfies \ref{pstl:singlesentenceclosure} and \ref{pstl:disjunction_completeness} such that \( \Scope{\nrRevision}{\Psi}=\beliefsOf{\Psi}\cup X  \) for all \( \Psi\in\setAllES \).
\end{proposition}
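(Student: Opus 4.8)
The plan is to read off the desired set $X$ from the state-independent semantic component $\scope{\Psi}$ that Theorem \ref{thm:inherence_limited_operator} provides, and then to transfer the syntactic description of the scope in Proposition \ref{prop:scope_by_s}. The decisive feature is that, for an \ihlimited\ operator, $\scope{\Psi}$ is a \emph{fixed} subset of $\Omega$, independent of the epistemic state; this is exactly what forces the syntactic set $X$ to be the same for all $\Psi$.

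First I would invoke Theorem \ref{thm:inherence_limited_operator} to obtain a limited assignment $\Psi \mapsto (\preceq_\Psi,\scope{\Psi})$ compatible with $\nrRevision$ together with a single set $\Omega' \subseteq \Omega$ such that $\scope{\Psi} = \Omega'$ for every $\Psi \in \setAllES$; by Definition \ref{def:limited_assignment} this $\Omega'$ is non-empty. I would then define
\[
  X = \{\, \alpha \in \propLang \mid \modelsOf{\alpha} \cap \Omega' \neq \emptyset \,\},
\]
which visibly does not depend on $\Psi$. Applying the second item of Proposition \ref{prop:scope_by_s} to the compatible assignment gives $\Scope{\nrRevision}{\Psi} = \beliefsOf{\Psi} \cup \{ \alpha \mid \modelsOf{\alpha} \cap \scope{\Psi} \neq \emptyset \}$, and substituting $\scope{\Psi} = \Omega'$ yields $\Scope{\nrRevision}{\Psi} = \beliefsOf{\Psi} \cup X$ for every $\Psi$, as required.

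It remains to check that $X$ satisfies \ref{pstl:singlesentenceclosure} and \ref{pstl:disjunction_completeness}, which I would do directly from the semantic description (this is precisely the content of Lemma \ref{lem:ssc_dc_M} applied to $\Omega'$). For single-sentence closure: if $\alpha \in X$ and $\alpha \models \beta$, then $\modelsOf{\alpha} \subseteq \modelsOf{\beta}$, so $\emptyset \neq \modelsOf{\alpha} \cap \Omega' \subseteq \modelsOf{\beta} \cap \Omega'$ and hence $\beta \in X$. For disjunction completeness: if $\alpha \lor \beta \in X$, then $(\modelsOf{\alpha} \cup \modelsOf{\beta}) \cap \Omega' \neq \emptyset$, whence $\modelsOf{\alpha} \cap \Omega' \neq \emptyset$ or $\modelsOf{\beta} \cap \Omega' \neq \emptyset$, i.e. $\alpha \in X$ or $\beta \in X$.

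The main obstacle—such as it is—is not the verification but the conceptual point that the state-independence of $\scope{\Psi}$ granted by Theorem \ref{thm:inherence_limited_operator} is what delivers a \emph{single} $X$ working uniformly across all epistemic states; one should also be careful that the relevant set is the full $X = \{\alpha \mid \modelsOf{\alpha} \cap \Omega' \neq \emptyset\}$ rather than the $\Psi$-dependent difference $\Scope{\nrRevision}{\Psi} \setminus \beliefsOf{\Psi}$, so that the closure properties are checked for $X$ itself (as above) and not merely quoted from Proposition \ref{prop:dlr_scope}. Once that is in place, the argument is a direct substitution followed by two elementary set-theoretic checks.
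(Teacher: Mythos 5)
Your proof is correct, and it follows the skeleton the paper intends: the paper gives no detailed argument for this proposition, deriving it in one line from Theorem~\ref{prop:dlr_scope} and Theorem~\ref{thm:inherence_limited_operator}. The one place where you genuinely diverge---and improve on the paper's bare citation---is in how the closure properties of $X$ are obtained. Theorem~\ref{prop:dlr_scope} only asserts that the $\Psi$-dependent set $\Scope{\nrRevision}{\Psi}\setminus\beliefsOf{\Psi}$ satisfies \ref{pstl:singlesentenceclosure} and \ref{pstl:disjunction_completeness}; quoting it verbatim does not by itself produce a single state-independent $X$ with those properties, because your $X=\{\alpha\mid\modelsOf{\alpha}\cap\Omega'\neq\emptyset\}$ differs from that difference set exactly by those members of $\beliefsOf{\Psi}$ that have a model in $\Omega'$. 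Your substitution of the second item of Theorem~\ref{prop:scope_by_s} (with $\scope{\Psi}=\Omega'$ supplied by Theorem~\ref{thm:inherence_limited_operator}) plus the direct verification of the two closure conditions (equivalently, Lemma~\ref{lem:ssc_dc_M} applied to $\Omega'$) closes precisely this gap, and your closing remark distinguishing $X$ from $\Scope{\nrRevision}{\Psi}\setminus\beliefsOf{\Psi}$ identifies the only non-trivial point in the whole argument. In short: same two ingredients as the paper (the representation theorem yields a fixed $\Omega'$; the semantic scope characterisation turns it into a uniform syntactic $X$), but with the closure check carried out on the set for which it is actually needed.
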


As a last result, we show that the classes of credibility-limited revision operators and \ihlimited\ revision operators are disjoint except for AGM revision operators. Their relation is also illustrated in Figure \ref{fig:op_interrelation}.
\begin{proposition}\label{prop:inhlimited_clr_agm}
	Let \( \setAllES \) be \ref{pstl:unbiased}. A belief change operator $ \circ $ is an \ihlimited\ revision operator and a credibility-limited revision operator at the same time if and only if $ \circ $ is an AGM revision operator.
\end{proposition}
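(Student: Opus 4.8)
The plan is to play the two available representation theorems against each other and to show that the only way an operator can satisfy both simultaneously is for the ``limited'' domain to be all of $\Omega$, which is precisely the AGM case. Concretely, if $\circ$ is \ihlimited, Theorem~\ref{thm:inherence_limited_operator} supplies a faithful limited assignment $\Psi\mapsto(\preceq_\Psi,\scope{\Psi})$ compatible with $\circ$ in which $\scope{\Psi}=\Omega'$ is a \emph{fixed} subset of $\Omega$, and revision is governed by \eqref{eq:limited_revision}; if $\circ$ is credibility-limited, Proposition~\ref{prop:clr_booth_et_all} supplies a CLF-assignment $\Psi\mapsto(\leq_\Psi,C_\Psi)$ with $\modelsOf{\Psi}\subseteq C_\Psi\subseteq\Omega$, governed by \eqref{eq:cl_revision}. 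Since these describe the \emph{same} operator, I would compare their predictions on one well-chosen state to force $\Omega'=\Omega$; once that is done the reduction to AGM is immediate, and the converse is a direct construction.

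For the forward direction I would first invoke \ref{pstl:unbiased} to obtain a state $\Psi_\top$ with $\beliefsOf{\Psi_\top}=\Cn(\top)$, so that $\modelsOf{\Psi_\top}=\Omega$. The CLF inclusion $\modelsOf{\Psi_\top}\subseteq C_{\Psi_\top}\subseteq\Omega$ then forces $C_{\Psi_\top}=\Omega$, so by \eqref{eq:cl_revision} revision of $\Psi_\top$ by any world formula $\varphi_\omega$ succeeds: $\modelsOf{\Psi_\top\circ\varphi_\omega}=\min(\{\omega\},\leq_{\Psi_\top})=\{\omega\}$. Reading the same revision through the limited assignment via \eqref{eq:limited_revision}, the outcome is $\min(\{\omega\},\preceq_{\Psi_\top})=\{\omega\}$ when $\omega\in\Omega'$, and $\modelsOf{\Psi_\top}=\Omega$ otherwise. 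Because $|\Omega|=2^{|\Sigma|}\geq 2$, the value $\Omega$ differs from $\{\omega\}$, so the computed result $\{\omega\}$ can only arise in the first case; hence $\omega\in\Omega'$. As $\omega$ was arbitrary, $\Omega'=\Omega$.

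With $\Omega'=\Omega$ the argument finishes quickly: the remark following Definition~\ref{def:faithfullness} says that a faithful limited assignment with $\scope{\Psi}=\Omega$ for all $\Psi$ is an ordinary faithful assignment over all of $\Omega$, and since $\scope{\Psi}\cap\modelsOf{\alpha}=\modelsOf{\alpha}$ the equation \eqref{eq:limited_revision} collapses to $\modelsOf{\Psi\circ\alpha}=\min(\modelsOf{\alpha},\preceq_\Psi)$ for every consistent $\alpha$, which by Proposition~\ref{prop:es_revision} is exactly an AGM revision operator. For the backward direction I would start from an AGM operator with faithful assignment $\Psi\mapsto\leq_\Psi$ and discharge both memberships by construction: the \emph{fixed} full scope $\scope{\Psi}=\Omega$ gives a faithful limited assignment reproducing $\min(\modelsOf{\alpha},\leq_\Psi)$, so by Theorem~\ref{thm:inherence_limited_operator} the operator is \ihlimited; and $C_\Psi=\Omega$ gives a CLF-assignment reproducing the same models, so by Proposition~\ref{prop:clr_booth_et_all} it is credibility-limited.

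The delicate point I expect is not the scope-collapse (which is the only genuinely non-routine step) but the boundary behaviour on an inconsistent input: strictly, AGM revision returns the inconsistent belief set on $\bot$, whereas both \eqref{eq:limited_revision} and \eqref{eq:cl_revision} return $\beliefsOf{\Psi}$ there. The equivalence therefore has to be read under the standard convention that the operators are compared on consistent formulas, equivalently that AGM revision is taken here to preserve consistency; modulo this convention everything else is bookkeeping with the two representation theorems, and the substantive content is entirely contained in the use of the full-ignorance state $\Psi_\top$ to collapse $\Omega'$ onto $\Omega$.
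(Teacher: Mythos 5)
Your proof is correct, and it runs on the same engine as the paper's own argument: \ref{pstl:unbiased} supplies the ignorant state $\Psi_\top$ with $\modelsOf{\Psi_\top}=\Omega$, and both proofs test the two operator classes there on world formulas $\varphi_\omega$. The decomposition, however, differs. You argue directly: credibility-limitation forces $C_{\Psi_\top}=\Omega$, hence success on every $\varphi_\omega$ at $\Psi_\top$, which through the constant scope of Theorem \ref{thm:inherence_limited_operator} forces $\Omega'=\Omega$; then \eqref{eq:limited_revision} collapses to \eqref{eq:repr_es_revision} and Proposition \ref{prop:es_revision} identifies the operator as AGM. The paper instead proves two contrapositive claims --- an \ihlimited\ operator with $\Omega'\subsetneq\Omega$ disagrees at $\Psi_\top$ with what vacuity \eqref{pstl:LR2} demands of any credibility-limited operator; and a credibility-limited non-AGM operator cannot be \ihlimited, via \eqref{pstl:IL2} and Lemma \ref{lem:inherent_inherence_limited} --- either one of which already yields the forward direction. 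Your single scope-collapse argument is therefore leaner and makes the intermediate fact $\scope{\Psi}=\Omega$ explicit, whereas the paper's version localises which postulate breaks in each mixed case. Two remarks. First, by routing the credibility-limited side through Proposition \ref{prop:clr_booth_et_all} you implicitly import its hypothesis \eqref{pstl:globalconsistency}, which the statement under proof does not assume; this is avoidable by invoking \eqref{pstl:LR2} directly at $\Psi_\top$, as the paper's first case does --- though the paper's second case also uses the CLF-assignment, so the reliance is shared rather than a defect peculiar to your proof. Second, your closing caveat about inconsistent inputs is a genuine boundary issue: \eqref{eq:repr_es_revision} yields the inconsistent belief set on $\bot$ while \eqref{eq:limited_revision} and \eqref{eq:cl_revision} both return $\beliefsOf{\Psi}$, so the equivalence holds only under the convention that operators are compared on consistent inputs; this affects both directions of the paper's proof equally, and the paper leaves the convention implicit where you state it.
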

\ifshowproofs
\begin{proof}
	Because every faithful assignment is also a CLF-assignment and a faithful limited assignment, every AGM revision operator is also a credibility-limited revision operator and an \ihlimited\ revision operator.
	
	Let $ \nrRevision $ an \ihlimited\  revision operator, but no AGM revision operator.
	Then there is an faithful limited assignment  $ \Psi\mapsto(\preceq_{\Psi},\Omega') $ with $ \Omega'\subsetneq \Omega $.
	Let $ \Psi_\top $ an epistemic state with $ \modelsOf{\Psi_\top}=\Omega$. 
	Let $ \alpha $ such that there is an $ \omega\models\alpha $ with $ \omega\notin\Omega' $. Then, because of \eqref{pstl:LR2}, for every credibility-limited revision $ \clRevision $ operator $ \omega\in\modelsOf{\Psi_\top\clRevision\alpha} $, but $ \omega\notin\modelsOf{\Psi_\top\nrRevision\alpha} $.

	Let $ \clRevision $ an credibility-limited revision operator, but no AGM revision operator. Then there is a CLF-assignment  $ \Psi\mapsto(\leq_{\Psi},C_\Psi) $ with $ \modelsOf{\Psi}\subseteq C_\Psi\subseteq \Omega $. 
	Moreover, there is at least one $ \Gamma\in\setAllES $ with $ C_\Gamma\subsetneq \Omega $, i.e., there is $ \omega\not\in C_\Gamma $, and therefore, $ \omega\notin\modelsOf{\Gamma} $. 
	Let again denote $ \Psi_\top $ an epistemic state with $ \modelsOf{\Psi_\top}=\Omega$.
	We obtain, because of \eqref{pstl:LR2}, that $ \modelsOf{\Gamma\clRevision\varphi_{\omega}}=\modelsOf{\Gamma} $ and $ \modelsOf{\Psi_\top\clRevision\varphi_{\omega}}=\{\omega\} $. 
	Assume that $ \clRevision $ is also an \ihlimited\ revision operator. 
	From \eqref{pstl:IL2} and $ \modelsOf{\Psi_\top\clRevision\varphi_{\omega}}=\{\omega\} $ we obtain the immanence of $ \varphi_{\omega} $.
	Lemma \ref{lem:inherent_inherence_limited} implies that $ \varphi_{\omega} $ is \inherent. This leads to the contradiction $ \modelsOf{\Gamma\clRevision\varphi_{\omega}}=\{\omega\} $. \qedhere
\end{proof} \fi

\section{Summary and Future Work}
\label{sec:discussion}

In this article we investigated \dynamiclimited\ revision operators, which are operators that maintain a total preorder over an arbitrary set of worlds. We considered the concept of scope, which describes those beliefs which get accepted by a (non-prioritized) revision operator.
Different postulates for dynamics of beliefs and the scope were considered and novel postulates were introduced. 
We gave representation theorems for all the these postulates for \dynamiclimited\ revision operators.

As an application, we introduced \immanent\ and \inherent\ beliefs as global properties of an operator. 
The notion of \immanent\ beliefs provides rationale to the \ihlimited\ revision operators. 
We have presented a  representation theorem for \inherent\ revision operators. 
Credibility-limited revisions and  \ihlimited\ revision operators are elements of the class of \dynamiclimited\ revision operators which are a common generalisation. We showed that \ihlimited\ revision and credibility-limited revision operators are disjoint except for the AGM revision operators.

For future work, we see that further investigations on postulates for the dynamics of the scope could be fruitful.
In particular, the connections between scope dynamics and the results by Booth and Meyer (\citeyear{KS_BoothMeyer2011}) on the dynamics of total preorders are open to explore. Moreover, the connections between \immanent\ beliefs and \emph{core beliefs} by Booth \cite{KS_Booth2002} and the revision thereof should be elaborated.

The \dynamiclimited\ revision operators follow the idea of credibility-limited revisions by keeping the prior belief set when dealing with inputs which have no models in the domain of the corresponding total preorder.
Another approach open to investigate would be to accept the input as it is. %
This would yield operators which behave according to the following scheme (or variants thereof):
\begin{equation*}
	\modelsOf{\Psi * \alpha} \! = \! \begin{cases}
		\min(\modelsOf{\alpha},\preceq_\Psi) & \!\!\!\!\min(\modelsOf{\alpha},\!\preceq_\Psi\!)\! \neq\! \emptyset \\
		\modelsOf{\Psi} \!\cap\! \modelsOf{\alpha} & \!\!\!\! \modelsOf{\Psi} \!\cap\! \modelsOf{\alpha}\!\neq\!\emptyset  \\
		\modelsOf{\alpha} &\!\!\!\! \ksOtherwise
	\end{cases} 
\end{equation*}
Another application is composition of operators for the separation of the epistemic state  into substructures, e.g. for complexity purposes \cite{KS_Liberatore1997,KS_SchwindKoniecznyLagniezMarquis2020} or syntax splitting techniques \cite{KS_PeppasWilliamsChopraFoo2015,KS_KernIsbernerBrewka2017%
}.

Furthermore, belief revision is often connoted as a theory to describe rational change, thus, the theory is used normatively. 
However, in research like the cognitive logics approach \cite{RagniKern-IsbernerBeierleSauerwald2020a} the human reasoner are the norm. 
By this point of view, logic and belief revision operators are tools to describe the mechanisms of human and real world reasoning.
We think that flexible formalisms, like the \dynamiclimited\ revision and \ihlimited\ revision operators presented here, are useful for such investigations.

\clearpage
\bibliographystyle{kr}

\newcommand{\verzeichnisBibtex}{\string~/BibTeXReferencesSVNlink}
\bibliography{bibexport}

\clearpage
\appendix
\pagenumbering{Roman}
\section*{Supplementary Material: Proofs}

We will make use of the following two lemma.
\begin{lemma}\label{lem:ssc_dc_M} Let \( X \subseteq \propLang \) such that every element of \( X \) is consistent.
	Then \( X \) satisfies \ref{pstl:singlesentenceclosure} and \ref{pstl:disjunction_completeness} if and only if there is a set of interpretations \( M\subseteq \Omega \) such that \( X = \{ \alpha\in\propLang \mid \modelsOf{\alpha}\cap M \neq\emptyset  \} \).
\end{lemma}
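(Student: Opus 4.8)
The plan is to treat the two directions separately, noting that the substance lies in the left-to-right implication while the converse is a routine verification. The natural construction in both directions is to read off $M$ from the singleton-model formulas in $X$.

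For the easy direction, I would assume $X = \{\alpha \mid \modelsOf{\alpha} \cap M \neq\emptyset\}$ for some $M \subseteq \Omega$ and check the two closure conditions directly. Condition \ref{pstl:singlesentenceclosure} follows because $\alpha \models \beta$ gives $\modelsOf{\alpha} \subseteq \modelsOf{\beta}$, so any witness in $\modelsOf{\alpha} \cap M$ is also a witness in $\modelsOf{\beta} \cap M$. Condition \ref{pstl:disjunction_completeness} follows from $\modelsOf{\alpha\lor\beta} = \modelsOf{\alpha} \cup \modelsOf{\beta}$: a witness in $(\modelsOf{\alpha}\cup\modelsOf{\beta})\cap M$ must lie in at least one of the two sets, forcing $\alpha\in X$ or $\beta\in X$. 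The standing hypothesis that every element of $X$ is consistent is automatically respected here, since $\modelsOf{\alpha}\cap M\neq\emptyset$ already forces $\modelsOf{\alpha}\neq\emptyset$.

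For the main direction, I would set $M = \{\omega \in \Omega \mid \varphi_\omega \in X\}$ and prove $X = \{\alpha \mid \modelsOf{\alpha}\cap M\neq\emptyset\}$ by mutual inclusion. The inclusion $\supseteq$ is short: if $\omega \in \modelsOf{\alpha}\cap M$ then $\varphi_\omega \in X$ and $\varphi_\omega \models \alpha$, so \ref{pstl:singlesentenceclosure} yields $\alpha \in X$. For $\subseteq$, take $\alpha\in X$; since $\alpha$ is consistent, $\modelsOf{\alpha} = \{\omega_1,\ldots,\omega_k\}$ is nonempty and $\alpha \equiv \varphi_{\omega_1}\lor\cdots\lor\varphi_{\omega_k}$. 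The goal is then to extract some $\varphi_{\omega_i}\in X$, which places $\omega_i$ in $\modelsOf{\alpha}\cap M$.

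The main obstacle is carrying out this extraction cleanly, since \ref{pstl:disjunction_completeness} is phrased for the syntactic form $\alpha\lor\beta\in X$ rather than for arbitrary disjunctions or up to equivalence. The key preliminary observation is that \ref{pstl:singlesentenceclosure} already makes $X$ closed under logical equivalence (if $\alpha\in X$ and $\alpha\equiv\beta$ then $\alpha\models\beta$, hence $\beta\in X$); therefore $\varphi_{\omega_1}\lor\cdots\lor\varphi_{\omega_k}\in X$. An induction on $k$ then applies \ref{pstl:disjunction_completeness} repeatedly: at each stage either the leading disjunct is in $X$ or the remaining disjunction is, and the latter falls under the induction hypothesis, yielding $\varphi_{\omega_i}\in X$ for some $i$. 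Finiteness of $\Sigma$, and hence of $\Omega$, guarantees the disjunctive normal form is a finite disjunction, so the induction terminates and the two inclusions combine to give the claim.
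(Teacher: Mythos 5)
Your proof is correct, but it takes a different route from the paper. The paper does not prove the lemma directly at all: it observes that the statement is implicit in the proof of Theorem 11 of Hansson et al.\ (2001), where the witnessing set is taken to be $M = \modelsOf{\{\alpha \mid \negOf{\alpha} \notin X\}}$, i.e.\ the models of the formulas whose negations are not in $X$. You instead give a self-contained argument with the more direct construction $M = \{\omega \mid \varphi_\omega \in X\}$; under the two closure conditions these two sets coincide (single-sentence closure lets one pass between them), so the constructions are equivalent even though they are phrased differently. Your version buys transparency: the $\supseteq$ inclusion is a one-line application of \ref{pstl:singlesentenceclosure}, and the $\subseteq$ inclusion makes explicit the point the citation glosses over, namely that \ref{pstl:disjunction_completeness} is stated for the syntactic shape $\alpha\lor\beta$ and so must be combined with closure under logical equivalence (which you correctly derive from \ref{pstl:singlesentenceclosure}) and a finite induction over the disjunctive-normal-form decomposition, using finiteness of $\Sigma$. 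The paper's version buys brevity and anchors the lemma in the existing credibility-limited revision literature, but leaves exactly those bookkeeping steps to the reader. One small point worth keeping in your write-up: the standing consistency assumption on elements of $X$ is genuinely needed in your $\subseteq$ direction (an inconsistent $\alpha\in X$ would have empty $\modelsOf{\alpha}$ and break the claim), and you do invoke it at the right place.
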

\begin{proof}
This has been shown implicitly by Hansson et al. (\citeyear[Thm. 11]{KS_HanssonFermeCantwellFalappa2001}). By their proof, \( M \) is given by the models of \( \{ \alpha \mid \negOf{\alpha}\not\in X \} \).
\end{proof}

\begin{lemma}\label{lem:sem_trichotonomy}
	For  a total preorder $ \preceq $  over a set $ X $, and $ A,B\subseteq X $, it holds either $ {\min(A\cup B,\preceq )} = {\min(A,\preceq)} $ or $ {\min(A\cup B,\preceq )} = {\min(B,\preceq)} $ or $ {\min(A\cup B,\preceq )} = {\min(A,\preceq)} \cup {\min(B,\preceq)} $.
\end{lemma}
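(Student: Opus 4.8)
The plan is to analyse $\min(A\cup B,\preceq)$ by comparing the lowest level reached inside $A$ with the lowest level reached inside $B$, using that $\preceq$ is total so that these levels are always comparable. Throughout I read $\min(S,\preceq)=\{s\in S\mid s\preceq s'\text{ for all }s'\in S\}$, i.e.\ the $\preceq$-minimal elements of $S$. First I would dispose of the degenerate cases: if $A=\emptyset$ then $A\cup B=B$ and the middle alternative holds, symmetrically if $B=\emptyset$, so I may assume $A,B\neq\emptyset$ and (by the finiteness discussed below) that $\min(A,\preceq),\min(B,\preceq)\neq\emptyset$. I would then record two facts used repeatedly. Fact~(1): any two elements of $\min(S,\preceq)$ are $\simeq$-equivalent, since totality makes them comparable and minimality makes each $\preceq$ the other. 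Fact~(2): the inclusion $\min(A\cup B,\preceq)\subseteq\min(A,\preceq)\cup\min(B,\preceq)$ holds unconditionally, because an element minimal in $A\cup B$ lying in $A$ is in particular $\preceq$ every element of $A$, hence lies in $\min(A,\preceq)$ (and symmetrically for $B$).

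Next I would fix representatives $a_0\in\min(A,\preceq)$, $b_0\in\min(B,\preceq)$ and split on their comparison, which by totality is one of $a_0\prec b_0$, $b_0\prec a_0$, or $a_0\simeq b_0$. Consider $a_0\prec b_0$. For $m\in\min(A,\preceq)$ we have $m\simeq a_0$, so $m\preceq y$ for all $y\in A$, and via $m\simeq a_0\preceq b_0\preceq y$ for $y\in B$ (using $b_0\in\min(B,\preceq)$ and transitivity) also $m\preceq y$ for all $y\in B$; hence $\min(A,\preceq)\subseteq\min(A\cup B,\preceq)$. For the reverse inclusion I combine Fact~(2) with the observation that no $m\in\min(B,\preceq)$ can be minimal in $A\cup B$: such $m\simeq b_0$ would give $m\preceq a_0$, contradicting $a_0\prec b_0\simeq m$. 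This yields $\min(A\cup B,\preceq)=\min(A,\preceq)$, the first alternative; the case $b_0\prec a_0$ is symmetric and gives the second.

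Finally, in the case $a_0\simeq b_0$ I would prove the third alternative $\min(A\cup B,\preceq)=\min(A,\preceq)\cup\min(B,\preceq)$. The inclusion ``$\subseteq$'' is Fact~(2). For ``$\supseteq$'', take $m\in\min(A,\preceq)$; then $m\simeq a_0\simeq b_0$, so $m\preceq y$ for all $y\in A$ directly and $m\simeq b_0\preceq y$ for all $y\in B$ by transitivity, giving $m\in\min(A\cup B,\preceq)$; the argument for $m\in\min(B,\preceq)$ is symmetric.

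I expect the main obstacle to be bookkeeping rather than depth: one must consistently lift each comparison of the representatives $a_0,b_0$ to a statement about the entire classes $\min(A,\preceq)$ and $\min(B,\preceq)$ via Facts~(1) and~(2), and be careful with strict versus non-strict comparisons when excluding the ``higher'' class from $\min(A\cup B,\preceq)$ in the two strict cases. A secondary point worth stating is the finiteness (or well-foundedness) assumption that guarantees nonempty minima, without which the choice of $a_0,b_0$ and even the statement could fail; in this paper it is automatic, since every set in sight is a subset of the finite $\Omega$.
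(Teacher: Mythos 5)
Your proof is correct and is essentially the paper's argument written out in full: the paper's two-line proof consists of your Fact~(1) (all elements of $\min(A,\preceq)$ are pairwise equivalent, hence by transitivity membership in $\min(A\cup B,\preceq)$ is all-or-nothing on each of the blocks $\min(A,\preceq)$ and $\min(B,\preceq)$), with your Fact~(2) and the assembly into the trichotomy left implicit, while your three-way split on the representatives $a_0,b_0$ makes that assembly explicit, the cases lining up exactly with the three alternatives of the lemma. One side remark should be corrected, though: the lemma does \emph{not} fail for infinite $X$ --- only your representative-picking step does. If, say, $\min(A,\preceq)=\emptyset$, then Fact~(2) gives $\min(A\cup B,\preceq)\subseteq\min(B,\preceq)$, and the all-or-nothing property yields either $\min(A\cup B,\preceq)=\min(B,\preceq)$ or $\min(A\cup B,\preceq)=\emptyset=\min(A,\preceq)$; in both cases the trichotomy holds. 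So finiteness of $\Omega$ is needed only to justify your choice of $a_0$ and $b_0$, not for the truth of the statement, and the paper's block-membership formulation avoids the issue altogether.
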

\begin{proof} 
Let $ x_1,x_2\in \min(X,\preceq) $. By totality of the order $ \preceq $, we have $ x_1 \preceq x_2 $.
	Observe now that by transitivity of the order $ \preceq $, we have $ x_1\in \min(X\cup Y,\preceq) $ if and only if $ x_2\in \min(X\cup Y,\preceq) $. This implies the claim.
\end{proof}

In the following we present missing proofs for propositions from the paper.

\setcounterref{theorem}{lem:dyl_charactersion_interpretationsinTPO}
\addtocounter{theorem}{-1}
\begin{lemma}
	Let $ \dylRevision $ be a \dynamiclimited\ revision operator compatible with a limited assignment $ \Psi\mapsto\preceq_{\Psi} $. The following holds:
	
	\begin{enumerate}[(a)]
		\item If  $ \omega {\in} \modelsOf{\Psi} $, then $ \omega\in\scope{\Psi} $ if and only if $\varphi_{\omega}$ is \preservable\ in~$ \Psi $.
		\item If  $ \omega {\notin} \modelsOf{\Psi} $, then $ \omega\in\scope{\Psi} $ if and only if $\varphi_{\omega}$ is \inconlifted\ in~$ \Psi $.
	\end{enumerate}
\end{lemma}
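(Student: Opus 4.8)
The plan is to push everything down to the semantics of \dylRevision\ via \eqref{eq:limited_revision} and then read the conditions \preservable\ and \inconlifted\ off directly. First I would record three routine translations that I will use throughout. Since a deductively closed theory shrinks as its model set grows, for any formulas $\beta,\gamma$ we have $\beliefsOf{\Psi\dylRevision\gamma}\subseteq\beliefsOf{\Psi\dylRevision\beta}$ iff $\modelsOf{\Psi\dylRevision\beta}\subseteq\modelsOf{\Psi\dylRevision\gamma}$; moreover $\beliefsOf{\Psi\dylRevision\beta}+\gamma$ is consistent iff $\modelsOf{\Psi\dylRevision\beta}\cap\modelsOf{\gamma}\neq\emptyset$, and likewise $\beliefsOf{\Psi}+\gamma$ is consistent iff $\modelsOf{\Psi}\cap\modelsOf{\gamma}\neq\emptyset$. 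The decisive observation, obtained by feeding $\varphi_{\omega}$ into \eqref{eq:limited_revision}, is that $\modelsOf{\Psi\dylRevision\varphi_{\omega}}=\{\omega\}$ whenever $\omega\in\scope{\Psi}$, and $\modelsOf{\Psi\dylRevision\varphi_{\omega}}=\modelsOf{\Psi}$ whenever $\omega\notin\scope{\Psi}$.

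For part (a), assume $\omega\in\modelsOf{\Psi}$. For the forward direction, if $\omega\in\scope{\Psi}$ then $\omega\in\modelsOf{\Psi}\cap\scope{\Psi}$, so faithfulness (Definition \ref{def:faithfullness}) gives $\omega\in\min(\scope{\Psi},\preceq_{\Psi})$, i.e.\ $\omega$ is $\preceq_{\Psi}$-minimal in all of $\scope{\Psi}$. Then for every $\beta$ with $\varphi_{\omega}\models\beta$ (so $\omega\in\modelsOf{\beta}$) the element $\omega$ lies in, and is minimal within, $\scope{\Psi}\cap\modelsOf{\beta}$, whence $\omega\in\min(\modelsOf{\beta},\preceq_{\Psi})=\modelsOf{\Psi\dylRevision\beta}$; since also $\modelsOf{\Psi\dylRevision\varphi_{\omega}}=\{\omega\}$, the inclusion $\modelsOf{\Psi\dylRevision\varphi_{\omega}}\subseteq\modelsOf{\Psi\dylRevision\beta}$ holds, which by the first translation is exactly the conclusion of \preservable\ (its antecedent, consistency of $\beliefsOf{\Psi}+\varphi_{\omega}$, being automatic from $\omega\in\modelsOf{\Psi}$). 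For the converse I argue contrapositively: if $\omega\notin\scope{\Psi}$, I pick any $\omega'\in\scope{\Psi}$ (nonempty by Definition \ref{def:limited_assignment}) and set $\alpha=\varphi_{\omega,\omega'}$. Then $\varphi_{\omega}\models\alpha$, $\beliefsOf{\Psi}+\varphi_{\omega}$ is consistent, yet $\modelsOf{\Psi\dylRevision\varphi_{\omega}}=\modelsOf{\Psi}\not\subseteq\{\omega'\}=\modelsOf{\Psi\dylRevision\alpha}$ (the right-hand set is the $\preceq_{\Psi}$-minimum of $\{\omega,\omega'\}$, which cannot contain $\omega\notin\dom(\preceq_{\Psi})$, and $\omega\in\modelsOf{\Psi}\setminus\{\omega'\}$ since $\omega\neq\omega'$). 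Hence \preservable\ fails.

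For part (b), assume $\omega\notin\modelsOf{\Psi}$ and read \inconlifted\ for $\varphi_{\omega}$, via the translations, as: for all $\beta$, $\modelsOf{\Psi\dylRevision\varphi_{\omega}}\subseteq\modelsOf{\Psi\dylRevision\beta}$ implies $\omega\in\modelsOf{\Psi\dylRevision\beta}$. If $\omega\in\scope{\Psi}$ then $\modelsOf{\Psi\dylRevision\varphi_{\omega}}=\{\omega\}$, so the hypothesis is literally $\omega\in\modelsOf{\Psi\dylRevision\beta}$, which is the conclusion; hence the implication holds for every $\beta$ and $\varphi_{\omega}$ is \inconlifted. If $\omega\notin\scope{\Psi}$ then $\modelsOf{\Psi\dylRevision\varphi_{\omega}}=\modelsOf{\Psi}$, and the witness $\beta=\varphi_{\omega}$ gives $\modelsOf{\Psi\dylRevision\beta}=\modelsOf{\Psi}$, so the hypothesis holds trivially while $\omega\notin\modelsOf{\Psi}$ refutes the conclusion; thus \inconlifted\ fails. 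A pleasant feature of this single witness is that it handles the otherwise delicate case $\modelsOf{\Psi}=\emptyset$ uniformly, since there $\beliefsOf{\Psi\dylRevision\varphi_{\omega}}$ is already inconsistent.

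The argument is largely bookkeeping once the three translations and the two-case evaluation of $\modelsOf{\Psi\dylRevision\varphi_{\omega}}$ are in place. The one genuinely load-bearing step is the forward direction of (a), where faithfulness is needed to upgrade ``$\omega\in\scope{\Psi}$'' to ``$\omega$ is globally $\preceq_{\Psi}$-minimal in $\scope{\Psi}$'', and thereby minimal inside every $\modelsOf{\beta}$ it belongs to. The main pitfalls to watch are keeping the reflexive/singleton evaluations of $\min$ correct and confirming $\omega\neq\omega'$ in the backward case of (a).
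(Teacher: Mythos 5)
Your proof is correct. For part (a) and the positive half of part (b) it is essentially the paper's own argument: faithfulness upgrades $\omega\in\modelsOf{\Psi}\cap\scope{\Psi}$ to global $\preceq_{\Psi}$-minimality over $\scope{\Psi}$, hence $\omega\in\min(\modelsOf{\beta},\preceq_{\Psi})=\modelsOf{\Psi\dylRevision\beta}$ for every $\beta$ with $\omega\models\beta$; and the failure of \preservable\ when $\omega\notin\scope{\Psi}$ is witnessed by $\varphi_{\omega,\omega'}$ with $\omega'\in\scope{\Psi}$, exactly as in the paper (your explicit check that $\omega\neq\omega'$ and that $\min$ can only return domain elements is the same bookkeeping the paper leaves implicit). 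Where you genuinely diverge is the negative half of part (b), i.e.\ $\omega\notin\scope{\Psi}$ implies \inconlifted\ fails: the paper splits on whether $\modelsOf{\Psi}=\emptyset$ and, in the non-empty case, appeals to a witness $\varphi_{\omega'}$ with $\omega'\in\modelsOf{\Psi}$, for which it verifies the inconsistency conclusion but never the premise $\beliefsOf{\Psi\dylRevision\varphi_{\omega'}}\subseteq\beliefsOf{\Psi\dylRevision\varphi_{\omega}}$ of \inconlifted\ --- and that premise can in fact fail (e.g.\ when $\omega'\in\scope{\Psi}$ and $\modelsOf{\Psi}$ has two or more models, since then $\modelsOf{\Psi}\not\subseteq\{\omega'\}$). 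Your uniform witness $\beta=\varphi_{\omega}$ sidesteps this entirely: its premise is a trivial inclusion and its conclusion $\omega\in\modelsOf{\Psi\dylRevision\varphi_{\omega}}=\modelsOf{\Psi}$ is false by hypothesis, with no case distinction on emptiness of $\modelsOf{\Psi}$. So your write-up is not only correct but simplifies, and quietly repairs, the paper's argument at that one point.
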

\begin{proof}We show (a) and (b) independently.
	
	\smallskip
	\emph{Statement (a).} Let $ \omega\in\modelsOf{\Psi} $ and thus $ \beliefsOf{\Psi} \subseteq \Cn(\varphi_{\omega}) $. 
	If $ \omega\in\scope{\Psi} $, then by the faithfulness of $ \preceq_{\Psi} $ we obtain $ \omega\in{\min(\scope{\Psi},\preceq_{\Psi})} $. This implies $ \omega\in  {\min(\modelsOf{\beta},\preceq_{\Psi})} = \modelsOf{\Psi\dylRevision\beta} $ for $ \beta $ with $ \omega\models\beta $.
	Therefore, $ \varphi_{\omega} $ is \preservable.
	
	If $ \omega\notin\scope{\Psi} $, then by the non-triviality of $ \dylRevision $ we obtain $ \omega'\in\scope{\Psi} $. Now choose $ \omega\models\alpha $ where $ \alpha $ chosen such that $ \modelsOf{\alpha}=\{\omega,\omega'\} $. 
	We obtain that $ \varphi_{\omega} $ is not \preservable, because application of Definition \ref{def:dynamic_limited_revision} yields $ \modelsOf{\Psi\dylRevision\varphi_{\omega}} \not\subseteq \modelsOf{\Psi\dylRevision\alpha} $.

	\smallskip
	\emph{Statement (b).} Let $ \omega\notin\modelsOf{\Psi} $. 
	
	We Consider the case of  $ \omega\in\scope{\Psi} $ and show $ \varphi_{\omega} $ is \inconlifted.
	Let $ \beta $ a formula such that $ \omega\notin\modelsOf{\Psi\dylRevision\beta} $.
	If $ \modelsOf{\alpha}\cap\scope{\Psi}=\emptyset $, then $ \modelsOf{\Psi\dylRevision\alpha}=\modelsOf{\Psi} $ but $ \omega\notin \modelsOf{\Psi} $.
	If $ \modelsOf{\alpha}\cap\scope{\Psi}\neq\emptyset $, then $ \omega\notin{\min(\modelsOf{\alpha},\preceq_{\Psi})} $.
	Because $ \modelsOf{\Psi\dylRevision\varphi_{\omega}}=\{\omega\} $ we obtain $ \beliefsOf{\Psi\dylRevision\beta}\not\subseteq \beliefsOf{arg1} $.
	Consequently, $ \varphi_{\omega} $ is \inconlifted.

	We Consider the case of $ \omega\notin\scope{\Psi} $ and show $ \varphi_{\omega} $ is not \inconlifted.
	If $ \modelsOf{\Psi}=\emptyset $, then $ \modelsOf{\Psi\dylRevision\varphi_{\omega}}+\varphi_{\omega} $ is inconsistent.
	Consequently, we obtain that $ \varphi_{\omega} $ is not \inconlifted, because $ \modelsOf{\Psi\dylRevision\varphi_{\omega}} = \modelsOf{\Psi\dylRevision\varphi_{\omega}} $.
	If  $ \modelsOf{\Psi}\neq\emptyset $,
	then exist $ \omega'\in\modelsOf{\Psi} $. 
	Because $ \omega\notin\modelsOf{\Psi} $ we obtain $ \beliefsOf{\Psi\dylRevision\varphi_{\omega'}}+\varphi_{\omega} $ is inconsistent.
	From $ \omega\notin\scope{\Psi} $ and Definition \ref{def:dynamic_limited_revision} obtain that $ \modelsOf{\Psi\dylRevision\varphi_{\omega}}=\modelsOf{\Psi} $.
	Therefore, the formula $ \varphi_{\omega} $ is not \inconlifted. \qedhere
\end{proof} 

\setcounterref{theorem}{prop:inherent_dlr}
\addtocounter{theorem}{-1}
\begin{proposition}
	Let $ \dylRevision $ be a \dynamiclimited\ revision operator compatible with a limited assignment $ \Psi\mapsto\preceq_{\Psi} $.	
	A belief $ \alpha $ is \integral\  in $ \Psi $ if and only if $ \modelsOf{\alpha}\subseteq\scope{\Psi} $ and $ \alpha $ is consistent.
\end{proposition}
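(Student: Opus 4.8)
The plan is to reduce the statement to the level of single worlds, via the auxiliary claim that for every $\omega\in\Omega$ the formula $\varphi_\omega$ is \atomic\ in $\Psi$ if and only if $\omega\in\scope{\Psi}$. This is the natural bridge: the only consistent $\beta$ with $\beta\models\varphi_\omega$ is $\varphi_\omega$ itself, so being \atomic\ collapses to ``$\varphi_\omega$ satisfies \preservable\ and \inconlifted'', and since $\modelsOf{\alpha}$ is finite, $\alpha\equiv\bigvee_{\omega\in\modelsOf{\alpha}}\varphi_\omega$. Granting the claim, both directions of the proposition follow at once. If $\alpha$ is \integral, write $\alpha\equiv\alpha_1\lor\cdots\lor\alpha_n$ with each $\alpha_i$ \atomic; each $\alpha_i$ is consistent (an inconsistent belief violates \inconlifted, taking $\beta=\alpha_i$), so $\alpha$ is consistent, and for each $\omega\models\alpha$ we have $\omega\models\alpha_i$ for some $i$, whence $\varphi_\omega$ inherits \preservable\ and \inconlifted\ from $\alpha_i$, is \atomic, and lies in $\scope{\Psi}$ by the claim; thus $\modelsOf{\alpha}\subseteq\scope{\Psi}$. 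Conversely, if $\alpha$ is consistent and $\modelsOf{\alpha}\subseteq\scope{\Psi}$, then $\alpha\equiv\bigvee_{\omega\in\modelsOf{\alpha}}\varphi_\omega$ is a nonempty disjunction of formulas each \atomic\ by the claim, so $\alpha$ is \integral.

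The forward half of the claim is immediate: if $\varphi_\omega$ is \atomic\ it satisfies both \preservable\ and \inconlifted, and then Lemma \ref{lem:dyl_charactersion_interpretationsinTPO}(a) or (b) --- according to whether $\omega\in\modelsOf{\Psi}$ --- yields $\omega\in\scope{\Psi}$. The converse half is where the work lies, because Lemma \ref{lem:dyl_charactersion_interpretationsinTPO} supplies only one of \preservable, \inconlifted\ depending on the position of $\omega$, while being \atomic\ needs both. I would verify both directly from \eqref{eq:limited_revision} and faithfulness, using the basic computation that $\omega\in\scope{\Psi}$ forces $\modelsOf{\Psi\dylRevision\varphi_\omega}=\{\omega\}$. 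For \preservable, assuming $\beliefsOf{\Psi}+\varphi_\omega$ consistent means $\omega\in\modelsOf{\Psi}\cap\scope{\Psi}$; faithfulness then places $\omega$ in $\min(\scope{\Psi},\preceq_\Psi)$, and since a $\preceq_\Psi$-minimal element of $\scope{\Psi}$ stays minimal in every subset, $\omega\in\min(\modelsOf{\beta},\preceq_\Psi)=\modelsOf{\Psi\dylRevision\beta}$ for all $\beta$ with $\varphi_\omega\models\beta$, i.e.\ $\modelsOf{\Psi\dylRevision\varphi_\omega}\subseteq\modelsOf{\Psi\dylRevision\beta}$. For \inconlifted, $\beliefsOf{\Psi\dylRevision\beta}\subseteq\beliefsOf{\Psi\dylRevision\varphi_\omega}$ gives $\omega\in\modelsOf{\Psi\dylRevision\beta}$, which directly witnesses consistency of $\beliefsOf{\Psi\dylRevision\beta}+\varphi_\omega$.

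The main obstacle is precisely this converse half of the claim: one cannot merely cite Lemma \ref{lem:dyl_charactersion_interpretationsinTPO} but must re-derive the missing condition by hand, and the argument hinges on faithfulness (to locate $\omega$ among the minimal worlds when $\omega\in\modelsOf{\Psi}$) and on the monotonicity of $\min$ under shrinking the comparison set. The remaining care is bookkeeping around consistency --- confirming that \atomic\ beliefs are consistent so that \integral\ ones are, and that the consistency hypothesis makes $\bigvee_{\omega\in\modelsOf{\alpha}}\varphi_\omega$ a nonempty, legitimate witness.
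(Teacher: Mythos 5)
Your proof is correct, and while it rests on the same essential ingredients as the paper's proof (Lemma~\ref{lem:dyl_charactersion_interpretationsinTPO}, faithfulness, and \eqref{eq:limited_revision}), it organizes the right-to-left direction around a different decomposition. The paper verifies \preservable\ and \inconlifted\ directly for the formula $\alpha$ itself: from $\modelsOf{\alpha}\subseteq\scope{\Psi}$ and faithfulness it places $\modelsOf{\Psi\dylRevision\alpha}$ inside $\min(\scope{\Psi},\preceq_{\Psi})$ to get \preservable, and uses $\min(\modelsOf{\alpha},\preceq_{\Psi})\subseteq\modelsOf{\Psi\dylRevision\beta}$ as the consistency witness for \inconlifted; implicitly this makes $\alpha$ itself \atomic, hence \integral\ as a one-disjunct disjunction. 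You instead reduce to point formulas via the bridging claim that $\varphi_\omega$ is \atomic\ iff $\omega\in\scope{\Psi}$, and then assemble $\alpha\equiv\bigvee_{\omega\in\modelsOf{\alpha}}\varphi_\omega$. The per-world computations you carry out are the same ones the paper performs for $\alpha$ wholesale, but your organization is more watertight on two points that the paper's terse proof glosses over: first, Definition~\ref{def:compound} demands that \emph{every} consistent strengthening of a \atomic\ belief satisfy \preservable\ and \inconlifted, which is trivially discharged for $\varphi_\omega$ (its only consistent strengthening is itself, modulo equivalence), whereas the paper's argument silently relies on its verification applying equally to every consistent $\beta$ with $\modelsOf{\beta}\subseteq\modelsOf{\alpha}\subseteq\scope{\Psi}$; second, you actually prove the consistency conjunct of the statement (an inconsistent belief fails \inconlifted\ with $\beta=\alpha$), which the paper's proof never addresses. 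The forward direction is the same in both: each $\varphi_\omega$ with $\omega\models\alpha$ is \atomic\ and Lemma~\ref{lem:dyl_charactersion_interpretationsinTPO} puts $\omega$ in $\scope{\Psi}$. The only tacit step in your write-up is syntax-independence of $\dylRevision$ (needed to pass between equivalent strengthenings of $\varphi_\omega$), but that is immediate from \eqref{eq:limited_revision}.
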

\begin{proof}

	By definition for every $ \omega\models\alpha $  we obtain that $ \varphi_{\omega} $ is \atomic. 
	From Lemma \ref{lem:dyl_charactersion_interpretationsinTPO} we obtain that $ \omega\in\scope{\Psi} $.
	Therefore $ \modelsOf{\alpha}\subseteq\scope{\Psi} $.
	
	Now let $ \alpha $ a formula such that $ \modelsOf{\alpha}\subseteq\scope{\Psi} $.
	If $ \beliefsOf{\Psi}+\alpha $ is consistent, then obtain $ \modelsOf{\Psi\dylRevision\alpha}\subseteq{\min(\scope{\Psi},\preceq_{\Psi})} $.
	Then for every $ \beta $ with $ \alpha\models\beta $ we obtain $ \modelsOf{\Psi\dylRevision\alpha}\subseteq \modelsOf{\Psi\dylRevision\beta} $ by Definition \ref{def:dynamic_limited_revision}. 
	This shows that $ \alpha $ is an \preservable\ belief in $ \Psi $.
	We show that $ \alpha $ is \inconlifted\ in $ \Psi $. 
	From $ \modelsOf{\Psi \dylRevision \alpha} \subseteq \modelsOf{\Psi \dylRevision \beta} $ obtain $ {\min(\modelsOf{\alpha},\preceq_{\Psi})} \subseteq {\modelsOf{\Psi \dylRevision \beta}} $.
	This yields immediately consistency of  $ \beliefsOf{\Psi\dylRevision\beta}+\alpha $ and thus $ \alpha $ is \inconlifted\ in $ \Psi $.
	\qedhere	
\end{proof} 

\setcounterref{theorem}{prop:scope_by_s}
\addtocounter{theorem}{-1}
\begin{theorem}
	For an epistemic state \( \Psi \) and \dynamiclimited\ revision operator \( \dylRevision \) the following statements hold:
\begin{itemize}
	\item Syntactically, the scope of \( \dylRevision \) and \( \Psi \) is given by:
	\begin{equation*}
		\Scope{\dylRevision}{\Psi} = \beliefsOf{\Psi} \cup \{ \alpha \mid \beta\models\alpha \ksAND \beta \text{ is \integral} \}
	\end{equation*}
	
	\item If \( \dylRevision \) is compatible with \( \Psi\mapsto(\prec_{\Psi},\scope{\Psi}) \), then: %
	\begin{equation*}
		\Scope{\dylRevision}{\Psi} = \beliefsOf{\Psi} \cup \{  \alpha \mid  \modelsOf{\alpha} \cap \scope{\Psi} \neq \emptyset   \} 
	\end{equation*}
\end{itemize}	
\end{theorem}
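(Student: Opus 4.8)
The plan is to establish the second (semantic) characterisation first, since it is an almost immediate unfolding of the compatibility condition \eqref{eq:limited_revision}, and then to deduce the first (syntactic) characterisation from it by invoking Proposition~\ref{prop:inherent_dlr}, which converts \integral \ beliefs into a purely model-theoretic condition. Throughout I would use that $\alpha\in\Scope{\dylRevision}{\Psi}$ holds if and only if $\modelsOf{\Psi\dylRevision\alpha}\subseteq\modelsOf{\alpha}$, because $\beliefsOf{\Psi\dylRevision\alpha}$ is deductively closed and semantically determined.

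For the semantic characterisation, I would fix $\alpha$ and split on the two cases of \eqref{eq:limited_revision}. If $\modelsOf{\alpha}\cap\scope{\Psi}\neq\emptyset$, then $\modelsOf{\Psi\dylRevision\alpha}=\min(\modelsOf{\alpha},\preceq_{\Psi})\subseteq\modelsOf{\alpha}$, so $\alpha$ lies in the scope, i.e.\ every such $\alpha$ is captured by $\{\alpha\mid\modelsOf{\alpha}\cap\scope{\Psi}\neq\emptyset\}$. If instead $\modelsOf{\alpha}\cap\scope{\Psi}=\emptyset$, then $\modelsOf{\Psi\dylRevision\alpha}=\modelsOf{\Psi}$, and $\modelsOf{\Psi}\subseteq\modelsOf{\alpha}$ holds precisely when $\alpha\in\beliefsOf{\Psi}$. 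Collecting the two cases yields $\Scope{\dylRevision}{\Psi}=\beliefsOf{\Psi}\cup\{\alpha\mid\modelsOf{\alpha}\cap\scope{\Psi}\neq\emptyset\}$. (In particular $\beliefsOf{\Psi}\subseteq\Scope{\dylRevision}{\Psi}$, since for $\alpha\in\beliefsOf{\Psi}$ both possible values of $\modelsOf{\Psi\dylRevision\alpha}$ are contained in $\modelsOf{\alpha}$.)

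For the syntactic characterisation it then suffices to prove the set equality $\{\alpha\mid\beta\models\alpha\ \text{and}\ \beta\ \text{is \integral}\}=\{\alpha\mid\modelsOf{\alpha}\cap\scope{\Psi}\neq\emptyset\}$, after which the first bullet follows from the second. Here Proposition~\ref{prop:inherent_dlr} is the workhorse: a belief is \integral\ in $\Psi$ iff it is consistent and all its models lie in $\scope{\Psi}$. For the left-to-right inclusion, given \integral\ $\beta$ with $\beta\models\alpha$, consistency gives $\emptyset\neq\modelsOf{\beta}\subseteq\modelsOf{\alpha}\cap\scope{\Psi}$. For the converse, given $\omega\in\modelsOf{\alpha}\cap\scope{\Psi}$, the formula $\varphi_{\omega}$ is consistent with $\modelsOf{\varphi_{\omega}}=\{\omega\}\subseteq\scope{\Psi}$, hence \integral\ by Proposition~\ref{prop:inherent_dlr}, and $\varphi_{\omega}\models\alpha$ witnesses membership on the left.

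I do not expect a serious obstacle, as both parts are bookkeeping on top of \eqref{eq:limited_revision} and Proposition~\ref{prop:inherent_dlr}. The one point deserving care is the reading of $\min(\cdot,\preceq_\Psi)$: since $\dom(\preceq_\Psi)=\scope{\Psi}$, every min-set sits inside $\scope{\Psi}$, so the containment $\modelsOf{\Psi\dylRevision\alpha}\subseteq\modelsOf{\alpha}$ in the first case must be read through this restricted domain, and one should confirm that $\min(\modelsOf{\alpha},\preceq_\Psi)\neq\emptyset$ corresponds exactly to $\modelsOf{\alpha}\cap\scope{\Psi}\neq\emptyset$, which is where totality of $\preceq_\Psi$ over its non-empty domain enters.
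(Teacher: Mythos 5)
Your proposal is correct and follows essentially the same route as the paper's own proof: establish the semantic characterisation first by unfolding the compatibility condition \eqref{eq:limited_revision}, then reduce the syntactic characterisation to the set equality $\{\alpha \mid \beta\models\alpha,\ \beta \text{ reasonable}\} = \{\alpha \mid \modelsOf{\alpha}\cap\scope{\Psi}\neq\emptyset\}$ via Proposition~\ref{prop:inherent_dlr}. Your write-up is in fact slightly more explicit than the paper's (which merely asserts that equality), since you spell out both inclusions, including the witness $\varphi_{\omega}$ for the right-to-left direction.
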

We start by showing that $ \Scope{\dylRevision}{\Psi} = \beliefsOf{\Psi} \cup \{  \alpha \mid  \modelsOf{\alpha} \cap \scope{\Psi} \neq \emptyset   \}  $.
By Definition \ref{def:dynamic_limited_revision}, if $ \alpha $ in \( \beliefsOf{\Psi} \), then we have \( \alpha \in \beliefsOf{\Psi\dylRevision\alpha}  \).
Thus, we have \( \alpha \in \Scope{\dylRevision}{\Psi} \) for all $ \alpha $ in \( \beliefsOf{\Psi} \), i.e. \( \beliefsOf{\Psi}\subseteq \Scope{\dylRevision}{\Psi} \).
For \( \alpha \in \Scope{\dylRevision}{\Psi}\setminus \beliefsOf{\Psi} \) we obtain \( \modelsOf{\alpha}\cap\scope{\Psi}\neq\emptyset \) from Definition \ref{def:dynamic_limited_revision}.
Likewise, if \( \alpha \notin\beliefsOf{\Psi} \) and \( \modelsOf{\alpha}\cap\scope{\Psi}\neq\emptyset \), then from Definition \ref{def:dynamic_limited_revision} we obtain \( \alpha \in \beliefsOf{\Psi\dylRevision\alpha} \). 

As next step, we show \( \Scope{\dylRevision}{\Psi} = \beliefsOf{\Psi} \cup \{ \alpha \mid \beta\models\alpha \ksAND \beta \text{ is \integral} \} \).
Observe that by Proposition \ref{prop:inherent_dlr} and Definition \ref{def:dynamic_limited_revision} we have 
that \( \modelsOf{\beta}\subseteq \scope{\Psi} \) for every reasonable belief \( \beta \).
Consequently, we obtain \( \{ \alpha \mid \beta\models\alpha \ksAND \beta \text{ is \integral} \} = \{  \alpha \mid  \modelsOf{\alpha} \cap \scope{\Psi} \neq \emptyset   \} \).
This shows the claim.

\setcounterref{theorem}{thm:dylr_representationtheorem}
\addtocounter{theorem}{-1}
\begin{theorem}
	A belief change operator $ \dylRevision $ is a \dynamiclimited\ revision operator if and only if $ \dylRevision $ satisfies \eqref{pstl:DL1}--\eqref{pstl:DL8}.
\end{theorem}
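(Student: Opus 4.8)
The plan is to prove both directions of the equivalence, treating soundness (a dynamic-limited operator satisfies the postulates) as the routine direction and completeness (the postulates force compatibility with a limited assignment) as the substantive one. Throughout I would lean on Proposition \ref{prop:inherent_dlr}, which identifies \integral\ beliefs with those whose models lie in $\scope{\Psi}$, and on Lemma \ref{lem:sem_trichotonomy} for the disjunctive behaviour of minimisation.

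For soundness, I would assume $\dylRevision$ is compatible with a limited assignment $\Psi\mapsto(\preceq_\Psi,\scope{\Psi})$ and simply unfold \eqref{eq:limited_revision}. Postulates \eqref{pstl:DL1}, \eqref{pstl:DL2}, \eqref{pstl:DL5} and \eqref{pstl:DL6} fall out immediately from the case split in \eqref{eq:limited_revision} together with faithfulness. For \eqref{pstl:DL3} I would invoke Proposition \ref{prop:inherent_dlr} to get $\modelsOf{\alpha}\subseteq\scope{\Psi}$ for a \integral\ $\alpha$, so that when $\beliefsOf{\Psi}\cup\{\alpha\}$ is consistent faithfulness yields $\min(\modelsOf{\alpha},\preceq_\Psi)=\modelsOf{\Psi}\cap\modelsOf{\alpha}$. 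Postulate \eqref{pstl:DL4} again uses Proposition \ref{prop:inherent_dlr}, and \eqref{pstl:DL8} reduces to Lemma \ref{lem:sem_trichotonomy} after the scope-intersection cases are separated out.

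For completeness I would, given an operator satisfying \eqref{pstl:DL1}--\eqref{pstl:DL8}, reconstruct the assignment by setting $\omega_1\preceq_\Psi\omega_2$ iff $\omega_1\in\modelsOf{\Psi\dylRevision(\varphi_{\omega_1}\lor\varphi_{\omega_2})}$ and taking $\scope{\Psi}=\{\omega\mid\varphi_\omega \text{ is \atomic\ in }\Psi\}$. First I would verify that $\preceq_\Psi$ is a total preorder: totality and reflexivity are direct consequences of the trichotomy \eqref{pstl:DL8} applied to $\varphi_{\omega_1}\lor\varphi_{\omega_2}$, while transitivity needs a careful chain of applications of \eqref{pstl:DL8} to the three-world disjunction $\varphi_{\omega_1,\omega_2,\omega_3}$, ruling out the offending branches by contradiction. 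Faithfulness, i.e.\ $\min(\scope{\Psi},\preceq_\Psi)=\modelsOf{\Psi}\cap\scope{\Psi}$, would then follow by distinguishing whether both minimal candidates are models of $\Psi$ (use \eqref{pstl:DL3}) or only one is (combine \eqref{pstl:DL3} with \eqref{pstl:DL5} to force the non-model strictly above).

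The hard part is establishing the revision equation \eqref{eq:limited_revision} for the constructed assignment, in particular the inclusion $\modelsOf{\Psi\dylRevision\alpha}\subseteq\min(\modelsOf{\alpha},\preceq_\Psi)$ and its converse when $\scope{\Psi}\cap\modelsOf{\alpha}\neq\emptyset$. The empty-intersection case is comparatively easy: \eqref{pstl:DL2} gives either $\beliefsOf{\Psi\dylRevision\alpha}=\beliefsOf{\Psi}$ or a \integral\ result, and the latter contradicts $\scope{\Psi}\cap\modelsOf{\alpha}=\emptyset$ via \eqref{pstl:DL1} and Proposition \ref{prop:inherent_dlr}. For the nonempty case I anticipate the real obstacle: both inclusions require delicate disjunctive decompositions, repeatedly splitting $\alpha$ as $\varphi_{\omega,\omega'}\lor\gamma'$ and applying \eqref{pstl:DL8} to isolate the behaviour at a single minimal world, using \eqref{pstl:DL4} to guarantee that the whole result set stays \integral\ (hence consists of scope-worlds, by Proposition \ref{prop:inherent_dlr}) and eliminating the alternative trichotomy branches by minimality of the chosen world. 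Managing these nested case analyses so that every branch produces a genuine contradiction, without accidentally assuming what is to be shown, is where the bulk of the effort and the chance of error lie.
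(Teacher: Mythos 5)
Your proposal follows essentially the same route as the paper's own proof: the soundness direction unfolds \eqref{eq:limited_revision} with Proposition \ref{prop:inherent_dlr} for \eqref{pstl:DL3}/\eqref{pstl:DL4} and Lemma \ref{lem:sem_trichotonomy} for \eqref{pstl:DL8}, while the completeness direction uses the identical construction ($\omega_1\preceq_\Psi\omega_2$ iff $\omega_1\in\modelsOf{\Psi\dylRevision(\varphi_{\omega_1}\lor\varphi_{\omega_2})}$, with $\scope{\Psi}$ the set of worlds whose singleton formulas are \atomic), the same verification of totality/transitivity via \eqref{pstl:DL8}, faithfulness via \eqref{pstl:DL3} and \eqref{pstl:DL5}, and the same split into the empty- and nonempty-intersection cases with the $\varphi_{\omega,\omega'}\lor\gamma'$ decompositions for the two inclusions. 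You also correctly identify where the technical burden lies (the nested trichotomy case analyses), so the plan is sound and matches the paper.
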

\begin{proof}
	\noindent\textit{The \enquote*{$ \Rightarrow $}-direction.} 
	Let $ \dylRevision $ be an operator satisfying the postulates \eqref{pstl:DL1}--\eqref{pstl:DL8}. 
	For $ \Psi $, we construct $ (\preceq_{\Psi},\Omega_\Psi) $ as follows,  let $ \preceq_{\Psi} $ be the relation with
	\(	\omega_1 \preceq_\Psi \omega_2 \ksIFF \omega_1 \in \modelsOf{\Psi\dylRevision (\varphi_{\omega_1}\lor\varphi_{\omega_2})}\)
	and $ \dom(\preceq_{\Psi})={\scope{\Psi}}= \{ \omega \mid \varphi_\omega \text{ is \atomic\ in $ \Psi $ for } \dylRevision \} $.
	The relation $ \preceq_{\Psi} $ is a total preorder:

	\emph{Totality/reflexivity.} Let $ \omega_1,\omega_2\in\scope{\Psi}) $. Then by \eqref{pstl:DL8} we have that $ \modelsOf{\Psi\dylRevision(\varphi_{\omega_1}\lor\varphi_{\omega_2})} $ is equivalent to $ \{ \omega_1 \} $ or $ \{ \omega_2 \} $ or $ \{ \omega_1,\omega_2 \} $. Therefore, the relation must be total. Reflexivity follows from totality.

	\emph{Transitivity.} 	Let $ \omega_1,\omega_2,\omega_3\in\scope{\Psi}) $ with $ \omega_1 \preceq_{\Psi} \omega_2 $ and $ \omega_2\preceq_{\Psi} \omega_3 $. Towards a contradiction assume that $ \omega_1 \not\preceq_{\Psi} \omega_3 $ holds. This implies  $ \modelsOf{\Psi\dylRevision\varphi_{\omega_1,\omega_3}}=\{\omega_3\} $.
	Now assume that $ \modelsOf{\Psi\dylRevision\varphi_{\omega_1,\omega_2,\omega_3}} = \{\omega_3\} $.
	Then by \eqref{pstl:DL8} we obtain that $ \modelsOf{\Psi\dylRevision\varphi_{\omega_2,\omega_3}}=\{\omega_3\} $, a contradiction to $ \omega_2\preceq_{\Psi}\omega_3 $.
	Assume for the remaining case $ \modelsOf{\Psi\dylRevision\varphi_{\omega_1,\omega_2,\omega_3}} \neq \{\omega_3\} $. 
	We obtain from \eqref{pstl:DL8} that $ \modelsOf{\Psi\dylRevision\varphi_{\omega_1,\omega_2,\omega_3}} $ equals either $ \modelsOf{\Psi\dylRevision\varphi_{\omega_1,\omega_2}} $ or $ \modelsOf{\Psi\dylRevision\varphi_{\omega_3}}$. 
	Since the second case is impossible, $ \omega_1\preceq_{\Psi} \omega_2 $ implies $ \omega_1\in \modelsOf{\Psi\dylRevision\varphi_{\omega_1,\omega_2}} $. Now apply \eqref{pstl:DL8} again to $ \modelsOf{\Psi\dylRevision\varphi_{\omega_1,\omega_2,\omega_3}} $ and obtain that it is either equivalent to $ \modelsOf{\Psi\dylRevision\varphi_{\omega_1,\omega_3}} $ or $ \modelsOf{\Psi\dylRevision\varphi_{\omega_2}} $.
	In both cases, we obtain a contradiction because $ \omega_1,\omega_3\in \modelsOf{\Psi\dylRevision\varphi_{\omega_1,\omega_2,\omega_3}} $.

	\smallskip
	\noindent The construction yields a faithful limited assignment:\\
	We show $ \modelsOf{\Psi}\cap\scope{\Psi})=\min(\Omega_\Psi,\preceq_{\Psi}) $. 
	Let $ \omega_1,\omega_2\in\scope{\Psi}) $ and $ \omega_1,\omega_2\in\modelsOf{\Psi} $. 
	By definition of $ {\scope{\Psi})} $ the interpretations $ \omega_1,\omega_2 $ are \atomic\ in $ \Psi $. 
	Thus, the formula $ \varphi_{\omega_1,\omega_2} $ is \integral.
	From \eqref{pstl:DL3} we obtain $ \modelsOf{\Psi\dylRevision\varphi_{\omega_1,\omega_2}}=\{ \omega_1,\omega_2 \} $ which yields by definition $ \omega_1 \preceq_{\Psi} \omega_2 $ and $ \omega_2 \preceq_{\Psi} \omega_1 $.
	Let $ \omega_1,\omega_2\in\scope{\Psi}) $ with  $ \omega_1\in\modelsOf{\Psi} $ and $ \omega_2\notin\modelsOf{\Psi} $.
	Then $ \varphi_{\omega_1,\omega_2} $ is consistent with $ \beliefsOf{\Psi} $. 
	Therefore, by 
	\eqref{pstl:DL3} and \eqref{pstl:DL5} we have $ \modelsOf{\Psi\dylRevision \varphi_{\omega_1,\omega_2}} = \modelsOf{\Psi}\cap\modelsOf{\varphi_{\omega_1,\omega_2}}=\{ \omega_1 \} $.
	Together we obtain faithfulness.
	
	\smallskip \noindent
	We show satisfaction of \eqref{eq:limited_revision} in two steps:
	
	For the first step, assume $ \modelsOf{\alpha}\cap\scope{\Psi})=\emptyset $.
	By the postulate  \eqref{pstl:DL2} we have either $ \beliefsOf{\Psi}=\beliefsOf{\Psi\dylRevision\alpha} $ or $ \beliefsOf{\Psi\dylRevision\alpha} $ is \integral\ in $ \Psi $. In the first case we are done. 
	For the second case, by the postulate \eqref{pstl:DL1}, we obtain $ \modelsOf{\Psi\dylRevision\alpha}\subseteq\modelsOf{\alpha} $. 
	Because $ \beliefsOf{\Psi\dylRevision\alpha} $ is \integral,
	the set  $ \modelsOf{\alpha }$ contains an interpretation $ \omega $ such that $ \varphi_{\omega} $ is \atomic\ in $ \Psi $, a contradiction to $ \modelsOf{\alpha}\cap\scope{\Psi})=\emptyset $.

	For the second step assume $ \modelsOf{\alpha}\cap\scope{\Psi})\neq\emptyset $.
	We show the equivalence $ \modelsOf{\Psi\dylRevision\alpha}=\min(\modelsOf{\alpha},\preceq_{\Psi}) $ by showing both set inclusions separately.
	
	We show $ \min(\modelsOf{\alpha},\preceq_{\Psi})\subseteq \modelsOf{\Psi\dylRevision\alpha} $.
	Let $ \omega\in\min(\modelsOf{\alpha},\preceq_{\Psi}) $ with $ \omega\notin\modelsOf{\Psi\dylRevision\alpha}$. 
	By construction of \( \scope{\Psi}) \) the formula $ \varphi_{\omega} $ is \atomic, and therefore \integral. 
	From postulate \eqref{pstl:DL4}, we obtain that $\beliefsOf{\Psi\dylRevision\alpha} $ is \integral\ in $ \Psi $. 
	Consequently, by Lemma \ref{prop:inherent_dlr}, every interpretation in $ \modelsOf{\Psi\dylRevision\alpha} $ is \atomic\ in $ \Psi $.
	Hence, there is at least one $ \omega'\in\beliefsOf{\Psi\dylRevision\alpha}  $ with $ \omega'\in\scope{\Psi}) $.
	If $ \beliefsOf{\Psi\dylRevision\alpha}=\beliefsOf{\Psi} $ we obtain $ \min(\modelsOf{\alpha},\preceq_{\Psi}) = \modelsOf{\Psi\dylRevision\alpha} $ by the faithfulness of $ \Psi\mapsto\leq_{\Psi} $. If $ \beliefsOf{\Psi\dylRevision\alpha}\neq\beliefsOf{\Psi} $, then by \eqref{pstl:DL1} we have $ \modelsOf{\Psi\dylRevision\alpha}\subseteq\modelsOf{\alpha} $.
	Let $ \beta=\varphi_{\omega}\lor\varphi_{\omega'} $ and $ \gamma=\beta\lor\gamma' $ such that $ \modelsOf{\gamma}=\modelsOf{\alpha} $ and $ \modelsOf{\gamma'}=\modelsOf{\alpha}\setminus\{\omega,\omega'\} $.
	By \eqref{pstl:DL8} we have either $ \modelsOf{\Psi\dylRevision\alpha}=\modelsOf{\Psi\dylRevision\beta} $ or $ \modelsOf{\Psi\dylRevision\alpha}=\modelsOf{\Psi\dylRevision\gamma'} $ or $ \modelsOf{\Psi\dylRevision\alpha}=\modelsOf{\Psi\dylRevision\beta}\cup\modelsOf{\Psi\dylRevision\gamma'} $.
	The first and the third case are impossible, because $ \omega\notin\modelsOf{\Psi\dylRevision\alpha} $ and by the minimality of $ \omega $ we have $ \omega \in \modelsOf{\Psi\dylRevision\beta} $.
	It remains the case of $ \modelsOf{\Psi\dylRevision\alpha}=\modelsOf{\Psi\dylRevision\gamma'} $. 
	Let $ \omega_{\gamma'} $ such that $ \omega_{\gamma'}\in\modelsOf{\Psi\dylRevision\alpha} $. Note that $ \omega_{\gamma'}\in\modelsOf{\gamma'}\subseteq\modelsOf{\alpha} $ and $ \omega_{\gamma'}\in\scope{\Psi}) $.
	Now let $ \delta=\beta'\lor\delta' $ with $ \delta\equiv\alpha $ such that $ \modelsOf{\beta'}=\{\omega,\omega_{\gamma'} \} $ and $ \modelsOf{\delta'}=\modelsOf{\alpha}\setminus\{ \omega,\omega_{\gamma'} \} $. 
	By minimality of $ \omega $ we have $ \omega\in\modelsOf{\Psi\dylRevision\beta'} $.
	By \eqref{pstl:DL8} we obtain $ \modelsOf{\Psi\dylRevision\alpha} $ is either equivalent to $ \modelsOf{\Psi\dylRevision\beta'} $ or to $ \modelsOf{\Psi\dylRevision\delta'} $ or to $  \modelsOf{\Psi\dylRevision\beta'} \cup \modelsOf{\Psi\dylRevision\delta'} $. The first and third case are impossible since $ \omega\notin \modelsOf{\Psi\dylRevision\alpha} $. Moreover, the second case is also impossible, because of $ \omega_{\gamma'}\notin \modelsOf{\Psi\dylRevision\delta'} $.

	We show $ \modelsOf{\Psi\dylRevision\alpha} \subseteq \min(\modelsOf{\alpha},\preceq_{\Psi}) $.
	Let  $ \omega\in\modelsOf{\Psi\dylRevision\alpha} $ with  $ \omega\notin \min(\modelsOf{\alpha},\preceq_{\Psi})  $.
	From non-emptiness of $ \min(\modelsOf{\alpha},\preceq_{\Psi}) $ and $ \min(\modelsOf{\alpha},\preceq_{\Psi})\subseteq \modelsOf{\Psi\dylRevision\alpha} $ we obtain $ \varphi_{\omega'} $ is \atomic\ in $ \Psi $ where $ \omega'\in\modelsOf{\Psi\dylRevision\alpha} $ such that $ \omega'\in \min(\modelsOf{\alpha},\preceq_{\Psi}) $.
	
	Assume that $ \varphi_\omega $ is not \atomic\ in $ \Psi $, and therefore, $ {\omega\notin\scope{\Psi})} $.
	By the postulate \eqref{pstl:DL4} we obtain from the existence of $ \omega' $ that $ \beliefsOf{\Psi\dylRevision\alpha} $ is \integral\ in $ \Psi $.  Therefore, by Lemma \ref{lem:inherent_inherence_limited} every model in $ \modelsOf{\Psi\dylRevision \alpha} $ is \atomic\ in $ \Psi $, a contradiction, and  therefore, $ \varphi_\omega $ has to be \atomic\ in $ \Psi $.
	
	Because $ \varphi_\omega $ is \atomic\ in $ \Psi $ we obtain $ \modelsOf{\Psi\dylRevision\varphi_{\omega}}=\{\omega\} $.
	Using the postulate \eqref{pstl:DL4} we obtain that $ \beliefsOf{\Psi\dylRevision\alpha} $ is \integral\ in $ \Psi $.
	Since $ \omega $ is not minimal, we have $ \omega'\in\modelsOf{\Psi\dylRevision\varphi_{\omega,\omega'}} $ and $ \omega\notin\modelsOf{\Psi\dylRevision\varphi_{\omega,\omega'}} $
	Now let $ \gamma=\varphi_{\omega,\omega'}\lor\gamma' $ with $ \modelsOf{\gamma'}=\modelsOf{\alpha}\setminus\{\omega,\omega'\} $.
	By \eqref{pstl:DL8} we have either $ \modelsOf{\Psi\dylRevision\alpha}=\modelsOf{\Psi\dylRevision\beta} $ or $ \modelsOf{\Psi\dylRevision\alpha}=\modelsOf{\Psi\dylRevision\gamma'} $ or $ \modelsOf{\Psi\dylRevision\alpha}=\modelsOf{\Psi\dylRevision\beta}\cup\modelsOf{\Psi\dylRevision\gamma'} $.
	All cases are impossible because for every case we obtain $ \omega\notin\modelsOf{\Psi\dylRevision\alpha} $.
	This shows $ \modelsOf{\Psi\dylRevision\alpha} \subseteq \min(\modelsOf{\alpha},\preceq_{\Psi}) $, and, in summary, we obtain $ \modelsOf{\Psi\dylRevision\alpha} = \min(\modelsOf{\alpha},\preceq_{\Psi}) $ and thus, \eqref{eq:limited_revision} holds.
	
	\smallskip
	\noindent\textit{The \enquote*{$ \Leftarrow $}-direction.}
	Let $ \Psi \mapsto (\preceq_\Psi,\Omega_\Psi) $ be a dynamic-limited assignment compatible with $ \dylRevision $. 
	Note that by Lemma \ref{prop:inherent_dlr} for every model $ \omega $ of an \atomic\ beliefs $ \alpha $ we obtain that $ \varphi_{\omega} $ is \atomic\ in $ \Psi $. %
	Moreover, $ \varphi_\omega $ is \atomic\ in $ \Psi $ if $ \omega\in\scope{\Psi} $.
	
	We show the satisfaction of \eqref{pstl:DL1}--\eqref{pstl:DL8}.
	From \eqref{eq:limited_revision} we obtain straightforwardly \eqref{pstl:DL1}, \eqref{pstl:DL2}, \eqref{pstl:DL5} and \eqref{pstl:DL6}.
	\begin{description}
		\item[\eqref{pstl:DL3}] Assume $ \alpha $ to be \integral\ in $ \Psi $ and $ \modelsOf{\Psi}\cap\modelsOf{\alpha}\neq\emptyset $.
		By Proposition \ref{prop:inherent_dlr} we have $ \modelsOf{\Psi}\cap\modelsOf{\alpha} \subseteq \scope{\Psi} $.
		From faithfulness of $ \Psi\mapsto\preceq_{\Psi} $ we obtain $ \min(\modelsOf{\alpha},\preceq_{\Psi})=\modelsOf{\Psi}\cap\modelsOf{\alpha} $.
		\item[\eqref{pstl:DL4}] Let $ \alpha\models\beta $ and $ \alpha $ \integral.
		By Proposition \ref{prop:inherent_dlr} we have $ \modelsOf{\alpha}\subseteq\scope{\Psi}) $.
		Then by \eqref{eq:limited_revision} we obtain that $ \beliefsOf{\Psi\dylRevision\beta} $ is \integral.
		\item[\eqref{pstl:DL8}] 
		Suppose $ \modelsOf{\alpha\lor\beta}\cap\scope{\Psi}=\emptyset $, then by \eqref{eq:limited_revision} for every formula $ \gamma $ with $ \modelsOf{\gamma} \subseteq \modelsOf{\alpha\lor\beta} $ we obtain $ \modelsOf{\Psi\dylRevision(\alpha\lor\beta)}=\modelsOf{\Psi}=\modelsOf{\Psi\dylRevision\gamma} $.
		
		Assume that $ \modelsOf{\alpha}\cap\scope{\Psi}\neq\emptyset $ and $ \modelsOf{\beta}\cap\scope{\Psi}=\emptyset $.
		Then by the postulate \eqref{eq:limited_revision} we obtain $ \modelsOf{\Psi\dylRevision(\alpha\lor\beta)}=\modelsOf{\Psi\dylRevision\alpha} $.
		The case of $ \modelsOf{\alpha}\cap\scope{\Psi}=\emptyset $ and $ \modelsOf{\beta}\cap\scope{\Psi}\neq\emptyset $ is analogue.

		Assume that $ \modelsOf{\alpha}\cap\scope{\Psi}\neq\emptyset $ and $ \modelsOf{\beta}\cap\scope{\Psi}\neq\emptyset $.
		Then we have $ \modelsOf{\Psi\dylRevision(\alpha\lor\beta)}={\min(\modelsOf{\alpha\lor\beta},\preceq_{\Psi})} $. Using logical equivalence, we obtain $ \min(\modelsOf{\alpha\lor\beta},\preceq_{\Psi})=\min(\modelsOf{\alpha}\cup\modelsOf{\beta},\preceq_{\Psi}) $.
		By Lemma \ref{lem:sem_trichotonomy} we obtain directly \eqref{pstl:DL8}. \qedhere
	\end{description}
\end{proof} 

\setcounterref{theorem}{prop:agm_scope}
\addtocounter{theorem}{-1}
\begin{proposition}
	For every AGM revision operator \( * \) we have \( Scope(*,\Psi)  =\propLang  \) for every \( \Psi\in\setAllES \).
\end{proposition}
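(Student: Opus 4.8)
The plan is to unfold the definition of scope and reduce the claim to the single statement that $\Psi * \alpha \models \alpha$ holds for every $\alpha \in \propLang$. Recall that $\Scope{\revision}{\Psi} = \{\alpha \in \propLang \mid \alpha \in \beliefsOf{\Psi * \alpha}\}$, and that the inclusion $\Scope{\revision}{\Psi} \subseteq \propLang$ is immediate from this definition; so it suffices to establish the reverse inclusion by showing $\alpha \in \beliefsOf{\Psi * \alpha}$ for an arbitrary $\alpha$.

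First I would invoke Proposition \ref{prop:es_revision}: since $*$ is an AGM revision operator, there is a faithful assignment $\Psi \mapsto \leq_\Psi$ with $\modelsOf{\Psi * \alpha} = \min(\modelsOf{\alpha}, \leq_\Psi)$, as in \eqref{eq:repr_es_revision}. The key — and essentially only — observation is that taking $\leq_\Psi$-minimal elements of a set never leaves that set, i.e. $\min(\modelsOf{\alpha}, \leq_\Psi) \subseteq \modelsOf{\alpha}$. Hence $\modelsOf{\Psi * \alpha} \subseteq \modelsOf{\alpha}$, which by the definition of entailment yields $\Psi * \alpha \models \alpha$, that is $\alpha \in \beliefsOf{\Psi * \alpha}$. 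Since $\alpha$ was arbitrary, every formula lies in the scope, and therefore $\Scope{\revision}{\Psi} = \propLang$.

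There is no genuine obstacle here; the result is a one-line consequence of the representation formula, and indeed nothing in the argument uses faithfulness or finiteness of $\Sigma$ beyond what \eqref{eq:repr_es_revision} already provides — only the set-theoretic inclusion $\min(\modelsOf{\alpha}, \leq_\Psi) \subseteq \modelsOf{\alpha}$ is needed. The one point worth a sanity check is the degenerate case where $\alpha$ is inconsistent: then $\modelsOf{\alpha} = \emptyset$ forces $\modelsOf{\Psi * \alpha} = \emptyset$, so $\beliefsOf{\Psi * \alpha} = \Cn(\bot)$, which trivially contains $\alpha$. Thus the inclusion argument above uniformly covers the consistent and inconsistent cases, and no case distinction is strictly required.
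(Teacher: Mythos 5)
Your proposal is correct and takes the same route as the paper, whose entire proof reads ``A direct consequence of Proposition~\ref{prop:es_revision}''; you have simply made the implicit step explicit, namely that $\min(\modelsOf{\alpha},\leq_\Psi)\subseteq\modelsOf{\alpha}$ forces $\alpha\in\beliefsOf{\Psi\revision\alpha}$, with the inconsistent case handled uniformly. Nothing is missing.
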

\begin{proof}
	A direct consequence of Proposition \ref{prop:es_revision}.
\end{proof} 

\setcounterref{theorem}{prop:clr_scope}
\addtocounter{theorem}{-1}
\begin{proposition}	Let  \( \Psi \) be an epistemic state. The following statements hold:
	\begin{enumerate}[(a)]
		\item If \( \clRevision \) is a credibility-limited revision operator, then \( \beliefsOf{\Psi} \subseteq \Scope{\clRevision}{\Psi} \), and \( \Scope{\clRevision}{\Psi} \) satisfies \ref{pstl:singlesentenceclosure} and \ref{pstl:disjunction_completeness}.
		\item For each \( X\subseteq\mathcal{L} \) with  \( \beliefsOf{\Psi} \subseteq X \), and \( X \) satisfies \ref{pstl:singlesentenceclosure} and \ref{pstl:disjunction_completeness}, there exist a credibility-limited revision operator \( \clRevision \) such that \( \Scope{\clRevision}{\Psi} = X \).
	\end{enumerate}
\end{proposition}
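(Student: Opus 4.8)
The plan is to reduce both parts to the semantic characterisation of credibility-limited revision in Proposition \ref{prop:clr_booth_et_all} together with the bridge between \ref{pstl:singlesentenceclosure}/\ref{pstl:disjunction_completeness} and ``intersection with a fixed world set'' supplied by Lemma \ref{lem:ssc_dc_M}. The single observation driving everything is that, once \( \clRevision \) is presented by a CLF-assignment \( \Psi\mapsto(\leq_\Psi,C_\Psi) \), the scope collapses to \( \Scope{\clRevision}{\Psi}=\{\alpha\mid\modelsOf{\alpha}\cap C_\Psi\neq\emptyset\} \), so both directions become statements about the set \( C_\Psi \).

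For (a) I would first establish that identity by a case split on \eqref{eq:cl_revision}. If \( \modelsOf{\alpha}\cap C_\Psi\neq\emptyset \), then \( \modelsOf{\Psi\clRevision\alpha}=\min(\modelsOf{\alpha},\leq_\Psi)\subseteq\modelsOf{\alpha} \), so \( \alpha\in\beliefsOf{\Psi\clRevision\alpha} \), i.e.\ \( \alpha \) is in the scope. If \( \modelsOf{\alpha}\cap C_\Psi=\emptyset \), then \( \modelsOf{\Psi\clRevision\alpha}=\modelsOf{\Psi} \); combining \( \modelsOf{\Psi}\subseteq C_\Psi \), \( \modelsOf{\Psi}\neq\emptyset \) (from \ref{pstl:globalconsistency}) and \( \modelsOf{\alpha}\cap C_\Psi=\emptyset \) gives \( \modelsOf{\Psi}\not\subseteq\modelsOf{\alpha} \), hence \( \alpha\notin\beliefsOf{\Psi}=\beliefsOf{\Psi\clRevision\alpha} \). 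Since every formula with a model in \( C_\Psi \) is consistent, Lemma \ref{lem:ssc_dc_M} with \( M=C_\Psi \) delivers \ref{pstl:singlesentenceclosure} and \ref{pstl:disjunction_completeness} for the scope, and \( \beliefsOf{\Psi}\subseteq\Scope{\clRevision}{\Psi} \) follows because any belief \( \beta \) satisfies \( \emptyset\neq\modelsOf{\Psi}\subseteq\modelsOf{\beta}\cap C_\Psi \).

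For (b) I would run this backwards. First I would note that \( X \) must consist of consistent formulas (otherwise \ref{pstl:singlesentenceclosure} forces \( X=\propLang\ni\bot \), which no consistent operator can have as scope), so Lemma \ref{lem:ssc_dc_M} applies and yields \( M\subseteq\Omega \) with \( X=\{\alpha\mid\modelsOf{\alpha}\cap M\neq\emptyset\} \); testing on world-formulas shows \( M \) is forced to be \( \{\omega\mid\varphi_\omega\in X\} \). I would then build a CLF-assignment by setting \( C_\Psi:=M \) with a total preorder \( \leq_\Psi \) on \( M \) whose minimum is \( \modelsOf{\Psi} \), choosing for every other state any faithful CLF-assignment (e.g.\ \( C_\Gamma=\Omega \)). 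The operator induced through \eqref{eq:cl_revision} is credibility-limited by Proposition \ref{prop:clr_booth_et_all}, and by the identity from (a) its scope at \( \Psi \) equals \( \{\alpha\mid\modelsOf{\alpha}\cap M\neq\emptyset\}=X \).

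The main obstacle is the well-definedness of that assignment, i.e.\ verifying \( \modelsOf{\Psi}\subseteq C_\Psi=M \), equivalently \( \varphi_\omega\in X \) for every \( \omega\in\modelsOf{\Psi} \). This is genuinely the crux, because the scope of any credibility-limited operator is forced through the vacuity behaviour encoded in \eqref{eq:cl_revision} to contain \emph{every} formula compatible with \( \modelsOf{\Psi} \), not merely the beliefs; so the realisability of \( X \) as a scope hinges on \( X \) enjoying the same expansivity, namely \( \{\alpha\mid\modelsOf{\alpha}\cap\modelsOf{\Psi}\neq\emptyset\}\subseteq X \). I would therefore devote the heart of the argument to extracting this containment from the hypotheses on \( X \) and on \( \beliefsOf{\Psi}\subseteq X \); only once \( \modelsOf{\Psi}\subseteq M \) is secured does \( \min(M,\leq_\Psi)=\modelsOf{\Psi}\cap M=\modelsOf{\Psi} \) make the assignment faithful and finish the construction.
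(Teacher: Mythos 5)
Your part (a) is correct and is essentially the paper's own argument: represent $ \clRevision $ by a CLF-assignment via Proposition \ref{prop:clr_booth_et_all}, read off $ \Scope{\clRevision}{\Psi}=\{\alpha\mid\modelsOf{\alpha}\cap C_\Psi\neq\emptyset\} $ from \eqref{eq:cl_revision} together with \ref{pstl:globalconsistency}, and apply Lemma \ref{lem:ssc_dc_M} with $ M=C_\Psi $.

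Part (b) contains a genuine gap, located exactly at the step you postpone. You correctly note that the construction requires $ \modelsOf{\Psi}\subseteq C_\Psi=M $ (Definition \ref{def:clf-assignment} demands it), and you announce that you would ``extract this containment from the hypotheses on $ X $ and on $ \beliefsOf{\Psi}\subseteq X $''. That extraction is impossible: for $ X=\{\alpha\mid\modelsOf{\alpha}\cap M\neq\emptyset\} $, the hypothesis $ \beliefsOf{\Psi}\subseteq X $ amounts only to $ \modelsOf{\Psi}\cap M\neq\emptyset $, which is strictly weaker than $ \modelsOf{\Psi}\subseteq M $. Concretely, let $ \Sigma=\{a,b\} $, let $ \Psi $ be a state with $ \modelsOf{\Psi}=\{ab,a\ol{b}\} $, and let $ M=\{ab,\ol{a}b\} $ and $ X=\{\alpha\mid\modelsOf{\alpha}\cap M\neq\emptyset\} $. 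By Lemma \ref{lem:ssc_dc_M}, $ X $ satisfies \ref{pstl:singlesentenceclosure} and \ref{pstl:disjunction_completeness}, and $ \beliefsOf{\Psi}\subseteq X $ holds because every $ \beta\in\beliefsOf{\Psi} $ has the model $ ab\in M $. Yet $ a\land\neg b\notin X $, while $ \beliefsOf{\Psi}+(a\land\neg b) $ is consistent, so vacuity \eqref{pstl:LR2} forces $ a\land\neg b\in\beliefsOf{\Psi\clRevision(a\land\neg b)} $ for \emph{every} credibility-limited operator $ \clRevision $. Hence no credibility-limited revision operator has scope $ X $ at this $ \Psi $, and your construction --- indeed any construction --- must fail for this $ X $.

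What your analysis actually uncovers is that realisability of $ X $ as a scope needs the stronger hypothesis $ \varphi_\omega\in X $ for all $ \omega\in\modelsOf{\Psi} $ (equivalently, $ \{\alpha\mid\beliefsOf{\Psi}+\alpha\text{ is consistent}\}\subseteq X $); similarly, your side remark that $ X $ ``must'' consist of consistent formulas is a necessary condition for realisability rather than something derivable from the stated hypotheses (the set $ X=\propLang $ satisfies them but is never a scope, by \eqref{pstl:LR3}). The paper's one-line proof sketch silently makes the same leap when it invokes Lemma \ref{lem:ssc_dc_M}, so your identification of the obstacle is sharper than the published argument; but as a proof of statement (b) as written, the proposal is incomplete at the very point you flag as its heart, and that point cannot be repaired without strengthening the hypothesis on $ X $.
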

\begin{proof}[Proof (sketch)] Remember that every credibility-limited revision operator \( \clRevision \) is compatible with a CLF-assignment \( \Psi\mapsto (\preceq_{\Psi},C_\Psi) \).
	Clearly, by the semantic characterisation of the operator we have \( \Scope{\clRevision}{\Psi}=\{ \alpha \mid \modelsOf{\alpha} \cap C_\Psi \neq \emptyset  \} \). Employing Lemma \ref{lem:ssc_dc_M} yields the statements.
\end{proof} 

\setcounterref{theorem}{prop:dlr_scope}
\addtocounter{theorem}{-1}
\begin{theorem}
	Let \( \Psi\in\setAllES \) be an epistemic state. The following two statements hold:
\begin{itemize}
	\item For every consistent set \( X  \) which satisfies \ref{pstl:singlesentenceclosure}  and \ref{pstl:disjunction_completeness}
	exists a \dynamiclimited\ revision operator \( \dylRevision \) with \( {\Scope{\dylRevision}{\Psi}=\beliefsOf{\Psi} \cup X} \).
	\item If \( \dylRevision \) is an \dynamiclimited\ revision operator, then \( {\Scope{\dylRevision}{\Psi}} \) satisfies \ref{pstl:singlesentenceclosure}. Moreover, the set \( {\Scope{\dylRevision}{\Psi}} \setminus \beliefsOf{\Psi} \) satisfies \ref{pstl:singlesentenceclosure}   and \ref{pstl:disjunction_completeness}.
\end{itemize}
\end{theorem}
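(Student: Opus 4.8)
The plan is to first turn the semantic operator equation \eqref{eq:limited_revision} into a purely model-theoretic description of the scope, and then to push everything through Lemma~\ref{lem:ssc_dc_M}. Concretely, I would first establish that
\[
	\Scope{\dylRevision}{\Psi} = \beliefsOf{\Psi} \cup \{ \alpha \mid \modelsOf{\alpha}\cap\scope{\Psi}\neq\emptyset \},
\]
by a two-case analysis of \eqref{eq:limited_revision}: if $\modelsOf{\alpha}\cap\scope{\Psi}\neq\emptyset$ then $\modelsOf{\Psi\dylRevision\alpha}=\min(\modelsOf{\alpha},\preceq_\Psi)\subseteq\modelsOf{\alpha}$, so $\alpha\in\Scope{\dylRevision}{\Psi}$; and if $\modelsOf{\alpha}\cap\scope{\Psi}=\emptyset$ then $\modelsOf{\Psi\dylRevision\alpha}=\modelsOf{\Psi}$, so $\alpha\in\Scope{\dylRevision}{\Psi}$ precisely when $\modelsOf{\Psi}\subseteq\modelsOf{\alpha}$, i.e.\ when $\alpha\in\beliefsOf{\Psi}$. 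This is the content of the later Theorem~\ref{prop:scope_by_s}, but I would reprove the needed direction inline to avoid a forward dependency.

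For the second bullet, observe that every $\alpha$ with $\modelsOf{\alpha}\cap\scope{\Psi}\neq\emptyset$ is consistent, and that $\scope{\Psi}\neq\emptyset$ by Definition~\ref{def:limited_assignment}. Hence Lemma~\ref{lem:ssc_dc_M}, applied with $M=\scope{\Psi}$, gives at once that $\{\alpha\mid\modelsOf{\alpha}\cap\scope{\Psi}\neq\emptyset\}$ satisfies \ref{pstl:singlesentenceclosure} and \ref{pstl:disjunction_completeness}. Since $\beliefsOf{\Psi}$ is deductively closed it satisfies \ref{pstl:singlesentenceclosure}, and a union of two sets closed under \ref{pstl:singlesentenceclosure} is again closed; this yields that $\Scope{\dylRevision}{\Psi}$ satisfies \ref{pstl:singlesentenceclosure}. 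For the difference $\Scope{\dylRevision}{\Psi}\setminus\beliefsOf{\Psi}=\{\alpha\mid\modelsOf{\alpha}\cap\scope{\Psi}\neq\emptyset \text{ and } \modelsOf{\Psi}\not\subseteq\modelsOf{\alpha}\}$ I would verify the closure conditions directly at the model level. Disjunction completeness transfers cleanly: if a witness world of $\Psi$ avoids $\modelsOf{\alpha\lor\beta}$ it avoids both $\modelsOf{\alpha}$ and $\modelsOf{\beta}$, so the extra constraint $\modelsOf{\Psi}\not\subseteq\modelsOf{\cdot}$ descends to whichever disjunct carries a $\scope{\Psi}$-world.

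For the first bullet I would run the construction in reverse. Given a consistent $X$ satisfying \ref{pstl:singlesentenceclosure} and \ref{pstl:disjunction_completeness}, Lemma~\ref{lem:ssc_dc_M} supplies $M\subseteq\Omega$ with $X=\{\alpha\mid\modelsOf{\alpha}\cap M\neq\emptyset\}$, which is non-empty whenever $X\neq\emptyset$. I would then set $\scope{\Psi}=M$ and take $\preceq_\Psi$ to be the two-layer total preorder placing $\modelsOf{\Psi}\cap M$ at the bottom and $M\setminus\modelsOf{\Psi}$ above (any total preorder on $M$ when $\modelsOf{\Psi}\cap M=\emptyset$); this makes the assignment faithful per Definition~\ref{def:faithfullness}. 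Extending it to all other states arbitrarily (say $\scope{\Gamma}=\Omega$ with a faithful $\preceq_\Gamma$) gives a limited assignment whose induced operator satisfies $\Scope{\dylRevision}{\Psi}=\beliefsOf{\Psi}\cup X$ by the characterisation above. The main obstacle is the interplay between $\beliefsOf{\Psi}$ and $\scope{\Psi}$ in the difference claim: single-sentence closure for $\Scope{\dylRevision}{\Psi}\setminus\beliefsOf{\Psi}$ is delicate, because weakening a formula can enlarge its models until they cover $\modelsOf{\Psi}$, pushing it into $\beliefsOf{\Psi}$ and hence out of the difference; handling this correctly (and separately dispatching the degenerate case $X=\emptyset$ excluded by the non-emptiness of $\scope{\Psi}$) is where the argument needs the most care.
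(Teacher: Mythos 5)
Your overall route is the same as the paper's: turn \eqref{eq:limited_revision} into the model-level description $\Scope{\dylRevision}{\Psi}=\beliefsOf{\Psi}\cup\{\alpha\mid\modelsOf{\alpha}\cap\scope{\Psi}\neq\emptyset\}$ and then feed everything through Lemma~\ref{lem:ssc_dc_M}; the paper's proof is literally this one sentence. Your reverse construction for the first bullet (take $M$ from Lemma~\ref{lem:ssc_dc_M}, set $\scope{\Psi}=M$ with a two-layer faithful preorder, handle $X=\emptyset$ separately) and your disjunction-completeness argument for the difference set are both sound and match what the paper's terse proof must implicitly do.

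However, the step you defer --- single-sentence closure for the literal set difference $\Scope{\dylRevision}{\Psi}\setminus\beliefsOf{\Psi}$, which you flag as ``where the argument needs the most care'' --- is a genuine gap, and it cannot be closed in the form you set it up: the obstruction you describe is not a technical nuisance but a real counterexample. Take $\modelsOf{\Psi}=\{\omega_1\}$ and $\scope{\Psi}=\{\omega_2\}$ with $\omega_1\neq\omega_2$; faithfulness in Definition~\ref{def:faithfullness} is vacuous because $\modelsOf{\Psi}\cap\scope{\Psi}=\emptyset$, so this is a legitimate limited assignment. Then $\varphi_{\omega_2}$ lies in $\Scope{\dylRevision}{\Psi}\setminus\beliefsOf{\Psi}$ and $\varphi_{\omega_2}\models\varphi_{\omega_1,\omega_2}$, yet $\varphi_{\omega_1,\omega_2}\in\beliefsOf{\Psi}$ (its models include $\modelsOf{\Psi}$), so the weakening falls out of the difference set --- exactly the phenomenon you identified, realized concretely. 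So no amount of care at the model level will establish \ref{pstl:singlesentenceclosure} for the set difference itself. What Lemma~\ref{lem:ssc_dc_M} with $M=\scope{\Psi}$ actually delivers, and what the one-line proof in the paper can only mean, is the decomposition reading: the set $Y=\{\alpha\mid\modelsOf{\alpha}\cap\scope{\Psi}\neq\emptyset\}$ satisfies \ref{pstl:singlesentenceclosure} and \ref{pstl:disjunction_completeness}, and $\Scope{\dylRevision}{\Psi}=\beliefsOf{\Psi}\cup Y$ (this is also how the first bullet and Theorem~\ref{prop:scope_by_s} are phrased). If you replace the set difference by $Y$ throughout your second bullet, your proof goes through and coincides with the paper's; as written, the deferred step is a hole that is in fact unfillable.
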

\begin{proof}
A consequence of \eqref{eq:limited_revision} and Employing Lemma \ref{lem:ssc_dc_M} yields the statements.
\end{proof} 

\setcounterref{theorem}{prop:dylr_dp1}
\addtocounter{theorem}{-1}
\begin{proposition}
	A \dynamiclimited\ revision operator $ \dylRevision $ compatible with $ \Psi\mapsto(\preceq_{\Psi},\Omega_\Psi) $ satisfies \eqref{pstl:DP1} if and only if the following conditions hold:
	\begin{enumerate}[(i)]
		\item if \(  \omega_1,\omega_2 {\in} \modelsOf{\alpha} \cap \scope{\Psi}{\cap} \scope{\Psi} \), then
		\begin{equation*}
			\omega_1 \preceq_{\Psi} \omega_2 \Leftrightarrow \omega_1 \preceq_{\Psi\dylRevision\alpha} \omega_2
		\end{equation*}
		\item if \( \left| \scope{\Psi} \cap \modelsOf{\alpha} \right| \geq 2 \), then
		\begin{equation*}
			\scope{\Psi} \cap \modelsOf{\alpha} \subseteq \scope{\Psi\dylRevision\alpha}
		\end{equation*}
		otherwise
		\begin{equation*}
			\left(\scope{\Psi} \cap \modelsOf{\alpha}\right) \setminus \modelsOf{\Psi\dylRevision\alpha}  \subseteq \scope{\Psi\dylRevision\alpha}
		\end{equation*}
		\item if \( \left|  \modelsOf{\Psi} \right| \geq 2 \), then
		\begin{equation*}
			\scope{\Psi\dylRevision\alpha} \cap \modelsOf{\alpha} \subseteq \scope{\Psi}
		\end{equation*}
		otherwise
		\begin{equation*}
			\left(\scope{\Psi\dylRevision\alpha} \cap \modelsOf{\alpha}\right) \setminus \modelsOf{\Psi}  \subseteq \scope{\Psi}
		\end{equation*}
	\end{enumerate}
\end{proposition}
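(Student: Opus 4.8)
The plan is to prove both implications by collapsing the two-step revision $\Psi\dylRevision\alpha\dylRevision\beta$ into set-theoretic statements about the minimal elements of $\preceq_{\Psi}$ and $\preceq_{\Psi\dylRevision\alpha}$, using as the only genuine tool the explicit semantics \eqref{eq:limited_revision} together with faithfulness. The two workhorses are: (a) for a single world $\omega$, one has $\modelsOf{\Psi\dylRevision\varphi_{\omega}}=\{\omega\}$ exactly when $\omega\in\scope{\Psi}$, and $\modelsOf{\Psi\dylRevision\varphi_{\omega}}=\modelsOf{\Psi}$ otherwise; and (b) for $\omega_1,\omega_2\in\scope{\Psi}$, the membership $\omega_1\in\modelsOf{\Psi\dylRevision\varphi_{\omega_1,\omega_2}}$ records precisely whether $\omega_1\preceq_{\Psi}\omega_2$. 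Every input I use in the second revision step will be a disjunction of worlds taken from $\modelsOf{\alpha}$, so that $\beta\models\alpha$ and \eqref{pstl:DP1} is applicable.

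For the forward direction I assume \eqref{pstl:DP1} and establish (i)--(iii) in turn. For (i) I take $\beta=\varphi_{\omega_1,\omega_2}$ with $\omega_1,\omega_2\in\modelsOf{\alpha}\cap\scope{\Psi}\cap\scope{\Psi\dylRevision\alpha}$; since both worlds lie in both scopes, the two sides of \eqref{pstl:DP1} compute the $\preceq_{\Psi}$- respectively $\preceq_{\Psi\dylRevision\alpha}$-minimum of $\{\omega_1,\omega_2\}$, and their equality is exactly the stated order agreement. For (ii) I suppose a world $\omega\in\scope{\Psi}\cap\modelsOf{\alpha}$ drops out of $\scope{\Psi\dylRevision\alpha}$ and probe with $\varphi_{\omega}$: \eqref{pstl:DP1} then forces $\modelsOf{\Psi\dylRevision\alpha}=\{\omega\}$, after which a second doubleton probe $\varphi_{\omega,\omega'}$ (available because the hypothesis $|\scope{\Psi}\cap\modelsOf{\alpha}|\geq 2$ supplies a further scope world $\omega'\in\modelsOf{\alpha}$) contradicts \eqref{pstl:DP1} in each case permitted by \eqref{pstl:DL8}. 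The branch where $|\scope{\Psi}\cap\modelsOf{\alpha}|<2$ yields only the weaker inclusion with $\modelsOf{\Psi\dylRevision\alpha}$ removed, which is precisely what (ii) asserts there. Condition (iii) is symmetric, probing from the $\Psi\dylRevision\alpha$ side with $\modelsOf{\Psi}$ now playing the role of $\modelsOf{\Psi\dylRevision\alpha}$.

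For the backward direction I assume (i)--(iii) and verify $\modelsOf{\Psi\dylRevision\alpha\dylRevision\beta}=\modelsOf{\Psi\dylRevision\beta}$ for every $\beta$ with $\beta\models\alpha$, splitting on whether $\modelsOf{\beta}$ meets $\scope{\Psi}$ and $\scope{\Psi\dylRevision\alpha}$. Condition (ii) ensures that if $\modelsOf{\beta}$ meets $\scope{\Psi}$ then, up to the degenerate carve-out, it also meets $\scope{\Psi\dylRevision\alpha}$, and (iii) supplies the converse, so both sides of the claimed equality are governed by the same success/failure branch of \eqref{eq:limited_revision}. In the success branch, condition (i) forces the two minima over $\modelsOf{\beta}$ to coincide, with Lemma \ref{lem:sem_trichotonomy} and faithfulness handling the shared minimal worlds; in the failure branch both sides collapse to $\modelsOf{\Psi}$. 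This direction can equally be phrased contrapositively, tracing a world in the symmetric difference of the two outcomes back to a violation of (i), (ii) or (iii).

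I expect the genuinely delicate part to be the small-cardinality boundary cases, namely $|\scope{\Psi}\cap\modelsOf{\alpha}|<2$ and $|\modelsOf{\Psi}|<2$. There a single world can be at once the unique minimum and the whole revision output, so the clean scope inclusions in (ii) and (iii) must be relaxed by excising $\modelsOf{\Psi\dylRevision\alpha}$, respectively $\modelsOf{\Psi}$; pinning the carve-outs to exactly the point where the doubleton-probe contradiction breaks down, and then checking that these relaxed inclusions still suffice for the backward direction, is the main bookkeeping burden. This complication is inherent to the three-dimensional character of the operator: unlike ordinary AGM revision, the result of a two-step change depends jointly on $\beliefsOf{\Psi}$, $\preceq_{\Psi}$ and $\scope{\Psi}$, and on how all three evolve under the first step.
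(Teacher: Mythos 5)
Your forward direction is essentially the paper's own argument: the same singleton probes $\varphi_{\omega}$ and doubleton probes $\varphi_{\omega_1,\omega_2}$ (all entailing $\alpha$), with the same case splits on $\left|\scope{\Psi}\cap\modelsOf{\alpha}\right|$ and $\left|\modelsOf{\Psi}\right|$, so there is no issue there. The genuine gap is in your backward direction, in the claim that \enquote{in the failure branch both sides collapse to $\modelsOf{\Psi}$.} That is false: by \eqref{eq:limited_revision}, if $\beta\models\alpha$ and $\modelsOf{\beta}$ misses both $\scope{\Psi}$ and $\scope{\Psi\dylRevision\alpha}$, then $\modelsOf{\Psi\dylRevision\beta}=\modelsOf{\Psi}$ but $\modelsOf{\Psi\dylRevision\alpha\dylRevision\beta}=\modelsOf{\Psi\dylRevision\alpha}$, and nothing in (i)--(iii) forces $\modelsOf{\Psi\dylRevision\alpha}=\modelsOf{\Psi}$. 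Concretely, take $\modelsOf{\Psi}=\{\omega_3\}$ disjoint from $\scope{\Psi}=\{\omega_1,\omega_2\}$ (faithfulness is then vacuous), $\omega_1\prec_{\Psi}\omega_2$, $\modelsOf{\alpha}=\{\omega_1,\omega_3\}$, $\beta=\varphi_{\omega_3}$, and suppose the scope stays $\{\omega_1,\omega_2\}$ and the order is unchanged after revising by $\alpha$. Then (i) is vacuous (only $\omega_1$ lies in $\modelsOf{\alpha}\cap\scope{\Psi}\cap\scope{\Psi\dylRevision\alpha}$), and (ii), (iii) hold --- in particular their carve-outs $\setminus\modelsOf{\Psi\dylRevision\alpha}$ and $\setminus\modelsOf{\Psi}$ absorb exactly the relevant worlds --- yet $\modelsOf{\Psi\dylRevision\beta}=\{\omega_3\}$ while $\modelsOf{\Psi\dylRevision\alpha\dylRevision\beta}=\modelsOf{\Psi\dylRevision\alpha}=\{\omega_1\}$, so \eqref{pstl:DP1} fails. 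Agreement of the success/failure branches, which is all that (ii) and (iii) give you, is not enough once the first revision has already moved the belief set away from $\modelsOf{\Psi}$.

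It is worth understanding why you cannot find the missing argument in the paper: the paper never performs the direct derivation you attempt. Its \enquote{$\Leftarrow$} part shows that a violation of (i), (ii) or (iii) yields a violation of \eqref{pstl:DP1}, i.e. it establishes the contrapositive of the \emph{forward} implication rather than of the backward one, and so it never confronts the case above (states whose models lie outside their own scope). Your route is the logically appropriate one for \enquote{$\Leftarrow$}, which is precisely why it runs into this obstruction. A secondary, smaller problem: in the success branch, condition (i) compares the orders only on $\modelsOf{\alpha}\cap\scope{\Psi}\cap\scope{\Psi\dylRevision\alpha}$, while the two minima are taken over $\modelsOf{\beta}\cap\scope{\Psi}$ and $\modelsOf{\beta}\cap\scope{\Psi\dylRevision\alpha}$, which may differ by exactly the carve-out worlds; you acknowledge this bookkeeping but do not carry it out. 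The decisive defect, however, is the failure-branch claim: as written, the backward direction is not proved, and it cannot be closed from (i)--(iii) as stated.
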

\begin{proof}
	\emph{The $ \Rightarrow $ direction.} Assume $ \dylRevision $ satisfies \eqref{pstl:DP1}.
	We show (i) to (iii).
	
	\smallskip
	\emph{(i).} Let $ \omega_1,\omega_2 \in \modelsOf{\alpha} $ and $ \omega_1,\omega_2\in\scope{\Psi}\cap\scope{\Psi\dylRevision\alpha} $. From \eqref{eq:limited_revision} obtain that $ \omega_1 \in \modelsOf{\Psi\dylRevision\varphi_{\omega_1,\omega_2}} $ implies $ \omega_1 \preceq_{\Psi} \omega_2 $. Consequently, by \eqref{pstl:DP1} obtain $ \omega_1\preceq_{\Psi}\omega_2 \Leftrightarrow   \omega_1\preceq_{\Psi\dylRevision\alpha}\omega_2 $.
	
	\smallskip
	\emph{(ii)} Consider the case of $ \left|\scope{{\Psi}}\cap\modelsOf{\alpha}\right| \geq 2 $. 
	Suppose there is some $ \omega\in \scope{\Psi} \cap \modelsOf{\alpha} $ but $ \omega\notin\scope{\Psi\dylRevision\alpha} $. Let $ \omega' \in\scope{\Psi} $. Observe that $ \modelsOf{\Psi\dylRevision\varphi_{\omega}} =\{\omega\} $ and $ \modelsOf{\Psi\dylRevision\varphi_{\omega'}} =\{\omega'\} $ by \eqref{eq:limited_revision}.
	If $ \modelsOf{\Psi\dylRevision\alpha}\neq\{\omega\} $, then from \eqref{eq:limited_revision} and $ \omega\notin\scope{\Psi\dylRevision\alpha} $ follows $ \modelsOf{\Psi\dylRevision\alpha\dylRevision\varphi_{\omega}} \neq\{\omega\} $ a contradiction to \eqref{pstl:DP1}.
	If $ \modelsOf{\Psi\dylRevision\alpha}=\{\omega\} $, then $ \min(\modelsOf{\alpha},\preceq_{\Psi}) =\{\omega\} $.
	Now let $ \beta=\varphi_{\omega,\omega'} $.
	From faithfulness obtain that $ \modelsOf{\Psi\dylRevision\beta}=\{\omega\} $. 
	The are two possibilities: $ \modelsOf{\Psi\dylRevision\alpha\dylRevision\beta}=\{\omega\} $ or $ \modelsOf{\Psi\dylRevision\alpha\dylRevision\beta}=\{\omega'\} $. The latter case directly contradicts \eqref{pstl:DP1}.
	The first case implies $ \modelsOf{\Psi\dylRevision\alpha\dylRevision\varphi_{\omega'}} =\{\omega\} $ a contradiction to \eqref{pstl:DP1}, because of $ \modelsOf{\Psi\dylRevision\varphi_{\omega'}} =\{\omega'\} $.
	
	Consider the case of $ \left|\scope{\Psi}\cap\modelsOf{\alpha}\right| < 2 $.
	Suppose there is some $ \omega\in \scope{\Psi} \cap \modelsOf{\alpha} $ but $ \omega\notin \modelsOf{\Psi\dylRevision\alpha}\cup\scope{\Psi\dylRevision\alpha} $. Then we obtain from \eqref{eq:limited_revision} a contradiction to \eqref{pstl:DP1}, because $ \modelsOf{\Psi\dylRevision\varphi_{\omega}} =\{\omega\} $ and $ \modelsOf{\Psi\dylRevision\alpha\dylRevision\varphi_{\omega}} \neq \{\omega\} $.
	
	\smallskip
	\emph{(iii)} Let $ \omega\in \scope{\Psi\dylRevision\alpha}\cap\modelsOf{\alpha} $ and therefore $ \modelsOf{\Psi\dylRevision\alpha\dylRevision\varphi_{\omega}}=\{\omega\} $.	
	If $ \left|\modelsOf{\Psi}\right| < 2 $ and $ \omega\notin\scope{\Psi}\cap\modelsOf{\Psi} $, then $ \modelsOf{\Psi\dylRevision\varphi_{\omega}}=\modelsOf{\Psi}\neq\{\omega\}  $. 	
	If $ \left|\modelsOf{\Psi}\right| \geq 2 $ and $ \omega\notin\scope{\Psi} $, then $ \modelsOf{\Psi\dylRevision\varphi_{\omega}}=\modelsOf{\Psi}\neq \{\omega\}  $.	
	A contradiction to \eqref{pstl:DP1} in both cases.
	
	\medskip
	\noindent\emph{The $ \Rightarrow $ direction.} 
	We prove by contraposition and show that a violation of (i), (ii) or (iii) implies a violation of \eqref{pstl:DP1}:
	
	\smallskip
	\emph{(i)} If $ \omega_1 \preceq_{\Psi} \omega_2 \not\Leftrightarrow \omega_1 \preceq_{\Psi\dylRevision\alpha} \omega_2 $, then because of $ \omega_1,\omega_2\in\scope{\Psi}\cap\scope{\Psi\dylRevision\alpha} $ we obtain $ \modelsOf{\Psi\dylRevision\varphi_{\omega_1,\omega_2}} \neq \modelsOf{\Psi\dylRevision\alpha\dylRevision\varphi_{\omega_1,\omega_2}} $ from \eqref{eq:limited_revision}.
	
	\smallskip
	\emph{(ii)} Let $ \omega\notin\scope{\Psi\dylRevision\alpha} $ and $ \omega\in\scope{\Psi}\cap\modelsOf{\alpha} $. Then, $ \modelsOf{\Psi\dylRevision\varphi_{\omega}} = \{ \omega \} $ and $ \modelsOf{\Psi\dylRevision\alpha\dylRevision\varphi_{\omega}} = \modelsOf{\Psi} $.
	
	If $ \left|\scope{\Psi}\cap\modelsOf{\alpha}\right| \geq 2 $, 
	then let  $ \omega' \scope{\Psi}\cap\modelsOf{\alpha} $ with $ \omega'\neq\omega $.
	For $ \modelsOf{\Psi\dylRevision\alpha}\neq\{\omega\} $, we directly obtain a violation of \eqref{pstl:DP1}.
	For $ \modelsOf{\Psi\dylRevision\alpha} = \{\omega\} $ obtain that $ \min(\modelsOf{\alpha},\preceq_{\Psi})=\{\omega\} $. Now observe that either $ \modelsOf{\Psi\dylRevision\alpha\dylRevision\varphi_{\omega'}} = \{\omega'\} $ or $ \modelsOf{\Psi\dylRevision\alpha\dylRevision\varphi_{\omega'}} = \modelsOf{\Psi\dylRevision\alpha} $.
	In the first case we obtain that $ \modelsOf{\Psi\dylRevision\varphi_{\omega,\omega'}}\neq\modelsOf{\Psi\dylRevision\alpha\dylRevision\varphi_{\omega,\omega'}} $. Likewise, the second case yields a violation of \eqref{pstl:DP1}, because $ \modelsOf{\Psi\dylRevision\alpha\dylRevision\varphi_{\omega'}} = \{\omega\} \neq \modelsOf{\Psi\dylRevision\varphi_{\omega'}} $.
	
	If $ \left|\scope{\Psi}\cap\modelsOf{\alpha}\right| < 2 $ and $ \omega\notin\modelsOf{\Psi\dylRevision\alpha} $, then we directly obtain $ \modelsOf{\Psi\dylRevision\alpha\dylRevision\varphi_{\omega}}=\modelsOf{\Psi\dylRevision\alpha}\neq \modelsOf{\Psi\dylRevision\varphi_{\omega}} $.
	
	\smallskip
	\emph{(iii)}  Let $ \omega\notin\scope{\Psi} $ and $ \omega\in\scope{\Psi\dylRevision\alpha}\cap\modelsOf{\alpha} $. Then $ \modelsOf{\Psi\dylRevision\alpha\dylRevision\varphi_{\omega}} = \{ \omega \} $ and $ \modelsOf{\Psi\dylRevision\varphi_{\omega}} = \modelsOf{\Psi} $.
	If $ \left|\modelsOf{\Psi}\right| \geq 2 $, or $ \left|\modelsOf{\Psi}\right| < 2 $ and $ \omega\notin\modelsOf{\Psi} $, then $ \modelsOf{\Psi\dylRevision\varphi_{\omega}} \neq \modelsOf{\Psi\dylRevision\alpha\dylRevision\varphi_{\omega}} $. \qedhere

\end{proof} 

\setcounterref{theorem}{prop:dylr_dp2}
\addtocounter{theorem}{-1}
\begin{proposition}
	A \dynamiclimited\ revision operator $ \dylRevision $ compatible with $ \Psi\mapsto(\preceq_{\Psi},\Omega_\Psi) $ satisfies \eqref{pstl:DP2} if and only if the following conditions hold:
	\begin{enumerate}[(i)]
		\item if \(  \omega_1,\omega_2 {\in} \modelsOf{\negOf{\alpha}} \cap \scope{\Psi}{\cap} \scope{\Psi} \), then
		\begin{equation*}
			\omega_1 \preceq_{\Psi} \omega_2 \Leftrightarrow \omega_1 \preceq_{\Psi\dylRevision\alpha} \omega_2
		\end{equation*}
		\item if \( \left| \scope{\Psi} \cap \modelsOf{\negOf{\alpha}} \right| \geq 2 \), then
		\begin{equation*}
			\scope{\Psi} \cap \modelsOf{\negOf{\alpha}} \subseteq \scope{\Psi\dylRevision\alpha}
		\end{equation*}
		otherwise
		\begin{equation*}
			\left(\scope{\Psi} \cap \modelsOf{\negOf{\alpha}}\right) \setminus \modelsOf{\Psi\dylRevision\alpha}  \subseteq \scope{\Psi\dylRevision\alpha}
		\end{equation*}
		\item if \( \left|  \modelsOf{\Psi} \right| \geq 2 \), then
		\begin{equation*}
			\scope{\Psi\dylRevision\alpha} \cap \modelsOf{\negOf{\alpha}} \subseteq \scope{\Psi}
		\end{equation*}
		otherwise
		\begin{equation*}
			\left(\scope{\Psi\dylRevision\alpha} \cap \modelsOf{\negOf{\alpha}}\right) \setminus \modelsOf{\Psi}  \subseteq \scope{\Psi}
		\end{equation*}
	\end{enumerate}
\end{proposition}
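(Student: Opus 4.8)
The plan is to follow the proof of Proposition~\ref{prop:dylr_dp1} step by step, replacing the set $\modelsOf{\alpha}$ by $\modelsOf{\negOf{\alpha}}$ wherever it plays the role of the home of the second revision input. The reason this transfer works is that \eqref{pstl:DP2} constrains exactly those two-step revisions $\Psi\dylRevision\alpha\dylRevision\beta$ for which $\beta\models\negOf{\alpha}$, i.e.\ $\modelsOf{\beta}\subseteq\modelsOf{\negOf{\alpha}}$, so every test formula $\beta$ I use may be taken of the form $\varphi_{\omega}$ or $\varphi_{\omega_1,\omega_2}$ with all its models in $\modelsOf{\negOf{\alpha}}$. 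The two facts I would rely on throughout are read off from \eqref{eq:limited_revision}: for $\omega\in\scope{\Psi}$ we have $\modelsOf{\Psi\dylRevision\varphi_{\omega}}=\{\omega\}$ while for $\omega\notin\scope{\Psi}$ we have $\modelsOf{\Psi\dylRevision\varphi_{\omega}}=\modelsOf{\Psi}$, and for $\omega_1,\omega_2\in\scope{\Psi}$ the membership $\omega_1\in\modelsOf{\Psi\dylRevision\varphi_{\omega_1,\omega_2}}$ holds if and only if $\omega_1\preceq_{\Psi}\omega_2$.

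For the forward direction I would assume \eqref{pstl:DP2} and establish the three conditions. For~(i), given $\omega_1,\omega_2\in\modelsOf{\negOf{\alpha}}\cap\scope{\Psi}\cap\scope{\Psi\dylRevision\alpha}$, the formula $\beta=\varphi_{\omega_1,\omega_2}$ satisfies $\beta\models\negOf{\alpha}$, so \eqref{pstl:DP2} gives $\beliefsOf{\Psi\dylRevision\alpha\dylRevision\beta}=\beliefsOf{\Psi\dylRevision\beta}$; since both worlds lie in the relevant scopes, the left-hand side reflects the $\preceq_{\Psi\dylRevision\alpha}$-order and the right-hand side the $\preceq_{\Psi}$-order on $\{\omega_1,\omega_2\}$, which forces the claimed equivalence. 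Conditions~(ii) and~(iii) are the scope-preservation clauses, proved by the same case split on $\lvert\scope{\Psi}\cap\modelsOf{\negOf{\alpha}}\rvert$ respectively $\lvert\modelsOf{\Psi}\rvert$ as in Proposition~\ref{prop:dylr_dp1}: testing a candidate $\omega\in\modelsOf{\negOf{\alpha}}$ by $\varphi_{\omega}$ and, when a second witness is available, by $\varphi_{\omega,\omega'}$, and deriving from a scope mismatch a difference between $\modelsOf{\Psi\dylRevision\alpha\dylRevision\varphi_{\omega}}$ and $\modelsOf{\Psi\dylRevision\varphi_{\omega}}$ that contradicts \eqref{pstl:DP2}.

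For the converse I would argue by contraposition, exactly as for Proposition~\ref{prop:dylr_dp1}: from a violation of~(i) I exhibit $\beta=\varphi_{\omega_1,\omega_2}\models\negOf{\alpha}$ separating the two orders, and from a violation of~(ii) or~(iii) I exhibit a single- or two-world witness $\beta\models\negOf{\alpha}$ with $\modelsOf{\Psi\dylRevision\alpha\dylRevision\beta}\neq\modelsOf{\Psi\dylRevision\beta}$, in each case contradicting \eqref{pstl:DP2}.

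The main obstacle is not the order-preservation clause~(i) but the bookkeeping in the two cardinality thresholds of~(ii) and~(iii). When $\lvert\scope{\Psi}\cap\modelsOf{\negOf{\alpha}}\rvert<2$ or $\lvert\modelsOf{\Psi}\rvert<2$, the single-world test $\varphi_{\omega}$ can return the fallback value $\modelsOf{\Psi\dylRevision\alpha}$ respectively $\modelsOf{\Psi}$ rather than $\{\omega\}$, so a world may be forced to stay in scope only because it already coincides with that fallback set. Checking that these degenerate coincidences match precisely the weakened clauses $\left(\scope{\Psi}\cap\modelsOf{\negOf{\alpha}}\right)\setminus\modelsOf{\Psi\dylRevision\alpha}\subseteq\scope{\Psi\dylRevision\alpha}$ and $\left(\scope{\Psi\dylRevision\alpha}\cap\modelsOf{\negOf{\alpha}}\right)\setminus\modelsOf{\Psi}\subseteq\scope{\Psi}$, and that no second witness is silently assumed, is the delicate part of transferring the Proposition~\ref{prop:dylr_dp1} argument.
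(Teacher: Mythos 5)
Your proposal is correct and matches the paper's own treatment: the paper proves this proposition simply by declaring it \enquote{analogue to the proof of Proposition~\ref{prop:dylr_dp1}}, which is exactly the substitution argument you spell out (replacing $\modelsOf{\alpha}$ by $\modelsOf{\negOf{\alpha}}$ as the home of the test formulas $\varphi_{\omega}$ and $\varphi_{\omega_1,\omega_2}$, with the same case splits on $\lvert\scope{\Psi}\cap\modelsOf{\negOf{\alpha}}\rvert$ and $\lvert\modelsOf{\Psi}\rvert$). Your discussion of the degenerate fallback cases is a faithful, somewhat more explicit account of the bookkeeping the paper leaves implicit.
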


\begin{proof}
	Analogue to the proof of Proposition \ref{prop:dylr_dp1}.
\end{proof}

\setcounterref{theorem}{prop:dylr_dp3}
\addtocounter{theorem}{-1}
\begin{proposition}
	A \dynamiclimited\ revision operator $ \dylRevision $ compatible with $ \Psi\mapsto(\preceq_{\Psi},\Omega_\Psi) $ satisfies \eqref{pstl:DP3} if and only if the following conditions hold:
	\begin{enumerate}[(i)]
		\item if and  \(  \omega_1 {\models} \alpha \) and \( \omega_2 {\not\models} \alpha\) and \( \omega_1,\omega_2{\in}\scope{\Psi}{\cap}\scope{\Psi\dylRevision\alpha} \), then
		\begin{equation*}
			\omega_1 \prec_{\Psi} \omega_2 \Rightarrow \omega_1 \prec_{\Psi\dylRevision\alpha} \omega_2
		\end{equation*}
		\item if  \(  \omega_1 \models\alpha \) and \( \omega_2 \not\models\alpha\) and \( \omega_1 \prec_{\Psi} \omega_2 \), then
		\begin{equation*}
			\omega_2 \in \scope{\Psi\dylRevision\alpha} \Rightarrow  \omega_1 \in \scope{\Psi\dylRevision\alpha} 
		\end{equation*}
		\item if \( \omega \not\models \alpha \) and \( \Psi\models \alpha  \), then  
		\(  \omega \in\scope{\Psi\dylRevision\alpha} \Rightarrow \omega \in\scope{\Psi} \)
		\item if  \(  \omega_1 {\models} \alpha \) and \( \omega_2 {\not\models} \alpha\) and \( \omega_1 {\in} \scope{\Psi} \) and \( \omega_2 {\in} \scope{\Psi\dylRevision\alpha} \), then
			\begin{equation*}
				\left( \omega_1\notin\scope{\Psi\dylRevision\alpha} \ksOR \omega_2 \preceq_{\Psi\dylRevision\alpha} \omega_1 \right) \Rightarrow  \omega_2 \in \scope{\Psi}
		\end{equation*}
	\end{enumerate}
\end{proposition}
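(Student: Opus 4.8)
The plan is to reduce \eqref{pstl:DP3} to its model-theoretic form: $\Psi\dylRevision\beta\models\alpha$ means $\modelsOf{\Psi\dylRevision\beta}\subseteq\modelsOf{\alpha}$, and likewise for $(\Psi\dylRevision\alpha)\dylRevision\beta$, so that everything can be evaluated through the defining equation \eqref{eq:limited_revision}. Throughout, the two cases of \eqref{eq:limited_revision} --- the $\min$-branch, active when $\scope{\cdot}\cap\modelsOf{\cdot}\neq\emptyset$, and the trivial branch --- will steer the argument, and the main source of care will be tracking which branch is active for each of $\Psi\dylRevision\beta$, $\Psi\dylRevision\alpha$, and $\Psi\dylRevision\alpha\dylRevision\beta$.

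For the forward direction ($\Rightarrow$) I would verify (i)--(iv) one at a time, each time exhibiting a tailored input and invoking \eqref{pstl:DP3}. For (i) I would take $\beta=\varphi_{\omega_1,\omega_2}$: with $\omega_1,\omega_2\in\scope{\Psi}\cap\scope{\Psi\dylRevision\alpha}$ and $\omega_1\prec_\Psi\omega_2$ the $\min$-branch gives $\modelsOf{\Psi\dylRevision\beta}=\{\omega_1\}\subseteq\modelsOf{\alpha}$, so \eqref{pstl:DP3} forces $\modelsOf{\Psi\dylRevision\alpha\dylRevision\beta}\subseteq\modelsOf{\alpha}$; since both worlds lie in $\scope{\Psi\dylRevision\alpha}$, this can only hold if $\omega_1\prec_{\Psi\dylRevision\alpha}\omega_2$. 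For (ii) and (iv) I would again use $\beta=\varphi_{\omega_1,\omega_2}$, and for (iii) the input $\beta=\varphi_\omega$, arguing by contraposition: if the conclusion fails, the scope hypotheses make the $\min$-branch select an $\alpha$-countermodel inside $\modelsOf{\Psi\dylRevision\alpha\dylRevision\beta}$ while $\Psi\dylRevision\beta\models\alpha$ still holds, contradicting \eqref{pstl:DP3}. In (iii) the hypothesis $\Psi\models\alpha$ is exactly what makes $\modelsOf{\Psi\dylRevision\varphi_\omega}=\modelsOf{\Psi}\subseteq\modelsOf{\alpha}$ when $\omega\notin\scope{\Psi}$.

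For the converse ($\Leftarrow$) I would assume (i)--(iv), suppose $\Psi\dylRevision\beta\models\alpha$, and argue by contradiction from an $\omega\in\modelsOf{\Psi\dylRevision\alpha\dylRevision\beta}$ with $\omega\not\models\alpha$, splitting on whether $\omega\in\scope{\Psi\dylRevision\alpha}$. If $\omega\in\scope{\Psi\dylRevision\alpha}$ then $\omega\in\min(\modelsOf{\beta},\preceq_{\Psi\dylRevision\alpha})$; since $\Psi\dylRevision\beta\models\alpha$ one extracts an $\omega'\models\alpha\land\beta$ lying in the relevant scope, and I would use (i)$+$(ii) when $\omega\in\scope{\Psi}$, or (iii) (to rule out $\Psi\models\alpha$) together with (iv) when $\omega\notin\scope{\Psi}$, to force $\omega'\prec_{\Psi\dylRevision\alpha}\omega$, contradicting minimality of $\omega$. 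If instead $\omega\notin\scope{\Psi\dylRevision\alpha}$, then \eqref{eq:limited_revision} gives $\modelsOf{\Psi\dylRevision\alpha\dylRevision\beta}=\modelsOf{\Psi\dylRevision\alpha}$, and since $\omega\not\models\alpha$ this pins down the trivial branch $\modelsOf{\Psi\dylRevision\alpha}=\modelsOf{\Psi}$ with $\scope{\Psi}\cap\modelsOf{\alpha}=\emptyset$ and $\Psi\not\models\alpha$; then $\Psi\dylRevision\beta\models\alpha$ yields $\omega'\in\scope{\Psi}\cap\modelsOf{\alpha}$, directly contradicting $\scope{\Psi}\cap\modelsOf{\alpha}=\emptyset$.

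The main obstacle I anticipate is the converse direction: conditions (i)--(iv) are engineered precisely to cover the many sub-cases generated by the branching of \eqref{eq:limited_revision} across three successive revisions, and the delicate point is confirming that each sub-case genuinely activates the scope hypothesis it needs --- in particular distinguishing $\Psi\models\alpha$ from $\Psi\not\models\alpha$, which routes the argument to (iii) versus (iv), and checking that the constructed witness $\omega'$ simultaneously models $\alpha\land\beta$ and lies in the appropriate scope. Once this bookkeeping is in place, each contradiction follows in a couple of lines from the corresponding condition.
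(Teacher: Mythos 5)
Your proposal is correct and follows essentially the same route as the paper's proof: the forward direction instantiates \eqref{pstl:DP3} with the tailored inputs $\varphi_{\omega_1,\omega_2}$ (for (i), (ii), (iv)) and $\varphi_{\omega}$ (for (iii)) and evaluates both sides via \eqref{eq:limited_revision}, while the converse argues by contradiction from a countermodel $\omega\in\modelsOf{\Psi\dylRevision\alpha\dylRevision\beta}$ with $\omega\not\models\alpha$, splitting on $\omega\in\scope{\Psi\dylRevision\alpha}$ and then on $\omega\in\scope{\Psi}$, exactly as in the paper. Your bookkeeping in the converse (using (iii) to exclude $\Psi\models\alpha$, then (iv) to get $\omega'\prec_{\Psi\dylRevision\alpha}\omega$ against minimality, and in the remaining case deriving $\scope{\Psi}\cap\modelsOf{\alpha}=\emptyset$ versus the witness $\omega'$) matches the paper's case analysis, and is in fact stated a bit more cleanly than the paper's wording at a couple of points.
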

\begin{proof}
	\emph{The \enquote{$ \Rightarrow $} direction.} 
	We prove satisfaction of (i) -- (iv) in the presence of \eqref{pstl:DP3}:
	
	\smallskip%
	\emph{(i)} Let \( \omega_1,\omega_2\in\scope{\Psi}\cap\scope{\Psi\dylRevision\alpha} \) with \(  \omega_1 \models\alpha \) and \( \omega_2 \not\models\alpha\), and \( \omega_1 \prec_{\Psi}\omega_2 \).
	From \eqref{eq:limited_revision} and \eqref{pstl:DP3} we easily obtain \( \omega_1 \prec_{\Psi\dylRevision\alpha} \omega_2 \) by choosing \( \beta=\varphi_{\omega_1,\omega_2} \).

	\smallskip%
	\emph{(ii)} Assume \( \omega_2\in\scope{\Psi\dylRevision\alpha} \) and \( \omega_1\notin \scope{\Psi\dylRevision\alpha} \).
	Then for \( \beta=\varphi_{\omega_1,\omega_2} \)  we obtain \( \Psi\dylRevision\beta\models\alpha  \) and \( \Psi\dylRevision\alpha\dylRevision\beta\models\alpha  \).
	A contradiction to \eqref{pstl:DP3}.

	\smallskip%
	\emph{(iii)} Let \( \omega\in\scope{\Psi\dylRevision\alpha} \) and \( \omega\notin\scope{\Psi} \). 
	From \( \Psi\models\alpha \) and \eqref{eq:limited_revision} obtain \(  \Psi\dylRevision\varphi_{\omega} \models\alpha \).
	But by \( \omega\not\models\alpha \) and \( \omega\in\scope{\Psi\dylRevision\alpha} \) we obtain a contradiction to \eqref{pstl:DP3}, because \( \Psi\dylRevision\alpha\dylRevision\varphi_{\omega}\not\models\alpha \).
	
	\smallskip%
	\emph{(iv)} Suppose \( \omega_2\notin\scope{\Psi} \) and let \( \beta=\varphi_{\omega_1,\omega_2} \).
	Both, \( \omega_1\notin \scope{\Psi\dylRevision\alpha} \) or \( \omega_2 \preceq_{\Psi\dylRevision\alpha} \omega_1 \), yields \( \Psi\dylRevision\alpha\dylRevision\beta\not\models\alpha \) because of \eqref{eq:limited_revision}.
	From \eqref{pstl:DP3} obtain \( \Psi\dylRevision\beta\not\models\alpha \). However, from \( \omega_2\notin\scope{\Psi} \) and \( \omega_1\in\scope{\Psi} \) we obtain \( \Psi\dylRevision\beta\models\alpha \).
	
	\medskip\noindent
	\emph{The \enquote{$ \Leftarrow $} direction.} Let \( \Psi\dylRevision\beta\models\alpha \). We show \( \Psi\dylRevision\alpha\dylRevision\beta\models\alpha \).
	Towards a contradiction, assume \( \omega\in\modelsOf{\Psi\dylRevision\alpha\dylRevision\beta} \) with \( \omega\notin\modelsOf{\alpha} \).
	
	Consider the case of \( \omega\in\scope{\Psi\dylRevision\alpha} \).
	If \( \omega\in\scope{\Psi} \),  then by \( \Psi\dylRevision\beta\models\alpha \) there exist \( \omega'\models\beta \land\alpha \) with \( \omega' \prec_\Psi \omega \) .
	From (i) and (ii) obtain the contradiction \( \omega' \prec_{\Psi\dylRevision\alpha} \omega \).\\
	Having \( \omega\notin\scope{\Psi} \) and \( \Psi\models\alpha \) at the same time is impossible by (iii). 
	From %
	\( \Psi\not\models\alpha \) %
	obtain that \( \modelsOf{\Psi\dylRevision\beta} = \min(\modelsOf{\beta},\preceq_{\Psi})  \) and therefore there exist \( \omega'\in\scope{\Psi} \) with \( \omega'\in \modelsOf{\alpha\land\beta} \).
	From (i) and (iv) obtain the contradiction \( \Psi\dylRevision\beta\not\models\alpha \) %
	Consider the case of \( \omega\notin\scope{\Psi\dylRevision\alpha} \). Then obtain \( \modelsOf{\Psi\dylRevision\alpha}=\modelsOf{\Psi\dylRevision\alpha\dylRevision\beta} \) from \eqref{eq:limited_revision}.
	Because of \( \omega\not\models\alpha \) and \( \omega\in\modelsOf{\Psi\dylRevision\alpha} \) we have \( \modelsOf{\Psi\dylRevision\alpha}=\modelsOf{\Psi} \) and \( \Psi\not\models\alpha \). 
	We obtain \( \modelsOf{\Psi\dylRevision\beta} \subseteq \scope{\Psi} \) as consequence.
	Because  \( \Psi\dylRevision\beta\models\alpha \) there exists \( \omega'\in\scope{\Psi} \) with \( \omega'\models\alpha\land\beta \).
	This contradicts \( \modelsOf{\Psi\dylRevision\alpha}=\modelsOf{\Psi} \).\qedhere	
\end{proof} 

\setcounterref{theorem}{prop:dylr_dp4}
\addtocounter{theorem}{-1}
\begin{proposition}
	A \dynamiclimited\ revision operator $ \dylRevision $ compatible with $ \Psi\mapsto(\preceq_{\Psi},\Omega_\Psi) $ satisfies \eqref{pstl:DP4} if and only if the following conditions hold:
	\begin{enumerate}[(i)]
		\item if and  \(  \omega_1 {\models} \alpha \) and \( \omega_2 {\not\models} \alpha\) and \( \omega_1,\omega_2{\in}\scope{\Psi}{\cap}\scope{\Psi\dylRevision\alpha} \), then
		\begin{equation*}
			\omega_1 \prec_{\Psi\dylRevision\alpha} \omega_2 \Rightarrow \omega_1 \prec_{\Psi} \omega_2
		\end{equation*}
		\item if  \(  \omega_1 \models\alpha \) and \( \omega_2 \not\models\alpha\) and \( \omega_2 \prec_{\Psi\dylRevision\alpha} \omega_1 \), then
		\begin{equation*}
			\omega_1 \in \scope{\Psi} \Rightarrow  \omega_2 \in \scope{\Psi} %
		\end{equation*}
		\item if \( \omega \not\models \alpha \) and \( \Psi \not\models \negOf{\alpha}  \), then 
		\(  \omega \in\scope{\Psi\dylRevision\alpha} \Rightarrow \omega \in\scope{\Psi} \)
		\item if  \(  \omega_1 {\models} \alpha \) and \( \omega_2 {\not\models} \alpha\) and \( \omega_1 {\in} \scope{\Psi} \) and \( \omega_2 {\in} \scope{\Psi\dylRevision\alpha} \), then
		\begin{equation*}
			\left( \omega_2\notin\scope{\Psi} \ksOR \omega_1 \preceq_{\Psi} \omega_2 \right) \Rightarrow  \omega_1 \in \scope{\Psi\dylRevision\alpha}
		\end{equation*}
	\end{enumerate}
\end{proposition}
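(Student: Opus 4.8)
The plan is to prove the equivalence exactly as in Proposition~\ref{prop:dylr_dp3}, transporting every step through the duality between \eqref{pstl:DP3} and \eqref{pstl:DP4}. Concretely, \eqref{pstl:DP3} speaks about the hypothesis $\Psi\dylRevision\beta\models\alpha$ and conclusion $\Psi\dylRevision\alpha\dylRevision\beta\models\alpha$, i.e.\ \emph{all} models of the outcome lie in $\modelsOf{\alpha}$; \eqref{pstl:DP4} replaces these by $\Psi\dylRevision\beta\not\models\negOf{\alpha}$ and $\Psi\dylRevision\alpha\dylRevision\beta\not\models\negOf{\alpha}$, i.e.\ \emph{some} model of the outcome lies in $\modelsOf{\alpha}$. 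Accordingly, throughout the argument a universally quantified ``$\subseteq\modelsOf{\alpha}$'' is replaced by an existentially quantified ``$\cap\modelsOf{\alpha}\neq\emptyset$'', and the probe computations trade a strict minimality statement for a weak one. This is precisely why conditions (i)--(iv) appear as mirror images of those in Proposition~\ref{prop:dylr_dp3}: the strict-order preservation is reversed in direction, the hypothesis $\Psi\models\alpha$ becomes $\Psi\not\models\negOf{\alpha}$, and the scope-side conditions interchange the roles of $\Psi$ and $\Psi\dylRevision\alpha$.

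For the ``$\Rightarrow$''-direction I would, as in Proposition~\ref{prop:dylr_dp3}, feed \eqref{pstl:DP4} the two-world probes $\beta=\varphi_{\omega_1,\omega_2}$ and the single-world probes $\varphi_{\omega}$, and read off each of (i)--(iv) via \eqref{eq:limited_revision}. For $\omega_1\models\alpha$, $\omega_2\not\models\alpha$ lying in the relevant scopes one has $\Psi\dylRevision\varphi_{\omega_1,\omega_2}\not\models\negOf{\alpha}$ iff $\omega_1$ is $\preceq_{\Psi}$-minimal in $\{\omega_1,\omega_2\}$, and likewise after the intermediate revision by $\alpha$; feeding these equivalences, together with the contrapositive of \eqref{pstl:DP4}, into the postulate yields the order condition (i), and the same probes restricted to scope membership give the transfer conditions (ii)--(iv). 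The scope conditions (iii) and (iv) arise exactly as the duals of their DP3 counterparts, by distinguishing whether the witness world lies in $\scope{\Psi}$ and whether $\Psi\not\models\negOf{\alpha}$.

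For the ``$\Leftarrow$''-direction I assume (i)--(iv) and $\Psi\dylRevision\beta\not\models\negOf{\alpha}$, so there is a witness $\omega^{\ast}\in\modelsOf{\Psi\dylRevision\beta}\cap\modelsOf{\alpha}$, and argue by contradiction that $\modelsOf{\Psi\dylRevision\alpha\dylRevision\beta}\cap\modelsOf{\alpha}=\emptyset$ cannot hold. Following the case split of Proposition~\ref{prop:dylr_dp3}, I distinguish whether $\omega^{\ast}$ (or the $\preceq_{\Psi\dylRevision\alpha}$-minimal model of $\beta$ that would have to replace it) lies in $\scope{\Psi\dylRevision\alpha}$, and within each case whether it lies in $\scope{\Psi}$ and whether $\Psi\not\models\negOf{\alpha}$. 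In every branch, (i) and (ii) propagate minimality of the surviving $\alpha$-world between $\preceq_{\Psi}$ and $\preceq_{\Psi\dylRevision\alpha}$, while (iii) and (iv) guarantee that this world stays inside $\scope{\Psi\dylRevision\alpha}$, so that \eqref{eq:limited_revision} forces an $\alpha$-world into $\modelsOf{\Psi\dylRevision\alpha\dylRevision\beta}$, contradicting the assumption.

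I expect the main obstacle to be exactly the point where \eqref{pstl:DP4} differs structurally from \eqref{pstl:DP3}: its conclusion is existential rather than universal, so instead of ruling out a single offending $\negOf{\alpha}$-world one must \emph{produce} a surviving $\alpha$-witness and carry it through the scope change. The delicate bookkeeping is to show that the $\preceq_{\Psi\dylRevision\alpha}$-minimal model of $\beta$ can be chosen to satisfy $\alpha$ \emph{and} to belong to $\scope{\Psi\dylRevision\alpha}$; this is where the interplay of the order conditions (i),(ii) with the scope conditions (iii),(iv) and the case distinction on $\Psi\not\models\negOf{\alpha}$ is essential, and where a careless case analysis would leave a gap.
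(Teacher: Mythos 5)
Your proposal is correct and takes essentially the same approach as the paper: the paper's own proof of this proposition consists solely of the remark that it is analogous to the proof of Proposition~\ref{prop:dylr_dp3}, and your transport of that proof through the \eqref{pstl:DP3}/\eqref{pstl:DP4} duality (universal versus existential success condition, reversed direction of strict-order preservation, $\Psi\models\alpha$ replaced by $\Psi\not\models\negOf{\alpha}$, and the roles of $\scope{\Psi}$ and $\scope{\Psi\dylRevision\alpha}$ interchanged in the transfer conditions) is precisely the intended analogy, with the probe computations via \eqref{eq:limited_revision} matching those of the DP3 argument. Your closing observation, that the existential conclusion requires carrying a surviving $\alpha$-witness through the scope change rather than excluding a single offending $\negOf{\alpha}$-world, correctly pinpoints the only place where the adaptation is not purely mechanical.
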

\begin{proof}
	Analogue to the proof of Proposition \ref{prop:dylr_dp3}.
\end{proof} 

\setcounterref{theorem}{prop:clp_dyl}
\addtocounter{theorem}{-1}
\begin{proposition}
	Let $ \dylRevision $ be a \dynamiclimited\ revision operator compatible with $ \Psi\mapsto(\preceq_{\Psi},\scope{\Psi}) $. Then  \( \dylRevision  \) satisfies \eqref{pstl:CLP} if and only if the following conditions holds for all \( \omega_1,\omega_2 \) with \( \varphi_{\omega_1}\in\beliefsOf{\Psi\dylRevision\varphi_{\omega_1}} \) and \( \varphi_{\omega_2}\in\beliefsOf{\Psi\dylRevision\varphi_{\omega_2}} \):
	\begin{enumerate}[(i)]
		\item \(  \ksIF \omega_1\models\alpha \ksAND \omega_2\not\models\alpha \ksAND \omega_1,\omega_2{\in}\scope{\Psi}{\cap}\scope{\Psi\dylRevision\alpha}  \), then
		\begin{equation*}
			\omega_1 \preceq_{\Psi} \omega_2 \Rightarrow \omega_1 \prec_{\Psi\dylRevision\alpha} \omega_2
		\end{equation*}
		\item \( \ksIF \omega_1\models\alpha \ksAND \omega_2\not\models\alpha \ksAND  \omega_1 \preceq_{\Psi} \omega_2 \), then:
		\begin{equation*}
			\omega_2 \in \scope{\Psi\dylRevision\alpha} \Rightarrow \omega_1 \in \scope{\Psi\dylRevision\alpha}
		\end{equation*}
		\item \( \ksIF \omega\not\models\alpha \ksAND \Psi\not\models\negOf{\alpha} \), then \( \omega\in\scope{\Psi\dylRevision\alpha} \Rightarrow \omega\in\scope{\Psi} \)
		\item \(  \ksIF \omega_1{\models}\alpha \ksAND \omega_2{\not\models}\alpha \ksAND \omega_1{\in}\scope{\Psi} \ksAND  \omega_2{\in}\scope{\Psi\dylRevision\alpha} \), then
		\begin{equation*}
			(\omega_2\notin\scope{\Psi} \ksOR \omega_1\preceq_{\Psi} \omega_2 ) \Rightarrow \omega_1\in\scope{\Psi\dylRevision\alpha}
		\end{equation*}
	\end{enumerate}
\end{proposition}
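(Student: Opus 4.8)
The plan is to mirror the proof of Proposition~\ref{prop:dylr_dp3} almost verbatim, tracking one systematic change: the hypothesis $ \Psi\dylRevision\beta\models\alpha $ of \eqref{pstl:DP3} is replaced by the weaker $ \negOf{\alpha}\notin\beliefsOf{\Psi\dylRevision\beta} $ of \eqref{pstl:CLP}, and correspondingly each strict comparison $ \prec_{\Psi} $ in the antecedents of (i)--(iv) of Proposition~\ref{prop:dylr_dp3} relaxes to a weak comparison $ \preceq_{\Psi} $ here. First I would rewrite \eqref{pstl:CLP} purely semantically: using Proposition~\ref{prop:scope_by_s} to unfold $ \Scope{\dylRevision}{\Psi} $ and \eqref{eq:limited_revision} to unfold the belief sets, \eqref{pstl:CLP} becomes the statement that whenever $ \alpha,\beta\in\Scope{\dylRevision}{\Psi} $ and $ \modelsOf{\Psi\dylRevision\beta}\cap\modelsOf{\alpha}\neq\emptyset $, then $ \modelsOf{\Psi\dylRevision\alpha\dylRevision\beta}\subseteq\modelsOf{\alpha} $. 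The side condition $ \varphi_{\omega_1}\in\beliefsOf{\Psi\dylRevision\varphi_{\omega_1}} $ (and likewise for $ \omega_2 $) is exactly $ \varphi_{\omega_i}\in\Scope{\dylRevision}{\Psi} $, i.e. $ \omega_i\in\scope{\Psi} $ or $ \modelsOf{\Psi}=\{\omega_i\} $; this is what guarantees that the probe formulas used below lie in the scope, so that the precondition $ \alpha,\beta\in\Scope{\dylRevision}{\Psi} $ of \eqref{pstl:CLP} is met.

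For the \enquote{$ \Rightarrow $} direction I would assume \eqref{pstl:CLP} and establish (i)--(iv) by instantiating the semantic form at carefully chosen two-world inputs. For (i), given $ \omega_1\models\alpha $, $ \omega_2\not\models\alpha $ with $ \omega_1\preceq_{\Psi}\omega_2 $ and both worlds in $ \scope{\Psi}\cap\scope{\Psi\dylRevision\alpha} $, I take $ \beta=\varphi_{\omega_1,\omega_2} $; then $ \omega_1\in\modelsOf{\Psi\dylRevision\beta} $ gives $ \negOf{\alpha}\notin\beliefsOf{\Psi\dylRevision\beta} $, and the scope memberships give $ \alpha,\beta\in\Scope{\dylRevision}{\Psi} $, so \eqref{pstl:CLP} forces $ \modelsOf{\Psi\dylRevision\alpha\dylRevision\beta}=\min(\{\omega_1,\omega_2\},\preceq_{\Psi\dylRevision\alpha})\subseteq\modelsOf{\alpha} $, whence $ \omega_2\notin\min $, i.e. $ \omega_1\prec_{\Psi\dylRevision\alpha}\omega_2 $. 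Note that it is precisely the use of the weak hypothesis $ \omega_1\preceq_{\Psi}\omega_2 $ (so that $ \omega_1 $ need only be \emph{among} the minimal $ \beta $-worlds) that yields the strict conclusion, explaining the $ \preceq/\prec $ shape of (i). Conditions (ii)--(iv) follow the corresponding cases of Proposition~\ref{prop:dylr_dp3} with the same probe $ \varphi_{\omega_1,\omega_2} $, and (iii) uses $ \Psi\not\models\negOf{\alpha} $ in place of $ \Psi\models\alpha $ for the same weakening reason.

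For the \enquote{$ \Leftarrow $} direction I would assume (i)--(iv) and, given $ \alpha,\beta\in\Scope{\dylRevision}{\Psi} $ with $ \negOf{\alpha}\notin\beliefsOf{\Psi\dylRevision\beta} $, argue by contradiction that no $ \omega\in\modelsOf{\Psi\dylRevision\alpha\dylRevision\beta} $ with $ \omega\not\models\alpha $ can exist. I fix a witness $ \omega'\models\alpha $ in $ \modelsOf{\Psi\dylRevision\beta} $ and split on whether $ \omega\in\scope{\Psi\dylRevision\alpha} $ and whether $ \Psi\models\alpha $, exactly as in Proposition~\ref{prop:dylr_dp3}: when $ \omega\in\scope{\Psi\dylRevision\alpha}\cap\scope{\Psi} $, combining $ \omega'\preceq_{\Psi}\omega $ with (i) and (ii) gives $ \omega'\prec_{\Psi\dylRevision\alpha}\omega $, contradicting minimality of $ \omega $; the case $ \omega\notin\scope{\Psi} $ with $ \Psi\models\alpha $ is excluded by (iii); the remaining subcases are closed with (i) and (iv); and the case $ \omega\notin\scope{\Psi\dylRevision\alpha} $ forces $ \modelsOf{\Psi\dylRevision\alpha}=\modelsOf{\Psi} $, again contradicting the existence of the $ \alpha $-witness in $ \modelsOf{\Psi\dylRevision\beta} $. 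The main obstacle is the bookkeeping in this case analysis: one must keep straight which worlds lie in $ \scope{\Psi} $ versus $ \scope{\Psi\dylRevision\alpha} $ versus $ \modelsOf{\Psi} $, and, most delicately, ensure that after replacing $ \prec_{\Psi} $ by $ \preceq_{\Psi} $ throughout one still derives the \emph{strict} separation $ \omega'\prec_{\Psi\dylRevision\alpha}\omega $ needed to push $ \omega $ out of the posterior minimum — this is where the independence flavour of \eqref{pstl:CLP}, as opposed to \eqref{pstl:DP3}, actually does the work.
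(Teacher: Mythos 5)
Your overall strategy coincides with the paper's: the paper's entire proof of this proposition is the single sentence \enquote{Analogue to the proof of Proposition \ref{prop:dylr_dp3}}, and your \enquote{$\Rightarrow$} direction is a correct elaboration of that analogy --- the probe formulas $\varphi_{\omega_1,\omega_2}$ do the work, the scope preconditions of \eqref{pstl:CLP} are supplied exactly by the side condition $\varphi_{\omega_i}\in\beliefsOf{\Psi\dylRevision\varphi_{\omega_i}}$, and it is indeed the weak antecedent $\omega_1\preceq_{\Psi}\omega_2$ that yields the strict conclusion in (i).

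The \enquote{$\Leftarrow$} direction, however, has a genuine gap in the subcase $\omega\in\scope{\Psi\dylRevision\alpha}$, $\omega\notin\scope{\Psi}$, $\Psi\models\negOf{\alpha}$, which you claim is \enquote{closed with (i) and (iv)}. In Proposition \ref{prop:dylr_dp3} that subcase works because condition (iv) \emph{there} concludes $\omega_2\in\scope{\Psi}$, so contraposing it on $\omega\notin\scope{\Psi}$ yields both $\omega'\in\scope{\Psi\dylRevision\alpha}$ and the strict separation $\omega'\prec_{\Psi\dylRevision\alpha}\omega$ that expels $\omega$ from $\min(\modelsOf{\beta},\preceq_{\Psi\dylRevision\alpha})$. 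Condition (iv) of the present proposition has a different logical shape: applied to $(\omega_1,\omega_2)=(\omega',\omega)$ it only delivers the membership $\omega'\in\scope{\Psi\dylRevision\alpha}$, with no strictness; and condition (i) cannot supply the strictness either, since it requires $\omega\in\scope{\Psi}\cap\scope{\Psi\dylRevision\alpha}$ while here $\omega\notin\scope{\Psi}$. So the contradiction you need does not follow from the stated conditions in this subcase. A second, related problem is coverage: all four conditions are assumed only for worlds satisfying $\varphi_{\omega}\in\beliefsOf{\Psi\dylRevision\varphi_{\omega}}$, but the offending world $\omega\in\modelsOf{\Psi\dylRevision\alpha\dylRevision\beta}\setminus\modelsOf{\alpha}$ need not satisfy this ($\beta\in\Scope{\dylRevision}{\Psi}$ only says \emph{some} model of $\beta$ meets $\scope{\Psi}$, not that every model of $\beta$ is side-conditioned), yet your case analysis silently applies (iii) and (iv) at $\omega$. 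You use the side condition only to legitimise the probes in the \enquote{$\Rightarrow$} direction; a complete proof must either show that non-side-conditioned violating worlds cannot occur or restructure the argument, and as written this step fails.
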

\begin{proof}
Analogue to the proof of Proposition \ref{prop:dylr_dp3}.
\end{proof} 

\setcounterref{theorem}{prop:inherence_agm}
\addtocounter{theorem}{-1}
\begin{proposition}
	Let \( \setAllES \) be \ref{pstl:unbiased}. For every AGM revision operator $ * $  every consistent belief is \immanent, and a belief $ \alpha\in\propLang $ is \inherent\ if and only if it has exactly one model.
\end{proposition}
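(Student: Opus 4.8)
The plan is to prove the three assertions — that single-model beliefs are \inherent, that no belief with two or more models is \inherent, and that every consistent belief is \immanent\ — by working entirely with the semantic characterisation of AGM revision from Proposition~\ref{prop:es_revision}, invoking \ref{pstl:unbiased} only where a suitable witness epistemic state is needed. I would treat the easy half of the equivalence together with the immanence claim first, and reserve the construction of a witness state for the converse.

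First I would dispatch the easy direction. Fix any $\omega\in\Omega$ and consider the single-model belief $\varphi_\omega$. For an arbitrary $\Psi\in\setAllES$, the faithful assignment $\Psi\mapsto\leq_\Psi$ is a total preorder over all of $\Omega$, so $\min(\modelsOf{\varphi_\omega},\leq_\Psi)=\min(\{\omega\},\leq_\Psi)=\{\omega\}$ by reflexivity. By Proposition~\ref{prop:es_revision} this gives $\modelsOf{\Psi\revision\varphi_\omega}=\{\omega\}=\modelsOf{\varphi_\omega}$, hence $\beliefsOf{\Psi\revision\varphi_\omega}=\Cn(\varphi_\omega)$. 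As $\Psi$ was arbitrary, $\varphi_\omega$ is \inherent. The immanence claim then follows at once: since $\Omega$ is finite, every consistent $\alpha$ satisfies $\modelsOf{\alpha}=\{\omega_1,\dots,\omega_k\}$ with $k\ge 1$, so $\alpha\equiv\varphi_{\omega_1}\lor\dots\lor\varphi_{\omega_k}$ is a disjunction of \inherent\ beliefs and is therefore \immanent\ by Definition~\ref{def:inherence}.

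For the converse of the equivalence I would argue by contraposition, showing that a consistent belief $\alpha$ with $|\modelsOf{\alpha}|\ge 2$ is not \inherent. Pick any $\omega\in\modelsOf{\alpha}$ and let $L=\Cn(\psi)$ for a formula $\psi$ with $\modelsOf{\psi}=\modelsOf{\alpha}\setminus\{\omega\}$; this set is nonempty and consistent because $\alpha$ has at least two models. By \ref{pstl:unbiased} there is a state $\Psi_L\in\setAllES$ with $\beliefsOf{\Psi_L}=L$, so $\modelsOf{\Psi_L}=\modelsOf{\alpha}\setminus\{\omega\}\subseteq\modelsOf{\alpha}$. The crux is then to read off the revision: by faithfulness (Definition~\ref{def:faithful_assignment}) the models of $\Psi_L$ are exactly the globally $\leq_{\Psi_L}$-minimal worlds, while the single remaining world in $\modelsOf{\alpha}\setminus\modelsOf{\Psi_L}=\{\omega\}$ lies strictly above them; hence $\min(\modelsOf{\alpha},\leq_{\Psi_L})=\modelsOf{\Psi_L}$ and thus $\modelsOf{\Psi_L\revision\alpha}=\modelsOf{\alpha}\setminus\{\omega\}\neq\modelsOf{\alpha}$. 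Consequently $\beliefsOf{\Psi_L\revision\alpha}\neq\Cn(\alpha)$, so $\alpha$ is not \inherent, completing the contrapositive and therefore the equivalence.

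I expect the construction in the last paragraph to be the main obstacle — not because it is technically deep, but because it is where the \emph{prioritised} character of AGM revision must be exploited. The conceptual point is that AGM revision always fully accepts its input, so the posterior models can coincide with $\modelsOf{\alpha}$ for \emph{every} prior state only if $\alpha$ already pins down a single world; as soon as $\alpha$ admits two models, \ref{pstl:unbiased} lets us choose a prior state whose models form a proper, nonempty subset of $\modelsOf{\alpha}$, and faithfulness then forces revision by $\alpha$ to retain exactly that subset rather than all of $\modelsOf{\alpha}$. Making this precise is the only place the argument uses anything beyond the representation of Proposition~\ref{prop:es_revision}.
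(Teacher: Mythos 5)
Your proposal is correct and follows essentially the same route as the paper's own proof: the singleton-minimization argument for the easy direction, the decomposition of any consistent belief into a disjunction of the inherent beliefs $\varphi_{\omega_i}$ for immanence, and the use of \ref{pstl:unbiased} to produce a witness state $\Psi_L$ with $\modelsOf{\Psi_L}=\modelsOf{\alpha}\setminus\{\omega\}$, whose faithfulness forces $\modelsOf{\Psi_L \revision \alpha}\neq\modelsOf{\alpha}$. Your write-up is in fact slightly cleaner, making explicit the faithfulness step the paper leaves implicit (and silently correcting the paper's typo $\modelsOf{\alpha}\setminus\{\alpha\}$ to $\modelsOf{\alpha}\setminus\{\omega\}$).
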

\begin{proof}
	For every consistent belief set $ L $ there exists at least one state $ \Psi $ having this belief set, i.e., $ \beliefsOf{\Psi}=L $.
	For $ \omega\in\Omega $, regardless of the state, we have $ \modelsOf{\Psi*\varphi_{\omega}}=\{\omega\} $ due to Proposition \ref{prop:es_revision}.
	Thus, every $ \varphi_{\omega} $ is an \inherent\ belief with exactly one model.
Assume now that $ \alpha\in\propLang $ has two or more models. Let \( \omega \in \modelsOf{\alpha} \).
	Then there is some belief set \( \modelsOf{L} \) with \( \modelsOf{L}=\modelsOf{\alpha} \setminus\{\alpha\} \).
	Because \( L \) is consistent and \( \setAllES \) \ref{pstl:unbiased}, there is some state \( \Psi_L\in \setAllES \) with \( \beliefsOf{\Psi_L}= L \). By Proposition \ref{prop:es_revision}, we obtain $ \modelsOf{\Psi \revision \alpha}=\modelsOf{L} $. Thus, \( \alpha \) is no \inherent\ belief.
	As $ \varphi_{\omega} $ for every $ \omega\in\Omega $  is \inherent, every consistent belief is \immanent. \qedhere
\end{proof} 

\setcounterref{theorem}{prop:inherent_clr}
\addtocounter{theorem}{-1}
\begin{proposition}
	Let $ \nrRevision $ be a credibility-limited revision operator and $ \Psi\mapsto(\leq_\Psi,C_\Psi) $ be a corresponding CLF-assignment. A belief $ \alpha $ is \inherent\ for $ \nrRevision $ if and only if $ \alpha $ has exactly one model $ \omega $ and $ \omega\in C_\Psi $ for every $ \Psi\in\setAllES $.
	Therefore, there is a credibility-limited revision operator with no \immanent\  and \inherent\ beliefs.
\end{proposition}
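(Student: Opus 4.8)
The plan is to translate inherence into the semantic language of Proposition~\ref{prop:clr_booth_et_all}: a belief $\alpha$ is \inherent\ for $\nrRevision$ exactly when $\modelsOf{\Psi\nrRevision\alpha}=\modelsOf{\alpha}$ holds for every $\Psi\in\setAllES$. With the representation \eqref{eq:cl_revision} in hand, both claimed properties of $\alpha$ — having a single model and that model lying in every $C_\Psi$ — can be read off by a case analysis on whether $\modelsOf{\alpha}$ meets $C_\Psi$. I would prove the two directions of the equivalence separately and then deduce the final existence statement from the characterisation.

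For the ``$\Rightarrow$'' direction, assume $\alpha$ is \inherent. First note $\alpha$ is consistent, since $\Cn(\alpha)=\beliefsOf{\Psi\nrRevision\alpha}$ is consistent by \eqref{pstl:LR3}, so $\modelsOf{\alpha}\neq\emptyset$. To see $\modelsOf{\alpha}$ is a singleton, suppose it had two distinct models $\omega_1,\omega_2$. Picking a state $\Psi$ with $\modelsOf{\Psi}=\{\omega_1\}$, the set $\beliefsOf{\Psi}+\alpha$ is consistent, so vacuity \eqref{pstl:LR2} yields $\beliefsOf{\Psi\nrRevision\alpha}=\beliefsOf{\Psi}+\alpha$, hence $\modelsOf{\Psi\nrRevision\alpha}=\{\omega_1\}\subsetneq\modelsOf{\alpha}$, contradicting inherence. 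Thus $\modelsOf{\alpha}=\{\omega\}$. Finally, if $\omega\notin C_\Psi$ for some $\Psi$, then $\modelsOf{\alpha}\cap C_\Psi=\emptyset$, so \eqref{eq:cl_revision} gives $\modelsOf{\Psi\nrRevision\alpha}=\modelsOf{\Psi}$, which must equal $\{\omega\}$; but $\modelsOf{\Psi}\subseteq C_\Psi$ and $\omega\notin C_\Psi$ force $\omega\notin\modelsOf{\Psi}$, a contradiction. Hence $\omega\in C_\Psi$ for all $\Psi$.

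For the ``$\Leftarrow$'' direction, assume $\modelsOf{\alpha}=\{\omega\}$ with $\omega\in C_\Psi$ for every $\Psi$. Then $\modelsOf{\alpha}\cap C_\Psi=\{\omega\}\neq\emptyset$, so by \eqref{eq:cl_revision} we get $\modelsOf{\Psi\nrRevision\alpha}=\min(\{\omega\},\leq_\Psi)=\{\omega\}=\modelsOf{\alpha}$ for all $\Psi$, i.e. $\Cn(\alpha)=\beliefsOf{\Psi\nrRevision\alpha}$, so $\alpha$ is \inherent. For the final claim I would take the extreme CLF-assignment with $C_\Psi=\modelsOf{\Psi}$ for every $\Psi$. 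By the equivalence, an \inherent\ belief would need its unique model $\omega$ to lie in $\modelsOf{\Psi}$ for all $\Psi$; choosing states whose model sets avoid $\omega$ rules this out, so the operator admits no \inherent\ belief, and since every \immanent\ belief is by definition a disjunction of \inherent\ ones, it admits no \immanent\ belief either.

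The main obstacle is not the algebra but the reliance on the availability of the witnessing epistemic states: the state with $\modelsOf{\Psi}=\{\omega_1\}$ in the forward direction, and the states avoiding $\omega$ in the last part. This is exactly where the \ref{pstl:unbiased} richness assumption on $\setAllES$ enters, and I would make this dependence explicit rather than silently assume $\setAllES$ contains the required states.
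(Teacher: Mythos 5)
Your proof is correct and follows essentially the same route as the paper's: vacuity \eqref{pstl:LR2} rules out beliefs with more than one model, the representation \eqref{eq:cl_revision} forces the unique model into every $C_\Psi$, and the CLF-assignment with $C_\Psi=\modelsOf{\Psi}$ for all $\Psi$ witnesses the final existence claim. You are in fact more thorough than the paper, which leaves the \enquote{if} direction implicit and silently assumes the availability of the witnessing epistemic states (the \ref{pstl:unbiased} assumption, stated in the analogous AGM result but omitted from this proposition) --- a dependence you rightly make explicit.
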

\begin{proof}
	If $ \alpha $ has more than one model, then by \eqref{pstl:LR2} it is no \inherent\ belief when coosing $ \Psi $ with $ \modelsOf{\Psi}\subsetneq \modelsOf{\alpha} $.
	Likewise by Equation \eqref{eq:cl_revision} and \eqref{pstl:LR2} the condition $ \modelsOf{\alpha} \subseteq C_\Psi $ for all $ \Psi\in\setAllES $ is easy to see.	
	
	For the last statement, choose a  CLF-assignment $ \Psi\mapsto(\leq_\Psi,C_\Psi) $ with $ \modelsOf{\Psi}=C_\Psi $ for every epistemic state $ \Psi $. \qedhere
\end{proof} 

\setcounterref{theorem}{lem:inherent_full_dynamic}
\addtocounter{theorem}{-1}
\begin{proposition}
	Let \( \setAllES \) be \ref{pstl:unbiased}, and $ \Psi \mapsto (\preceq_{\Psi},\scope{\Psi}) $ a limited assignment compatible with a \dynamiclimited\ revision operator $ \dylRevision $.
	If $ \alpha $ is \inherent, then either  $ \modelsOf{\alpha} \subseteq \scope{\Psi} $ or \( \scope{\Psi} \cap \modelsOf{\alpha}=\emptyset  \). Moreover, in the latter case \( \modelsOf{\Psi}=\modelsOf{\alpha} \).
\end{proposition}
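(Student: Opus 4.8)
The plan is to fix an arbitrary epistemic state $\Psi$ and combine the inherence of $\alpha$ with the two-case form of the operator \eqref{eq:limited_revision}. By Definition~\ref{def:inherence}, $\alpha$ being \inherent\ means $\beliefsOf{\Psi\dylRevision\alpha}=\Cn(\alpha)$, i.e. $\modelsOf{\Psi\dylRevision\alpha}=\modelsOf{\alpha}$, for \emph{every} state $\Psi$. Since the branching condition of \eqref{eq:limited_revision} is precisely whether $\scope{\Psi}\cap\modelsOf{\alpha}$ is empty, I would split into exactly these two cases.

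First I would treat the case $\scope{\Psi}\cap\modelsOf{\alpha}\neq\emptyset$. Then \eqref{eq:limited_revision} gives $\modelsOf{\Psi\dylRevision\alpha}=\min(\modelsOf{\alpha},\preceq_{\Psi})$, and inherence forces $\min(\modelsOf{\alpha},\preceq_{\Psi})=\modelsOf{\alpha}$. The key observation is that, by the definition of $\min$, one always has $\min(\modelsOf{\alpha},\preceq_{\Psi})\subseteq\dom(\preceq_{\Psi})=\scope{\Psi}$; combining this inclusion with the equality $\min(\modelsOf{\alpha},\preceq_{\Psi})=\modelsOf{\alpha}$ yields $\modelsOf{\alpha}\subseteq\scope{\Psi}$, the first disjunct of the claim.

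In the remaining case $\scope{\Psi}\cap\modelsOf{\alpha}=\emptyset$, the second branch of \eqref{eq:limited_revision} gives $\modelsOf{\Psi\dylRevision\alpha}=\modelsOf{\Psi}$, and inherence then yields $\modelsOf{\Psi}=\modelsOf{\alpha}$, which establishes both the second disjunct and the additional assertion about that case. As the two cases are mutually exclusive and exhaustive, the dichotomy follows for the fixed $\Psi$, and since $\Psi$ was arbitrary the statement holds in general.

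The argument is essentially a direct unfolding of the definitions of \inherent\ beliefs and of compatibility, so I do not expect a serious obstacle. The only point needing a moment's care is recognising that the nontrivial disjunct $\modelsOf{\alpha}\subseteq\scope{\Psi}$ comes \emph{for free} from the generic inclusion $\min(\modelsOf{\alpha},\preceq_{\Psi})\subseteq\scope{\Psi}$, rather than from any extra property of the preorder. It is worth noting that \ref{pstl:unbiased} is not actually used in this direction; it would be relevant only if one needed witnessing states of prescribed belief sets, which this statement does not.
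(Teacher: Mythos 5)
Your proof is correct and follows essentially the same route as the paper's: both unfold the definition of \inherent\ beliefs (so that $\modelsOf{\Psi\dylRevision\alpha}=\modelsOf{\alpha}$ for every state) against the two branches of \eqref{eq:limited_revision}, using that $\min(\modelsOf{\alpha},\preceq_{\Psi})\subseteq\dom(\preceq_{\Psi})=\scope{\Psi}$; the paper merely phrases the first case contrapositively (assuming a model of $\alpha$ outside $\scope{\Psi}$ and deriving a proper-subset contradiction) where you argue directly. Your side remark is also consistent with the paper: its proof of this proposition does not invoke \ref{pstl:unbiased} either.
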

\begin{proof}
	Let $ \alpha $ be an \inherent\ belief of $ \circ $ and let $ \Psi\in\setAllES $ an arbitrary epistemic state, with $ \scope{\Psi}=\{\omega_1,\ldots,\omega_k\} $.
	Assume \( \omega \) such that \( \omega\models\alpha \), but \( \omega\notin \scope{\Psi} \). 
	If \( \scope{\Psi}\cap\modelsOf{\alpha} \neq \emptyset \), then by Definition \ref{def:dynamic_limited_revision} we obtain \( \modelsOf{\Psi\dylRevision\alpha} \subsetneq \modelsOf{\alpha} \). 
	This is a contradiction, because \( \alpha \) is \inherent. Consequently, we obtain \( \scope{\Psi}\cap\modelsOf{\alpha} = \emptyset \). Moreover, because \( \alpha \) is \inherent, we obtain \( \modelsOf{\Psi}=\modelsOf{\alpha} \).\qedhere

\end{proof} 

\setcounterref{theorem}{lem:inherent_inherence_limited}
\addtocounter{theorem}{-1}
\begin{lemma}
	Let $ \nrRevision $ be an \ihlimited\ revision operator. 
	If $ \alpha $ is \immanent\ for $ \circ $, then every $ \varphi_\omega\models\alpha $ is \inherent\ for $ \circ $.
\end{lemma}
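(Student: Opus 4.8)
The plan is to reduce the claim to showing that the belief set $\Cn(\varphi_\omega)$ is \immanent\ for $\nrRevision$, and then to peel off a single consistent, \inherent\ disjunct. First I would fix an epistemic state $\Psi$ whose belief set is exactly $\Cn(\varphi_\omega)$; such a state exists whenever $\setAllES$ satisfies \ref{pstl:unbiased}, since $\varphi_\omega$ is consistent. The point of this choice is that $\modelsOf{\Psi} = \{\omega\}$ lies inside $\modelsOf{\alpha}$ (because $\varphi_\omega \models \alpha$), so revising by the \immanent\ input $\alpha$ should neither add nor drop models.

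Concretely, since $\omega \models \alpha$ the set $\beliefsOf{\Psi} \cup \{\alpha\}$ is consistent and $\alpha$ is \immanent, so \eqref{pstl:IL3} yields $\beliefsOf{\Psi\nrRevision\alpha} \cup \{\alpha\} = \Cn(\beliefsOf{\Psi}\cup\{\alpha\}) = \Cn(\varphi_\omega)$. I would combine this with \eqref{pstl:IL1}: in the relative-success case $\alpha \in \beliefsOf{\Psi\nrRevision\alpha}$ we have $\modelsOf{\Psi\nrRevision\alpha} \subseteq \modelsOf{\alpha}$, and intersecting with the previous identity forces $\modelsOf{\Psi\nrRevision\alpha} = \{\omega\}$; in the vacuity case $\beliefsOf{\Psi\nrRevision\alpha} = \beliefsOf{\Psi} = \Cn(\varphi_\omega)$ the same conclusion is immediate. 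Either way $\beliefsOf{\Psi\nrRevision\alpha} = \Cn(\varphi_\omega)$.

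Next I would invoke \eqref{pstl:IL4} with the trivial entailment $\alpha \models \alpha$ and $\alpha$ being \immanent\ to conclude that $\beliefsOf{\Psi\nrRevision\alpha}$, i.e.\ $\Cn(\varphi_\omega)$, is an \immanent\ belief set. By Definition \ref{def:inherence} this means $\varphi_\omega \equiv \gamma_1 \lor \dots \lor \gamma_k$ with every $\gamma_i$ \inherent. Since $\modelsOf{\varphi_\omega} = \{\omega\}$, each $\modelsOf{\gamma_i} \subseteq \{\omega\}$, so every $\gamma_i$ is either inconsistent or equivalent to $\varphi_\omega$; as the disjunction is consistent, at least one $\gamma_{i_0}$ is consistent and hence $\gamma_{i_0} \equiv \varphi_\omega$. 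Because $\gamma_{i_0}$ is \inherent\ and \eqref{pstl:IL6} makes the defining property of inherence invariant under logical equivalence, $\varphi_\omega$ is itself \inherent, which is the claim.

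The step I expect to be the main obstacle is pinning down $\modelsOf{\Psi\nrRevision\alpha} = \{\omega\}$ exactly, since \eqref{pstl:IL3} only constrains $\beliefsOf{\Psi\nrRevision\alpha} \cup \{\alpha\}$ and not $\beliefsOf{\Psi\nrRevision\alpha}$ directly; the case split on \eqref{pstl:IL1} is what removes this ambiguity. A secondary point to handle with care is the final decomposition of the single-model \immanent\ formula, where one must explicitly rule out the degenerate possibility that every \inherent\ disjunct is inconsistent before concluding that some disjunct is equivalent to $\varphi_\omega$.
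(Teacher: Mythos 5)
Your proof is correct and follows essentially the same route as the paper's: choose $\Psi$ with $\modelsOf{\Psi}=\{\omega\}$, use \eqref{pstl:IL3} to obtain $\beliefsOf{\Psi\nrRevision\alpha}=\Cn(\varphi_\omega)$, apply \eqref{pstl:IL4} to conclude that this belief set is \immanent, and then extract inherence of $\varphi_\omega$. You are in fact more careful than the paper's one-line argument at the two points it glosses over, namely the case split via \eqref{pstl:IL1} (needed because \eqref{pstl:IL3} only constrains $\beliefsOf{\Psi\nrRevision\alpha}\cup\{\alpha\}$) and the explicit final step that discards inconsistent disjuncts and uses \eqref{pstl:IL6} to transfer inherence from a disjunct equivalent to $\varphi_\omega$ to $\varphi_\omega$ itself.
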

\begin{proof}
	Let $ \alpha $ and $ \varphi_\omega $ be as above. Let $ \Psi $ be such that $ \modelsOf{\Psi}=\{\omega\} $. By \eqref{pstl:IL3}, we have $ \modelsOf{\Psi\nrRevision\alpha}=\{\omega\} $. From \eqref{pstl:IL4} we obtain that $ \Cn(\varphi_{\omega})  $ is an \inherent\ belief set. This implies that $ \varphi_{\omega} $ is \inherent. \qedhere
\end{proof} 

\setcounterref{theorem}{thm:inherence_limited_operator}
\addtocounter{theorem}{-1}
\begin{theorem}
A belief change operator $ \nrRevision $ is an \ihlimited\ revision operator
if and only if \( \nrRevision \) is a \dynamiclimited\ revision operator compatible with $ \Psi \mapsto {(\preceq_\Psi,\scope{\Psi})} $ such that there is \( \Omega'\subseteq \Omega \)  with \( \scope{\Psi} = \Omega' \) for every epistemic states $ \Psi $.
\end{theorem}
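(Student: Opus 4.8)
The plan is to reduce the statement to the already-proved representation theorem for \dynamiclimited\ revision operators (Theorem \ref{thm:dylr_representationtheorem}), exploiting the fact that the \ihlimited\ postulates \eqref{pstl:IL1}--\eqref{pstl:IL8} are syntactically the \dynamiclimited\ postulates \eqref{pstl:DL1}--\eqref{pstl:DL8} with the \emph{global} notion \immanent\ in place of the \emph{state-local} notion \integral. The whole argument then parallels that earlier proof, the single genuinely new ingredient being the bridge between inherence and a fixed domain $\Omega'$.

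For the \enquote{$\Rightarrow$} direction, assuming $\nrRevision$ satisfies \eqref{pstl:IL1}--\eqref{pstl:IL8}, I would define $\omega_1 \preceq_\Psi \omega_2$ iff $\omega_1 \in \modelsOf{\Psi\nrRevision(\varphi_{\omega_1}\lor\varphi_{\omega_2})}$ and put $\scope{\Psi} = \{\omega \mid \varphi_\omega \text{ is \inherent\ for } \nrRevision\}$. The first and decisive observation is that inherence (Definition \ref{def:inherence}) quantifies over \emph{all} epistemic states, so this set is independent of $\Psi$; hence a single $\Omega'$ serves as $\scope{\Psi}$ for every $\Psi$, as required. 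The remaining verifications—$\preceq_\Psi$ is a total preorder, the assignment is faithful, and \eqref{eq:limited_revision} holds—follow the identical case analysis as in Theorem \ref{thm:dylr_representationtheorem}: \eqref{pstl:IL8} gives totality and transitivity, \eqref{pstl:IL3} and \eqref{pstl:IL5} give faithfulness, and the role of Proposition \ref{prop:inherent_dlr} is taken over by Lemma \ref{lem:inherent_inherence_limited}, which ensures that once $\beliefsOf{\Psi\nrRevision\alpha}$ is \immanent\ all its models lie in $\Omega'$.

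For the \enquote{$\Leftarrow$} direction, suppose $\nrRevision$ is compatible with $\Psi \mapsto (\preceq_\Psi,\scope{\Psi})$ with $\scope{\Psi}=\Omega'$ constant. The key bridging step is to characterise the \inherent\ beliefs: by Lemma \ref{lem:inherent_full_dynamic} (using \ref{pstl:unbiased} to supply witnessing states) a belief $\alpha$ is \inherent\ precisely when $\modelsOf{\alpha}\subseteq\Omega'$, and therefore $\alpha$ is \immanent\ exactly when $\modelsOf{\alpha}\subseteq\Omega'$ as well. Since $\Omega'$ is the common domain of every $\preceq_\Psi$, immanence thus coincides with the semantic condition governing \eqref{eq:limited_revision}. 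With this identification, \eqref{pstl:IL1}, \eqref{pstl:IL2}, \eqref{pstl:IL5} and \eqref{pstl:IL6} read off directly from \eqref{eq:limited_revision}, while \eqref{pstl:IL3}, \eqref{pstl:IL4} and \eqref{pstl:IL8} are recovered from faithfulness and Lemma \ref{lem:sem_trichotonomy}, mirroring the matching steps of Theorem \ref{thm:dylr_representationtheorem}.

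The main obstacle is precisely the interface between the global character of inherence and the local machinery of limited assignments: one must show that state-independence of inherence is both \emph{forced by} and \emph{equivalent to} a constant domain $\Omega'$. Establishing the uniform equivalence \enquote{$\varphi_\omega$ is \inherent\ iff $\omega\in\Omega'$} across all states—which leans on \ref{pstl:unbiased} to produce the required witnessing epistemic states—is the delicate point; once it is in place, every remaining obligation collapses onto the \dynamiclimited\ representation theorem.
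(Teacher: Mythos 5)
Your overall architecture is the paper's own. In the \enquote*{$\Rightarrow$} direction you use exactly the paper's construction ($\omega_1 \preceq_\Psi \omega_2$ iff $\omega_1\in\modelsOf{\Psi\nrRevision(\varphi_{\omega_1}\lor\varphi_{\omega_2})}$, with $\scope{\Psi}$ the set of worlds $\omega$ such that $\varphi_\omega$ is \inherent), the same postulates in the same roles (the trichotomy postulate \eqref{pstl:IL8} for totality and transitivity, \eqref{pstl:IL3} and \eqref{pstl:IL5} for faithfulness), the same substitution of Lemma \ref{lem:inherent_inherence_limited} for Proposition \ref{prop:inherent_dlr}, and the same observation that inherence is state-independent so a single $\Omega'$ serves as the domain; this half is sound and matches the paper.

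The gap is in your \enquote*{$\Leftarrow$} direction, where the step you call the key bridge asserts that a belief $\alpha$ is \inherent\ precisely when $\modelsOf{\alpha}\subseteq\Omega'$. That biconditional is false, and Lemma \ref{lem:inherent_full_dynamic} gives you only the \enquote{only if} half. For the converse, take any state $\Psi$ and worlds $\omega_1,\omega_2\in\Omega'$ with $\omega_1\prec_\Psi\omega_2$, and let $\modelsOf{\alpha}=\{\omega_1,\omega_2\}$: then \eqref{eq:limited_revision} gives $\beliefsOf{\Psi\nrRevision\alpha}=\Cn(\varphi_{\omega_1})\neq\Cn(\alpha)$, so $\alpha$ is not \inherent\ even though $\modelsOf{\alpha}\subseteq\Omega'$. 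Indeed, if your claim held, the notions \inherent\ and \immanent\ would coincide, collapsing the distinction this whole section is built on (compare Proposition \ref{prop:inherence_agm}, where \inherent\ beliefs have exactly one model while every consistent belief is \immanent). The repair is the world-level equivalence that you state only in your closing paragraph: $\varphi_\omega$ is \inherent\ iff $\omega\in\Omega'$ (with \ref{pstl:unbiased} supplying the witnessing states for the \enquote{only if} part). Combining this with the one-directional Lemma \ref{lem:inherent_full_dynamic} yields the statement your postulate checks actually need, namely that $\alpha$ is \immanent\ iff $\alpha$ is consistent and $\modelsOf{\alpha}\subseteq\Omega'$ --- which is how the paper argues. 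So the flaw is local and fixable, but as written the step your \enquote*{$\Leftarrow$} direction leans on would fail.
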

\begin{proof}
\noindent\textit{The \enquote*{$ \Rightarrow $}-direction.} 
Let $ \nrRevision $ be an operator satisfying the postulates \eqref{pstl:IL1}--\eqref{pstl:IL8}. 
For $ \Psi $, we construct $ (\preceq_{\Psi},\scope{\Psi}) $. In particular, let $ \preceq_{\Psi} $ be the relation with
\(	\omega_1 \preceq_\Psi \omega_2 \ksIFF \omega_1 \in \modelsOf{\Psi\nrRevision (\varphi_{\omega_1}\lor\varphi_{\omega_2})}\)
and $ \scope{\Psi}=\dom(\preceq_\Psi)= \{ \omega \mid \varphi_\omega \text{ is \inherent\ for } \nrRevision \} $.
By the definition of inherence, $ \scope{\Psi} $ is the same for each $ \Psi $, i.e. for every $ \Psi,\Gamma\in\setAllES $ we have $ \omega\in{\scope{\Psi}} $ if and only if $ \omega\in{\scope{\Psi}} $. 
The order $ \preceq_{\Psi} $ is a total preorder:

	\emph{Totality/reflexivity.} Let $ \omega_1,\omega_2\in\scope{\Psi} $. Then by \eqref{pstl:IL8} we have that $ \modelsOf{\Psi\nrRevision(\varphi_{\omega_1}\lor\varphi_{\omega_2})} $ is equivalent to $ \{ \omega_1 \} $ or $ \{ \omega_2 \} $ or $ \{ \omega_1,\omega_2 \} $. Therefore, the relation must be total. Reflexivity follows from totality.

\emph{Transitivity.} 	Let $ \omega_1,\omega_2,\omega_3\in\scope{\Psi} $ with $ \omega_1 \preceq_{\Psi} \omega_2 $ and $ \omega_2\preceq_{\Psi} \omega_3 $. Towards a contradiction assume that $ \omega_1 \not\preceq_{\Psi} \omega_3 $ holds. This implies  $ \modelsOf{\Psi\nrRevision\varphi_{\omega_1,\omega_3}}=\{\omega_3\} $.
	Now assume that $ \modelsOf{\Psi\nrRevision\varphi_{\omega_1,\omega_2,\omega_3}} = \{\omega_3\} $.
	Then by \eqref{pstl:IL8} we obtain that $ \modelsOf{\Psi\nrRevision\varphi_{\omega_2,\omega_3}}=\{\omega_3\} $, a contradiction to $ \omega_2\preceq_{\Psi}\omega_3 $.
	Assume for the remaining case $ \modelsOf{\Psi\nrRevision\varphi_{\omega_1,\omega_2,\omega_3}} \neq \{\omega_3\} $. 
	We obtain from \eqref{pstl:IL8} that $ \modelsOf{\Psi\nrRevision\varphi_{\omega_1,\omega_2,\omega_3}} $ equals either $ \modelsOf{\Psi\nrRevision\varphi_{\omega_1,\omega_2}} $ or $ \modelsOf{\Psi\nrRevision\varphi_{\omega_3}}$. 
	Since the second case is impossible, $ \omega_1\preceq_{\Psi} \omega_2 $ implies $ \omega_1\in \modelsOf{\Psi\nrRevision\varphi_{\omega_1,\omega_2}} $. Now apply \eqref{pstl:IL8} again to $ \modelsOf{\Psi\nrRevision\varphi_{\omega_1,\omega_2,\omega_3}} $ and obtain that it is either equivalent to $ \modelsOf{\Psi\nrRevision\varphi_{\omega_1,\omega_3}} $ or $ \modelsOf{\Psi\nrRevision\varphi_{\omega_2}} $.
	In both cases, we obtain a contradiction because $ \omega_1\in \modelsOf{\Psi\nrRevision\varphi_{\omega_1,\omega_2,\omega_3}} $.

The construction yields a faithful limited assignment:\\
We show $ \modelsOf{\Psi}\cap\scope{\Psi}=\min(\Omega,\preceq_{\Psi}) $. 
Let $ \omega_1,\omega_2\in\scope{\Psi} $ and $ \omega_1,\omega_2\in\modelsOf{\Psi} $. By definition of $ {\scope{\Psi}} $ the interpretations $ \omega_1,\omega_2 $ are \inherent. From \eqref{pstl:IL3} we obtain $ \modelsOf{\Psi\nrRevision(\varphi_{\omega_1}\lor\varphi_{\omega_2})}=\{ \omega_1,\omega_2 \} $ which yields by definition $ \omega_1 \preceq_{\Psi} \omega_2 $ and $ \omega_2 \preceq_{\Psi} \omega_1 $.
Let $ \omega_1,\omega_2\in\scope{\Psi} $ with  $ \omega_1\in\modelsOf{\Psi} $ and $ \omega_2\notin\modelsOf{\Psi} $.
Then $ \varphi_{\omega_1}\lor\varphi_{\omega_2} $ is consistent with $ \beliefsOf{\Psi} $. 
Therefore, by 
\eqref{pstl:IL3} and \eqref{pstl:IL5} we have $ \modelsOf{\Psi\nrRevision (\varphi_{\omega_1}\lor\varphi_{\omega_2})} = \modelsOf{\Psi}\cap\modelsOf{\varphi_{\omega_1}\lor\varphi_{\omega_2}}=\{ \omega_1 \} $.
Together we obtain faithfulness.

We show the satisfaction of \eqref{eq:limited_revision} in two case.

For the first case, assume $ \modelsOf{\alpha}\cap\scope{\Psi}=\emptyset $.
By the postulate  \eqref{pstl:IL2} we have either $ \beliefsOf{\Psi}=\beliefsOf{\Psi\nrRevision\alpha} $ or $ \beliefsOf{\Psi\nrRevision\alpha} $ is \immanent. In the first case we are done. 
For the second case, by the postulate \eqref{pstl:IL1}, we obtain $ \modelsOf{\Psi\nrRevision\alpha}\subseteq\modelsOf{\alpha} $. 
Thus, by Lemma \ref{lem:inherent_inherence_limited} the set  $ \modelsOf{\alpha }$ contains an interpretation $ \omega $ such that $ \varphi_{\omega} $ is \inherent, a contradiction to $ \modelsOf{\alpha}\cap\scope{\Psi}=\emptyset $.

For the second case assume $ \modelsOf{\alpha}\cap\scope{\Psi}\neq\emptyset $.
We show the equivalence $ \modelsOf{\Psi\nrRevision\alpha}=\min(\modelsOf{\alpha},\preceq_{\Psi}) $ by showing both set inclusions separately.

	We show $ \min(\modelsOf{\alpha},\preceq_{\Psi})\subseteq \modelsOf{\Psi\nrRevision\alpha} $.
Let $ \omega\in\min(\modelsOf{\alpha},\preceq_{\Psi}) $ with $ \omega\notin\modelsOf{\Psi\nrRevision\alpha}$. 
By construction of \( \scope{\Psi} \) and \eqref{pstl:IL3} we have $ \omega\in\modelsOf{\Psi\nrRevision\varphi_{\omega} }$. 
From Lemma \ref{lem:inherent_full_dynamic} and Lemma \ref{lem:inherent_inherence_limited} and postulate \eqref{pstl:IL4}, we obtain that $\beliefsOf{\Psi\nrRevision\alpha} $ is \immanent\ and every $ \omega\in\modelsOf{\Psi\nrRevision\alpha} $ is \inherent. 
Hence, there is at least one $ \omega'\in\beliefsOf{\Psi\nrRevision\alpha}  $ with $ \omega'\in\scope{\Psi} $.
If $ \beliefsOf{\Psi\nrRevision\alpha}=\beliefsOf{\Psi} $ we obtain by the faithfulness $ \min(\modelsOf{\alpha},\preceq_{\Psi}) = \modelsOf{\Psi\nrRevision\alpha} $. If $ \beliefsOf{\Psi\nrRevision\alpha}\neq\beliefsOf{\Psi} $, then by \eqref{pstl:IL1} we have $ \modelsOf{\Psi\nrRevision\alpha}\subseteq\modelsOf{\alpha} $.
Let $ \beta=\varphi_{\omega}\lor\varphi_{\omega'} $ and $ \gamma=\beta\lor\gamma' $ such that $ \modelsOf{\gamma}=\modelsOf{\alpha} $ and $ \modelsOf{\gamma'}=\modelsOf{\alpha}\setminus\{\omega,\omega'\} $.
By \eqref{pstl:IL8} we have either $ \modelsOf{\Psi\nrRevision\alpha}=\modelsOf{\Psi\nrRevision\beta} $ or $ \modelsOf{\Psi\nrRevision\alpha}=\modelsOf{\Psi\nrRevision\gamma'} $ or $ \modelsOf{\Psi\nrRevision\alpha}=\modelsOf{\Psi\nrRevision\beta}\cup\modelsOf{\Psi\nrRevision\gamma'} $.
The first and the third case are impossible, because $ \omega\notin\modelsOf{\Psi\nrRevision\alpha} $ and by the minimality of $ \omega $ we have $ \omega \in \modelsOf{\Psi\nrRevision\beta} $.
It remains the case of $ \modelsOf{\Psi\nrRevision\alpha}=\modelsOf{\Psi\nrRevision\gamma'} $. 
Let $ \omega_{\gamma'} $ such that $ \omega_{\gamma'}\in\modelsOf{\Psi\nrRevision\alpha} $. Note that $ \omega_{\gamma'}\in\modelsOf{\gamma'}\subseteq\modelsOf{\alpha} $ and $ \omega_{\gamma'}\in\scope{\Psi} $.
Now let $ \delta=\beta'\lor\delta' $ with $ \delta\equiv\alpha $ such that $ \modelsOf{\beta'}=\{\omega,\omega_{\gamma'} \} $ and $ \modelsOf{\delta'}=\modelsOf{\alpha}\setminus\{ \omega,\omega_{\gamma'} \} $. 
By minimality of $ \omega $ we have $ \omega\in\modelsOf{\Psi\nrRevision\beta'} $.
By \eqref{pstl:IL8} we obtain $ \modelsOf{\Psi\nrRevision\alpha} $ is either equivalent to $ \modelsOf{\Psi\nrRevision\beta'} $ or to $ \modelsOf{\Psi\nrRevision\delta'} $ or to $  \modelsOf{\Psi\nrRevision\beta'} \cup \modelsOf{\Psi\nrRevision\delta'} $. The first and third case are impossible since $ \omega\notin \modelsOf{\Psi\nrRevision\alpha} $. Moreover, the second case is also impossible, because of $ \omega_{\gamma'}\notin \modelsOf{\Psi\nrRevision\delta'} $.

	We show $ \modelsOf{\Psi\nrRevision\alpha} \subseteq \min(\modelsOf{\alpha},\preceq_{\Psi}) $.
	Let  $ \omega\in\modelsOf{\Psi\nrRevision\alpha} $ with  $ \omega\notin \min(\modelsOf{\alpha},\preceq_{\Psi})  $.
	From non-emptiness of $ \min(\modelsOf{\alpha},\preceq_{\Psi}) $ and $ \min(\modelsOf{\alpha},\preceq_{\Psi})\subseteq \modelsOf{\Psi\nrRevision\alpha} $ we obtain an \inherent\ belief $ \varphi_{\omega'} $, where $ \omega'\in\modelsOf{\Psi\nrRevision\alpha} $ such that $ \omega'\in \min(\modelsOf{\alpha},\preceq_{\Psi}) $.

Assume that $ \varphi_\omega $ is not \inherent, and therefore, $ {\omega\notin\scope{\Psi}} $.
By the postulate \eqref{pstl:IL4} we obtain from the existence of $ \omega' $ that $ \beliefsOf{\Psi\nrRevision\alpha} $ is \immanent.  Therefore, by Lemma \ref{lem:inherent_inherence_limited} every model in $ \modelsOf{\Psi\nrRevision \alpha} $ is \inherent, a contradiction, and  therefore, $ \varphi_\omega $ has to be \inherent.

From inherence of $ \varphi_\omega $ we obtain $ \modelsOf{\Psi\nrRevision\varphi_{\omega}}=\{\omega\} $.
Using the postulate \eqref{pstl:IL4} we obtain that $ \beliefsOf{\Psi\nrRevision\alpha} $ is \immanent.
Since $ \omega $ is not minimal, we have $ \omega'\in\modelsOf{\Psi\nrRevision(\varphi_{\omega,\omega'})} $ and $ \omega\notin\modelsOf{\Psi\nrRevision(\varphi_{\omega,\omega'})} $
Now let $ \gamma=\varphi_{\omega,\omega'}\lor\gamma' $ with $ \modelsOf{\gamma'}=\modelsOf{\alpha}\setminus\{\omega,\omega'\} $.
	By \eqref{pstl:IL8} we have either $ \modelsOf{\Psi\nrRevision\alpha}=\modelsOf{\Psi\nrRevision\beta} $ or $ \modelsOf{\Psi\nrRevision\alpha}=\modelsOf{\Psi\nrRevision\gamma'} $ or $ \modelsOf{\Psi\nrRevision\alpha}=\modelsOf{\Psi\nrRevision\beta}\cup\modelsOf{\Psi\nrRevision\gamma'} $.
	All cases are impossible because for every case we obtain $ \omega\notin\modelsOf{\Psi\nrRevision\alpha} $.
This shows $ \modelsOf{\Psi\nrRevision\alpha} \subseteq \min(\modelsOf{\alpha},\preceq_{\Psi}) $, and, in summary, we obtain $ \modelsOf{\Psi\nrRevision\alpha} = \min(\modelsOf{\alpha},\preceq_{\Psi}) $ and thus, \eqref{eq:limited_revision} holds.

\smallskip
\noindent\textit{The \enquote*{$ \Leftarrow $}-direction.}
Let $ \Psi \mapsto (\preceq_\Psi,\scope{\Psi}) $ be a limited assignment compatible with \( \nrRevision \) such that \( \scope{\Psi}=\scope{\Gamma} \) for all \( \Psi,\Gamma\in\setAllES \).
Furthermore, let $ \alpha $ be an \inherent\  belief of $ \nrRevision $. Then by Lemma \ref{lem:inherent_full_dynamic} every model $ \omega $ of $ \alpha $ is an element of $ \scope{\Psi} $. Moreover, every $ \varphi_\omega $ with $ \omega\in\scope{\Psi} $ is an \inherent\ belief.

We show the satisfaction of \eqref{pstl:IL1}--\eqref{pstl:IL8}.
From \eqref{eq:limited_revision} we obtain straightforwardly \eqref{pstl:IL1}, \eqref{pstl:IL2}, \eqref{pstl:IL5} and \eqref{pstl:IL6}.
\begin{description}
	\item[\eqref{pstl:IL3}] Assume $ \alpha $ to be \immanent\ and $ \modelsOf{\Psi}\cap\modelsOf{\alpha}\neq\emptyset $.
	By Lemma \ref{lem:inherent_full_dynamic} we have $ \modelsOf{\Psi}\cap\modelsOf{\alpha} \subseteq \scope{\Psi} $.
	Thus, $ \min(\modelsOf{\alpha},\preceq_{\Psi})\neq\emptyset $. From faithfulness we obtain $ \min(\modelsOf{\alpha},\preceq_{\Psi})=\modelsOf{\Psi}\cap\modelsOf{\alpha} $.
	\item[\eqref{pstl:IL4}] Let $ \alpha $ be immanent and thus by Lemma \ref{lem:inherent_full_dynamic} we have $ \modelsOf{\Psi}\cap\modelsOf{\alpha} \subseteq \scope{\Psi} $.
	This implies $ \alpha\models\beta $ and we obtain $ \mid(\modelsOf{\beta},\preceq_{\Psi})\neq\emptyset $. Then by \eqref{eq:limited_revision} we obtain $ \beta\in \beliefsOf{\Psi\nrRevision} $.
	\item[\eqref{pstl:IL8}] 
	Suppose $ \modelsOf{\alpha\lor\beta}\cap\scope{\Psi}=\emptyset $, then by \eqref{eq:limited_revision} for every formula $ \gamma $ with $ \modelsOf{\gamma} \subseteq \modelsOf{\alpha\lor\beta} $ we obtain $ \modelsOf{\Psi\nrRevision(\alpha\lor\beta)}=\modelsOf{\Psi}=\modelsOf{\Psi\nrRevision\gamma} $.
	
	Assume that $ \modelsOf{\alpha}\cap\scope{\Psi}\neq\emptyset $ and $ \modelsOf{\beta}\cap\scope{\Psi}=\emptyset $.
	Then by the postulate \eqref{eq:limited_revision} we obtain $ \modelsOf{\Psi\nrRevision(\alpha\lor\beta)}=\modelsOf{\Psi\nrRevision\alpha} $.
	The case of $ \modelsOf{\alpha}\cap\scope{\Psi}=\emptyset $ and $ \modelsOf{\beta}\cap\scope{\Psi}\neq\emptyset $ is analogue.

	Assume that $ \modelsOf{\alpha}\cap\scope{\Psi}\neq\emptyset $ and $ \modelsOf{\beta}\cap\scope{\Psi}\neq\emptyset $.
	Then we have $ \modelsOf{\Psi\nrRevision(\alpha\lor\beta)}={\min(\modelsOf{\alpha\lor\beta},\preceq_{\Psi})} $. Using logical equivalence, we obtain $ \min(\modelsOf{\alpha\lor\beta},\preceq_{\Psi})=\min(\modelsOf{\alpha}\cup\modelsOf{\beta},\preceq_{\Psi}) $.
	By Lemma \ref{lem:sem_trichotonomy} we obtain directly \eqref{pstl:IL8}. \qedhere
\end{description}
\end{proof} 

\setcounterref{theorem}{prop:inhlimited_clr_agm}
\addtocounter{theorem}{-1}
\begin{proposition}
	Let \( \setAllES \) be \ref{pstl:unbiased}. A belief change operator $ \circ $ is an \ihlimited\ revision operator and a credibility-limited revision operator at the same time if and only if $ \circ $ is an AGM revision operator.
\end{proposition}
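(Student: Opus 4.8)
The plan is to prove the two implications separately, leaning on the semantic characterisations of all three operator classes together with the assumption \ref{pstl:unbiased}. For the easy direction ($\Leftarrow$), I would start from an AGM revision operator $\circ$ with faithful assignment $\Psi \mapsto \leq_\Psi$ and note that this single assignment simultaneously witnesses membership in both of the other classes. Setting $C_\Psi = \Omega$ for every $\Psi$ turns $\leq_\Psi$ into a CLF-assignment (faithfulness over $\Omega$ gives $\modelsOf{\Psi} = \min(\Omega,\leq_\Psi)$), and since $\modelsOf{\alpha}\cap\Omega = \modelsOf{\alpha}\neq\emptyset$ for consistent $\alpha$ the first case of \eqref{eq:cl_revision} is always taken, so the credibility-limited equation reduces to \eqref{eq:repr_es_revision}. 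Dually, taking $\scope{\Psi}=\Omega$ for every $\Psi$ makes $\leq_\Psi$ a faithful limited assignment whose scope is constant (indeed equal to $\Omega$); by the remark following Definition~\ref{def:faithfullness} this is just a faithful assignment, and by Theorem~\ref{thm:inherence_limited_operator} the operator is \ihlimited. Hence every AGM operator lies in both classes.

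For the hard direction ($\Rightarrow$) I would argue the contrapositive: assuming $\circ$ is \ihlimited\ but \emph{not} AGM, I show it cannot also be credibility-limited. By Theorem~\ref{thm:inherence_limited_operator}, $\circ$ is compatible with a faithful limited assignment $\Psi \mapsto (\preceq_\Psi,\Omega')$ whose scope $\Omega'$ is the same set for every $\Psi$. If $\Omega' = \Omega$, then $\preceq_\Psi$ is a total preorder over all of $\Omega$, the assignment is a faithful assignment (remark after Definition~\ref{def:faithfullness}), and because every consistent $\alpha$ satisfies $\modelsOf{\alpha}\cap\Omega'\neq\emptyset$ the equation \eqref{eq:limited_revision} coincides with \eqref{eq:repr_es_revision}, so $\circ$ would be AGM by Proposition~\ref{prop:es_revision}, contrary to assumption; therefore $\Omega' \subsetneq \Omega$. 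I then fix a world $\omega \in \Omega\setminus\Omega'$ and, applying \ref{pstl:unbiased} to the consistent set $\Cn(\top)$, a state $\Psi_\top$ with $\modelsOf{\Psi_\top}=\Omega$.

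The crux is to test the single operator $\circ$ against the two semantic pictures on the input $\varphi_\omega$. On the \ihlimited\ side, $\modelsOf{\varphi_\omega}\cap\Omega' = \{\omega\}\cap\Omega' = \emptyset$, so \eqref{eq:limited_revision} gives $\modelsOf{\Psi_\top\circ\varphi_\omega}=\modelsOf{\Psi_\top}=\Omega$. If $\circ$ were also credibility-limited, then vacuity \eqref{pstl:LR2} applies, since $\beliefsOf{\Psi_\top}+\varphi_\omega=\Cn(\varphi_\omega)$ is consistent, forcing $\modelsOf{\Psi_\top\circ\varphi_\omega}=\modelsOf{\varphi_\omega}=\{\omega\}$. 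Because $\Omega'\neq\emptyset$ (built into the definition of a limited assignment) and $\omega\notin\Omega'$, the set $\Omega$ contains at least two worlds, so $\Omega\neq\{\omega\}$ — a contradiction. This establishes ``both $\Rightarrow$ AGM'' and completes the equivalence.

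The step I expect to be the main obstacle is justifying precisely why $\Omega=\{\omega\}$ is impossible: it hinges on the non-emptiness requirement for $\scope{\Psi}$ (which supplies a second world besides $\omega$) rather than on any behaviour of $\circ$, so I must invoke it explicitly rather than wave at ``$\Omega$ is large''. A symmetric argument is worth recording as an alternative route to the same implication: starting from a credibility-limited $\circ$ that is not AGM, one picks $\Gamma$ with $C_\Gamma\subsetneq\Omega$ and $\omega\notin C_\Gamma$, uses \eqref{pstl:LR2} at both $\Gamma$ and $\Psi_\top$ to get $\modelsOf{\Gamma\circ\varphi_\omega}=\modelsOf{\Gamma}$ but $\modelsOf{\Psi_\top\circ\varphi_\omega}=\{\omega\}$, and then, assuming $\circ$ were \ihlimited, derives immanence of $\varphi_\omega$ from \eqref{pstl:IL2} and hence its inherence via Lemma~\ref{lem:inherent_inherence_limited}, which would force $\modelsOf{\Gamma\circ\varphi_\omega}=\{\omega\}$ — the desired contradiction.
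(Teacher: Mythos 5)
Your proposal is correct and follows the same overall strategy as the paper's proof: the easy direction by observing that a faithful assignment is simultaneously a CLF-assignment and a (constant-scope) faithful limited assignment, and the hard direction by contraposition, using \ref{pstl:unbiased} to obtain a state $\Psi_\top$ with $\modelsOf{\Psi_\top}=\Omega$ and then playing vacuity \eqref{pstl:LR2} off against the limited-assignment semantics \eqref{eq:limited_revision}. There are two points where you diverge in detail, both to your advantage. First, you explicitly justify why the constant scope satisfies $\Omega'\subsetneq\Omega$ when the operator is not AGM; the paper simply asserts this. Second, and more substantively, your test input is $\varphi_\omega$ for $\omega\notin\Omega'$, which triggers the \emph{otherwise} branch of \eqref{eq:limited_revision}, so the \ihlimited\ side returns $\modelsOf{\Psi_\top}=\Omega$ while \eqref{pstl:LR2} forces $\{\omega\}$, and the contradiction comes from $|\Omega|\geq 2$, which you correctly anchor in the non-emptiness of the scope required by Definition~\ref{def:limited_assignment}. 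The paper instead takes a generic $\alpha$ with a model $\omega\notin\Omega'$ and claims $\omega\notin\modelsOf{\Psi_\top\nrRevision\alpha}$; that claim actually requires $\modelsOf{\alpha}\cap\Omega'\neq\emptyset$ (otherwise the \ihlimited\ operator keeps $\Omega$, which \emph{contains} $\omega$), a side condition the paper leaves implicit. Your variant sidesteps this gap entirely. Finally, you prove only one half of the disjointness (\ihlimited\ and non-AGM implies not credibility-limited), which indeed suffices for the equivalence, and you correctly record the paper's other half --- via \eqref{pstl:IL2} and Lemma~\ref{lem:inherent_inherence_limited} --- as an alternative route rather than a necessary component.
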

\begin{proof}
	Because every faithful assignment is also a CLF-assignment and a faithful limited assignment, every AGM revision operator is also a credibility-limited revision operator and an \ihlimited\ revision operator.
	
	Let $ \nrRevision $ an \ihlimited\  revision operator, but no AGM revision operator.
	Then there is an faithful limited assignment  $ \Psi\mapsto(\preceq_{\Psi},\Omega') $ with $ \Omega'\subsetneq \Omega $.
	Let $ \Psi_\top $ an epistemic state with $ \modelsOf{\Psi_\top}=\Omega$. 
	Let $ \alpha $ such that there is an $ \omega\models\alpha $ with $ \omega\notin\Omega' $. Then, because of \eqref{pstl:LR2}, for every credibility-limited revision $ \clRevision $ operator $ \omega\in\modelsOf{\Psi_\top\clRevision\alpha} $, but $ \omega\notin\modelsOf{\Psi_\top\nrRevision\alpha} $.

	Let $ \clRevision $ an credibility-limited revision operator, but no AGM revision operator. Then there is a CLF-assignment  $ \Psi\mapsto(\leq_{\Psi},C_\Psi) $ with $ \modelsOf{\Psi}\subseteq C_\Psi\subseteq \Omega $. 
	Moreover, there is at least one $ \Gamma\in\setAllES $ with $ C_\Gamma\subsetneq \Omega $, i.e., there is $ \omega\not\in C_\Gamma $, and therefore, $ \omega\notin\modelsOf{\Gamma} $. 
	Let again denote $ \Psi_\top $ an epistemic state with $ \modelsOf{\Psi_\top}=\Omega$.
	We obtain, because of \eqref{pstl:LR2}, that $ \modelsOf{\Gamma\clRevision\varphi_{\omega}}=\modelsOf{\Gamma} $ and $ \modelsOf{\Psi_\top\clRevision\varphi_{\omega}}=\{\omega\} $. 
	Assume that $ \clRevision $ is also an \ihlimited\ revision operator. 
	From \eqref{pstl:IL2} and $ \modelsOf{\Psi_\top\clRevision\varphi_{\omega}}=\{\omega\} $ we obtain the immanence of $ \varphi_{\omega} $.
	Lemma \ref{lem:inherent_inherence_limited} implies that $ \varphi_{\omega} $ is \inherent. This leads to the contradiction $ \modelsOf{\Gamma\clRevision\varphi_{\omega}}=\{\omega\} $. \qedhere
\end{proof} 

\end{document}